\documentclass[11pt]{article}
\usepackage[utf8]{inputenc}

\usepackage{natbib}
\usepackage{wustyle}

\usepackage[colorlinks,citecolor=blue,urlcolor=blue,linkcolor=blue,linktocpage=true,pagebackref=true]{hyperref}
\renewcommand*{\backref}[1]{}
\renewcommand*{\backrefalt}[4]{(\small%
    \ifcase #1 not cited%
          \or cited on page~#2%
          \else cited on pages #2%
    \fi%
    )}
  
\usepackage{multirow}
\usepackage{makecell}
\usepackage{fontawesome5} 
\usepackage[most]{tcolorbox}

\newtcolorbox{highlightedtheorem}{%
  enhanced,
  breakable,
  colback=black!2,
  colframe=black!72,
  boxrule=0.65pt,
  arc=2pt,
  outer arc=2pt,
  left=3pt,
  right=3pt,
  top=3pt,
  bottom=3pt,
  before skip=10pt,
  after skip=10pt
}

\usepackage[none]{hyphenat}
\definecolor{sgdblue}{HTML}{0072B2}
\definecolor{polyakorange}{HTML}{D55E00}
\definecolor{nestgreen}{HTML}{009E73}

\newcommand{\sgd}{\mathrm{sgd}}
\newcommand{\polyak}{\mathrm{polyak}}
\newcommand{\nesterov}{\mathrm{nesterov}}
\newcommand{\crit}{\mathrm{crit}}
\newcommand{\la}{\mathrm{la}}

\newcommand{\blfootnote}[1]{%
  \begingroup
  \renewcommand\thefootnote{}%
  \footnote{#1}%
  \addtocounter{footnote}{-1}%
  \endgroup
}

\title{Momentum in large-batch training: Polyak enlarges the  critical batch size, Nesterov improves data efficiency}

\author{
    Jia-Nan Wang$^{1,*}$\quad
    Zixun Huang$^{3,*}$\quad
    Kairui Li$^{1,*}$\quad
    Lei Wu$^{1,2,\dagger}$
      \\[.5em]
  $^1$Peking University\quad
  $^2$AI for Science Institute, Beijing\\[-.1em]
  $^3$University of Pennsylvania\\[.5em]
  \texttt{jiananwang25@stu.pku.edu.cn},\quad  \texttt{zixunh@wharton.upenn.edu}\\[-.1em] 
  \texttt{kaisms@stu.pku.edu.cn},\quad  \texttt{leiwu@math.pku.edu.cn}\\ 
}
\date{\vspace*{-2em}}

\begin{document}

\maketitle

\blfootnote{$^*$Equal contribution. }
\blfootnote{$^\dagger$Corresponding author.}
\begin{abstract}
We study when and how momentum improves large-batch training in the one-pass regime, using power-law kernel regression as a tractable setting. We first characterize risk stability through the \emph{critical learning rate}, defined as the largest learning rate for stable training, and obtain \(\eta_{\sgd}^{\crit}\eqsim 1\), \(\eta_{\polyak}^{\crit}\eqsim\min\{1,B(1-\rho)\}\), and \(\eta_{\nesterov}^{\crit}\eqsim\min\{1,B^\beta(1-\rho)\}\), where \(B\) is the batch size, \(\rho\) is the momentum factor, and \(\beta>1\) is the capacity exponent. Within this admissible region, we derive scaling laws for the full risk dynamics, capturing the progression from an early transient, through power-law decay, to a noise floor. We then minimize the final-step risk over the admissible learning rates and momentum factors under a fixed data budget, yielding a three-regime batch-size phase diagram that reveals how the role of momentum changes with batch size. Notably, Polyak enlarges the critical batch size, the largest batch size preserving the best small-batch data-scaling exponent, thereby enabling greater parallelism without sacrificing data efficiency. In contrast, Nesterov achieves better data efficiency in the large batch regime because its look-ahead mechanism suppresses noise accumulation. Numerical experiments validate the predicted stability boundaries, risk dynamics, and batch-size phase diagram.
\end{abstract}

\vspace{.5em}
\begin{figure}[H]
\centering
\includegraphics[width=0.9\textwidth]{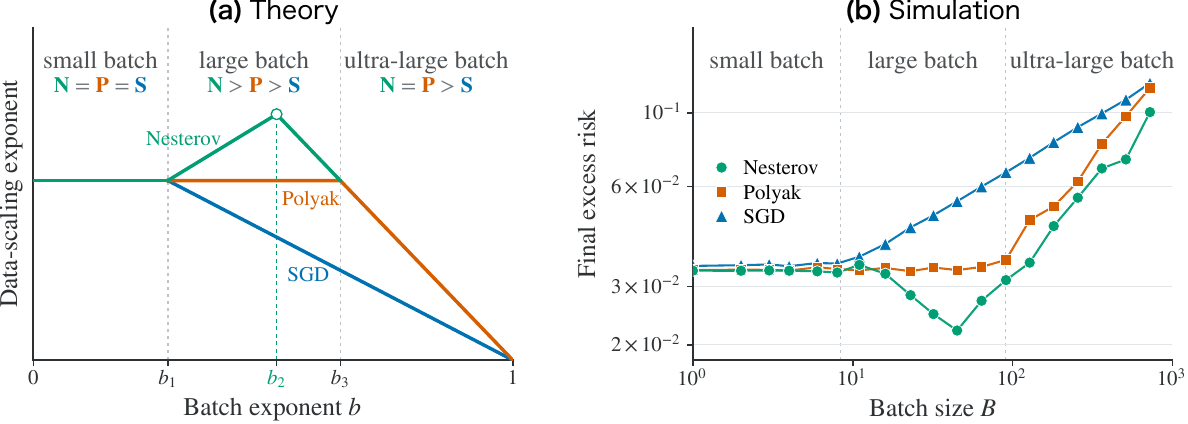}
\caption{\textbf{Batch-size dependence of momentum benefits:} \textcolor{polyakorange}{\textbf{Polyak}} enlarges the critical batch size, while \textcolor{nestgreen}{\textbf{Nesterov}} improves data efficiency. \textbf{(a)} Theoretical batch-size phase diagram from Theorem~\ref{thm:batch-size-scaling}, showing the optimal data-scaling exponent as a function of the batch exponent \(b\), defined by the scaling \(B=D^{b+o(1)}\). 
\textbf{(b)} Best-tuned final excess risk versus batch size $B$ at fixed data budget \(D=2^{14}\) and \((s,\beta)=(0.3,4)\). The simulation recovers the same qualitative three-regime structure predicted by the theory. Experimental setup and additional results are provided in Section~\ref{sec:experiment}.}
\label{fig:intro-result}
\end{figure}


\doparttoc
\faketableofcontents
\part{}

\vspace*{-2.5em}
\section{Introduction}

Momentum has become a central mechanism in  optimizer design for modern large-scale training, underlying popular optimizers such as Adam~\citep{kingma2015adam} and Muon~\citep{jordan2024muon,liu2025muon}. Classical theory views momentum primarily as an acceleration mechanism,   quantified through  reduced iteration complexity
\citep{nesterov1983method,su2016differential,wibisono2016variational,
wilson2021lyapunov,gitman2019understanding,velikanov2023view,
wang2024marginal,liu2019acceleratingsgdmomentumoverparameterized,
gupta2024nesterov}. Yet iteration complexity alone does not fully capture the role of momentum in modern large-scale training, where the optimization regime itself changes as training resources, such as data and hardware memory, scale up.

Modern large-scale training, particularly language-model pretraining, often operates in a \emph{one-pass training} regime~\citep{brown2020language}. In this regime, the batch size \(B\) controls not only hardware parallelism but also the optimization horizon: under a fixed data budget \(D\), the number of sequential updates is
\[
T=\frac{D}{B}.
\]
Larger batches improve hardware utilization and training throughput, but necessarily reduce the number of optimization steps available per data pass. This creates a fundamental tradeoff between hardware efficiency and data efficiency. Indeed, recent language-model studies show that the relative advantages of different optimizers can change substantially with batch size, with some optimizers exhibiting markedly stronger data efficiency in the large-batch regime~\citep{marek2026small,shah2025practical,semenov2025benchmarking}. These observations motivate a systematic characterization of the role of momentum across batch sizes. We therefore ask:
\begin{highlightedtheorem}
\begin{center}
\textbf{Question:} {\it Can momentum enable larger batches without sacrificing data efficiency, or\\
even improve data efficiency itself?}
\end{center}
\end{highlightedtheorem}

We address this question for two canonical momentum methods, Polyak~\citep{polyak1964some} and Nesterov~\citep{nesterov1983method}, using power-law kernel (PLK) regression as a tractable learning problem. Its infinite-dimensional power-law spectrum spans a broad range of learning timescales, making the effects of momentum across spectral directions analytically accessible. The problem is characterized by two exponents: the capacity exponent $\beta>1$, which controls the decay of the feature spectrum, and the source exponent $s>0$, which controls how the target-function energy is distributed across the spectrum. Related power-law settings have been widely used to understand scaling laws in large-scale training across a variety of settings~\citep{lin2024scaling,bahri2024explaining,bordelon2025feature,li2026functional,wang2026fast,li2026optimal}. Building on the functional scaling law (FSL) framework of \citet{li2026functional}, we extend the analysis to momentum methods.

Our analysis of one-pass training follows the hierarchy 
\[ 
\mathcal A_B \;\longrightarrow\; \mathcal E(k,\eta,\rho,B) \;\longrightarrow\; \min_{(\eta,\rho)\in\mathcal A_B} \mathcal E\!\left(T,\eta,\rho,B\right),\quad T=\frac{D}{B}. 
\] 
Here, $\mathcal A_B$ denotes the admissible set of learning rates $\eta$ and momentum factors $\rho$ at batch size $B$, and $\mathcal E(k,\eta,\rho,B)$ denotes the expected excess risk. The three stages correspond, respectively, to \textbf{1)} identifying the admissible hyperparameter region through stability analysis, \textbf{2)} deriving scaling laws that characterize the risk dynamics, and \textbf{3)} optimizing the final-step risk subject to both the admissibility constraint $(\eta,\rho)\in\mathcal A_B$ and the one-pass constraint $T=D/B$.

\vspace{-.7em}
\paragraph{Our contributions.}
Our main contributions are summarized as follows.
\begin{itemize}
  \vspace{-.3em}
\item \textbf{Risk stability.}
We characterize risk stability through the \emph{critical learning rate} \(\eta^{\crit}\), the largest learning rate for which the stochastic dynamics remain stable (Section~\ref{sec:stability}). In the large-batch, high-momentum regime, we obtain the sharp scalings
\begin{equation}\label{eqn: stability-0}
\eta_{\sgd}^{\crit}\eqsim 1,\qquad
\eta_{\polyak}^{\crit}\eqsim \min\{1,B(1-\rho)\},\qquad
\eta_{\nesterov}^{\crit}\eqsim \min\{1,B^\beta(1-\rho)\}.
\end{equation}
Thus, stronger momentum requires a correspondingly larger batch to maintain stability. This compensation scales as \(B\) for Polyak but as \(B^\beta\) for Nesterov. Since \(\beta>1\), Nesterov permits substantially higher momentum in large-batch training.

\item \textbf{Scaling laws for risk dynamics.}
Within the risk-stable region, we derive sharp scaling laws for the risk dynamics of Polyak and Nesterov (Section~\ref{sec:scaling law}). In the high-momentum regime, the risk transitions from an early exponential transient to power-law decay and eventually saturates at a noise floor. Writing \(\eta_\rho:=\eta/(1-\rho)\), after the transient the dominant terms take the simple form
\[
\mathcal E_{\sgd}(k)\eqsim(k\eta)^{-s}+\frac{\sigma^2\eta}{B},\qquad \mathcal E_{\polyak}(k)\eqsim(k\eta_\rho)^{-s}+\frac{\sigma^2\eta_\rho}{B},
\]
while Nesterov retains the same accelerated signal term but reduces the noise floor to
\[
\mathcal E_{\nesterov}(k)\eqsim(k\eta_\rho)^{-s}+\frac{\sigma^2}{B}\min\{\eta_\rho,\eta_\rho^{1/\beta}\}.
\]
Thus, Polyak preserves the SGD signal--noise tradeoff while operating at the enlarged effective learning rate \(\eta_\rho\), whereas Nesterov further suppresses noise accumulation at large \(\eta_\rho\) through the curvature-adaptive damping induced by its look-ahead mechanism.

\item \textbf{Data scaling and batch-size phase diagram.}
Let \(B=D^{b+o(1)}\). Optimizing the remaining hyperparameters yields three characteristic batch exponents \(b_1<b_2<b_3\) and a three-regime phase diagram (Section~\ref{sec:data-scaling}). For \(b\leq b_1\), SGD, Polyak, and Nesterov achieve the same data-scaling exponent. For \(b_1<b<b_3\), Nesterov outperforms Polyak, which in turn outperforms SGD, with Nesterov attaining its best data-scaling rate at \(b=b_2\). For \(b\geq b_3\), Polyak and Nesterov again share the same exponent and both outperform SGD. Here \(b_1\) and \(b_3\) mark the critical batch exponents of SGD and Polyak, respectively, showing that Polyak preserves the best small-batch scaling over a substantially wider batch-size range, while Nesterov further improves the achievable data-scaling rate within the large batch regime. Figure~\ref{fig:intro-result} provides a complete view of how the relative advantage of the three methods evolves with batch size.
\end{itemize}
Finally, Section~\ref{sec:experiment} empirically validates these theoretical predictions.

\paragraph{Why momentum helps large-batch training.}
The batch-size dependence of momentum can be understood through the \textbf{per-sample effective learning rate} (ELR),
\[
\eta_{\rm sample}:=\frac{\eta_{\rm eff}}{B},\qquad
\eta_{\rm eff}=
\begin{cases}
\eta, & \textnormal{SGD},\\
\eta_\rho:=\eta/(1-\rho), & \textnormal{momentum}.
\end{cases}
\]
Ignoring the early transient and using \(T=D/B\), SGD and Polyak obey the same data-level signal--noise tradeoff,
\[
\mathcal E(D)\eqsim(D\eta_{\rm sample})^{-s}+\sigma^2\eta_{\rm sample}.
\]
The difference is the admissible range of the per-sample ELR. Stability condition~\eqref{eqn: stability-0} gives \(\eta_{\rm sample}\lesssim1/B\) for SGD, whereas Polyak allows \(\eta_{\rm sample}\lesssim1\). Thus, as the batch size grows, the admissible per-sample ELR of SGD shrinks as \(1/B\), while Polyak can keep it at order one by increasing \(\eta_\rho\) proportionally with \(B\). Polyak therefore extends the SGD signal--noise tradeoff to substantially larger batch sizes without changing its optimum, explaining both the enlarged critical batch size and the unchanged best data-scaling exponent.

Nesterov changes the picture more fundamentally. Its look-ahead mechanism suppresses noise accumulation at large effective learning rates, improving the signal--noise tradeoff itself and thereby achieving better data efficiency in the large-batch regime. At ultra-large batch sizes, however, the limited number of sequential updates \(T=D/B\) becomes the dominant bottleneck, and the two momentum methods recover the same signal-limited scaling.

\section{Related work}

\paragraph{Theory of momentum.}
Classical analyses explain momentum through accelerated convergence, Lyapunov functions, and continuous-time analysis~\citep{nesterov1983method,su2016differential,wibisono2016variational,wilson2021lyapunov,shi2022understanding,kovachki2021continuous}. Under stochastic training, subsequent work has studied convergence, stability, and asymptotic behavior~\citep{vidyasagar2025convergence,liu2020improved,yuan2016influence,gitman2019understanding,velikanov2023view,zhang2024optimalityacceleratedsgdhighdimensional}. However, existing theory does not characterize how the role of momentum changes with batch size under a fixed data budget. We instead treat batch size as a scaling variable and compare Polyak and Nesterov across the resulting training regimes.

\vspace{-.5em}
\paragraph{Scaling-law theory.}
Scaling laws characterize how learning performance scales with resources such as data, model, and compute~\citep{hestness2017deep,kaplan2020scaling,hoffmann2022training,grattafiori2024llama,deepseekai2026deepseekv4}. A growing theoretical literature has derived such laws in linear, kernel, and related tractable models~\citep{sharma2022scaling,hutter2021learning,maloney2022solvable,wei2022more,jain2024scaling,michaud2023quantization,nam2024exactly,atanasov2026scaling,bahri2024explaining,dohmatob2024tale,bordelon2024dynamical,lin2024scaling,paquette2024phases,bordelon2025feature,yan2026larger,lin2025improved,sous2026asymmetric,li2026functional,zhang2025critical,wang2026fast,li2026optimal,ding2025scalinglawstochasticgradient}. Functional scaling laws further characterize risk dynamics through signal learning and noise accumulation~\citep{li2026functional}. We extend this framework to momentum methods. Closely related, \citet{ferbach2026dimension} study momentum scaling laws in noiseless power-law random features with fixed batch size, showing that Polyak preserves the SGD scaling exponents while dimension- and time-adapted Nesterov-type methods can improve them. We instead consider constant-order label noise and scale the batch size under a fixed data budget, obtaining a batch-size phase diagram for canonical Polyak and Nesterov momentum.

\vspace{-.5em}
\paragraph{Stability of gradient-based methods.}
Stability characterizes the admissible hyperparameter region and has been studied extensively for gradient-based methods, with connections to implicit bias, generalization, and optimization geometry~\citep{zhu2019anisotropic,wu2018sgd,nar2018step,ma2021linear,wu2022alignment,wu2023implicit,chemnitz2025characterizing,mulayoff2021implicit,nacson2023implicit,qiao2024stable,kaur2023maximum,wang2023theoretical}. For momentum methods, its joint dependence on batch size and momentum remains less understood. We derive sharp risk stability boundaries for Polyak and Nesterov, revealing a much stronger batch-size compensation for high momentum in Nesterov.

\vspace{-.5em}
\paragraph{Concurrent work.}
\citet{morwani2026compute}, which appeared while this manuscript was in preparation, studies batch-size tradeoffs for stochastic momentum in finite-dimensional linear regression and derives lower bounds on contraction rates. These bounds suggest that heavy-ball may retain data efficiency over a larger batch-size range, but matching upper bounds are left open. In contrast, we derive sharp scaling laws for the full risk dynamics and, after optimizing the learning rate and momentum factor, obtain the data-scaling exponent as an explicit function of batch size. This yields a complete batch-size phase diagram, including a large-batch regime in which Nesterov outperforms Polyak.

\section{Preliminaries}
\label{sec:preliminaries}
\vspace{-.3em}
{\bf Notation.} In this paper, we use $\eqsim$ to denote equivalence up to a constant factor, and $\lesssim$ (resp.~$\gtrsim$) indicates inequality up to a constant factor. For two non-negative functions $f,g$, we write $f(x)\lesssim g(x)$ (resp. $f(x)\gtrsim g(x)$) if there exists a constant $C>0$ such that $f(x)\leq C g(x)$ (resp. $g(x)\leq C f(x)$) for all $x$ under consideration. We write $f(x)\eqsim g(x)$ if both $f(x)\lesssim g(x)$ and $f(x)\gtrsim g(x)$ hold. 
Throughout the paper, \(\langle\cdot,\cdot\rangle\) and
\(\lVert\cdot\lVert_2\) denote the standard inner product and the associated \(2\)-norm, respectively.

\subsection{Power-law kernel regression}
\label{sec:plkr}
We consider a power-law kernel (PLK) regression  on a Hilbert space \(\mathcal H\) with the input \(\bx\sim\cD\), and the observed label is \(y=\langle\bphi(\bx), \btheta^*\rangle + \epsilon\), where \(\epsilon \sim \cN(0,\sigma^2)\) is independent of \(\bx\). Here we assume the feature function $\bphi(\bx)$ to be Gaussian: $\bphi(\bx) \sim \cN(0,\bH)$, where $\bH = \EE_{\bx\sim \cD}[\bphi(\bx) \otimes\bphi(\bx)]$ denotes the feature covariance operator. Let \(\{\lambda_j\}_{j\geq1}\) denote the eigenvalues of \(\bH\), arranged in decreasing order.
We learn the target function using the linear student model $f(\bx;\btheta) = \langle\bphi(\bx), \btheta\rangle$ by minimizing the population risk:
$\cR(\btheta) = \frac{1}{2} \EE_{\bx,y}\left[(f(\bx;\btheta)-y)^2\right].$
A direct calculation gives $\cR(\btheta) = \cE(\btheta) + \frac{\sigma^2}{2}$, where $\cE(\btheta) = \frac{1}{2}\langle\btheta-\btheta^*,\bH (\btheta-\btheta^*)\rangle$ is the  excess risk.

Since SGD, Polyak momentum, and Nesterov momentum are orthogonally equivariant, we work in the eigenbasis of the covariance operator without loss of generality; see Appendix~\ref{app:subsec:rotation invariant}.

\begin{assumption}[Diagonal covariance]
\label{ass:diagonal-covariance}
The covariance operator is diagonal:
\begin{equation}
\label{eqn:diagonal-hessian}
\bH=\diag(\lambda_1,\lambda_2,\ldots),
\qquad
\lambda_1\geq\lambda_2\geq\cdots>0.
\end{equation}
\end{assumption}

Throughout, we focus on the label-noise-dominated setting with a constant-order noise level.
\begin{assumption}[Constant-order label noise]
Assume \(\sigma\eqsim1\).
\end{assumption}

\begin{assumption}[Source and capacity conditions] \label{ass:power-law}
    There exist  $\beta>1$ and $s>0$ such that 
    \[
        \lambda_j \eqsim j^{-\beta},\quad \lambda_j |\theta^*_j|^2\eqsim j^{-(1+s\beta)}.
    \]
\end{assumption}
Here \(\beta\) is the \textbf{capacity exponent}, which controls the decay rate of the feature spectrum, and \(s\) is the \textbf{source exponent}, which describes how the target energy decays across spectral directions. 
We also refer to \(s\) as the \textbf{relative difficulty} of the task: smaller \(s\) means that more target energy lies in low-curvature, small-eigenvalue directions, making the problem harder to learn. We remark that similar assumptions have been widely used in the analysis of kernel methods~\citep{caponnetto2005fast,caponnetto2007optimal,spigler2020asymptotic,bordelon2020spectrum,maloney2022solvable}.  
Our work builds upon and extends this line of research. 

\subsection{Optimization methods}
\label{sec:err recursion}
At each step, we draw a fresh mini-batch $\cB_k =\{(\bx_{k,i}, y_{k,i})\}_{i=1}^{B}$ and denote the  mini-batch gradient by $\bg(\btheta;\cB_k) = \nabla_{\btheta}(\tfrac{1}{2B}\sum_{(\bx,y)\in\cB_k}(f(\bx;\btheta)-y)^2)$. SGD updates as follows
\begin{align}
\textnormal{SGD:}&\qquad
\btheta_{k+1}
=
\btheta_k-\eta \bg(\btheta_k;\cB_k),
\label{eq:sgd-update}
\end{align}
where $\eta$ is the learning rate. Polyak momentum, also called the heavy-ball method, is given by 
\begin{align}
\textnormal{Polyak:}&\qquad
\btheta_{k+1}
=
\btheta_k
-\eta\bg(\btheta_k;\cB_k)+\rho(\btheta_k-\btheta_{k-1}),
\label{eq:hb-update}
\end{align}
where $\rho\in [0,1)$ is the momentum factor. Nesterov momentum differs from Polyak momentum by evaluating the gradient at a look-ahead point:
\begin{align}
\notag \textnormal{Nesterov:}&\qquad\btheta_k^{\la}=\btheta_k+\rho(\btheta_k-\btheta_{k-1})\qquad \text{(look ahead)}\\
&\qquad \btheta_{k+1}=\btheta_k -\eta\bg(\btheta_k^{\la};\cB_k)+\rho(\btheta_k-\btheta_{k-1}),
\label{eq:nag-update}
\end{align}
where 
\(\btheta_k^{\la}\) is the look-ahead point obtained by extrapolating the current iterate along the previous update direction. This look-ahead mechanism distinguishes Nesterov from Polyak.

For clarity, we decompose  \(
\bg(\btheta;\cB_k)
=
\nabla\cR(\btheta)+\bxi_k(\btheta),
\)
where \(\nabla\cR(\btheta)\) denotes the population gradient and \(\bxi_k(\btheta)\) represents  the stochastic gradient noise satisfying
\[
\EE[\bxi_k(\btheta)\mid\btheta]=\bm 0,
\qquad
\EE\!\left[
\bxi_k(\btheta)\otimes\bxi_k(\btheta)
\,\middle|\,\btheta
\right]
=
\frac{1}{B}\bSigma(\btheta),
\]
where \(\bSigma(\btheta)\) denotes the gradient-noise covariance of batch size \(1\). We next recall a noise--curvature alignment property that will be used
throughout our analysis.

\begin{proposition}[Noise--curvature alignment]
\label{prop:noise-geometry}
For any unit vector \(\bnu\in\mathcal H\),
\begin{equation}
\label{eqn:noise-geometry}
\mathbb E\!\left[
\left|
\left\langle\bxi_k(\btheta_k),\bnu\right\rangle
\right|^2
\,\middle|\,
\btheta_k
\right]
\eqsim
\frac{1}{B}
\bigl(\cE(\btheta_k)+\sigma^2\bigr)
\left\langle\bnu,\bH\bnu\right\rangle.
\end{equation}
\end{proposition}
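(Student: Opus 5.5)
Since the mini-batch consists of $B$ i.i.d.\ samples, I will first reduce to batch size one: $\bxi_k(\btheta)=\frac1B\sum_{i=1}^B \bigl(\bg_i(\btheta)-\nabla\cR(\btheta)\bigr)$ with independent, mean-zero summands, so the conditional second moment along $\bnu$ is $\frac1B\langle\bnu,\bSigma(\btheta)\bnu\rangle$. The claim then amounts to showing
\[
\langle\bnu,\bSigma(\btheta)\bnu\rangle\eqsim \bigl(\cE(\btheta)+\sigma^2\bigr)\langle\bnu,\bH\bnu\rangle .
\]
For a single sample, write $\bphi=\bphi(\bx)\sim\cN(0,\bH)$, $\bdelta=\btheta-\btheta^*$, $r=\langle\bphi,\bdelta\rangle-\epsilon$. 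Then $\bg_1=r\bphi$ and $\nabla\cR=\bH\bdelta$, so
\[
\langle\bnu,\bSigma\bnu\rangle=\EE\bigl[r^2\langle\bphi,\bnu\rangle^2\bigr]-\langle\bnu,\bH\bdelta\rangle^2 .
\]

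I will compute this exactly using Gaussian fourth moments (Isserlis). Since $\epsilon$ is independent of $\bphi$ and has mean zero,
\[
\EE[r^2 u^2]=\EE[a^2u^2]+\sigma^2\EE[u^2],
\]
where $a=\langle\bphi,\bdelta\rangle$ and $u=\langle\bphi,\bnu\rangle$ are jointly Gaussian. Isserlis gives
\[
\EE[a^2u^2]=\EE[a^2]\EE[u^2]+2\EE[au]^2=2\cE\,\langle\bnu,\bH\bnu\rangle+2\langle\bdelta,\bH\bnu\rangle^2 .
\]
Hence
\[
\langle\bnu,\bSigma\bnu\rangle=(2\cE+\sigma^2)\langle\bnu,\bH\bnu\rangle+\langle\bnu,\bH\bdelta\rangle^2 .
\]

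\textbf{Lower bound.} This is immediate: the last term is nonnegative, and $2\cE+\sigma^2\ge\cE+\sigma^2$.

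\textbf{Upper bound.} By Cauchy--Schwarz in the $\bH$-inner product,
\[
\langle\bnu,\bH\bdelta\rangle^2\le\langle\bnu,\bH\bnu\rangle\langle\bdelta,\bH\bdelta\rangle=2\cE\,\langle\bnu,\bH\bnu\rangle,
\]
so the total is at most $(4\cE+\sigma^2)\langle\bnu,\bH\bnu\rangle\le 4(\cE+\sigma^2)\langle\bnu,\bH\bnu\rangle$. The constants are absolute, between $1$ and $4$, so they are uniform in $\bnu$, $\btheta$ and $B$, as the $\eqsim$ notation requires.

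The only delicate point is infinite-dimensional bookkeeping. I need $\bH$ to be trace-class, which holds because $\beta>1$, so that $\bphi$ is a well-defined Gaussian element of $\mathcal H$. I also need $\cE(\btheta)<\infty$, so that $a$ is a genuine Gaussian with variance $2\cE$ and Isserlis applies. I expect this to be routine: restrict to $\btheta$ with finite excess risk (otherwise both sides are infinite), or approximate by finite-dimensional truncations and pass to the limit by monotone convergence. Finally, to obtain the statement conditional on $\btheta_k$, I will use that the batch $\cB_k$ is fresh and independent of $\btheta_k$, so conditioning simply fixes $\btheta=\btheta_k$ in the computation above.
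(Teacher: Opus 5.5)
Your proposal is correct and follows the paper's proof essentially step by step. You reduce to a single sample via the $1/B$ factor and compute the directional second moment exactly with Isserlis, obtaining $(\sigma^2+2\cE)\langle\bnu,\bH\bnu\rangle+\langle\bdelta,\bH\bnu\rangle^2$; you then use nonnegativity for the lower bound and Cauchy--Schwarz in the $\bH$-inner product for the upper bound, which is exactly what the paper does. Your remarks on trace-class $\bH$ and finite risk are extra bookkeeping that the paper leaves implicit.
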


Here, \(\langle\bnu,\bH\bnu\rangle\) is the curvature along \(\bnu\). The two terms in Proposition~\ref{prop:noise-geometry} have distinct origins and roles. The term proportional to \(\cE(\btheta_k)\) arises from mini-batch sampling, depends on the current iterate, and vanishes at the optimum, acting as \emph{multiplicative noise}. The term proportional to \(\sigma^2\) arises from label noise and persists at the optimum, acting as \emph{additive noise}. Both are curvature-aligned but affect the dynamics differently: multiplicative noise feeds back into the dynamics and determines stability, whereas additive noise determines the asymptotic noise floor. This noise--curvature alignment has been observed in neural networks~\citep{zhu2019anisotropic,wu2022alignment} and established rigorously in toy models~\citep{wang2023theoretical}. We use this  constitutive relation throughout our analysis; see Appendix~\ref{app:hessian alignment} for a proof.

\subsection{Regime-dependent damping in momentum methods}
\label{sec:damping-regimes}

\begin{wrapfigure}{r}{0.33\textwidth}
\centering
\vspace{-2em}
\hspace{1em}\includegraphics[width=\linewidth]{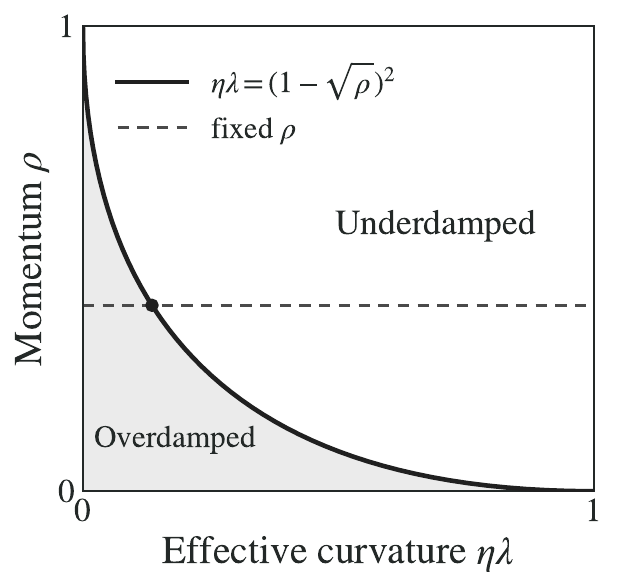}
\vspace{-2em}
\end{wrapfigure}

Consider the full-batch Polyak on a one-dimensional quadratic function
$
f(x)=\lambda x^2/2,
$
where \(\lambda>0\) denotes the curvature. Since $\nabla f(x)=\lambda x$, Polyak updates:
\begin{equation}\label{eqn: polyak-1}
x_{k+1} =  x_k-\eta\lambda x_k+\rho(x_k-x_{k-1}).
\end{equation}
The corresponding characteristic polynomial is
$
r^2-(1+\rho-\eta\lambda)r+\rho=0.
$
The relevant transition from real to complex roots occurs at $\eta\lambda= (1-\sqrt{\rho})^2$. 

For fixed \(\eta\) and \(\rho\), different curvature directions can
exhibit two qualitatively different damping behaviors.
\begin{itemize}
    \item \textbf{Overdamped regime}: $\lambda \eta  < (1-\sqrt{\rho})^2$. In this regime, the characteristic roots $r_{\pm}$  are real with the dominant root
satisfying
$
r_+
\approx
1-\frac{\eta\lambda}{1-\rho}
$
when $1-\rho=o(1)$.
Hence, for sufficiently large $k$, it holds that
\[
x_k =c_{+} r_{+}^k + c_{-} r_{-}^k \approx c_{+} r_{+}^k 
\approx
c_{+}\exp\left(
-\frac{\eta\lambda}{1-\rho}k
\right),
\]
where $c_{\pm}$ depend on the initialization. Hence, its update
behaves like gradient descent with the {\it momentum-rescaled learning rate} 
$
\eta_{\rho} = \eta/(1-\rho).
$

\item \textbf{Underdamped regime}: $\lambda\eta > (1-\sqrt{\rho})^2$. In this regime, the characteristic roots form a complex-conjugate pair with modulus
\(\sqrt{\rho}\). The iterate therefore has the form
$
x_k
=
\rho^{k/2}
\bigl[c_1\cos(k\omega)+c_2\sin(k\omega)\bigr],
$
where \(c_1,c_2\), and \(\omega\) depend on the initialization and curvature. Thus, the iterate exhibits oscillatory convergence to zero, with the amplitude decaying as \(\rho^{k/2}\). 
\end{itemize}


The key difference between the two regimes lies in how curvature affects
the dynamics. In the overdamped regime, the decay rate
\(\eta\lambda/(1-\rho)\) is proportional to the curvature, so flatter
directions converge more slowly. In the underdamped regime, the
oscillation frequency depends on the curvature, but the amplitude
envelope \(\rho^{k/2}\) does not.

\paragraph{Nesterov look-ahead induces curvature-adaptive damping.} On the same quadratic objective, Nesterov updates using the gradient at the look-ahead point $x_k^{\mathrm{la}}= x_k+\rho(x_k-x_{k-1})$:
\begin{align}\label{eqn: curvature-adaptivity}
\notag x_{k+1} &= x_k - \eta \lambda [x_k+\rho(x_k-x_{k-1})] + \rho (x_k - x_{k-1})\\
&=x_k - \eta\lambda x_k + \rho(1-\eta\lambda) (x_k -x_{k-1}),
\end{align}
which is equivalent to Polyak with the momentum factor $\rho(1-\eta\lambda)$. 
For an underdamped direction, it accelerates convergence:
$
\rho^{k/2}
\to 
[\rho(1-\eta\lambda)]^{k/2},
$ 
which damps sharper directions more rapidly.
By contrast, in sufficiently flat directions where
\(\eta\lambda\ll1\), we have
\(\rho(1-\eta\lambda)\approx\rho\), and Nesterov retains
approximately the same  dynamics as Polyak.

\begin{remark}[Spectral decomposition into damping regimes]
For a fixed pair $(\eta,\rho)$, the damping threshold
$
\lambda_{\mathrm{cut}}
:=
\frac{(1-\sqrt{\rho})^2}{\eta}
$
partitions the spectrum into two parts. Directions with
$\lambda_j>\lambda_{\mathrm{cut}}$ are underdamped, whereas directions
with $\lambda_j<\lambda_{\mathrm{cut}}$ are overdamped. Hence, in PLK
regression, the risk dynamics can be analyzed by treating the
underdamped spectral head and overdamped spectral tail separately and
then combining their contributions.
\end{remark}

\section{Risk stability and critical learning rates}
\label{sec:stability}

In this section, we characterize the maximal learning rates that ensure stable training and study their dependence on the momentum factor and batch size. We measure stability directly through the population risk.

\begin{definition}[Risk stability]
Consider a stochastic optimization algorithm for minimizing a population risk $\mathcal R(\cdot)$, producing model parameters $\{\theta_k\}_{k\ge 0}$. We say that the algorithm is \emph{risk stable} if, for every admissible initialization with finite population risk,
\[
\sup_{k\ge 0}\mathbb E[\mathcal R(\theta_k)]<\infty.
\]
\end{definition}

\begin{definition}[Critical learning rate]
Consider a stochastic optimization algorithm $\mathcal A$ parameterized by a learning rate $\eta$ and additional hyperparameters $\bh$. For fixed $\bh$,  let
\[
\eta_{\mathcal A}^{\crit}(\bh):=\sup\left\{\eta>0:\mathcal A\text{ is risk stable}\right\}.
\]
\end{definition}
Hence, the critical learning rate is the largest learning rate compatible with risk stability. When the dependence on $\bh$ is clear from context, we simply write $\eta_{\mathcal A}^{\crit}$. In our setting, $\bh=B$ for SGD and $\bh=(B,\rho)$ for Polyak and Nesterov. The following theorem characterizes how the critical learning rate scales with batch size and momentum; the proof is deferred to Appendix~\ref{app:sec:stability}.
\begin{highlightedtheorem}
\begin{theorem}[Critical learning-rate scaling]\label{thm:stability}
\normalfont
For sufficiently large \(B\) and sufficiently small \(1-\rho\), the critical learning rates satisfy
\begin{align}
\eta_{\sgd}^{\crit} &\eqsim 1,\\
\eta_{\polyak}^{\crit} &\eqsim \min\{1,\,B(1-\rho)\},\\
\eta_{\nesterov}^{\crit} &\eqsim \min\{1,\,B^\beta(1-\rho)\},
\end{align}
where the implicit constants are independent of \(B\) and \(\rho\).
\end{theorem}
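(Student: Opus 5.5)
Since the dynamics are linear in the parameters and the noise is curvature-aligned (Proposition~\ref{prop:noise-geometry}), the approach is to reduce risk stability to stability of a closed linear recursion for second moments. For each eigendirection $j$, I will track the $2\times2$ covariance of the pair $(\theta_{k,j}-\theta^*_j,\ \theta_{k-1,j}-\theta^*_j)$; by Assumption~\ref{ass:diagonal-covariance} and Gaussianity, the cross-direction terms only enter through the scalar $\cE(\btheta_k)$.

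\textbf{Step 1: Volterra-type renewal equation.} Write $e_k:=\EE\,\cE(\btheta_k)$. The mean (deterministic) part of each mode evolves by the $2\times2$ companion matrix $A_j$ of the scalar recursion~\eqref{eqn: polyak-1}, or of its Nesterov analogue~\eqref{eqn: curvature-adaptivity} with momentum $\rho(1-\eta\lambda_j)$. For Nesterov, the gradient noise enters with coefficient $\eta$ multiplying the noise evaluated at the look-ahead point, whose risk is comparable to $e_k$ up to the momentum correction. Unrolling gives
\[
e_k=\text{(signal)}_k+\frac{c}{B}\sum_{i<k}K(k-i)\,(e_i+\sigma^2),\qquad K(t)\eqsim\eta^2\sum_j\lambda_j^2\,|[A_j^{t}]_{12}|^2 .
\]
By a standard renewal argument, risk stability then holds if and only if every $A_j$ is stable (so that each $K$ term is summable) and $\frac{c}{B}\sum_{t}K(t)<1$ up to constants. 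Upper and lower bounds on $\eta^{\crit}$ then reduce to two-sided estimates of $\frac{1}{B}\|K\|_{1}$.

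\textbf{Step 2: per-mode sums.} For a scalar second-order recursion with characteristic roots $r_\pm$, the quantity $\sum_t |[A^t]_{12}|^2$ has the closed form $\frac{1+\rho'}{(1-\rho')\bigl((1+\rho')^2-a^2\bigr)}$, where $a=1+\rho'-\eta\lambda(1+\rho)$ for Nesterov (respectively $a=1+\rho-\eta\lambda$, $\rho'=\rho$ for Polyak). This gives:
\begin{itemize}
\item For SGD, $\eta^2\lambda^2/(1-(1-\eta\lambda)^2)\eqsim\eta\lambda$. Summing over $j$ yields $\eta\,\mathrm{tr}(\bH)/B$, and deterministic stability gives $\eta<2/\lambda_1$, so $\eta^{\crit}\eqsim1$.
\item For Polyak, the per-mode contribution is $\eqsim\eta\lambda/(1-\rho)$ uniformly, in both the overdamped and underdamped regimes, as long as $\eta\lambda\le 2(1+\rho)$. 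The sum is therefore $\eqsim \eta/(B(1-\rho))$, giving $\eta^{\crit}\eqsim\min\{1,B(1-\rho)\}$.
\item For Nesterov, with effective momentum $\rho_j=\rho(1-\eta\lambda_j)$, the factor $1-\rho_j\approx(1-\rho)+\eta\lambda_j$. The per-mode contribution is then $\eqsim \eta\lambda_j/((1-\rho)+\eta\lambda_j)\cdot\min\{1,\cdot\}$, roughly $\min\{1,\eta\lambda_j/(1-\rho)\}$.
\end{itemize}

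\textbf{Step 3: Nesterov spectral sum (main obstacle).} Using $\lambda_j\eqsim j^{-\beta}$, I will bound
\[
\sum_j\min\Bigl\{1,\frac{\eta j^{-\beta}}{1-\rho}\Bigr\}\eqsim\Bigl(\frac{\eta}{1-\rho}\Bigr)^{1/\beta}\quad\text{when }\eta\ge 1-\rho.
\]
The head directions (of which there are about $(\eta/(1-\rho))^{1/\beta}$) each contribute order one, and the tail contributes a convergent sum of the same order. The stability condition $(\eta/(1-\rho))^{1/\beta}\lesssim B$ then gives $\eta\lesssim B^\beta(1-\rho)$, combined with $\eta\lesssim1$ from deterministic stability of the top mode.

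The delicate point is getting matching lower bounds: I need the per-mode sum to be of order $1$ (not smaller) uniformly in the underdamped head. I would check this via the closed form, noting that $(1+\rho_j)^2-a^2\eqsim \eta\lambda_j$ there. I also need the renewal argument to give divergence, not merely lack of a bound, when $\frac{c}{B}\|K\|_1>1$. For this I would pick an initialization concentrated on the resonant modes and use positivity of the kernel to show $e_k$ grows geometrically.
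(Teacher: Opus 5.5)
\textbf{Verdict.} Your proposal is correct, but it reaches the scalings by a different route from the paper.

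\textbf{Comparison with the paper.} The paper works with \emph{exact} second-moment recursions in each eigendirection. The Gaussian fourth-moment identity closes them, so each block $\bM_i$ carries the self-interaction term $\frac{\eta^2}{B}\lambda_i^2$. The Jury criterion on the resulting cubic gives an exact local condition, and the resolvent $(\bI-\bM_i)^{-1}$ gives the exact kernel mass $\kappa$. This yields an if-and-only-if characterization with explicit critical equations, and the scalings then follow by bounding denominators.

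You instead expand around the deterministic Green function $G_t(j)$ and treat all stochasticity, including that self-interaction, as forcing. This is rigorous once written as a sandwich. Since $\EE[\xi_m(j)^2]=\frac1B\bigl(\lambda_j(\ell_m+\sigma^2)+\lambda_j^2\EE e_m(j)^2\bigr)$ and $0\le\lambda_j\EE e_m(j)^2\le\ell_m$, you get $h_k+\sum_m K^G_{k-1-m}(\ell_m+\sigma^2)\le\ell_k\le h_k+\sum_m K^G_{k-1-m}(2\ell_m+\sigma^2)$, with $K^G_t=\frac{\eta^2}{B}\sum_j\lambda_j^2G_t(j)^2$. Hence $\eta$ is stable whenever $\|K^G\|_1<\frac12$ and unstable whenever $\|K^G\|_1\ge1$. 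This pins $\eta^{\crit}$ only up to constants, which is exactly what the theorem claims.

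What each approach buys:
\begin{itemize}
\item The paper's route gives exact thresholds and explicit critical equations.
\item Your route needs only $2\times2$ root analysis and the Green-energy formula, which the paper itself reuses later for the label-noise floor.
\item You never need the paper's sharper local condition (for Polyak, $\eta\lambda_{\max}\lesssim B(1-\rho)$ when $B(1-\rho)\lesssim1$). Once $\eta\lambda_1\ge 2B(1-\rho)$, the top mode alone contributes at least $\frac{\eta\lambda_1}{2B(1-\rho)}\ge1$ to $\|K^G\|_1$.
\item Divergence needs no special initialization. The constant forcing $\sigma^2>0$, together with the $\liminf$ argument of the paper's renewal lemma (which works verbatim for the lower inequality), shows that boundedness forces $\|K^G\|_1<1$.
\end{itemize}

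\textbf{Points to tighten.} First, for Nesterov the noise is driven by the look-ahead risk $\ell^{\la}_m$, which is not bounded below by $\ell_m$ pointwise. So ``comparable up to the momentum correction'' does not yet give the instability direction. Write the renewal for $\ell^{\la}$ itself, as the paper does. Its impulse response $(1+\rho)G_t-\rho G_{t-1}$ equals $G_{t+1}/(1-\eta\lambda_j)$ when $\eta\lambda_j\neq1$, so the kernel mass has the same order. Then transfer boundedness using $\ell^{\la}_m\le 2(1+\rho)^2\ell_m+2\rho^2\ell_{m-1}$ and $\ell_{m+1}\lesssim\ell^{\la}_m+\sigma^2$.

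Second, for Nesterov the correct coefficient is $a=(1+\rho)(1-\eta\lambda)=1+\rho'-\eta\lambda$, not $1+\rho'-\eta\lambda(1+\rho)$. This slip does not change the order $(1+\rho')^2-a^2\eqsim\eta\lambda$ that you use. With the correct $a$, $(1+\rho')^2-a^2=\eta\lambda\bigl(1+(1+2\rho)(1-\eta\lambda)\bigr)$. The second factor vanishes at $\eta\lambda=1+\frac{1}{1+2\rho}$, and this is what supplies the order-one ceiling. Your Polyak caveat ``as long as $\eta\lambda\le 2(1+\rho)$'' should read ``bounded away from $2(1+\rho)$'', since the per-mode sum diverges at that boundary.
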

\end{highlightedtheorem}

\paragraph{Joint dependence on momentum and batch size.}
Theorem~\ref{thm:stability} reveals a direct coupling among the learning
rate, momentum factor, and batch size in determining training stability.
Increasing $\rho$ toward one reduces the critical learning rate, whereas
increasing the batch size enlarges it. In the high-momentum regime,
$
\eta_{\polyak}^{\crit}\eqsim B(1-\rho),
\eta_{\nesterov}^{\crit}\eqsim B^\beta(1-\rho),
$
before reaching their order-one ceilings. Thus, the batch-size effect on
stability is substantially stronger for Nesterov, with a $B^\beta$
dependence rather than the $B$ dependence of Polyak. This difference originates from the curvature adaptivity induced by Nesterov's look-ahead mechanism.
\paragraph{Multiplicative-noise feedback governs stability through the renewal kernel.} For all three methods, Appendix~\ref{app:sec:stability} shows that the risk dynamics admit, up to constant factors, a renewal equation
$
\ell_k\eqsim h_k+\sum_{j=0}^{k-1}K_{k-1-j}\bigl(\ell_j+\sigma^2\bigr),
$
where \(\ell_k\) is the risk moment, \(h_k\) is the free response, and the renewal kernel \(K\) describes how stochastic perturbations propagate through the optimizer dynamics. Additive and multiplicative noise play different roles: additive label noise acts only as an external forcing, whereas multiplicative noise feeds the current risk back through the renewal kernel, producing the feedback term \(K*\ell\). Risk stability is therefore determined by the strength of this feedback, characterized by \(\|K\|_1<1\); see Appendix~\ref{app:subsec:label-noise-stability}. Since \(K\) is independent of the label-noise scale \(\sigma\), the stability condition in Theorem~\ref{thm:stability} is also independent of \(\sigma\). 

\section{Scaling laws for risk dynamics across damping regimes}
\label{sec:scaling law}

In this section, we derive sharp scaling laws for the full risk dynamics of Polyak and Nesterov, explicitly characterizing their joint dependence on the hyperparameters \((\eta,\rho,B)\). The analysis follows how increasing \(\rho\) drives transitions in damping and stability: from a fully overdamped regime with SGD-like dynamics, to a mixed-damping regime, and eventually to instability.

As discussed in Section~\ref{sec:damping-regimes}, for a single eigendirection with curvature \(\lambda\), the overdamped--underdamped transition occurs exactly at
$
(1-\sqrt{\rho})^2=\eta\lambda.
$
For the scaling analysis below, where \(\rho\to1\), we use the equivalent high-momentum form
$
\eta\lambda\eqsim(1-\rho)^2.
$
This motivates the curvature cutoff
\[
\lambda_{\mathrm{cut}}
\eqsim
\frac{(1-\rho)^2}{\eta},
\qquad
\II_{\mathrm{over}}
=
\left\{
j:\lambda_j\lesssim\lambda_{\mathrm{cut}}
\right\},
\qquad
\II_{\mathrm{under}}
=
\left\{
j:\lambda_j\gtrsim\lambda_{\mathrm{cut}}
\right\}.
\]
Hence, the sets \(\II_{\mathrm{over}}\) and \(\II_{\mathrm{under}}\) consist of flat overdamped directions and sharp underdamped directions, respectively. Since the PLK spectrum contains arbitrarily small eigenvalues, overdamped directions always remain present.

As $\rho$ increases, the first underdamped directions appear when \(\lambda_{\mathrm{cut}}\lesssim \lambda_1\), corresponding to
\[
1-\rho\eqsim\sqrt{\eta}.
\]
Combining this damping threshold with the risk stability conditions from Section~\ref{sec:stability} yields three regimes:
\[
\begin{array}{lll}
\textnormal{fully overdamped:}
& 1-\rho\gtrsim\sqrt{\eta},\\[1mm]
\textnormal{mixed damping:}
& \eta/B^q\lesssim1-\rho\lesssim\sqrt{\eta},\\[1mm]
\textnormal{unstable:}
& 1-\rho\lesssim\eta/B^q,
\end{array}
\qquad
q=
\begin{cases}
1, & \textnormal{Polyak},\\
\beta, & \textnormal{Nesterov}.
\end{cases}
\]
Thus, the damping threshold \(1-\rho\eqsim\sqrt{\eta}\) and the risk stability threshold \(1-\rho\eqsim\eta/B^q\) delimit the three regimes.

For reference, the {\bf SGD scaling law} for PLK regression is
\begin{equation}
\label{eqn:scaling-law-sgd}
\mathbb E[\cE(\btheta_k)]\eqsim(\eta k)^{-s}+\frac{\sigma^2}{B}\eta,\qquad k\eta\gtrsim1.
\end{equation}
This follows from \citet[Theorem 5.2]{li2026functional}; see Appendix~\ref{app:sgd_scaling_law} for details. The first term, \((\eta k)^{-s}\), describes signal learning, with the decay rate determined by the source exponent \(s\): larger \(s\) corresponds to an easier target and faster learning. The second term, \(\sigma^2\eta/B\), captures stochastic noise accumulation and determines the asymptotic noise floor.













The overdamped modes evolve with the rescaled learning rate
\[
\eta_\rho:=\frac{\eta}{1-\rho},
\]
which plays the role of the effective learning rate for momentum. The following theorem establishes the  scaling law for Polyak momentum, whose proof is deferred to Appendix~\ref{app:hbm scaling law}.

\begin{highlightedtheorem}
\begin{theorem}[Scaling law for Polyak]
\label{thm:hbm scaling law}
\normalfont
Assume \(\eta\lesssim1\) and \(k\gtrsim\max(\eta^{-1}_\rho,(1-\rho)^{-1})\). Then
\begin{equation}
\label{eq:hb-loss-scaling}
\mathbb E[\cE(\btheta_k)]\eqsim
\begin{cases}
(\eta_\rho k)^{-s}+\dfrac{\sigma^2}{B}\eta_\rho, & 1-\rho\gtrsim\sqrt{\eta},\\[2mm]
P(k)+(\eta_\rho k)^{-s}+\dfrac{\sigma^2}{B}\eta_\rho, & \dfrac{\eta}{B}\lesssim1-\rho\lesssim\sqrt{\eta},
\end{cases}
\end{equation}
where \(P(k)\) denotes the underdamped transient and satisfies \(0\leq P(k)\lesssim\rho^k\). If \(1-\rho\lesssim\eta/B\), the dynamics are not risk stable.
\end{theorem}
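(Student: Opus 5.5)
The plan is to reduce the Polyak dynamics to a per-mode second-moment recursion in the eigenbasis of Assumption~\ref{ass:diagonal-covariance}, and from it derive the renewal equation
\[
\ell_k\eqsim h_k+\sum_{j=0}^{k-1}K_{k-1-j}\bigl(\ell_j+\sigma^2\bigr)
\]
already announced in Section~\ref{sec:stability}. Here \(\ell_k=\mathbb E[\cE(\btheta_k)]\) and \(h_k\) is the noiseless (full-batch) Polyak risk from the initialization.

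\textbf{Setting up the recursion.} Write the error \(\bdelta_k=\btheta_k-\btheta^*\). Each coordinate satisfies the two-step recursion \eqref{eqn: polyak-1} with \(\lambda=\lambda_j\), driven by the noise \(\eta\xi_{k,j}\). Unrolling with the impulse response \(u_j(k)\) of \(r^2-(1+\rho-\eta\lambda_j)r+\rho\), and using Proposition~\ref{prop:noise-geometry} along \(\bnu=\mathbf e_j\), gives
\[
K_m\eqsim\frac{\eta^2}{B}\sum_j\lambda_j^2\,u_j(m)^2 .
\]
Cross terms vanish because the noise is a martingale difference, and the Gaussian features make the fourth moments comparable to the product of second moments, so both the upper and lower bounds hold up to constants.

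\textbf{Three ingredients.} Everything then rests on the mode-wise asymptotics of \(u_j\), split by \(\lambda_{\mathrm{cut}}\eqsim(1-\rho)^2/\eta\) as in Section~\ref{sec:damping-regimes}.
\begin{itemize}
\item \emph{Free response.} For overdamped modes, \(u_j(m)\eqsim(1-\rho)^{-1}e^{-\eta_\rho\lambda_j m}\) once \(m\gtrsim(1-\rho)^{-1}\), so these modes act like SGD with learning rate \(\eta_\rho\). For underdamped modes, \(|u_j(m)|\lesssim\rho^{m/2}/\sqrt{\eta\lambda_j}\). Hence \(h_k=\sum_j\lambda_j\theta_j^{*2}(\cdots)\) splits into an overdamped part. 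Using Assumption~\ref{ass:power-law} and \(k\eta_\rho\gtrsim1\), that part is \(\eqsim(\eta_\rho k)^{-s}\) by the same sum-to-integral computation as for SGD (cf.\ \eqref{eqn:scaling-law-sgd}). The underdamped part, which is nonnegative, defines \(P(k)\lesssim\rho^k\). It is absent when \(1-\rho\gtrsim\sqrt\eta\), since then \(\lambda_{\mathrm{cut}}\gtrsim\lambda_1\).
\item \emph{Kernel mass.} Summing the overdamped modes gives \(\sum_m u_j(m)^2\eqsim 1/(\eta\lambda_j(1-\rho))\). Each underdamped mode contributes \(\sum_m u_j(m)^2\eqsim 1/(\eta\lambda_j(1-\rho))\) as well, the standard heavy-ball variance \(\frac{1+\rho}{(1-\rho)\eta\lambda(2(1+\rho)-\eta\lambda)}\). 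Therefore
\[
\|K\|_1\eqsim\frac{\eta}{B(1-\rho)}\operatorname{tr}(\bH)\eqsim\frac{\eta_\rho}{B}.
\]
\item \emph{Temporal profile of \(K\).} I also need that \(K_m\) decays fast enough relative to \(h\). Concretely, I want \(\sum_{j<k}K_{k-1-j}h_j\lesssim\|K\|_1h_k+\text{(faster terms)}\), which should follow from \(K_m\) decaying like a higher power of \(m\) than \(h\), as in \citet{li2026functional}.
\end{itemize}

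\textbf{Closing the argument.} If \(\|K\|_1\le c<1\), which holds when \(1-\rho\gtrsim\eta/B\), the renewal equation inverts to
\[
\ell\eqsim h+\frac{\sigma^2\|K\|_1}{1-\|K\|_1}+(\text{convolution corrections}).
\]
This gives the noise floor \(\sigma^2\eta_\rho/B\) and the signal term \(h_k\). I will apply the Tauberian-type lemma from the SGD FSL analysis, which says the convolution with a summable kernel of mass \(<1\) preserves the power law \((\eta_\rho k)^{-s}\) up to constants. Conversely, if \(1-\rho\lesssim\eta/B\) with a large enough constant, then \(\|K\|_1>1\). The renewal inequality from below then forces \(\ell_k\) to grow geometrically, so the dynamics are not risk stable; this is the Polyak case of Theorem~\ref{thm:stability}.

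\textbf{Main obstacle.} I expect the hardest step to be the uniform two-sided control of \(u_j(m)\) near the critical damping \(\eta\lambda_j\approx(1-\sqrt\rho)^2\). There the roots coalesce and \(u_j(m)\approx m r^{m}\) picks up a polynomial factor. I would handle this by bounding the near-critical band of eigenvalues separately. Since that band spans only \(O(1)\) dyadic scales of \(\lambda\), its contribution to both \(h_k\) and \(\|K\|_1\) is comparable to that of the neighboring overdamped or underdamped modes, so it cannot change the orders above. A secondary difficulty is the transient window \(k\lesssim(1-\rho)^{-1}\), where the overdamped asymptotics fail. This is excluded by the hypothesis on \(k\).
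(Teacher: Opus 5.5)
Your architecture matches the paper's. The impulse-response representation together with Proposition~\ref{prop:noise-geometry} gives the two-sided renewal recursion (Proposition~\ref{prop:pol_risk_recursion}). The kernel mass is the exact heavy-ball Green energy (Lemma~\ref{lem:pol_label_noise}). The signal splits at $\lambda_{\mathrm{cut}}$ into an SGD-type overdamped tail and a $\rho^k$ underdamped transient.

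You diverge only in handling the multiplicative feedback, and there your proposed step fails. The kernel's tail is carried by the overdamped modes. For $m\gtrsim\max\{(1-\rho)^{-1},\eta_\rho^{-1}\}$, the modes with $\eta_\rho\lambda_j m\eqsim1$ are overdamped and satisfy $u_j(m)\eqsim(1-\rho)^{-1}e^{-\eta_\rho\lambda_j m}$. Summing $\lambda_j^2e^{-2\eta_\rho\lambda_j m}$ over $\lambda_j\eqsim j^{-\beta}$ gives $K_m\gtrsim\frac{\eta_\rho^2}{B}(\eta_\rho m)^{-(2-1/\beta)}$, whereas $h_m\eqsim(\eta_\rho m)^{-s}$. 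So $K$ does not in general decay faster than $h$. If $s>2-1/\beta$, the single term $K_{k-1}h_0\le\sum_{j<k}K_{k-1-j}h_j$ is already bounded below by a multiple of $k^{-(2-1/\beta)}$. That is neither $\lesssim\|K\|_1h_k$ nor a ``faster term.'' A Tauberian lemma asserting that convolution with a kernel of mass $<1$ preserves $(\eta_\rho k)^{-s}$ is therefore false in this range. In the noiseless problem this convolution term would actually set the decay rate.

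What you are missing is that no temporal information about $K$ is needed, because $\sigma\eqsim1$. Let $C$ be the constant in the upper renewal inequality, and assume $C\|K\|_1\le\frac12$; this is $1-\rho\gtrsim\eta/B$ with a large enough constant. Since $\sup_k h_k\lesssim1$ in both damping regimes, the renewal inequality gives $\sup_j\ell_j\lesssim1+\sigma^2\|K\|_1\lesssim\sigma^2$. Hence $\sum_{j<k}K_{k-1-j}\ell_j\le\|K\|_1\sup_j\ell_j\lesssim\sigma^2\|K\|_1$, which is of the order of the noise floor you already have. This is exactly how the paper proceeds. Lemma~\ref{lem:pol_loss_bounded} proves uniform boundedness of the risk by induction under $\eta\le cB(1-\rho)$. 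Corollary~\ref{cor:pol_noise_absorption} then absorbs the multiplicative term into the additive one. The power law therefore only has to be established for the deterministic signal, never for a convolution. With this replacement the rest of your plan goes through.

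Two smaller points. First, the lower bound needs the finite-time mass $\sum_{m<k}K_m\gtrsim\|K\|_1$, not the saturated mass used in your closing formula. The paper obtains this from a Green-function tail bound on a fixed block of leading modes, using $k\gtrsim\max\{(1-\rho)^{-1},(1-\rho)/\eta\}$. Second, the near-critical band you flag as the main obstacle is harmless. Its mass is $\eqsim((1-\rho)^2/\eta)^s$ and it relaxes on the $(1-\rho)^{-1}$ time scale, so it is dominated by $(\eta_\rho k)^{-s}$ once $k\gtrsim(1-\rho)^{-1}$.
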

\end{highlightedtheorem}

Theorem~\ref{thm:hbm scaling law} identifies three regimes as \(\rho\) increases: fully overdamped SGD-like dynamics with effective learning rate \(\eta_\rho\), mixed damping with an additional transient \(P(k)\), and eventually risk instability.
In the mixed-damping regime, the two risk contributions have a direct spectral interpretation. The power-law term \((\eta_\rho k)^{-s}\) arises from the overdamped flat modes in \(\II_{\mathrm{over}}\), whereas \(P(k)\) comes from the underdamped sharp modes in \(\II_{\mathrm{under}}\) and decays with envelope \(\rho^k\). Increasing \(\rho\) therefore accelerates flat-mode learning through \(\eta_\rho\), while slowing the relaxation of sharp modes. Since \(\rho^k\approx\exp(-(1-\rho)k)\), the latter occurs on the timescale
\[
\tau_\rho:=\frac{1}{1-\rho}\log\left(\frac{\eta}{(1-\rho)^2}\right).
\]

Once the underdamped transient becomes negligible, the Polyak risk law reduces to the SGD law under the replacement \(\eta\mapsto\eta_\rho\), leaving the signal--noise tradeoff unchanged. The distinction lies in the admissible effective learning rate. By Theorem~\ref{thm:stability}, SGD requires \(\eta\lesssim1\), whereas Polyak allows \(\eta_\rho\lesssim B\), so its admissible range grows linearly with the batch size. As we show in Section~\ref{sec:data-scaling}, this enlarged range increases the critical batch size but does not improve the best data-scaling exponent.

For Nesterov, the same damping structure persists, but its look-ahead mechanism modifies both the underdamped transient and noise accumulation in the mixed-damping regime.

\begin{highlightedtheorem}
\begin{theorem}[Scaling law for Nesterov]
\label{thm:nesterov scaling law}
\normalfont
Assume \(\eta<1\) and \(k\gtrsim\max(\eta^{-1}_\rho,(1-\rho)^{-1})\). Then
\begin{equation}
\label{eq:nag-loss-scaling}
\mathbb E[\cE(\btheta_k)]\eqsim
\begin{cases}
(\eta_\rho k)^{-s}+\dfrac{\sigma^2}{B}\eta_\rho, & 1-\rho\gtrsim\sqrt{\eta},\\[2mm]
N(k)+(\eta_\rho k)^{-s}+\dfrac{\sigma^2}{B}\min\{\eta_\rho,\eta_\rho^{1/\beta}\}, & \dfrac{\eta}{B^\beta}\lesssim1-\rho\lesssim\sqrt{\eta},
\end{cases}
\end{equation}
where \(N(k)\) denotes the underdamped transient and satisfies \(0\leq N(k)\lesssim\min\{1,(\eta k)^{-s}\}\rho^k\). If \(1-\rho\lesssim\eta/B^\beta\), the dynamics are not risk stable.
\end{theorem}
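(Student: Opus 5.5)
We will follow the same route as for Theorem~\ref{thm:hbm scaling law}, working mode by mode in the eigenbasis of Assumption~\ref{ass:diagonal-covariance}. By the Nesterov look-ahead identity \eqref{eqn: curvature-adaptivity}, for mode $j$ the mean dynamics is a heavy-ball recursion with step $\eta\lambda_j$ and effective momentum $\rho_j:=\rho(1-\eta\lambda_j)$. The noise enters through the stochastic gradient evaluated at $\btheta_k^{\la}$. We write the second moments $(\EE[e_{k,j}^2],\EE[e_{k,j}e_{k-1,j}],\EE[e_{k-1,j}^2])$ of each mode as a $3\times3$ linear system driven by the noise. Proposition~\ref{prop:noise-geometry} gives the per-mode forcing $\frac{\eta^2\lambda_j}{B}(\cE(\btheta_k^{\la})+\sigma^2)$. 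This yields the renewal form
\[
\ell_k\eqsim h_k+\sum_{i<k}K_{k-1-i}(\ell_i+\sigma^2),
\]
with $h_k=\sum_j\lambda_j|\theta_j^*|^2|\text{(free response)}_{k,j}|^2$ and $K_m=\frac{\eta^2}{B}\sum_j\lambda_j^2 G_j(m)^2$, where $G_j$ is the impulse response of mode $j$. To relate the look-ahead risk to $\ell_k$, we bound $\cE(\btheta^{\la}_k)\lesssim \ell_k+\ell_{k-1}$ up to constants. We then invoke the stability analysis of Theorem~\ref{thm:stability} (Appendix~\ref{app:subsec:label-noise-stability}): in the stated regime $\|K\|_1\le c<1$, so $\ell\eqsim h+\sigma^2\|K\|_1$-type bounds hold after resolving the renewal equation (Neumann series), and the multiplicative part only changes constants. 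Instability below $1-\rho\eqsim\eta/B^\beta$ is inherited directly from Theorem~\ref{thm:stability}.

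For the signal term $h_k$, we split the spectrum at $\lambda_{\mathrm{cut}}\eqsim(1-\rho)^2/\eta$. On overdamped modes, $\eta\lambda_j\lesssim(1-\rho)^2\ll 1-\rho$, so $\rho_j\approx\rho$ and the dominant root is $\approx1-\eta_\rho\lambda_j$. Summing $j^{-(1+s\beta)}e^{-2\eta_\rho\lambda_j k}$ over $j$ gives $(\eta_\rho k)^{-s}$, using $k\gtrsim\eta_\rho^{-1}$ so that the cutoff index $(\eta_\rho k)^{1/\beta}$ lies in range. The condition $k\gtrsim(1-\rho)^{-1}$ kills the fast root. On underdamped modes the amplitude decays as $\rho_j^{k/2}\le(\rho(1-\eta\lambda_j))^{k/2}$, so their contribution is
\[
N(k)\lesssim\sum_{j\in\II_{\mathrm{under}}}j^{-(1+s\beta)}\rho^k e^{-\eta\lambda_j k}\lesssim \min\{1,(\eta k)^{-s}\}\rho^k,
\]
which is the claimed transient. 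In the fully overdamped case ($1-\rho\gtrsim\sqrt\eta$) there are no underdamped modes and we recover the SGD-type law with $\eta\mapsto\eta_\rho$, exactly as for Polyak.

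For the noise floor we compute $\sigma^2\sum_m K_m\eqsim\frac{\sigma^2\eta^2}{B}\sum_j\lambda_j^2\sum_m G_j(m)^2$. Using the closed-form stationary variance of a second-order AR process with parameters $(1+\rho_j-\eta\lambda_j,\,-\rho_j)$, we get $\eta^2\lambda_j\sum_mG_j(m)^2\eqsim \eta\,/\big((1-\rho_j)(2+\ldots)\big)$ up to constants. For overdamped modes $1-\rho_j\approx1-\rho$, which gives a per-mode contribution of $\lambda_j\eta_\rho$ relative to the stationary-variance normalization; accounting for the transient weighting, the overdamped block sums to $\eta_\rho$ times its fraction of the trace. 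For underdamped modes $1-\rho_j\approx 1-\rho+\eta\lambda_j\gtrsim\eta\lambda_j$, so each contributes $\lesssim\lambda_j\cdot\eta/(\eta\lambda_j)$ weighted appropriately. Carrying this through with the power-law spectrum, modes with $\eta_\rho\lambda_j\lesssim1$ contribute $\eta_\rho\sum\lambda_j$, while saturated modes with $\eta_\rho\lambda_j\gtrsim1$ contribute $O(1)$ each. There are about $\eta_\rho^{1/\beta}$ such modes, and the tail beyond them gives $\eta_\rho\cdot\eta_\rho^{(1-\beta)/\beta}=\eta_\rho^{1/\beta}$. Hence the floor is $\frac{\sigma^2}{B}\min\{\eta_\rho,\eta_\rho^{1/\beta}\}$. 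The same count with Polyak, where $1-\rho_j=1-\rho$ for all $j$, gives $\eta_\rho$ instead, which is the source of the difference.

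The main obstacle is making this noise-floor computation sharp, i.e.\ obtaining matching upper and lower bounds across the overdamped/underdamped boundary. This requires uniform two-sided estimates on $\sum_mG_j(m)^2$ for the modified recursion, including modes near $\lambda_{\mathrm{cut}}$ and near critical damping, where the roots coalesce. We would first derive the exact stationary variance formula for the $2\times2$ companion recursion, $\frac{1+\rho_j}{(1-\rho_j)[(1+\rho_j)^2-(1+\rho_j-\eta\lambda_j)^2]}$, and bound it by elementary algebra in the two regimes. The lower bound on the finite-$k$ accumulation, as opposed to the stationary value, then follows from $k\gtrsim\max(\eta_\rho^{-1},(1-\rho)^{-1})$, which ensures the relevant modes have mixed.
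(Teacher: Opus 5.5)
Your proposal is correct and follows essentially the same route as the paper. Both arguments use the impulse-response/renewal decomposition with Green-function kernel $\frac{\eta^2}{B}\sum_j\lambda_j^2G_j(m)^2$, absorb the multiplicative noise through uniform boundedness under the stability condition, evaluate the noise floor via the exact AR(2) stationary-variance formula $\frac{\sigma^2}{B}\sum_j\frac{\eta\lambda_j}{1-\rho+\eta\lambda_j}\eqsim\frac{\sigma^2}{B}\min\{\eta_\rho,\eta_\rho^{1/\beta}\}$, and split the spectrum at $\lambda_{\mathrm{cut}}$ for the signal term. The differences are minor and all suffice for the stated theorem: you obtain boundedness from the renewal lemma rather than the paper's explicit induction; you handle the look-ahead through $\cE(\btheta_k^{\la})\lesssim\ell_k+\ell_{k-1}$ rather than absorbing a $\tau=0$ term; and you bound the underdamped block directly by $\rho^k\sum_j j^{-(1+s\beta)}e^{-\eta\lambda_j k}$ instead of the paper's finer four-stage envelope.
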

\end{highlightedtheorem}

The proof is deferred to Appendix~\ref{app:nesterov scaling law}.
In the fully overdamped regime, Nesterov and Polyak obey the same scaling law, both behaving like SGD with effective learning rate \(\eta_\rho\). Their difference in the mixed-damping regime stems from the curvature-adaptive damping induced by Nesterov's look-ahead mechanism, which damps sharper directions more strongly; see Eq.~\eqref{eqn: curvature-adaptivity}. This suppresses the underdamped transient from
\[
P(k)\lesssim\rho^k
\qquad\textnormal{to}\qquad
N(k)\lesssim\min\{1,(\eta k)^{-s}\}\rho^k,
\]
while leaving the asymptotic power-law signal term unchanged at \((\eta_\rho k)^{-s}\). The same mechanism also suppresses stochastic noise accumulation. When \(\eta_\rho\gg1\), the noise floor scales as
\[
\frac{\sigma^2}{B}\eta_\rho
\qquad\textnormal{for Polyak},\qquad
\frac{\sigma^2}{B}\eta_\rho^{1/\beta}
\qquad\textnormal{for Nesterov}.
\]
For asymptotic data scaling, this reduction in noise accumulation is the key effect.

\section{Optimal data scaling across batch-size regimes}
\label{sec:data-scaling}

We now optimize the final-step risk using the scaling laws derived above, subject to the one-pass constraint $T=D/B$ and the stability constraints. We focus on the large-batch setting, allowing $B$ to grow with the data budget $D$ rather than remain fixed.
\begin{definition}[Batch-size exponent]
\label{def:batch-size-exponent}
For a sequence of training problems with \(D\to\infty\), the batch size has exponent \(b\in[0,1)\) if
\[
B=D^{b+o(1)},
\qquad\text{equivalently}\qquad
b=\lim_{D\to\infty}\frac{\log B}{\log D}.
\]
\end{definition}

\begin{definition}[Batch-size-dependent data-scaling exponent]
\label{def:batch-scaling-exponent}
Fix \(b\in[0,1)\), so that \(B=D^{b+o(1)}\) and \(T=D/B\). For an optimization method with admissible hyperparameter region \(\cA_B\), define its optimal data-scaling exponent \(r(b)\) by
\[
\inf_{(\eta,\rho)\in\cA_B}\EE[\cE(\btheta_T)]
=
D^{-r(b)+o(1)}.
\]
\end{definition}
Here, \(\rho\) is omitted for SGD. For the three methods considered, we write \(r_{\sgd}(b)\), \(r_{\polyak}(b)\), and \(r_{\nesterov}(b)\). A larger \(r(b)\) indicates better data efficiency at batch-size exponent \(b\).

Define the three transition exponents
\begin{equation}
b_1=\frac{s}{s+1},\qquad
b_2=\frac{2s+1/\beta}{2s+1+1/\beta},\qquad
b_3=\frac{2s+1}{2s+2}.
\end{equation}
Since \(\beta>1\) and \(s>0\), \(0<b_1<b_2<b_3<1\). The following theorem gives the optimal data-scaling exponents across batch-size regimes; the proof is deferred to Appendix~\ref{app:sec:data-scaling}.

\begin{highlightedtheorem}
\begin{theorem}[Batch-size-dependent data scaling]
\label{thm:batch-size-scaling}
For \(B=D^{b+o(1)}\) with \(b\in[0,1)\), 
\[
r_{\sgd}(b)=
\begin{cases}
\frac{s}{s+1}, & 0\leq b<b_1,\\[2mm]
s(1-b), & b_1\leq b<1,
\end{cases}
\qquad
r_{\polyak}(b)=
\begin{cases}
\frac{s}{s+1}, & 0\leq b<b_3,\\[2mm]
2s(1-b), & b_3\leq b<1,
\end{cases}
\]
and
\[
r_{\nesterov}(b)=
\begin{cases}
\frac{s}{s+1}, & 0\leq b<b_1,\\[2mm]
\frac{s(1+(\beta-1)b)}{s\beta+1}, & b_1\leq b<b_2,\\[3mm]
2s(1-b), & b_2\leq b<1.
\end{cases}
\]
\end{theorem}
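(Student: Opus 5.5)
The proof reduces to exponent bookkeeping. I would insert $T=D/B$ into the scaling laws of Theorems~\ref{thm:hbm scaling law} and \ref{thm:nesterov scaling law} (and \eqref{eqn:scaling-law-sgd} for SGD). I would parametrize the free hyperparameters by powers of $D$ and maximize the resulting decay exponent subject to the admissibility constraints from Theorem~\ref{thm:stability}. For every method I need an upper bound on the risk (a good choice of $(\eta,\rho)$) and a matching lower bound (no admissible choice does better up to $D^{o(1)}$).

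\emph{SGD.} I set $\eta=D^{-a}$ with $a\ge 0$, since $\eta\lesssim 1$ by Theorem~\ref{thm:stability}, and $T=D^{1-b}$. Then \eqref{eqn:scaling-law-sgd} gives $\EE\cE\eqsim D^{-s(1-b-a)}+D^{-(a+b)}$, so the exponent is $\min\{s(1-b-a),a+b\}$. Balancing the two terms gives $a^\star=(s(1-b)-b)/(s+1)$ and exponent $s/(s+1)$. This choice is feasible ($a^\star\ge0$) exactly when $b\le b_1$; otherwise I take $a=0$ and get $s(1-b)$. The exponent is concave in $a$, so the lower bound is immediate.

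\emph{Polyak.} Past the transient the risk is $(\eta_\rho T)^{-s}+\eta_\rho/B$, so only $\eta_\rho=D^{c}$ matters. The constraints are as follows. Stability in Theorem~\ref{thm:stability} forces $\eta_\rho\lesssim B$, i.e.\ $c\le b$. Killing the transient $P(T)\lesssim\rho^T$ and meeting the hypothesis $T\gtrsim(1-\rho)^{-1}$ require $1-\rho\gtrsim T^{-1}\log D$; combined with $\eta\lesssim1$ this gives $\eta_\rho\lesssim T\,D^{o(1)}$, i.e.\ $c\le 1-b$. To realize any such $c$, I take $1-\rho\eqsim T^{-1}\log D$ and $\eta=\eta_\rho(1-\rho)$, and check the fully overdamped or mixed-damping condition in Theorem~\ref{thm:hbm scaling law}. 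The exponent is $\min\{s(1-b+c),\,b-c\}$. The unconstrained balance is $c^\star=b-s/(1+s)$, which satisfies $c^\star\le b$ automatically and gives $s/(s+1)$. The constraint $c^\star\le1-b$ holds iff $b\le(2s+1)/(2s+2)=b_3$. For $b>b_3$ I take $c=1-b$, which gives $\min\{2s(1-b),2b-1\}=2s(1-b)$, with equality of the two at $b_3$.

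\emph{Nesterov.} The same scheme applies, with noise term $\eta_\rho^{1/\beta}/B$ when $c>0$ and stability cap $c\le\beta b$, which never binds. The exponent is $\min\{s(1-b+c),\,b-c/\beta\}$. Balancing gives $c^\star=(b(1+s)-s)/(s+1/\beta)$ with value $s(1+(\beta-1)b)/(s\beta+1)$. We have $c^\star>0$ iff $b>b_1$; for $b\le b_1$ the SGD-like branch with $c\le0$ gives $s/(s+1)$. The cap $c^\star\le1-b$ holds iff $b\le(2s+1/\beta)/(2s+1+1/\beta)=b_2$. Beyond $b_2$ I take $c=1-b$ and obtain $2s(1-b)$. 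I would also check that the transient $N(T)\lesssim\rho^T$ is negligible under the same choice of $1-\rho$.

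\emph{Main obstacle.} I expect the lower bound to be the hardest step, specifically the cap $\eta_\rho\lesssim T$. Theorems~\ref{thm:hbm scaling law} and \ref{thm:nesterov scaling law} are only stated for $T\gtrsim(1-\rho)^{-1}$, so admissible choices with $1-\rho\ll 1/T$ are not covered by them. For these I would argue directly on the overdamped tail modes. Before momentum saturates, a mode with curvature $\lambda$ has contracted only by a factor of about $\exp(-\eta\lambda T^2)$ with $\eta\lesssim1$. Summing over the tail then gives signal risk $\gtrsim T^{-2s}=D^{-2s(1-b)}$, which is the same ceiling. This uses the fact that the free response of the momentum recursion is monotone for flat modes, and the renewal-equation lower bound from Appendix~\ref{app:sec:stability} handles the noise part.
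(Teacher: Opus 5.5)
Your proposal is correct and follows the paper's route. The paper writes $\eta\eqsim D^{-c}$ and $1-\rho\eqsim D^{-m}$, but the post-transient risk depends only on $m-c$ (your exponent of $\eta_\rho$), and its constraints $c\ge0$, $b+m<1$, and $b+c-m\ge0$ (resp.\ $\beta b+c-m\ge0$) are exactly your caps $\eta_\rho\lesssim\min\{T,B\}$ (resp.\ $\min\{T,B^\beta\}$), so the same balances produce $b_1,b_2,b_3$. The paper simply restricts to $(1-\rho)T\to\infty$ and never bounds choices with $1-\rho\lesssim1/T$, so your last step goes beyond it; there the signal ceiling $T^{-2s}$ alone only settles $b\ge b_3$ for Polyak and $b\ge b_2$ for Nesterov, and below those thresholds you must pair the flat-mode bound $(\eta T^2)^{-s}$ with a finite-horizon noise lower bound $\gtrsim\sigma^2\eta T/B$ (resp.\ $\sigma^2\min\{\eta T,(\eta T)^{1/\beta}\}/B$), so that $\eta T$ plays the role of $\eta_\rho$ in the same optimization.
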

\end{highlightedtheorem}

This theorem shows that the achievable data efficiency depends sharply on batch-size scaling. The resulting phase structure is most clearly seen in Figure~\ref{fig:intro-result} (left), which visualizes the three data-scaling exponents as functions of the batch exponent \(b\) and makes their relative ordering across regimes immediate. The three transition exponents play distinct roles: \(b_1\) is the critical batch exponent of SGD and marks the onset of Nesterov's improvement, \(b_2\) maximizes the Nesterov data-scaling exponent, and \(b_3\) is the critical batch exponent of Polyak.

\begin{itemize}
\item \textbf{Small batch} (\(b\leq b_1\)). All three methods achieve the same optimal data-scaling exponent:
\[
r_{\nesterov}(b)=r_{\polyak}(b)=r_{\sgd}(b).
\]

\item \textbf{Large batch} (\(b_1<b<b_3\)). The three methods separate:
\[
r_{\nesterov}(b)>r_{\polyak}(b)>r_{\sgd}(b),
\]
with Nesterov attaining its best data-scaling exponent at \(b=b_2\).

\item \textbf{Ultra-large batch} (\(b\geq b_3\)). Polyak and Nesterov share the same exponent and outperform SGD:
\[
r_{\nesterov}(b)=r_{\polyak}(b)>r_{\sgd}(b).
\]
\end{itemize}

Thus, Polyak enlarges the batch-size range over which the best small-batch  data scaling is preserved, whereas Nesterov further improves the achievable data-scaling exponent in the large-batch regime.

\section{Numerical validations}
\label{sec:experiment}

In this section, we numerically validate our theoretical predictions.
Experimental details and additional results are provided in Appendix~\ref{app:experiment}.

\paragraph{Risk stability boundaries.}
We validate the stability predictions of Theorem~\ref{thm:stability} for \((s,\beta)=(3,2)\). Figure~\ref{fig:numerical-summary} (left) fixes \(\rho\) and varies \(B\), confirming \(\eta_\polyak^\crit\eqsim B\) and \(\eta_\nesterov^\crit\eqsim B^\beta\) before saturation, while $\eta_{\sgd}^{\crit}$ remains order one. Figure~\ref{fig:numerical-summary} (middle) fixes \(B=32\) and varies \(1-\rho\), confirming the predicted linear dependence of both momentum stability boundaries on \(1-\rho\). Additional results are provided in Appendix~\ref{app:stability-experiment}.

\paragraph{Risk-dynamics scaling laws.}
We next test the full risk dynamics predicted by Theorems~\ref{thm:hbm scaling law} and~\ref{thm:nesterov scaling law}. For fixed \((\eta,\rho,B)\) and \((s,\beta)=(0.3,4)\), Figure~\ref{fig:numerical-summary} (right) verifies three key predictions: an underdamped transient whose transition timescale has dominant $(1-\rho)^{-1}$ dependence, a common power-law signal decay \((\eta_\rho k)^{-s}\), and distinct asymptotic noise floors. In particular, once the transient becomes negligible, Polyak and Nesterov follow the same power law, while Nesterov saturates at a substantially lower noise floor. These are consistent with our theoretical predictions.
 Additional results across \((s,\beta)\) and hyperparameter settings are provided in Appendix~\ref{app:scaling-law-experiments}.

\begin{figure}[!h]
    \centering
    \begin{subfigure}[t]{0.32\textwidth}
        \centering
        \includegraphics[width=\linewidth]{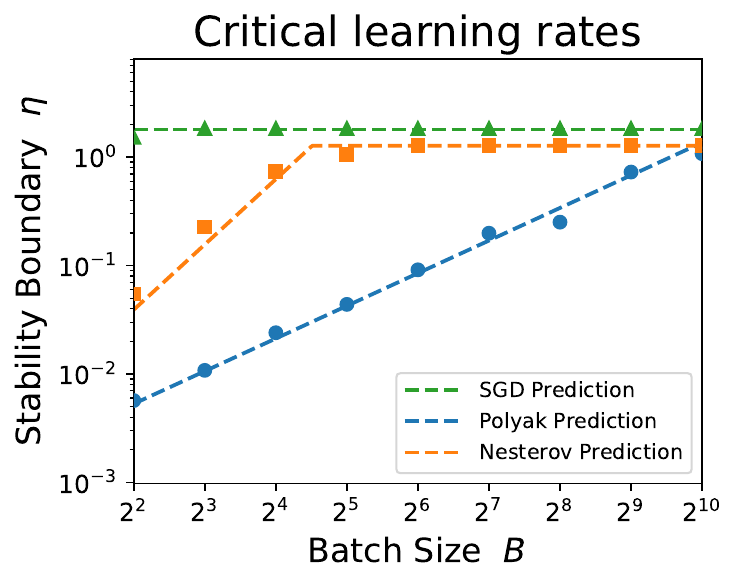}
    \end{subfigure}
    \hspace{-.2em}
    \begin{subfigure}[t]{0.32\textwidth}
        \centering
        \includegraphics[width=\linewidth]{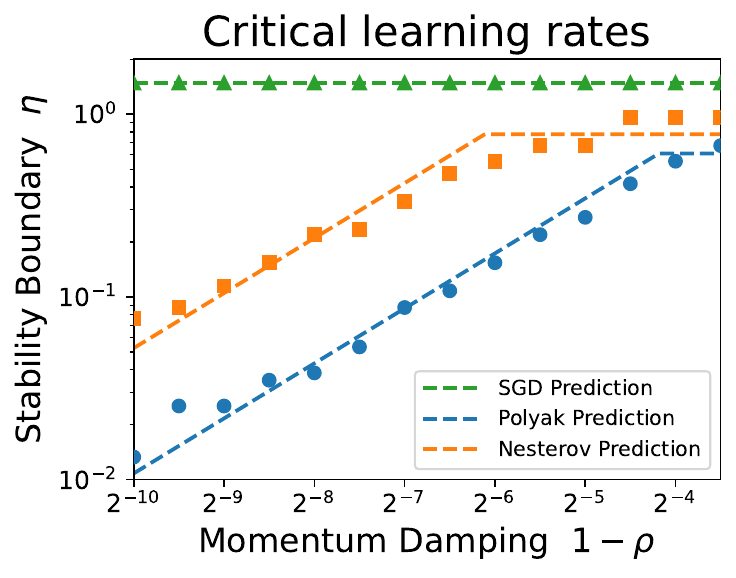}
    \end{subfigure}
    \hspace{-.2em}
    \begin{subfigure}[t]{0.32\textwidth}
        \centering
        \includegraphics[width=\linewidth]{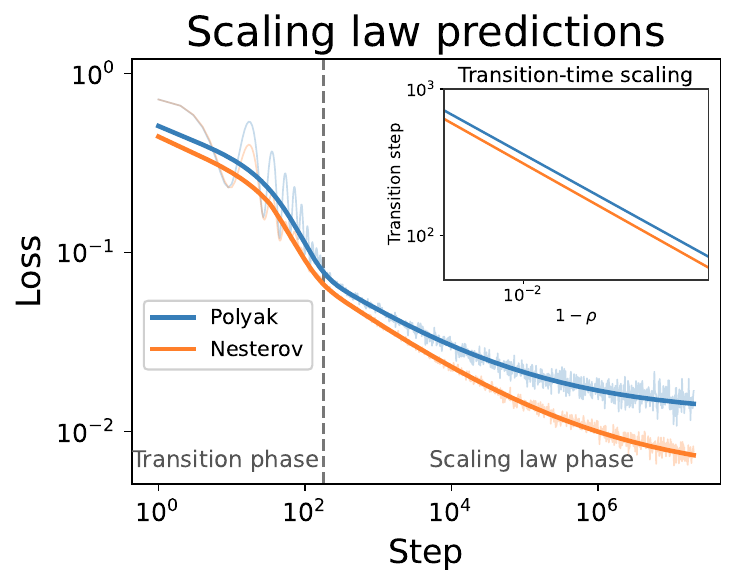}
    \end{subfigure}
    \vspace{-.5em}
  \caption{\textbf{Numerical validation of stability and risk-dynamics scaling laws.}
\textbf{Left:} Critical learning rate versus batch size at fixed momentum. Markers show numerical estimates and dashed curves the theoretical predictions.
\textbf{Middle:} Critical learning rate versus \(1-\rho\) at fixed batch size, confirming the predicted linear scaling before the order-one ceiling.
\textbf{Right:} Polyak and Nesterov risk dynamics, with empirical results shown by thin translucent lines and theoretical predictions by thick solid lines.  The theory captures the underdamped transient, power-law decay, and noise floor. The inset confirms the dominant \((1-\rho)^{-1}\) dependence of the predicted transition timescale \(\tau_\rho\eqsim(1-\rho)^{-1}\log\!\big(\eta/(1-\rho)^2\big)\), with fitted log-log slopes \(-1.00\) for Polyak and \(-1.01\) for Nesterov.}
    \label{fig:numerical-summary}
    \vspace{-1em}
\end{figure}

\paragraph{Batch-size phase diagram at a fixed data budget.} We first test the three-regime structure predicted by Theorem~\ref{thm:batch-size-scaling} at a finite data budget. We fix \(D=2^{14}\) and, for each batch size \(B\), tune \((\eta,\rho)\) to minimize the final excess risk. Figure~\ref{fig:intro-result} (right) exhibits a progression across the three regimes. At small batch sizes, SGD, Polyak, and Nesterov achieve identical risk. As \(B\) increases, SGD deteriorates first, while Polyak maintains the small-batch performance over a wider range, reflecting its enlarged critical batch size. Nesterov goes further: in the large-batch regime, its risk continues to decrease beyond the best level attained by SGD and Polyak, before eventually increasing at ultra-large batch sizes. Thus, Polyak enlarges the data-efficient batch-size range, while Nesterov further improves data efficiency.

\paragraph{Data-scaling exponents across batch-size regimes.}
Having verified the global phase structure, we next test the predicted data-scaling exponents at representative points in each regime. We continue with \((s,\beta)=(0.3,4)\) and choose \(B\eqsim1\), \(B\eqsim D^{0.4}\), and \(B\eqsim D^{0.8}\), representing small batch, large batch, and ultra-large batch, respectively. For each data budget \(D\), we set \((\eta,\rho)\) according to the theoretically prescribed scalings. Figure~\ref{fig:three-regime-plk} shows that the empirical slopes closely match Theorem~\ref{thm:batch-size-scaling}: all three methods share the same exponent for small batch; for large batch, Nesterov outperforms Polyak, which outperforms SGD; and for ultra-large batch, Polyak and Nesterov again share the same exponent and both outperform SGD. These results quantitatively validate the batch-dependent data-scaling exponents underlying the phase diagram. Experimental details and additional settings are provided in Appendix~\ref{app:data-scaling-experiments}.

\begin{figure}[H]
\centering
\includegraphics[width=0.32\textwidth]{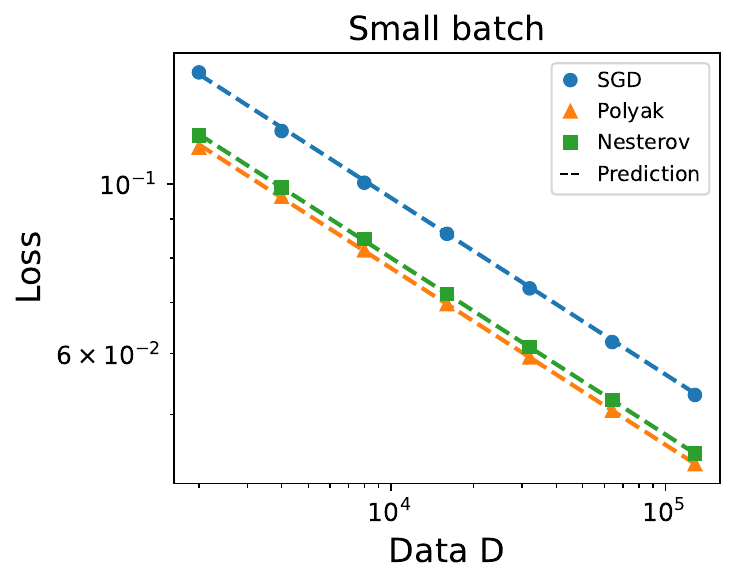}
\includegraphics[width=0.32\textwidth]{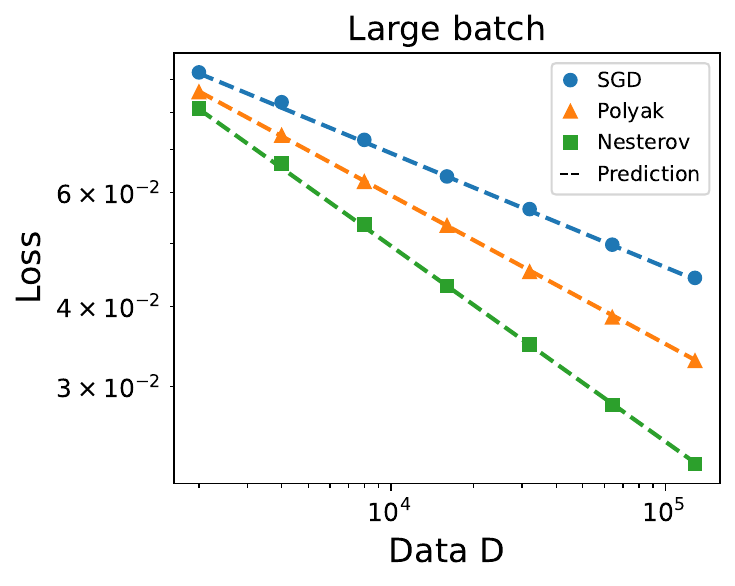}
\includegraphics[width=0.32\textwidth]{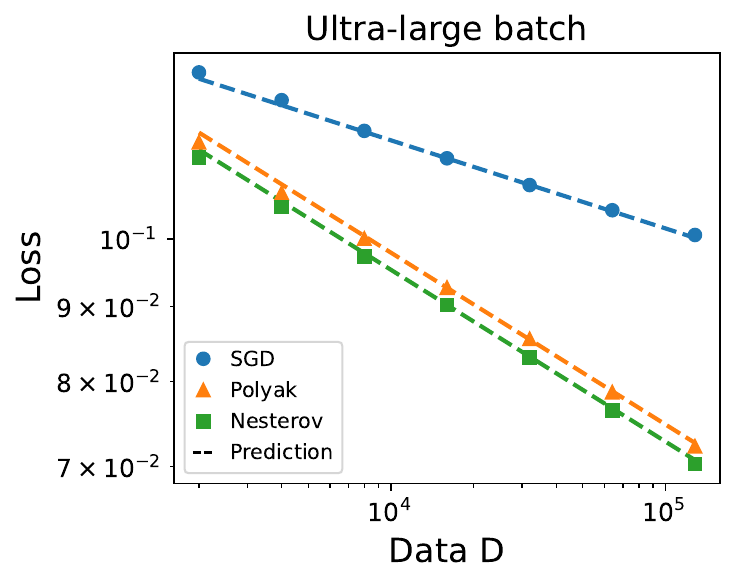}
\vspace{-.5em}
\caption{\textbf{Data-scaling exponents across the three batch-size regimes.} We choose representative batch-size scalings \(B\eqsim1\) \textbf{(left)}, \(B\eqsim D^{0.4}\) \textbf{(middle)}, and \(B\eqsim D^{0.8}\) \textbf{(right)}, and vary the data budget \(D\). Markers show empirical final excess risks, while dashed lines indicate the scaling exponents predicted by Theorem~\ref{thm:batch-size-scaling}. For these representative choices, the predicted ordering is recovered: Nesterov \(=\) Polyak \(=\) SGD at small batch, Nesterov \(>\) Polyak \(>\) SGD at large batch, and Nesterov \(=\) Polyak \(>\) SGD at ultra-large batch.}
\label{fig:three-regime-plk}
\end{figure}

\section{Conclusion and discussion}

We developed a unified analysis of Polyak and Nesterov momentum in large-batch, one-pass training, characterizing their risk stability, risk dynamics, and optimal data scaling. A central conclusion is that the benefit of momentum depends critically on the batch size. At small batch sizes, neither method improves the data-scaling exponent of vanilla SGD. As the batch size grows, however, the two methods provide distinct benefits: Polyak preserves the small-batch optimal data-scaling exponent over a substantially wider range of batch sizes than SGD, whereas Nesterov further improves the data-scaling exponent in the large-batch regime. At ultra-large batch sizes, training becomes limited by the number of sequential updates, and the two momentum methods again exhibit the same signal-limited scaling.

\paragraph{Implications for momentum scheduling.}
Our scaling analysis allows the batch size \(B\), learning rate \(\eta\), and momentum factor \(\rho\) to scale with the data budget \(D\), while keeping them fixed within each training run. A complete theory of time-varying schedules is therefore beyond the present analysis. Nevertheless, our stability results already have direct implications for momentum scheduling. Recent work has increasingly explored time-varying momentum schedules, while also observing that increasing momentum during training can become unstable under stochastic gradients and may require warmup or additional damping~\citep{pagliardini2025ademamix,yarotsky2025sgd,yarotsky2026corner,ferbach2026logarithmic}. Our results provide a complementary quantitative perspective: for the classical Polyak and Nesterov updates, the stability boundary predicts how long an increasing-momentum schedule can remain stable and how this horizon scales with the batch size.

A particularly instructive example is the classical scaling \(1-\rho_k\eqsim k^{-1}\), which underlies Nesterov acceleration in deterministic optimization~\citep{nesterov1983method}. Although Theorem~\ref{thm:stability} is derived for constant momentum, applying its stability boundary locally along this slowly varying schedule predicts
\begin{equation}\label{eqn: stable-schedule}
\eta\lesssim\min\left\{1,\frac{B}{k}\right\}\quad\text{for Polyak},\qquad \eta\lesssim\min\left\{1,\frac{B^\beta}{k}\right\}\quad\text{for Nesterov}.
\end{equation}
Thus, for fixed \(\eta\) and \(B\), the schedule may remain stable for a substantial fraction of training but eventually cross the stochastic stability boundary as \(\rho_k\to1\). The corresponding instability horizons are predicted to scale as
\[
k_{\polyak}^*\eqsim\frac{B}{\eta},\qquad k_{\nesterov}^*\eqsim\frac{B^\beta}{\eta}.
\]
Prior work has observed that increasing the batch size can delay the instability of aggressive momentum schedules~\citep{yarotsky2025sgd}; to our knowledge, the explicit batch-size scaling of this delayed-instability horizon, and its distinction between  Polyak and Nesterov, has not been characterized previously.

\begin{wrapfigure}{r}{0.36\textwidth}
\centering
\vspace{-1.5em}
{\hspace{2.3em}\footnotesize \textbf{Fixed $\eta$, $\rho_k=1-1/(k+1)$}}\\[-.1em]
\includegraphics[width=\linewidth]{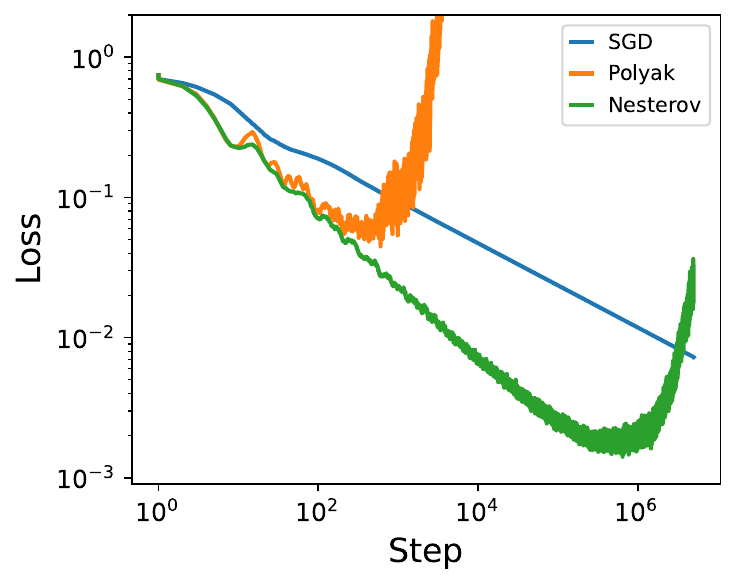}
\vspace{-3.5em}
\end{wrapfigure}
The right panel illustrates this delayed-instability phenomenon for PLK regression with \((s,\beta)=(0.3,4)\) under the schedule \(\rho_k=1-1/(k+1)\); experimental details are provided in Appendix~\ref{app:subsec:blow-up}. Both methods remain stable and benefit from the increasing momentum before eventually becoming unstable. Consistent with the predicted horizons, Nesterov remains stable for substantially more iterations than Polyak because its stability horizon scales as \(B^\beta\) rather than \(B\). This allows Nesterov to exploit the increasing-momentum schedule over a longer training interval and reach a lower risk before instability occurs.

This example highlights a constraint on momentum scheduling that is absent from deterministic acceleration theory: the benefit of increasing momentum depends not only on how strongly it accelerates optimization, but also on how long the resulting trajectory remains inside the stochastic stability region. It also suggests a simple tuning principle. Momentum, learning rate, and batch size should be scheduled jointly so that \(1-\rho_k\) remains above the stochastic stability scale, namely \(1-\rho_k\gtrsim\eta_k/B_k\) for Polyak and \(1-\rho_k\gtrsim\eta_k/B_k^\beta\) for Nesterov. Increasing momentum can therefore be supported either by increasing the batch size or by decreasing the learning rate, suggesting a direct route from the present stability theory to practical momentum-schedule design.

\paragraph{Outlook.}
Our results suggest that learning rate, batch size, and momentum should be designed jointly rather than tuned in isolation. The stability conditions determine how aggressively momentum can be increased at a given learning rate and batch size, while the scaling laws quantify the resulting tradeoff between faster signal learning and stronger noise accumulation. Extending this framework to time-varying \(\eta_k\), \(B_k\), and \(\rho_k\) could therefore provide principled guidance for jointly scaling and scheduling these hyperparameters in large-scale training~\citep{li2026functional,li2026optimal,wang2025sharpness,wang2026fast}. We expect such a theory to provide a practical route toward designing more stable and data-efficient training schedules, rather than relying primarily on empirical tuning.

\section*{Use of AI}
We used AI tools for drafting and language editing throughout the manuscript. In Sections B--F, they were also used to accelerate algebraic calculations and asymptotic estimates. The proof strategies and mathematical arguments were developed by the authors, who verified all AI-assisted calculations and estimates.

\section*{Acknowledgement}
Lei Wu is supported in part by the National Natural Science Foundation of China (NSFC12522120, NSFC92470122, and NSFC12288101). Kairui Li is partially supported by the elite undergraduate training program of School of Mathematical Sciences in Peking University. We thank  Sanxing Cao, Zilin Wang, and Jinbo Wang for helpful discussions. 

\bibliography{ref.bib}

@article{wang2023theoretical,
  title={A theoretical analysis of noise geometry in stochastic gradient descent},
  author={Wang, Mingze and Wu, Lei},
  journal={arXiv preprint arXiv:2310.00692},
  year={2023}
}

@article{polyak1964some,
  title={Some methods of speeding up the convergence of iteration methods},
  author={Polyak, Boris T},
  journal={{USSR} computational mathematics and mathematical physics},
  volume={4},
  number={5},
  pages={1--17},
  year={1964},
  publisher={Elsevier}
}

@article{wang2025sharpness,
  title={The sharpness disparity principle in transformers for accelerating language model pre-training},
  author={Wang, Jinbo and Wang, Mingze and Zhou, Zhanpeng and Yan, Junchi and Wu, Lei and others},
  journal={arXiv preprint arXiv:2502.19002},
  year={2025}
}

@book{feller1991introduction,
  title={An introduction to probability theory and its applications},
  author={Feller, William},
  volume={2},
  year={1991},
  publisher={John Wiley \& Sons}
}

@article{semenov2025benchmarking,
  title={Benchmarking optimizers for large language model pretraining},
  author={Semenov, Andrei and Pagliardini, Matteo and Jaggi, Martin},
  journal={arXiv preprint arXiv:2509.01440},
  year={2025}
}

@article{shah2025practical,
  title={Practical efficiency of {Muon} for pretraining},
  author={Shah, Ishaan and Polloreno, Anthony M and Stratos, Karl and Monk, Philip and Chaluvaraju, Adarsh and Hojel, Andrew and Ma, Andrew and Thomas, Anil and Tanwer, Ashish and Shah, Darsh J and others},
  journal={arXiv preprint arXiv:2505.02222},
  year={2025}
}

@article{marek2026small,
  title={Small batch size training for language models: When vanilla {SGD} works, and why gradient accumulation is wasteful},
  author={Marek, Martin and Lotfi, Sanae and Somasundaram, Aditya and Wilson, Andrew and Goldblum, Micah},
  journal={Advances in Neural Information Processing Systems},
  volume={38},
  pages={148837--148862},
  year={2026}
}

@article{morwani2026compute,
  title={Compute Efficiency and Serial Runtime Tradeoffs for Stochastic Momentum Methods},
  author={Morwani, Depen and Meterez, Alexandru and Nair, Pranav and Kakade, Sham},
  journal={arXiv preprint arXiv:2606.19179},
  year={2026}
}

@inproceedings{nesterov1983method,
  title={A method for solving the convex programming problem with convergence rate O (1/$k^2$)},
  author={Nesterov, Yurii},
  booktitle={Dokl {Akad Nauk SSSR}},
  volume={269},
  pages={543},
  year={1983}
}

@article{li2026functional,
  title={Functional scaling laws in kernel regression: Loss dynamics and learning rate schedules},
  author={Li, Binghui and Chen, Fengling and Huang, Zixun and Wang, Lean and Wu, Lei},
  journal={Advances in Neural Information Processing Systems},
  volume={38},
  pages={101211--101269},
  year={2026}
}

@inproceedings{gitman2019understanding,
  title={Understanding the Role of Momentum in Stochastic Gradient Methods},
  author={Gitman, Igor and Lang, Hunter and Zhang, Pengchuan and Xiao, Lin},
  booktitle={Advances in Neural Information Processing Systems},
  volume={32},
  pages={9633--9643},
  year={2019},
  eprint={1910.13962},
  archivePrefix={arXiv}
}

@inproceedings{wang2024marginal,
  title={The Marginal Value of Momentum for Small Learning Rate {SGD}},
  author={Wang, Runzhe and Malladi, Sadhika and Wang, Tianhao and Lyu, Kaifeng and Li, Zhiyuan},
  booktitle={International Conference on Learning Representations},
  year={2024},
  eprint={2307.15196},
  archivePrefix={arXiv}
}

@article{vidyasagar2025convergence,
  title={Convergence of Momentum-Based Optimization Algorithms with Time-Varying Parameters},
  author={Vidyasagar, Mathukumalli},
  journal={arXiv preprint arXiv:2506.11904},
  year={2025}
}

@inproceedings{velikanov2023view,
  title={A View of Mini-Batch {SGD} via Generating Functions: Conditions of Convergence, Phase Transitions, Benefit from Negative Momenta},
  author={Velikanov, Maksim and Kuznedelev, Denis and Yarotsky, Dmitry},
  booktitle={International Conference on Learning Representations},
  year={2023},
  eprint={2206.11124},
  archivePrefix={arXiv}
}

@article{su2016differential,
  title={A Differential Equation for Modeling {Nesterov}'s Accelerated Gradient Method: Theory and Insights},
  author={Su, Weijie and Boyd, Stephen and Cand{\`e}s, Emmanuel J.},
  journal={Journal of Machine Learning Research},
  volume={17},
  number={153},
  pages={1--43},
  year={2016}
}

@article{wibisono2016variational,
  title={A Variational Perspective on Accelerated Methods in Optimization},
  author={Wibisono, Andre and Wilson, Ashia C. and Jordan, Michael I.},
  journal={Proceedings of the National Academy of Sciences},
  volume={113},
  number={47},
  pages={E7351--E7358},
  year={2016},
  doi={10.1073/pnas.1614734113}
}

@article{shi2022understanding,
  title={Understanding the Acceleration Phenomenon via High-Resolution Differential Equations},
  author={Shi, Bin and Du, Simon S. and Jordan, Michael I. and Su, Weijie J.},
  journal={Mathematical Programming},
  volume={195},
  number={1--2},
  pages={79--148},
  year={2022},
  doi={10.1007/s10107-021-01681-8}
}

@article{wilson2021lyapunov,
  title={A {Lyapunov} Analysis of Accelerated Methods in Optimization},
  author={Wilson, Ashia C. and Recht, Ben and Jordan, Michael I.},
  journal={Journal of Machine Learning Research},
  volume={22},
  number={113},
  pages={1--34},
  year={2021}
}

@article{kovachki2021continuous,
  title={Continuous Time Analysis of Momentum Methods},
  author={Kovachki, Nikola B. and Stuart, Andrew M.},
  journal={Journal of Machine Learning Research},
  volume={22},
  number={17},
  pages={1--40},
  year={2021}
}

@article{yuan2016influence,
  title={On the Influence of Momentum Acceleration on Online Learning},
  author={Yuan, Kun and Ying, Bicheng and Sayed, Ali H.},
  journal={Journal of Machine Learning Research},
  volume={17},
  number={192},
  pages={1--66},
  year={2016}
}

@inproceedings{gupta2024nesterov,
  title={{Nesterov} Acceleration despite Very Noisy Gradients},
  author={Gupta, Kanan and Siegel, Jonathan W. and Wojtowytsch, Stephan},
  booktitle={Advances in Neural Information Processing Systems},
  volume={37},
  pages={20694--20744},
  year={2024},
  eprint={2302.05515},
  archivePrefix={arXiv}
}

@inproceedings{liu2020improved,
  title={An Improved Analysis of Stochastic Gradient Descent with Momentum},
  author={Liu, Yanli and Gao, Yuan and Yin, Wotao},
  booktitle={Advances in Neural Information Processing Systems},
  volume={33},
  year={2020},
  eprint={2007.07989},
  archivePrefix={arXiv}
}

@inproceedings{kingma2015adam,
  title={{Adam}: A Method for Stochastic Optimization},
  author={Kingma, Diederik P. and Ba, Jimmy},
  booktitle={International Conference on Learning Representations},
  year={2015},
  eprint={1412.6980},
  archivePrefix={arXiv}
}

@article{jordan2024muon,
  title={Muon: An optimizer for hidden layers in neural networks, 2024},
  author={Jordan, Keller and Jin, Yuchen and Boza, Vlado and You, Jiacheng and Cesista, Franz and Newhouse, Laker and Bernstein, Jeremy},
  journal={URL https://kellerjordan. github. io/posts/muon},
  volume={6},
  number={3},
  pages={4},
  year={2024}
}

@article{liu2025muon,
  title={{Muon} is Scalable for {LLM} Training},
  author={Liu, Jingyuan and Su, Jianlin and Yao, Xingcheng and Jiang, Zhejun and Lai, Guokun and Du, Yulun and Qin, Yidao and Xu, Weixin and Lu, Enzhe and Yan, Junjie and Chen, Yanru and Zheng, Huabin and Liu, Yibo and Liu, Shaowei and Yin, Bohong and He, Weiran and Zhu, Han and Wang, Yuzhi and Wang, Jianzhou and Dong, Mengnan and Zhang, Zheng and Kang, Yongsheng and Zhang, Hao and Xu, Xinran and Zhang, Yutao and Wu, Yuxin and Zhou, Xinyu and Yang, Zhilin},
  journal={arXiv preprint arXiv:2502.16982},
  year={2025}
}

@inproceedings{zhu2019anisotropic,
  title={The Anisotropic Noise in Stochastic Gradient Descent: Its Behavior of Escaping from Sharp Minima and Regularization Effects},
  author={Zhu, Zhanxing and Wu, Jingfeng and Yu, Bing and Wu, Lei and Ma, Jinwen},
  booktitle={International Conference on Machine Learning},
  pages={7654--7663},
  year={2019},
  eprint={1803.00195},
  archivePrefix={arXiv}
}

@inproceedings{wang2026fast,
  title={Fast catch-up, late switching: Optimal batch size scheduling via functional scaling laws},
  author={Wang, Jinbo and Li, Binghui and Zhou, Zhanpeng and Wang, Mingze and Zhang, Jiaqi and Cai, Xunliang and Wu, Lei},
  booktitle={International Conference on Learning Representations},
  volume={2026},
  pages={7310--7340},
  year={2026}
}

@article{brown2020language,
  title={Language models are few-shot learners},
  author={Brown, Tom and Mann, Benjamin and Ryder, Nick and Subbiah, Melanie and Kaplan, Jared D and Dhariwal, Prafulla and Neelakantan, Arvind and Shyam, Pranav and Sastry, Girish and Askell, Amanda and others},
  journal={Advances in Neural Information Processing Systems},
  volume={33},
  pages={1877--1901},
  year={2020}
}

@inproceedings{wu2018sgd,
  title={How {SGD} Selects the Global Minima in Over-parameterized Learning: A Dynamical Stability Perspective},
  author={Wu, Lei and Ma, Chao and E, Weinan},
  booktitle={Advances in Neural Information Processing Systems},
  volume={31},
  pages={8279--8288},
  year={2018}
}

@inproceedings{nar2018step,
  title={Step Size Matters in Deep Learning},
  author={Nar, Kamil and Sastry, S. Shankar},
  booktitle={Advances in Neural Information Processing Systems},
  volume={31},
  pages={3436--3444},
  year={2018},
  eprint={1805.08890},
  archivePrefix={arXiv}
}

@inproceedings{ma2021linear,
  title={On Linear Stability of {SGD} and Input-Smoothness of Neural Networks},
  author={Ma, Chao and Ying, Lexing},
  booktitle={Advances in Neural Information Processing Systems},
  volume={34},
  year={2021},
  eprint={2105.13462},
  archivePrefix={arXiv}
}

@inproceedings{wu2022alignment,
  title={The Alignment Property of {SGD} Noise and How It Helps Select Flat Minima: A Stability Analysis},
  author={Wu, Lei and Wang, Mingze and Su, Weijie J.},
  booktitle={Advances in Neural Information Processing Systems},
  volume={35},
  pages={4680--4693},
  year={2022},
  eprint={2207.02628},
  archivePrefix={arXiv}
}

@inproceedings{wu2023implicit,
  title={The Implicit Regularization of Dynamical Stability in Stochastic Gradient Descent},
  author={Wu, Lei and Su, Weijie J.},
  booktitle={International Conference on Machine Learning},
  series={Proceedings of Machine Learning Research},
  volume={202},
  pages={37656--37684},
  year={2023},
  eprint={2305.17490},
  archivePrefix={arXiv}
}

@article{chemnitz2025characterizing,
  title={Characterizing Dynamical Stability of Stochastic Gradient Descent in Overparameterized Learning},
  author={Chemnitz, Dennis and Engel, Maximilian},
  journal={Journal of Machine Learning Research},
  volume={26},
  number={134},
  pages={1--46},
  year={2025}
}

@inproceedings{mulayoff2021implicit,
  title={The Implicit Bias of Minima Stability: A View from Function Space},
  author={Mulayoff, Rotem and Michaeli, Tomer and Soudry, Daniel},
  booktitle={Advances in Neural Information Processing Systems},
  volume={34},
  year={2021}
}

@inproceedings{nacson2023implicit,
  title={The Implicit Bias of Minima Stability in Multivariate Shallow {ReLU} Networks},
  author={Nacson, Mor Shpigel and Mulayoff, Rotem and Ongie, Greg and Michaeli, Tomer and Soudry, Daniel},
  booktitle={International Conference on Learning Representations},
  year={2023},
  eprint={2306.17499},
  archivePrefix={arXiv}
}

@inproceedings{qiao2024stable,
  title={Stable Minima Cannot Overfit in Univariate {ReLU} Networks: Generalization by Large Step Sizes},
  author={Qiao, Dan and Zhang, Kaiqi and Singh, Esha and Soudry, Daniel and Wang, Yu-Xiang},
  booktitle={Advances in Neural Information Processing Systems},
  volume={37},
  year={2024},
  eprint={2406.06838},
  archivePrefix={arXiv}
}

@inproceedings{kaur2023maximum,
  title={On the Maximum {Hessian} Eigenvalue and Generalization},
  author={Kaur, Simran and Cohen, Jeremy and Lipton, Zachary Chase},
  booktitle={Proceedings on ``I Can't Believe It's Not Better! -- Understanding Deep Learning Through Empirical Falsification'' at NeurIPS 2022 Workshops},
  series={Proceedings of Machine Learning Research},
  volume={187},
  pages={51--65},
  year={2023},
  eprint={2206.10654},
  archivePrefix={arXiv}
}

@article{hestness2017deep,
  title={Deep Learning Scaling Is Predictable, Empirically},
  author={Hestness, Joel and Narang, Sharan and Ardalani, Newsha and Diamos, Gregory and Jun, Heewoo and Kianinejad, Hassan and Patwary, Mostofa Ali and Yang, Yang and Zhou, Yanqi},
  journal={arXiv preprint arXiv:1712.00409},
  year={2017}
}

@article{kaplan2020scaling,
  title={Scaling Laws for Neural Language Models},
  author={Kaplan, Jared and McCandlish, Sam and Henighan, Tom and Brown, Tom B. and Chess, Benjamin and Child, Rewon and Gray, Scott and Radford, Alec and Wu, Jeffrey and Amodei, Dario},
  journal={arXiv preprint arXiv:2001.08361},
  year={2020}
}

@inproceedings{hoffmann2022training,
  title={Training Compute-Optimal Large Language Models},
  author={Hoffmann, Jordan and Borgeaud, Sebastian and Mensch, Arthur and Buchatskaya, Elena and Cai, Trevor and Rutherford, Eliza and de Las Casas, Diego and Hendricks, Lisa Anne and Welbl, Johannes and Clark, Aidan and Hennigan, Tom and Noland, Eric and Millican, Katie and van den Driessche, George and Damoc, Bogdan and Guy, Aurelia and Osindero, Simon and Simonyan, Karen and Elsen, Erich and Vinyals, Oriol and Rae, Jack W. and Sifre, Laurent},
  booktitle={Advances in Neural Information Processing Systems},
  volume={35},
  pages={30016--30030},
  year={2022},
  eprint={2203.15556},
  archivePrefix={arXiv}
}

@article{grattafiori2024llama,
  title={The {LLaMA} 3 Herd of Models},
  author={Grattafiori, Aaron and Dubey, Abhimanyu and Jauhri, Abhinav and others},
  journal={arXiv preprint arXiv:2407.21783},
  year={2024}
}

@article{deepseekai2026deepseekv4,
  title={{DeepSeek-V4}: Towards Highly Efficient Million-Token Context Intelligence},
  author={{DeepSeek-AI} and Xu, Anyi and Lin, Bangcai and Xue, Bing and others},
  journal={arXiv preprint arXiv:2606.19348},
  year={2026}
}

@article{sharma2022scaling,
  title={Scaling Laws from the Data Manifold Dimension},
  author={Sharma, Utkarsh and Kaplan, Jared},
  journal={Journal of Machine Learning Research},
  volume={23},
  number={9},
  pages={1--34},
  year={2022},
  eprint={2004.10802},
  archivePrefix={arXiv}
}

@article{hutter2021learning,
  title={Learning Curve Theory},
  author={Hutter, Marcus},
  journal={arXiv preprint arXiv:2102.04074},
  year={2021}
}

@article{maloney2022solvable,
  title={A Solvable Model of Neural Scaling Laws},
  author={Maloney, Alexander and Roberts, Daniel A. and Sully, James},
  journal={arXiv preprint arXiv:2210.16859},
  year={2022}
}

@inproceedings{wei2022more,
  title={More Than a Toy: Random Matrix Models Predict How Real-World Neural Representations Generalize},
  author={Wei, Alexander and Hu, Wei and Steinhardt, Jacob},
  booktitle={International Conference on Machine Learning},
  series={Proceedings of Machine Learning Research},
  volume={162},
  pages={23549--23588},
  year={2022},
  eprint={2203.06176},
  archivePrefix={arXiv}
}

@inproceedings{jain2024scaling,
  title={Scaling Laws for Learning with Real and Surrogate Data},
  author={Jain, Ayush and Montanari, Andrea and Sasoglu, Eren},
  booktitle={Advances in Neural Information Processing Systems},
  volume={37},
  year={2024},
  eprint={2402.04376},
  archivePrefix={arXiv}
}

@inproceedings{michaud2023quantization,
  title={The Quantization Model of Neural Scaling},
  author={Michaud, Eric J. and Liu, Ziming and Girit, Uzay and Tegmark, Max},
  booktitle={Advances in Neural Information Processing Systems},
  volume={36},
  year={2023},
  eprint={2303.13506},
  archivePrefix={arXiv}
}

@inproceedings{nam2024exactly,
  title={An Exactly Solvable Model for Emergence and Scaling Laws in the Multitask Sparse Parity Problem},
  author={Nam, Yoonsoo and Fonseca, Nayara and Lee, Seok Hyeong and Louis, Ard},
  booktitle={Advances in Neural Information Processing Systems},
  volume={37},
  year={2024},
  eprint={2404.17563},
  archivePrefix={arXiv}
}

@article{atanasov2026scaling,
  title={Scaling and Renormalization in High-Dimensional Regression},
  author={Atanasov, Alexander and Zavatone-Veth, Jacob A. and Pehlevan, Cengiz},
  journal={Journal of Statistical Mechanics: Theory and Experiment},
  year={2026},
  eprint={2405.00592},
  archivePrefix={arXiv}
}

@article{bahri2024explaining,
  title={Explaining Neural Scaling Laws},
  author={Bahri, Yasaman and Dyer, Ethan and Kaplan, Jared and Lee, Jaehoon and Sharma, Utkarsh},
  journal={Proceedings of the National Academy of Sciences},
  volume={121},
  number={27},
  year={2024},
  doi={10.1073/pnas.2311878121}
}

@inproceedings{dohmatob2024tale,
  title={A Tale of Tails: Model Collapse as a Change of Scaling Laws},
  author={Dohmatob, Elvis and Feng, Yunzhen and Yang, Pu and Charton, Francois and Kempe, Julia},
  booktitle={International Conference on Machine Learning},
  year={2024},
  eprint={2402.07043},
  archivePrefix={arXiv}
}

@inproceedings{bordelon2024dynamical,
  title={A Dynamical Model of Neural Scaling Laws},
  author={Bordelon, Blake and Atanasov, Alexander and Pehlevan, Cengiz},
  booktitle={International Conference on Machine Learning},
  series={Proceedings of Machine Learning Research},
  volume={235},
  pages={4345--4382},
  year={2024},
  eprint={2402.01092},
  archivePrefix={arXiv}
}

@inproceedings{lin2024scaling,
  title={Scaling Laws in Linear Regression: Compute, Parameters, and Data},
  author={Lin, Licong and Wu, Jingfeng and Kakade, Sham M. and Bartlett, Peter L. and Lee, Jason D.},
  booktitle={Advances in Neural Information Processing Systems},
  volume={37},
  year={2024},
  eprint={2406.08466},
  archivePrefix={arXiv}
}

@article{paquette2024phases,
  title={4+3 Phases of Compute-Optimal Neural Scaling Laws},
  author={Paquette, Elliot and Paquette, Courtney and Xiao, Lechao and Pennington, Jeffrey},
  journal={arXiv preprint arXiv:2405.15074},
  year={2024}
}

@inproceedings{bordelon2025feature,
  title={How Feature Learning Can Improve Neural Scaling Laws},
  author={Bordelon, Blake and Atanasov, Alexander and Pehlevan, Cengiz},
  booktitle={International Conference on Learning Representations},
  year={2025},
  eprint={2409.17858},
  archivePrefix={arXiv}
}

@inproceedings{yan2026larger,
  title={Larger Datasets Can Be Repeated More: A Theoretical Analysis of Multi-Epoch Scaling in Linear Regression},
  author={Yan, Tingkai and Wen, Haodong and Li, Binghui and Luo, Kairong and Chen, Wenguang and Lyu, Kaifeng},
  booktitle={International Conference on Learning Representations},
  year={2026},
  eprint={2511.13421},
  archivePrefix={arXiv}
}

@inproceedings{lin2025improved,
  title={Improved Scaling Laws in Linear Regression via Data Reuse},
  author={Lin, Licong and Wu, Jingfeng and Bartlett, Peter L.},
  booktitle={Advances in Neural Information Processing Systems},
  volume={38},
  year={2025},
  eprint={2506.08415},
  archivePrefix={arXiv}
}

@article{sous2026asymmetric,
  title={Asymmetric Scaling Laws from Sparse Features},
  author={Sous, John and Winer, Michael},
  journal={arXiv preprint arXiv:2605.23591},
  year={2026}
}

@inproceedings{zhang2025critical,
  title={How Does Critical Batch Size Scale in Pre-Training?},
  author={Zhang, Hanlin and Morwani, Depen and Vyas, Nikhil and Wu, Jingfeng and Zou, Difan and Ghai, Udaya and Foster, Dean and Kakade, Sham M.},
  booktitle={International Conference on Learning Representations},
  year={2025},
  eprint={2410.21676},
  archivePrefix={arXiv}
}

@article{li2026optimal,
  title={Optimal Learning-Rate Schedules under Functional Scaling Laws: Power Decay and Warmup-Stable-Decay},
  author={Li, Binghui and Wang, Zilin and Chen, Fengling and Zhao, Shiyang and Zheng, Ruiheng and Wu, Lei},
  journal={arXiv preprint arXiv:2602.06797},
  year={2026}
}

@article{ferbach2026dimension,
  title={Dimension-adapted momentum outscales sgd},
  author={Ferbach, Damien and Everett, Katie and Gidel, Gauthier and Paquette, Elliot and Paquette, Courtney},
  journal={Advances in Neural Information Processing Systems},
  volume={38},
  pages={112780--112977},
  year={2026}
}

@article{liu2019acceleratingsgdmomentumoverparameterized,
  title={Accelerating {SGD} with momentum for over-parameterized learning},
  author={Liu, Chaoyue and Belkin, Mikhail},
  journal={arXiv preprint arXiv:1810.13395},
  year={2018}
}

@article{ding2025scalinglawstochasticgradient,
      title={Scaling law for stochastic gradient descent in quadratically parameterized linear regression},
      author={Ding, Shihong and Zhang, Haihan and Zhao, Hanzhen and Fang, Cong},
      journal={arXiv preprint arXiv:2502.09106},
      year={2025}
}

@article{zhang2024optimalityacceleratedsgdhighdimensional,
  title={The optimality of (accelerated) {SGD} for high-dimensional quadratic optimization},
  author={Zhang, Haihan and Liu, Yuanshi and Chen, Qianwen and Fang, Cong},
  journal={arXiv preprint arXiv:2409.09745},
  year={2024}
}

@inproceedings{yarotsky2026corner,
  title={Corner Gradient Descent},
  author={Yarotsky, Dmitry},
  booktitle={International Conference on Learning Representations},
  volume={2026},
  pages={156292--156327},
  year={2026}
}

@inproceedings{pagliardini2025ademamix,
  title={The {AdEMAMix} Optimizer: Better, Faster, Older},
  author={Pagliardini, Matteo and Ablin, Pierre and Grangier, David},
  booktitle={International Conference on Learning Representations},
  volume={2025},
  pages={64715--64757},
  year={2025}
}

@inproceedings{yarotsky2025sgd,
  title={{SGD} with memory: fundamental properties and stochastic acceleration},
  author={Yarotsky, Dmitry and Velikanov, Maksim},
  booktitle={International Conference on Learning Representations},
  volume={2025},
  pages={79673--79715},
  year={2025}
}

@article{caponnetto2005fast,
  title={Fast rates for regularized least-squares algorithm},
  author={Caponnetto, Andrea and Vito, Ernesto De},
  year={2005}
}

@article{caponnetto2007optimal,
  title={Optimal rates for the regularized least-squares algorithm},
  author={Caponnetto, Andrea and De Vito, Ernesto},
  journal={Foundations of Computational Mathematics},
  volume={7},
  pages={331--368},
  year={2007},
  publisher={Springer}
}

@article{spigler2020asymptotic,
  title={Asymptotic learning curves of kernel methods: empirical data versus teacher{--}student paradigm},
  author={Spigler, Stefano and Geiger, Mario and Wyart, Matthieu},
  journal={Journal of Statistical Mechanics: Theory and Experiment},
  volume={2020},
  number={12},
  pages={124001},
  year={2020},
  publisher={IOP Publishing}
}

@article{ferbach2026logarithmic,
  title={Logarithmic-time schedules for scaling language models with momentum},
  author={Ferbach, Damien and Paquette, Courtney and Gidel, Gauthier and Everett, Katie and Paquette, Elliot},
  journal={arXiv preprint arXiv:2602.05298},
  year={2026}
}

@inproceedings{bordelon2020spectrum,
  title={Spectrum dependent learning curves in kernel regression and wide neural networks},
  author={Bordelon, Blake and Canatar, Abdulkadir and Pehlevan, Cengiz},
  booktitle={International Conference on Machine Learning},
  pages={1024--1034},
  year={2020},
  organization={PMLR}
}
\bibliographystyle{bibstyle}

\newpage

\appendix
\part{Appendix}
\parttoc

\newpage
\section{Technical preliminaries}
\subsection{Equivalence to kernel regression}

We briefly relate the PLK regression in Section~\ref{sec:plkr} to standard kernel regression. Given a positive semidefinite kernel \(K:\cX\times\cX\to\RR\), kernel regression learns over its associated reproducing kernel Hilbert space (RKHS) \(\cH_K\).  Given an input distribution $\cD$, under mild conditions, Mercer's theorem ensures that the kernel admits the following eigendecomposition
\[
K(\bx,\bx')
=
\sum_{j=1}^\infty \lambda_j e_j(\bx) e_j(\bx'),
\]
where \(\{e_j\}_{j\geq1}\) are orthonormal in \(L^2(\cD)\). The standard capacity condition assumes \(\lambda_j\eqsim j^{-\beta}\). Target regularity is commonly described through the interpolation spaces
\[
\cH_K^r
=
\left\{
\sum_{j=1}^\infty a_j e_j:
\sum_{j=1}^\infty \frac{a_j^2}{\lambda_j^r}<\infty
\right\},
\]
with a source condition of the form \(f^*\in\cH_K^r\).

Our formulation is the corresponding feature-map representation. Under Assumption~\ref{ass:diagonal-covariance}, define \(e_j:=\phi_j/\lambda_j^{1/2}\). Then \(\{e_j\}_{j\geq1}\) is orthonormal in \(L^2(\cD)\), and
\[
K_\phi(\bx,\bx')
=
\sum_{j=1}^\infty \phi_j(\bx)\phi_j(\bx')
=
\sum_{j=1}^\infty \lambda_j e_j(\bx)e_j(\bx').
\]
Thus, Assumption~\ref{ass:power-law}, \(\lambda_j\eqsim j^{-\beta}\), is precisely the standard capacity condition.

For the target function,
\[
f^*
=
\sum_{j=1}^\infty \theta_j^*\phi_j
=
\sum_{j=1}^\infty a_j^*e_j,
\qquad
a_j^*:=\lambda_j^{1/2}\theta_j^*.
\]
Our task-difficulty assumption gives \(|a_j^*|^2\eqsim j^{-1}\lambda_j^s\). Hence, for any \(\delta\in(0,s)\),
\[
\sum_{j=1}^\infty
\frac{|a_j^*|^2}{\lambda_j^{s-\delta}}
\eqsim
\sum_{j=1}^\infty j^{-1}\lambda_j^\delta
\eqsim
\sum_{j=1}^\infty j^{-1-\beta\delta}
<\infty.
\]
Therefore, \(f^*\in\cH_{K_\phi}^{s-\delta}\) for every \(\delta\in(0,s)\). Thus, \(\beta\) corresponds to the standard capacity exponent, while \(s\) characterizes the source regularity of the target.

\subsection{Rotational equivariance of SGD and momentum methods}
\label{app:subsec:rotation invariant}

We show that Assumption~\ref{ass:diagonal-covariance} is without loss of generality. Since \(\bH=\EE[\bphi(\bx)\otimes\bphi(\bx)]\) is positive, self-adjoint, and trace class, it admits a spectral decomposition
\[
\bH=\bm U\bm\Lambda\bm U^*,
\qquad
\bm\Lambda=\diag(\lambda_1,\lambda_2,\ldots),
\]
for some unitary operator \(\bm U\). Define the rotated variables
\[
\btheta'=\bm U^*\btheta,\qquad
\btheta^{*\prime}=\bm U^*\btheta^*,\qquad
\bphi'(\bx)=\bm U^*\bphi(\bx).
\]
Unitary invariance of the inner product gives
\[
f'(\bx;\btheta')=\langle\bphi'(\bx),\btheta'\rangle
=\langle\bphi(\bx),\btheta\rangle=f(\bx;\btheta),
\]
and hence, for every mini-batch \(\cB_k\),
\[
\bg'(\btheta';\cB_k)=\bm U^*\bg(\btheta;\cB_k).
\]
It follows immediately that the SGD and Polyak recursions are preserved under \(\btheta_k'=\bm U^*\btheta_k\). For Nesterov, the look-ahead point transforms as
\[
\btheta_k^{\la\prime}
=\bm U^*\btheta_k^{\la}
=\btheta_k'+\rho(\btheta_k'-\btheta_{k-1}'),
\]
so its recursion is preserved as well. Finally,
\[
\bH'=\EE[\bphi'(\bx)\otimes\bphi'(\bx)]
=\bm U^*\bH\bm U
=\bm\Lambda.
\]
Thus, SGD, Polyak, and Nesterov are equivariant under unitary changes of basis, while their predictions and risks are unchanged. We may therefore work in the eigenbasis of \(\bH\), where the covariance operator is diagonal, without loss of generality.

\subsection{Gaussian fourth-moment identities}
\label{app:subsec:gaussian-fourth-moment}

We record several Gaussian fourth-moment identities used repeatedly below.

\begin{lemma}[Gaussian fourth-moment identities]
\label{app:lem:gaussian-fourth-moment}
Let \(\bx\sim\mathcal N(\bm0,\bH)\) in a separable Hilbert space \(\mathcal H\), where \(\bH\) is a positive trace-class covariance operator. For any \(\bu,\bv\in\mathcal H\),
\begin{equation}
\label{app:eq:gaussian-fourth-moment-directional}
\EE\!\left[\langle\bx,\bu\rangle^2\langle\bx,\bv\rangle^2\right]
=
\langle\bu,\bH\bu\rangle\langle\bv,\bH\bv\rangle
+
2\langle\bu,\bH\bv\rangle^2.
\end{equation}
Equivalently, for any self-adjoint trace-class operator \(\bA\),
\begin{equation}
\label{app:eq:gaussian-fourth-moment}
\EE\!\left[(\bx\otimes\bx)\bA(\bx\otimes\bx)\right]
=
2\bH\bA\bH+\bH\Tr(\bH\bA).
\end{equation}
\end{lemma}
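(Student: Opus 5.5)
The plan is to reduce both identities to a statement about a pair of jointly Gaussian real random variables and then apply Isserlis' (Wick's) theorem.

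For \eqref{app:eq:gaussian-fourth-moment-directional}, set \(a=\langle\bx,\bu\rangle\) and \(b=\langle\bx,\bv\rangle\).
\begin{itemize}
\item Since \(\bx\sim\mathcal N(\bm0,\bH)\), every continuous linear functional of \(\bx\) is centered Gaussian, and \((a,b)\) is jointly Gaussian.
\item Its covariance entries are \(\EE[a^2]=\langle\bu,\bH\bu\rangle\), \(\EE[b^2]=\langle\bv,\bH\bv\rangle\) and \(\EE[ab]=\langle\bu,\bH\bv\rangle\).
\item Isserlis' theorem for four centered jointly Gaussian variables gives
\[
\EE[a\,a\,b\,b]=\EE[a^2]\EE[b^2]+2\,\EE[ab]^2,
\]
since the three pairings contribute \(\EE[aa]\EE[bb]\), \(\EE[ab]\EE[ab]\) and \(\EE[ab]\EE[ab]\).
\item This is exactly \eqref{app:eq:gaussian-fourth-moment-directional}. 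If needed, Isserlis' theorem itself can be checked directly: write \(b=ca+d\) with \(d\) independent of \(a\), and use \(\EE[a^4]=3\EE[a^2]^2\).
\end{itemize}

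For the operator identity \eqref{app:eq:gaussian-fourth-moment}, I would interpret the expectation weakly, as a bounded self-adjoint operator. Since both sides are self-adjoint, by polarization it suffices to compare quadratic forms \(\langle\bw,\cdot\,\bw\rangle\) for every \(\bw\in\mathcal H\). The argument runs as follows.
\begin{itemize}
\item The left-hand quadratic form is
\[
\langle\bw,(\bx\otimes\bx)\bA(\bx\otimes\bx)\bw\rangle=\langle\bx,\bw\rangle^2\langle\bx,\bA\bx\rangle .
\]
\item Diagonalize \(\bA=\sum_i\alpha_i\,\be_i\otimes\be_i\) with \(\{\be_i\}\) orthonormal and \(\sum_i|\alpha_i|<\infty\). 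Then \(\langle\bx,\bA\bx\rangle=\sum_i\alpha_i\langle\bx,\be_i\rangle^2\).
\item Exchange the sum and the expectation, then apply \eqref{app:eq:gaussian-fourth-moment-directional} term by term with \(\bu=\bw\) and \(\bv=\be_i\). This gives
\[
\sum_i\alpha_i\bigl(\langle\bw,\bH\bw\rangle\langle\be_i,\bH\be_i\rangle+2\langle\bw,\bH\be_i\rangle^2\bigr).
\]
\item The first part sums to \(\langle\bw,\bH\bw\rangle\Tr(\bH\bA)\).
\item The second part sums to \(2\langle\bH\bw,\bA\bH\bw\rangle=2\langle\bw,\bH\bA\bH\bw\rangle\).
\item This matches the right-hand side of \eqref{app:eq:gaussian-fourth-moment}.
\end{itemize}

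The main technical point, though routine, is justifying the interchange of sum and expectation in infinite dimensions. I would use Fubini--Tonelli with absolute summability. By Cauchy--Schwarz and the scalar identity,
\[
\EE\bigl[\langle\bx,\bw\rangle^2\langle\bx,\be_i\rangle^2\bigr]\le 3\,\langle\bw,\bH\bw\rangle\langle\be_i,\bH\be_i\rangle\le 3\|\bH\|^2\|\bw\|^2 .
\]
Hence \(\sum_i|\alpha_i|\,\EE[\cdots]\le 3\|\bH\|^2\|\bw\|^2\sum_i|\alpha_i|<\infty\), which legitimizes both the exchange and the weak definition of the operator-valued expectation. The same bounds show that \(\Tr(\bH\bA)\) is finite and that \(\bH\bA\bH\) is bounded, so every term in \eqref{app:eq:gaussian-fourth-moment} is well defined.
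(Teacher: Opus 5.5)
Your proposal is correct and follows essentially the same route as the paper: Isserlis' formula for the directional identity, then the spectral decomposition \(\bA=\sum_i\alpha_i\be_i\otimes\be_i\) and term-by-term evaluation to obtain \(2\bH\bA\bH+\bH\Tr(\bH\bA)\). The differences are minor. The paper tests against the bilinear form \(\langle\bu,\cdot\,\bv\rangle\) and applies Isserlis directly to the mixed moment \(\EE[\langle\bu,\bx\rangle\langle\be_j,\bx\rangle^2\langle\bx,\bv\rangle]\), whereas you use quadratic forms plus polarization, so that only the already-proved directional identity is needed. You also supply the Fubini--Tonelli justification for exchanging sum and expectation, which the paper leaves implicit.
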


\begin{proof}
The directional identity follows directly from Isserlis' formula. For any \(\bu,\bv\in\mathcal H\), the random variables \(\langle\bx,\bu\rangle\) and \(\langle\bx,\bv\rangle\) are jointly Gaussian, and hence
\[
\EE\!\left[\langle\bx,\bu\rangle^2\langle\bx,\bv\rangle^2\right]
=
\langle\bu,\bH\bu\rangle\langle\bv,\bH\bv\rangle
+
2\langle\bu,\bH\bv\rangle^2.
\]

We next prove the operator identity. Let \(\bA=\sum_j a_j\be_j\otimes\be_j\) be the spectral decomposition of the self-adjoint trace-class operator \(\bA\). For arbitrary \(\bu,\bv\in\mathcal H\), Isserlis' formula gives
\begin{align*}
&\left\langle
\bu,
\EE\!\left[(\bx\otimes\bx)\bA(\bx\otimes\bx)\right]
\bv
\right\rangle\\
&\qquad=
\sum_j a_j
\EE\!\left[
\langle\bu,\bx\rangle
\langle\be_j,\bx\rangle^2
\langle\bx,\bv\rangle
\right]\\
&\qquad=
2\sum_j a_j
\langle\bu,\bH\be_j\rangle
\langle\be_j,\bH\bv\rangle
+
\langle\bu,\bH\bv\rangle
\sum_j a_j\langle\be_j,\bH\be_j\rangle\\
&\qquad=
2\langle\bu,\bH\bA\bH\bv\rangle
+
\langle\bu,\bH\bv\rangle\Tr(\bH\bA).
\end{align*}
Since this holds for all \(\bu,\bv\in\mathcal H\),
\[
\EE\!\left[(\bx\otimes\bx)\bA(\bx\otimes\bx)\right]
=
2\bH\bA\bH+\bH\Tr(\bH\bA).
\]
\end{proof}

\begin{corollary}[Empirical covariance fluctuation]
\label{app:cor:gaussian-empirical-covariance}
Let
\[
\widehat{\bH}:=\frac1B\sum_{b=1}^B\bx_b\otimes\bx_b,
\qquad
\bSigma:=\widehat{\bH}-\bH,
\qquad
\bx_b\overset{\mathrm{i.i.d.}}{\sim}\mathcal N(\bm0,\bH).
\]
Then, for any self-adjoint trace-class operator \(\bA\),
\begin{equation}
\label{app:eq:gaussian-empirical-covariance}
\EE[\bSigma\bA\bSigma]
=
\frac1B\left(\bH\bA\bH+\bH\Tr(\bH\bA)\right).
\end{equation}
\end{corollary}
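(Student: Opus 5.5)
The plan is to expand $\bSigma$ as a normalized sum of i.i.d. centered operators and reduce the computation to Lemma~\ref{app:lem:gaussian-fourth-moment}. Write $\bSigma=\frac1B\sum_{b=1}^B\bm Z_b$ with $\bm Z_b:=\bx_b\otimes\bx_b-\bH$, so that the $\bm Z_b$ are i.i.d., $\EE[\bm Z_b]=\bm 0$, and $\bm Z_b$ is independent of $\bm Z_{b'}$ for $b\neq b'$. Then
\[
\EE[\bSigma\bA\bSigma]=\frac1{B^2}\sum_{b,b'}\EE[\bm Z_b\bA\bm Z_{b'}].
\]
For $b\neq b'$, independence together with $\EE[\bm Z_b]=\bm 0$ gives $\EE[\bm Z_b\bA\bm Z_{b'}]=\EE[\bm Z_b]\bA\EE[\bm Z_{b'}]=\bm 0$. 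This factorization is justified because $\bA$ is trace class and the Gaussian moments are finite, so everything is Bochner integrable in the trace-class norm. Only the $B$ diagonal terms survive, which gives
\[
\EE[\bSigma\bA\bSigma]=\frac1B\,\EE[\bm Z_1\bA\bm Z_1].
\]

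Next, expand the single-sample term:
\[
\EE[\bm Z_1\bA\bm Z_1]=\EE[(\bx\otimes\bx)\bA(\bx\otimes\bx)]-\EE[\bx\otimes\bx]\bA\bH-\bH\bA\EE[\bx\otimes\bx]+\bH\bA\bH.
\]
Since $\EE[\bx\otimes\bx]=\bH$, the last three terms combine to $-\bH\bA\bH$. Applying \eqref{app:eq:gaussian-fourth-moment} to the first term gives $2\bH\bA\bH+\bH\Tr(\bH\bA)$, so
\[
\EE[\bm Z_1\bA\bm Z_1]=\bH\bA\bH+\bH\Tr(\bH\bA),
\]
and dividing by $B$ yields \eqref{app:eq:gaussian-empirical-covariance}.

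The only step requiring care is analytic rather than algebraic. We must justify interchanging expectation with the operator products in infinite dimensions, and confirm that cross terms vanish. I would handle this as in the proof of the lemma: test against arbitrary $\bu,\bv\in\mathcal H$, so that every quantity becomes a scalar expectation of products of jointly Gaussian variables. These have finite moments because $\bH$ is trace class. Independence is then applied at the scalar level.
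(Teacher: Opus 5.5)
Your proposal is correct and is essentially the paper's proof: write $\bSigma$ as $B^{-1}\sum_b \bZ_b$ with i.i.d.\ centered $\bZ_b=\bx_b\otimes\bx_b-\bH$, drop the cross terms by independence, and reduce the diagonal term to $\EE[(\bx\otimes\bx)\bA(\bx\otimes\bx)]-\bH\bA\bH$ before invoking Lemma~\ref{app:lem:gaussian-fourth-moment}. Your remark that the expectation--product interchange can be checked by testing against $\bu,\bv$ makes explicit a step the paper leaves implicit.
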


\begin{proof}
Write \(\bSigma=B^{-1}\sum_{b=1}^B\bZ_b\), where \(\bZ_b:=\bx_b\otimes\bx_b-\bH\) are independent and centered. Hence
\[
\EE[\bSigma\bA\bSigma]
=
\frac1B\EE[\bZ\bA\bZ]
=
\frac1B\left(\EE[(\bx\otimes\bx)\bA(\bx\otimes\bx)]-\bH\bA\bH\right),
\]
and the result follows from Lemma~\ref{app:lem:gaussian-fourth-moment}.
\end{proof}

\subsection{Noise--curvature alignment}
\label{app:hessian alignment}

We prove Proposition~\ref{prop:noise-geometry}. For a mini-batch of size \(B\), write
$
\bxi_k(\btheta)=\frac1B\sum_{i=1}^B\bzeta_{k,i}(\btheta),
$
where \(\bzeta_{k,i}(\btheta)\) are conditionally independent, centered single-sample gradient noises. Hence, for any unit vector \(\bnu\in\mathcal H\),
\[
\EE\!\left[\left.|\langle\bxi_k(\btheta),\bnu\rangle|^2\right|\btheta\right]
=
\frac1B\EE\!\left[\left.|\langle\bzeta(\btheta),\bnu\rangle|^2\right|\btheta\right].
\]

Let \(\bdelta:=\btheta-\btheta^*\) and \(\bphi:=\bphi(\bx)\). Since \(y=\langle\bphi,\btheta^*\rangle+\epsilon\), the single-sample gradient is
$
\bg_{\bx,y}(\btheta)
=
\bigl(\langle\bphi,\bdelta\rangle-\epsilon\bigr)\bphi,
$
while \(\nabla\cR(\btheta)=\bH\bdelta\). Therefore,
\[
\bzeta(\btheta)
=
\langle\bphi,\bdelta\rangle\bphi-\bH\bdelta-\epsilon\bphi.
\]
Using the independence of \(\epsilon\) and \(\bphi\), \(\EE[\epsilon]=0\), and $\EE[\bphi\bphi]=\bH$, we obtain
\[
\EE\!\left[\left.|\langle\bzeta(\btheta),\bnu\rangle|^2\right|\btheta\right]
=
\EE\!\left[\langle\bphi,\bdelta\rangle^2\langle\bphi,\bnu\rangle^2\right]
-\langle\bdelta,\bH\bnu\rangle^2
+\sigma^2\langle\bnu,\bH\bnu\rangle.
\]
Applying Lemma~\ref{app:lem:gaussian-fourth-moment}  gives
\[
\EE\!\left[\langle\bphi,\bdelta\rangle^2\langle\bphi,\bnu\rangle^2\right]
=
\langle\bdelta,\bH\bdelta\rangle\langle\bnu,\bH\bnu\rangle
+
2\langle\bdelta,\bH\bnu\rangle^2.
\]
Thus,
\begin{equation}
\label{app:eq:noise-direction-exact}
\EE\!\left[\left.|\langle\bzeta(\btheta),\bnu\rangle|^2\right|\btheta\right]
=
\bigl(\sigma^2+\langle\bdelta,\bH\bdelta\rangle\bigr)\langle\bnu,\bH\bnu\rangle
+
\langle\bdelta,\bH\bnu\rangle^2.
\end{equation}
The last term is nonnegative and, by Cauchy--Schwarz in the \(\bH\)-inner product,
\[
\langle\bdelta,\bH\bnu\rangle^2
\leq
\langle\bdelta,\bH\bdelta\rangle
\langle\bnu,\bH\bnu\rangle.
\]
Since \(\cE(\btheta)=\frac12\langle\bdelta,\bH\bdelta\rangle\), Eq.~\eqref{app:eq:noise-direction-exact} implies
\[
\EE\!\left[\left.|\langle\bzeta(\btheta),\bnu\rangle|^2\right|\btheta\right]
\eqsim
\bigl(\cE(\btheta)+\sigma^2\bigr)\langle\bnu,\bH\bnu\rangle.
\]
Combining this with the \(1/B\) variance reduction from mini-batching and setting \(\btheta=\btheta_k\) yields
\[
\EE\!\left[\left.|\langle\bxi_k(\btheta_k),\bnu\rangle|^2\right|\btheta_k\right]
\eqsim
\frac1B\bigl(\cE(\btheta_k)+\sigma^2\bigr)\langle\bnu,\bH\bnu\rangle,
\]
which proves Proposition~\ref{prop:noise-geometry}.

\section{Risk stability analysis}
\label{app:sec:stability}

\subsection{Renewal formulation of risk stability}
\label{app:subsec:label-noise-stability}

We first recall a general boundedness criterion for renewal equations~\citep{feller1991introduction}. Consider a nonnegative sequence \((x_k)_{k\geq0}\) satisfying
\[
x_k=a_k+\sum_{j=0}^{k-1}K_{k-1-j}x_j,
\]
where \(a_k\geq0\) is the input and \(K=(K_k)_{k\geq0}\) is the renewal kernel. Its total mass \(\|K\|_1:=\sum_{k\geq0}K_k\) measures the overall feedback strength of the recursion.

\begin{lemma}[Boundedness of a renewal equation]
\label{app:lem:renewal-stability}
Suppose \(0<\underline a:=\inf_{k\geq0}a_k\leq\sup_{k\geq0}a_k=:\bar a<\infty\). Then
\[
\sup_{k\geq0}x_k<\infty\quad\Longleftrightarrow\quad \|K\|_1<1.
\]
\end{lemma}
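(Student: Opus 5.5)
We prove the two directions of Lemma~\ref{app:lem:renewal-stability} separately. Both rest on comparison with the two constant-input renewal equations $\underline x_k=\underline a+\sum_{j<k}K_{k-1-j}\underline x_j$ and $\bar x_k=\bar a+\sum_{j<k}K_{k-1-j}\bar x_j$. Since $K\geq0$, the recursion is monotone in its input, so induction on $k$ gives $\underline x_k\leq x_k\leq\bar x_k$ for all $k$. In particular $x_k\geq0$, and it suffices to prove that $\bar x$ is bounded when $\|K\|_1<1$, and that $\underline x$ is unbounded when $\|K\|_1\geq1$.

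\emph{Sufficiency.} Assume $\|K\|_1<1$ and set $M_k:=\max_{j\leq k}\bar x_j$. For each $k$,
\[
\bar x_k\leq\bar a+M_{k-1}\sum_{j<k}K_{k-1-j}\leq\bar a+\|K\|_1M_{k-1},
\]
so $M_k\leq\bar a+\|K\|_1M_k$. Hence $M_k\leq\bar a/(1-\|K\|_1)$ uniformly in $k$, and therefore $\sup_kx_k\leq\bar a/(1-\|K\|_1)<\infty$.

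\emph{Necessity.} Assume $\|K\|_1\geq1$; we show $\underline x_k\to\infty$.

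First, $\underline x$ is nondecreasing. We prove $\underline x_{k+1}\geq\underline x_k$ by induction:
\[
\underline x_{k+1}-\underline x_k=K_k\underline x_0+\sum_{j=1}^{k}K_{k-j}\,(\underline x_j-\underline x_{j-1})\geq0.
\]
This identity follows by reindexing the convolution with $j\mapsto j-1$.

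Second, define $S_k:=\sum_{i<k}K_i$, which increases to $\|K\|_1\geq1$. Monotonicity then gives
\[
\underline x_k\geq\underline a+\sum_{j<k}K_{k-1-j}\underline x_j\geq\underline a+\underline x_{k-m}\,S_m\qquad\text{for }k\geq m,
\]
since the terms with $j\geq k-m$ each satisfy $\underline x_j\geq\underline x_{k-m}$.

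There are two cases.
\begin{itemize}
\item If $\|K\|_1>1$, pick $m$ with $S_m>1$. Then $\underline x_k\geq S_m\underline x_{k-m}$, so $\underline x$ grows geometrically.
\item If $\|K\|_1=1$, the same bound only yields $\underline x_k\geq\underline a+S_m\underline x_{k-m}$ with $S_m<1$, which is not enough. Instead, argue by contradiction. Suppose $\underline x_k\uparrow L<\infty$. Passing to the limit in the renewal equation by dominated convergence gives $L=\underline a+\|K\|_1L=\underline a+L$, contradicting $\underline a>0$.
\end{itemize}
The same limit argument also covers the case $\|K\|_1>1$: a finite limit would force $L\geq\underline a+\|K\|_1L>L$. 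So both cases can be handled uniformly.

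Since $x_k\geq\underline x_k\to\infty$, the sequence $x$ is unbounded.

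The only delicate point is the boundary case $\|K\|_1=1$, where no direct geometric growth is available. The limit/fixed-point argument handles it, and it requires exactly the hypothesis $\underline a>0$.
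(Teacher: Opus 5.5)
Your proof is correct. The sufficiency half is the paper's running-maximum bound. The detour through $\bar x$ is harmless but unnecessary, since $a_k\le\bar a$ lets you run the same estimate on $x$ directly.

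For necessity, the paper skips your comparison sequence and monotonicity identity. It truncates the convolution at a fixed $N$ and uses nonnegativity to write $x_k\geq\underline a+\sum_{m=0}^{N}K_m x_{k-1-m}$. Taking $\liminf_{k\to\infty}$ gives $L\geq\underline a+L\sum_{m=0}^{N}K_m$ with $L:=\liminf_k x_k$, and it then lets $N\to\infty$. This is the same fixed-point contradiction you end with. The difference is that it is applied to the $\liminf$ of $x$ itself rather than to the limit of a monotone minorant. As a result it needs no case split and covers $\|K\|_1=\infty$ without change.

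Your scaffolding does give a more quantitative picture: $x_k\to\infty$, and for $\|K\|_1>1$ the explicit bound $x_{qm+r}\geq S_m^{\,q}\,\underline a$. Two caveats on that gain. First, the paper's $\liminf$ argument, run without first assuming boundedness, already yields $\liminf_k x_k=\infty$. Second, as you note yourself, your final limit argument makes the geometric-growth case redundant.
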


\begin{proof}
Suppose first that \(\|K\|_1<1\), and let \(M_k:=\max_{0\leq j\leq k}x_j\). Then \(x_k\leq\bar a+\|K\|_1M_{k-1}\), which implies
\[
\sup_{k\geq0}x_k\leq\max\left\{x_0,\frac{\bar a}{1-\|K\|_1}\right\}<\infty.
\]
Conversely, suppose that \((x_k)\) is bounded and let \(L:=\liminf_{k\to\infty}x_k\). For every fixed \(N\), nonnegativity gives
\[
x_k\geq\underline a+\sum_{m=0}^{N}K_mx_{k-1-m}.
\]
Taking \(\liminf_{k\to\infty}\) yields \(L\geq\underline a+L\sum_{m=0}^{N}K_m\). Letting \(N\to\infty\), boundedness is impossible if \(\sum_{m\geq0}K_m\geq1\).
\end{proof}

For each method \(\cA\in\{\sgd,\polyak,\nesterov\}\), we will derive a nonnegative risk moment \(\ell_k^{(\cA)}\), equal to \(2\EE[\cE(\btheta_k)]\) for SGD and Polyak and comparable to it for Nesterov, satisfying
\begin{equation}
\label{app:eq:generic-risk-renewal}
\ell_k^{(\cA)}=h_k^{(\cA)}+\sum_{j=0}^{k-1}K_{k-1-j}^{(\cA)}\bigl(\ell_j^{(\cA)}+\sigma^2\bigr).
\end{equation}
Setting \(x_k=\ell_k^{(\cA)}+\sigma^2\) and \(a_k=h_k^{(\cA)}+\sigma^2\) reduces~\eqref{app:eq:generic-risk-renewal} to the renewal equation above. Hence, once the coordinate-wise dynamics are stable and \(h_k^{(\cA)}\) is bounded, Lemma~\ref{app:lem:renewal-stability} gives the global feedback condition
\[
\kappa_{\cA}(\eta,\rho,B):=\|K^{(\cA)}\|_1=\sum_{k\geq0}K_k^{(\cA)}<1.
\]
The kernel \(K^{(\cA)}\) describes how perturbations propagate through the optimizer dynamics and depends on the method and hyperparameters \((\eta,\rho,B)\), but not on the label-noise variance \(\sigma^2\). Additive label noise enters as the constant external forcing \(\sigma^2\), whereas multiplicative noise feeds the current risk back through \(K^{(\cA)}\). The following subsections derive \(K^{(\cA)}\), compute \(\kappa_{\cA}\), and combine \(\kappa_{\cA}<1\) with the corresponding coordinate-wise stability condition to obtain the critical learning rates.
\subsection{Risk stability of SGD}
\label{app:subsec:sgd-stability}

The SGD error recursion, including label noise, is
\begin{equation}
\label{app:eq:sgd-stability-recursion}
\be_{k+1}
=
(\bI-\eta\widehat{\bH}_k)\be_k
+
\eta\widehat{\boldsymbol\zeta}_k.
\end{equation}

\begin{theorem}[Risk stability of SGD]
\label{app:thm:sgd-stability}
Let the Hessian be a positive trace-class operator with spectrum
\[
\bH=\diag(\lambda_1,\lambda_2,\ldots),
\qquad
\lambda_1=\lambda_{\max}\geq\lambda_2\geq\cdots>0,
\]
and suppose
\(\sigma>0.\)
Then SGD is risk stable if and only if
\begin{equation}
\label{app:eq:sgd-local-stability}
\eta<\frac{2B}{(B+1)\lambda_{\max}}
\end{equation}
and
\begin{equation}
\label{app:eq:sgd-global-stability}
\kappa_{\sgd}(\eta)
:=
\sum_{i\geq1}
\frac{\eta\lambda_i}{2B-(B+1)\eta\lambda_i}
<1.
\end{equation}
\end{theorem}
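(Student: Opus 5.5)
We will work with the diagonal second moments of the error. Write $\be_k=\btheta_k-\btheta^*$, $\widehat{\bH}_k=\frac1B\sum_b\bphi_{k,b}\otimes\bphi_{k,b}$ and $\widehat{\boldsymbol\zeta}_k=\frac1B\sum_b\epsilon_{k,b}\bphi_{k,b}$. Let $\bS_k:=\EE[\be_k\otimes\be_k]$ and let $u_{k,i}:=(\bS_k)_{ii}$ be its diagonal entries. The first step is to take the expected outer product of recursion~\eqref{app:eq:sgd-stability-recursion}. The cross terms vanish because the fresh batch is independent of $\be_k$ and the label noise is centered. Writing $\widehat{\bH}_k=\bH+\bSigma_k$, Corollary~\ref{app:cor:gaussian-empirical-covariance} with $\bA=\bS_k$ gives
\[
\bS_{k+1}=(\bI-\eta\bH)\bS_k(\bI-\eta\bH)+\frac{\eta^2}{B}\bigl(\bH\bS_k\bH+\bH\Tr(\bH\bS_k)\bigr)+\frac{\eta^2\sigma^2}{B}\bH .
\]

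Taking the diagonal and writing $\ell_k:=\Tr(\bH\bS_k)=2\EE[\cE(\btheta_k)]$, the $i$-th coordinate decouples except through the scalar $\ell_k$:
\[
u_{k+1,i}=q_i\,u_{k,i}+\frac{\eta^2\lambda_i}{B}\bigl(\ell_k+\sigma^2\bigr),\qquad q_i:=(1-\eta\lambda_i)^2+\frac{\eta^2\lambda_i^2}{B}.
\]
Unrolling this recursion, multiplying by $\lambda_i$ and summing over $i$ yields exactly the renewal form~\eqref{app:eq:generic-risk-renewal}. The free response is $h_k=\sum_i\lambda_i q_i^k u_{0,i}$, and the kernel is
\[
K_m=\frac{\eta^2}{B}\sum_i\lambda_i^2q_i^m .
\]

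The second step identifies the local condition~\eqref{app:eq:sgd-local-stability}. A direct computation gives $q_i<1$ if and only if $\eta\lambda_i\bigl(\frac{B+1}{B}\eta\lambda_i-2\bigr)<0$, that is, $\eta\lambda_i<2B/(B+1)$. The worst case is $\lambda_1=\lambda_{\max}$.

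The third step is the global condition. If every $q_i<1$, then
\[
\|K\|_1=\frac{\eta^2}{B}\sum_i\frac{\lambda_i^2}{1-q_i}.
\]
Since $1-q_i=\frac{\eta\lambda_i}{B}\bigl(2B-(B+1)\eta\lambda_i\bigr)$, this equals $\kappa_{\sgd}(\eta)$ in~\eqref{app:eq:sgd-global-stability}.

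We now apply Lemma~\ref{app:lem:renewal-stability} with $x_k=\ell_k+\sigma^2$ and $a_k=h_k+\sigma^2$. Then $\underline a\ge\sigma^2>0$, and $\bar a\le \sigma^2+\sum_i\lambda_iu_{0,i}=\sigma^2+\ell_0<\infty$ because $q_i<1$. So $\kappa_{\sgd}<1$ is equivalent to $\sup_k\ell_k<\infty$, which is risk stability since $\cR=\cE+\sigma^2/2$.

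For necessity of~\eqref{app:eq:sgd-local-stability}, suppose $q_1\ge1$. Initialize with $u_{0,1}>0$; this has finite risk. Every term in the diagonal recursion is nonnegative, so $u_{k,1}\ge q_1^ku_{0,1}$. Moreover $u_{k+1,1}\ge u_{k,1}+\frac{\eta^2\lambda_1}{B}\sigma^2$, so $u_{k,1}\to\infty$ linearly even in the boundary case $q_1=1$. Hence $\ell_k\ge\lambda_1u_{k,1}$ is unbounded. The same argument covers every index $i$ with $q_i\ge1$.

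The main obstacle is the infinite-dimensional bookkeeping rather than the algebra. We must justify that $\bS_k$ stays trace-class, so that $\Tr(\bH\bS_k)$ is finite and Corollary~\ref{app:cor:gaussian-empirical-covariance} applies at each step. We also need to interchange sums when forming $K$ and $h$. This is clean by nonnegativity: every term is nonnegative, so Tonelli applies. Finiteness of $\ell_k$ at each fixed $k$ follows by induction from the recursion using $\sum_i\lambda_i^2<\infty$. The lower-bound direction of Lemma~\ref{app:lem:renewal-stability} then handles $\kappa_{\sgd}\ge1$ without any further estimates.
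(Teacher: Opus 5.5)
Your proposal is correct and follows the paper's proof essentially step for step. It uses the same second-moment recursion from the Gaussian fourth-moment identity, the same decoupled diagonal recursion with $q_i=(1-\eta\lambda_i)^2+\eta^2\lambda_i^2/B$, the same renewal kernel $K_m=\frac{\eta^2}{B}\sum_i\lambda_i^2q_i^m$ with $\|K\|_1=\kappa_{\sgd}(\eta)$, the same appeal to Lemma~\ref{app:lem:renewal-stability}, and the same argument that the additive $\sigma^2$ forcing makes the local condition necessary.

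One small correction concerns your bookkeeping remark. A finite-risk initialization only makes $\bH^{1/2}\mathbf S_0\bH^{1/2}$ trace class, not $\mathbf S_0$ itself; under the power-law source condition with $s\le1$ one even has $\sum_j|\theta_j^*|^2=\infty$. So instead of propagating trace-class-ness of $\mathbf S_k$, you should either derive the identities on finite spectral truncations and pass to the limit by monotone convergence, as the paper does, or obtain the diagonal recursion directly from the directional identity~\eqref{app:eq:gaussian-fourth-moment-directional}, which only requires $\langle\be_k,\bH\be_k\rangle<\infty$.
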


\begin{proof}
Define the covariance operator by
\[
\bC_k:=\EE[\be_k\otimes\be_k].
\]
We work with its spectral coordinates. For every finite-risk initialization, the relevant risk moment is
\[
\ell_k
:=
\Tr(\bH\bC_k)
=
\sum_{i\geq1}\lambda_i
\EE\bigl[|\langle\mathbf e_i,\be_k\rangle|^2\bigr]
=
2\EE[\cE(\btheta_k)].
\]
In particular, \(\ell_0<\infty\). The covariance identities below may be justified first on finite spectral
truncations and then passed to the limit by monotone convergence. Expanding
\eqref{app:eq:sgd-stability-recursion}, using the vanishing cross terms, and applying
Lemma~\ref{app:lem:gaussian-fourth-moment} gives
\begin{equation}
\label{app:eq:sgd-covariance}
\bC_{k+1}
=
(\bI-\eta\bH)\bC_k(\bI-\eta\bH)
+
\frac{\eta^2}{B}
\left(
\bH\bC_k\bH
+
\bH\Tr(\bH\bC_k)
+
\sigma^2\bH
\right).
\end{equation}
Define the coordinate variance by
\[
u_{i,k}:=\langle\mathbf e_i,\bC_k\mathbf e_i\rangle.
\]
Taking the diagonal entries of \eqref{app:eq:sgd-covariance} yields
\begin{equation}
\label{app:eq:sgd-diagonal-recursion}
u_{i,k+1}
=
a_i u_{i,k}
+
\frac{\eta^2}{B}\lambda_i\bigl(\ell_k+\sigma^2\bigr),
\qquad
a_i:=(1-\eta\lambda_i)^2+\frac{\eta^2}{B}\lambda_i^2.
\end{equation}

We first determine the local condition. Since
\[
1-a_i
=
\eta\lambda_i
\left[
2-\left(1+\frac1B\right)\eta\lambda_i
\right],
\]
the coordinatewise contraction condition
\[
a_i<1
\qquad\text{for every }i
\]
is equivalent to
\eqref{app:eq:sgd-local-stability}. If this condition fails, then
\eqref{app:eq:sgd-diagonal-recursion} contains an eigendirection satisfying
\[
u_{i,k+1}
\geq
u_{i,k}+\frac{\eta^2\sigma^2}{B}\lambda_i,
\]
and therefore the risk is unbounded. Hence the local condition is necessary.

Assume henceforth that \eqref{app:eq:sgd-local-stability} holds. Iterating
\eqref{app:eq:sgd-diagonal-recursion}, multiplying the \(i\)-th equation by \(\lambda_i\), and summing
over all eigendirections gives
\begin{equation}
\label{app:eq:sgd-renewal}
\ell_k
=
h_k
+
\sum_{s=0}^{k-1}
K_{k-1-s}\bigl(\ell_s+\sigma^2\bigr),
\end{equation}
where
\[
h_k:=\sum_{i\geq1}\lambda_i a_i^k u_{i,0},
\qquad
K_m:=\frac{\eta^2}{B}\sum_{i\geq1}\lambda_i^2a_i^m.
\]
The local condition gives
\[
0\leq a_i<1,
\qquad
0\leq h_k\leq \ell_0.
\]
Moreover,
\[
\sum_{m\geq0}K_m
=
\frac{\eta^2}{B}
\sum_{i\geq1}\frac{\lambda_i^2}{1-a_i}
=
\sum_{i\geq1}
\frac{\eta\lambda_i}{2B-(B+1)\eta\lambda_i}
=
\kappa_{\sgd}(\eta).
\]
Lemma~\ref{app:lem:renewal-stability} therefore gives
\[
\sup_{k\geq0}\ell_k<\infty
\quad\Longleftrightarrow\quad
\kappa_{\sgd}(\eta)<1,
\]
which completes the proof.

\end{proof}

\begin{corollary}[Critical learning rate of SGD]
\label{app:cor:sgd-critical-rate}
There exists a unique critical learning rate satisfying
\[
0<\eta_{\sgd}^{\crit}<\frac{2B}{(B+1)\lambda_{\max}}
\]
and
\begin{equation}
\label{app:eq:sgd-critical-equation}
\sum_{i\geq1}
\frac{
\eta_{\sgd}^{\crit}\lambda_i
}{
2B-(B+1)\eta_{\sgd}^{\crit}\lambda_i
}
=1.
\end{equation}
The SGD iteration is risk stable if and only if
\[
0<\eta<\eta_{\sgd}^{\crit}.
\]
Moreover,
\begin{equation}
\label{app:eq:sgd-critical-bounds}
\frac{2B}{(B+1)\lambda_{\max}+\Tr(\bH)}
\leq
\eta_{\sgd}^{\crit}
\leq
\min\left\{
\frac{2B}{(B+1)\lambda_{\max}},
\frac{2B}{\Tr(\bH)}
\right\}.
\end{equation}
Consequently, under the fixed-exponent power-law assumptions
\(
\lambda_i\eqsim i^{-\beta},
\beta>1
\), we have
\[
\eta_{\sgd}^{\crit}\eqsim1.
\]
\end{corollary}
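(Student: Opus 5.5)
The corollary follows from Theorem~\ref{app:thm:sgd-stability} by a monotonicity argument; the bounds then come from comparing the summands with simpler sums. Define
\[
F(\eta):=\sum_{i\geq1}\frac{\eta\lambda_i}{2B-(B+1)\eta\lambda_i},
\qquad
\eta\in\Bigl(0,\eta_{\max}\Bigr),
\qquad
\eta_{\max}:=\frac{2B}{(B+1)\lambda_{\max}}.
\]
On this interval every denominator is positive. Each summand is continuous and strictly increasing in $\eta$, since the numerator increases and the denominator decreases. The series converges locally uniformly on compact subintervals, because for $\eta\leq\eta_{\max}-\delta$ the denominators are bounded below by a positive constant and $\sum_i\lambda_i=\Tr(\bH)<\infty$. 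Hence $F$ is continuous and strictly increasing, with $F(\eta)\to0$ as $\eta\to0$. As $\eta\uparrow\eta_{\max}$, the $i=1$ term blows up, so $F(\eta)\to\infty$. The intermediate value theorem then gives a unique $\eta_{\sgd}^{\crit}\in(0,\eta_{\max})$ with $F=1$. By Theorem~\ref{app:thm:sgd-stability}, stability is equivalent to $\eta<\eta_{\max}$ together with $F(\eta)<1$. By monotonicity this is exactly $\eta<\eta_{\sgd}^{\crit}$.

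\textbf{Bounds.} For the upper bound, $\eta_{\sgd}^{\crit}<\eta_{\max}$ is already established. In addition, each denominator is at most $2B$, so $F(\eta)\geq\eta\Tr(\bH)/(2B)$. If $\eta>2B/\Tr(\bH)$, then $F(\eta)>1$, which forces $\eta_{\sgd}^{\crit}\leq 2B/\Tr(\bH)$. For the lower bound, set $\eta_0:=2B/((B+1)\lambda_{\max}+\Tr(\bH))$. Then $\eta_0<\eta_{\max}$, and for every $i$,
\[
2B-(B+1)\eta_0\lambda_i\geq 2B-(B+1)\eta_0\lambda_{\max}=\eta_0\Tr(\bH).
\]
Therefore $F(\eta_0)\leq\sum_i\eta_0\lambda_i/(\eta_0\Tr(\bH))=1$. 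Since $F$ is increasing and equals $1$ exactly at $\eta_{\sgd}^{\crit}$, this gives $\eta_0\leq\eta_{\sgd}^{\crit}$.

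\textbf{Power-law scaling.} Under $\lambda_i\eqsim i^{-\beta}$ with $\beta>1$, both $\lambda_{\max}$ and $\Tr(\bH)$ are of constant order. Since $1\leq B$, we have $2B/(B+1)\in[1,2)$. Dividing through by $B$, the lower bound is therefore
\[
\frac{2}{(1+1/B)\lambda_{\max}+\Tr(\bH)/B}\gtrsim 1,
\]
and the upper bound is at most $2/\lambda_{\max}\lesssim1$. Together these give $\eta_{\sgd}^{\crit}\eqsim1$ uniformly in $B$.

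The only delicate point is justifying continuity and divergence of $F$ near $\eta_{\max}$ when several eigenvalues equal $\lambda_{\max}$; this is harmless, since only finitely many terms can blow up and the tail converges uniformly.
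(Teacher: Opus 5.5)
Your proof is correct and follows the paper's argument: continuity, strict monotonicity and blow-up of the renewal mass give the unique root $\eta_{\sgd}^{\crit}$ and the stability interval, and the two bounds come from bounding every denominator above by $2B$ and below by $2B-(B+1)\eta\lambda_{\max}$. The only difference is cosmetic: for the lower bound you evaluate the sum at the candidate $\eta_0$ and invoke monotonicity, whereas the paper substitutes the bound at $\eta_{\sgd}^{\crit}$ and rearranges. Your explicit justification of continuity (via the M-test) and of uniformity in $B$ in the power-law step fills in detail the paper leaves implicit.
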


\begin{proof}
The renewal mass \(\kappa_{\sgd}(\eta)\) is continuous and strictly increasing on the local stability interval. It vanishes at \(\eta=0\) and diverges as \(\eta\) approaches the local boundary, because the denominator of the leading eigendirection tends to zero. Hence there is a unique solution of \(\kappa_{\sgd}(\eta)=1\), which gives the stated critical learning rate and stability interval.

At the critical point, every denominator in \eqref{app:eq:sgd-critical-equation} is at most \(2B\). Therefore
\(1\geq\frac{\eta_{\sgd}^{\crit}}{2B}\Tr(\bH),\)
which yields
\[
\eta_{\sgd}^{\crit}
\leq
\frac{2B}{\Tr(\bH)}.
\]
The local condition gives the other upper bound in \eqref{app:eq:sgd-critical-bounds}. Conversely, every denominator is at least
\[
2B-(B+1)\eta_{\sgd}^{\crit}\lambda_{\max}.
\]
Using the critical equation then gives
\[
1
\leq
\frac{
\eta_{\sgd}^{\crit}\Tr(\bH)
}{
2B-(B+1)\eta_{\sgd}^{\crit}\lambda_{\max}
},
\]
and rearranging yields the lower bound in \eqref{app:eq:sgd-critical-bounds}.

Finally, under \(\lambda_i\eqsim i^{-\beta}\) with \(\beta>1\), we have \(\lambda_{\max}\eqsim1\) and \(\Tr(\bH)\eqsim1\). The two-sided bounds therefore imply \(\eta_{\sgd}^{\crit}\eqsim1\).
\end{proof}

\subsection{Risk stability of Polyak}
\label{app:subsec:polyak-stability}

The Polyak error recursion, including label noise, is
\begin{equation}
\label{app:eq:polyak-stability-recursion}
\be_{k+1}
=
(\bI-\eta\widehat{\bH}_k)\be_k
+
\rho(\be_k-\be_{k-1})
+
\eta\widehat{\boldsymbol\zeta}_k.
\end{equation}
Writing
\(\bD=(1+\rho)\bI-\eta\bH,\)
Equation \eqref{app:eq:polyak-stability-recursion} becomes
\[
\be_{k+1}
=
\bD\be_k
-
\rho\be_{k-1}
-
\eta({\bH}_k-\bH)\be_k
+
\eta\widehat{\boldsymbol\zeta}_k.
\]

\begin{theorem}[Risk stability of Polyak]
\label{app:thm:polyak-stability}
Let the Hessian be a positive trace-class operator and assume
\[
\bH=\diag(\lambda_1,\lambda_2,\ldots),
\qquad
\lambda_1=\lambda_{\max}\geq\lambda_2\geq\cdots>0,
\qquad
0\leq\rho<1,
\qquad
\sigma>0.
\]
Then Polyak momentum is risk stable if and only if
\begin{equation}
\label{app:eq:polyak-local-stability}
\eta
<
\frac{
2B(1-\rho^2)
}{
\lambda_{\max}\left[B(1-\rho)+(1+\rho)\right]
}
\end{equation}
and
\begin{equation}
\label{app:eq:polyak-global-stability}
\kappa_{\polyak}(\eta,\rho)
:=
\sum_{i\geq1}
\frac{
\eta\lambda_i(1+\rho)
}{
2B(1-\rho^2)
-
\eta\lambda_i\left[B(1-\rho)+(1+\rho)\right]
}
<1.
\end{equation}
\end{theorem}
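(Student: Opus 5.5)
I will mirror the proof of Theorem~\ref{app:thm:sgd-stability}, replacing the scalar coordinate variance by the $3$-dimensional vector of second moments of the two-step state. Set $\bC_k:=\EE[\be_k\otimes\be_k]$ and $\bC_{k,k-1}:=\EE[\be_k\otimes\be_{k-1}]$. In coordinate $i$, define
\[
\bm w_{i,k}:=\bigl(u_{i,k},\,c_{i,k},\,u_{i,k-1}\bigr),\qquad
u_{i,k}:=\langle\mathbf e_i,\bC_k\mathbf e_i\rangle,\quad c_{i,k}:=\langle\mathbf e_i,\bC_{k,k-1}\mathbf e_i\rangle.
\]
The noise $\widehat{\bH}_k-\bH$ and the label noise are independent of $(\be_k,\be_{k-1})$ and centered, so the cross terms vanish. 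Corollary~\ref{app:cor:gaussian-empirical-covariance} then gives the diagonal forcing $\frac{\eta^2}{B}\lambda_i(\lambda_i u_{i,k}+\ell_k+\sigma^2)$. The cross-covariance picks up no noise at all. Writing $d_i:=1+\rho-\eta\lambda_i$, this yields the closed linear system
\begin{align*}
u_{i,k+1}&=\Bigl(d_i^2+\tfrac{\eta^2\lambda_i^2}{B}\Bigr)u_{i,k}-2\rho d_i c_{i,k}+\rho^2u_{i,k-1}+\tfrac{\eta^2}{B}\lambda_i(\ell_k+\sigma^2),\\
c_{i,k+1}&=d_iu_{i,k}-\rho c_{i,k},
\end{align*}
with $\ell_k=\sum_i\lambda_iu_{i,k}=2\EE[\cE(\btheta_k)]$. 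I write this as $\bm w_{i,k+1}=\bM_i\bm w_{i,k}+\frac{\eta^2}{B}\lambda_i(\ell_k+\sigma^2)\bm e_1$ for a $3\times3$ matrix $\bM_i$.

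\textbf{Local condition.} I claim the local condition \eqref{app:eq:polyak-local-stability} is exactly the statement $\mathrm{sr}(\bM_i)<1$ for every $i$. Without the $\eta^2\lambda_i^2/B$ term, $\bM_i$ is the symmetric square of the deterministic $2\times2$ companion matrix, whose eigenvalues are $r_+^2,r_-^2,r_+r_-=\rho$. The perturbation is rank one and nonnegative in the $(1,1)$ entry. The cleanest route is to compute the characteristic polynomial $\chi_i(z)=\det(z\bI-\bM_i)$ and evaluate the Jury/Schur–Cohn conditions. I expect $\chi_i(1)>0$ to be the binding constraint, and I expect it to reduce to $2B(1-\rho^2)-\eta\lambda_i[B(1-\rho)+(1+\rho)]>0$. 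The conditions at $z=-1$ and the middle determinant should follow from $\eta\lambda_i<2(1+\rho)$, which is implied.

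Verifying that these auxiliary Jury conditions are automatically satisfied is the main obstacle. I would try to argue it by monotonicity: as $\eta^2\lambda_i^2/B$ increases from $0$, the dominant eigenvalue of this eventually-positive structure increases continuously, so it can only exit the unit disk through $z=1$. If $\mathrm{sr}(\bM_i)\ge1$ for some $i$, the label-noise forcing $\sigma^2>0$ drives $u_{i,k}$ to infinity, exactly as in the SGD proof. This gives necessity.

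\textbf{Global condition.} Assume the local condition holds. Iterating the linear system and summing $\lambda_i u_{i,k}$ over $i$ gives the renewal equation \eqref{app:eq:generic-risk-renewal} with
\[
h_k=\sum_i\lambda_i\bm e_1^\top\bM_i^k\bm w_{i,0},\qquad
K_m=\frac{\eta^2}{B}\sum_i\lambda_i^2\,\bm e_1^\top\bM_i^m\bm e_1 .
\]
Here $K_m\ge0$ because $\bm e_1^\top\bM_i^m\bm e_1$ is the variance response to a unit variance impulse. Boundedness of $h_k$ follows from $\mathrm{sr}(\bM_i)<1$ together with an $i$-uniform bound obtained by comparing against $\ell_0$. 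By Lemma~\ref{app:lem:renewal-stability}, the remaining task is to compute
\[
\|K\|_1=\frac{\eta^2}{B}\sum_i\lambda_i^2\,\bm e_1^\top(\bI-\bM_i)^{-1}\bm e_1 .
\]
Solving the $3\times3$ steady-state system by hand (set $u_{k+1}=u_k=u_{k-1}=u$, $c=d_iu/(1+\rho)$, forcing $1$) gives
\[
u\Bigl[1-d_i^2-\tfrac{\eta^2\lambda_i^2}{B}+\tfrac{2\rho d_i^2}{1+\rho}-\rho^2\Bigr]=1 .
\]
Simplifying the bracket to $\frac{\eta\lambda_i}{B(1+\rho)}\bigl(2B(1-\rho^2)-\eta\lambda_i[B(1-\rho)+(1+\rho)]\bigr)$ and multiplying by $\frac{\eta^2}{B}\lambda_i^2$ recovers each summand of \eqref{app:eq:polyak-global-stability}. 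This step is routine algebra.
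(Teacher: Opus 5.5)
This is essentially the paper's proof: the same closed $3\times3$ coordinate second-moment system with the same matrix $\bM_i$, the Jury criterion for the local condition, and the renewal lemma with $\bb^\top(\bI-\bM_i)^{-1}\bb$ to obtain $\kappa_{\polyak}$; your steady-state algebra is correct. One small correction on the step you flag: the middle Jury inequality does not follow from $\eta\lambda_i<2(1+\rho)$ alone but does follow from $\chi_i(1)>0$. Your monotonicity route becomes rigorous once you note that $\bM_i$ represents the map $\Sigma\mapsto A_0\Sigma A_0^\top+\frac{\eta^2\lambda_i^2}{B}E_{11}\Sigma E_{11}$ on symmetric $2\times2$ matrices, where $A_0$ is the deterministic companion matrix. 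This map preserves the PSD cone, so its spectral radius is an eigenvalue, is nondecreasing in the noise term, and can reach $1$ only where $\chi_i(1)=0$.
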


\begin{proof}
Define the coordinate second moments
\[
\begin{aligned}
\bU_k&:=\EE[\be_k\otimes\be_k],
&
\bP_k&:=\EE[\be_k\otimes\be_{k-1}],
\\
\bQ_k&:=\EE[\be_{k-1}\otimes\be_k],
&
\bR_k&:=\EE[\be_{k-1}\otimes\be_{k-1}].
\end{aligned}
\]
The risk moment relevant to risk stability is
\[
\ell_k
:=
\Tr(\bH\bU_k)
=
2\EE[\cE(\btheta_k)].
\]
Expanding \eqref{app:eq:polyak-stability-recursion}, using the vanishing cross terms, and
applying Lemma~\ref{app:lem:gaussian-fourth-moment} gives
\begin{equation}
\label{app:eq:polyak-covariance-recursion}
\left\{
\begin{aligned}
\bU_{k+1}
={}&
\bD\bU_k\bD
-
\rho\bD\bP_k
-
\rho\bQ_k\bD
+
\rho^2\bR_k
\\
&+
\frac{\eta^2}{B}
\left(
\bH\bU_k\bH
+
\bH\Tr(\bH\bU_k)
+
\sigma^2\bH
\right),
\\
\bP_{k+1}
={}&
\bD\bU_k-\rho\bQ_k,
\\
\bQ_{k+1}
={}&
\bU_k\bD-\rho\bP_k,
\\
\bR_{k+1}
={}&
\bU_k.
\end{aligned}
\right.
\end{equation}
In the eigenbasis of the Hessian, define
\[
u_{i,k}:=\langle\mathbf e_i,\bU_k\mathbf e_i\rangle,
\qquad
v_{i,k}:=\langle\mathbf e_i,\bP_k\mathbf e_i\rangle,
\qquad
r_{i,k}:=\langle\mathbf e_i,\bR_k\mathbf e_i\rangle.
\]
The diagonal entries of the two cross-covariance operators coincide because
\(
\bQ_k=\bP_k^*.
\)
Set
\[
d_i:=1+\rho-\eta\lambda_i,
\qquad
\bX_{i,k}:=(u_{i,k},v_{i,k},r_{i,k})^\top,
\qquad
\bb:=(1,0,0)^\top,
\]
and
\[
\bM_i
:=
\begin{pmatrix}
d_i^2+\dfrac{\eta^2}{B}\lambda_i^2
&
-2\rho d_i
&
\rho^2
\\
d_i
&
-\rho
&
0
\\
1
&
0
&
0
\end{pmatrix}.
\]
Taking the diagonal entries of \eqref{app:eq:polyak-covariance-recursion} yields
\begin{equation}
\label{app:eq:polyak-block-recursion}
\bX_{i,k+1}
=
\bM_i\bX_{i,k}
+
\frac{\eta^2}{B}\lambda_i\bb\bigl(\ell_k+\sigma^2\bigr).
\end{equation}

After removing the global risk-coupling term, the characteristic polynomial of the isolated block
is
\[
p_i(\mu)
=
\mu^3
+
\left(
\rho-d_i^2-\frac{\eta^2}{B}\lambda_i^2
\right)\mu^2
+
\rho
\left(
d_i^2-\frac{\eta^2}{B}\lambda_i^2-\rho
\right)\mu
-
\rho^3.
\]
The Jury stability criterion shows that all roots lie strictly inside the unit disk if and only if
\[
2B(1-\rho^2)
-
\eta\lambda_i\left[B(1-\rho)+(1+\rho)\right]
>0.
\]
Requiring this inequality in every eigendirection gives
\eqref{app:eq:polyak-local-stability}. If the local condition fails, the isolated second-moment
system has a mode on or outside the unit circle.
Because the label noise forcing enters its current-error coordinate directly and has strictly
positive variance, the corresponding coordinate risk is unbounded. Thus, the local condition is
necessary.

Assume henceforth that the local condition holds, and define
\(g_{i,m}:=\bb^\top\bM_i^m\bb.\)
This quantity is nonnegative. Iterating
\eqref{app:eq:polyak-block-recursion}, taking the first component, multiplying by the
corresponding eigenvalue, and summing over all eigendirections gives
\begin{equation}
\label{app:eq:polyak-renewal}
\ell_k
=
h_k
+
\sum_{s=0}^{k-1}
K_{k-1-s}\bigl(\ell_s+\sigma^2\bigr),
\end{equation}
where
\[
h_k
:=
\sum_{i\geq1}
\lambda_i\bb^\top\bM_i^k\bX_{i,0},
\qquad
K_m
:=
\frac{\eta^2}{B}
\sum_{i\geq1}\lambda_i^2g_{i,m}.
\]
The strict local condition gives the uniform power bound
\(\sup_{i,k}\|\bM_i^k\|<\infty.\)
Positivity of the initial coordinate covariance also gives
\(|v_{i,0}|\leq\frac{u_{i,0}+r_{i,0}}{2}.\)
Together, these estimates imply
\[
0\leq h_k
\leq
C\sum_{i\geq1}\lambda_i(u_{i,0}+r_{i,0})
<\infty.
\]
A direct calculation gives
\begin{equation}
\label{app:eq:polyak-resolvent}
\bb^\top(\bI-\bM_i)^{-1}
=
\frac{
B\bigl(1+\rho,-2\rho d_i,\rho^2(1+\rho)\bigr)
}{
\eta\lambda_i
\left\{
2B(1-\rho^2)
-
\eta\lambda_i\left[B(1-\rho)+(1+\rho)\right]
\right\}
}.
\end{equation}
Consequently,
\[
\sum_{m\geq0}K_m
=
\frac{\eta^2}{B}
\sum_{i\geq1}
\lambda_i^2\bb^\top(\bI-\bM_i)^{-1}\bb
=
\kappa_{\polyak}(\eta,\rho).
\]
Lemma~\ref{app:lem:renewal-stability} therefore gives
\[
\sup_{k\geq0}\ell_k<\infty
\quad\Longleftrightarrow\quad
\kappa_{\polyak}(\eta,\rho)<1,
\]
which completes the proof.

\end{proof}

\begin{corollary}[Critical learning rate of Polyak]
\label{app:cor:polyak-critical-rate}
There exists a unique critical learning rate satisfying
\[
0
<
\eta_{\polyak}^{\crit}
<
\frac{
2B(1-\rho^2)
}{
\lambda_{\max}\left[B(1-\rho)+(1+\rho)\right]
}
\]
and the critical equation
\begin{equation}
\label{app:eq:polyak-critical-equation}
\sum_{i\geq1}
\frac{
\eta_{\polyak}^{\crit}\lambda_i(1+\rho)
}{
2B(1-\rho^2)
-
\eta_{\polyak}^{\crit}\lambda_i
\left[B(1-\rho)+(1+\rho)\right]
}
=1.
\end{equation}
The Polyak iteration is risk stable if and only if
\[
0<\eta<\eta_{\polyak}^{\crit}.
\]
Moreover,
\begin{equation}
\label{app:eq:polyak-critical-bounds}
\begin{aligned}
\frac{
2B(1-\rho^2)
}{
\lambda_{\max}\left[B(1-\rho)+(1+\rho)\right]
+
(1+\rho)\Tr(\bH)
}
\leq
\eta_{\polyak}^{\crit},
\\
\eta_{\polyak}^{\crit}
\leq
\min\left\{
\frac{
2B(1-\rho^2)
}{
\lambda_{\max}\left[B(1-\rho)+(1+\rho)\right]
},
\frac{2B(1-\rho)}{\Tr(\bH)}
\right\}.
\end{aligned}
\end{equation}
Under the power-law assumption
\(\lambda_i\eqsim i^{-\beta},\beta>1
\), when \(B\gg1,1-\rho\ll1,\) we have
\[
\eta_{\polyak}^{\crit}
\eqsim
\min\{1,B(1-\rho)\}.
\]
\end{corollary}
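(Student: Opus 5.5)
This follows the template of Corollary~\ref{app:cor:sgd-critical-rate}, applied to Theorem~\ref{app:thm:polyak-stability}. Write $c:=B(1-\rho)+(1+\rho)$, $A:=2B(1-\rho^2)$ and $\eta_{\mathrm{loc}}:=A/(\lambda_{\max}c)$. On the local interval $(0,\eta_{\mathrm{loc}})$, each summand of $\kappa_{\polyak}(\eta,\rho)$ has the form $\eta\lambda_i(1+\rho)/(A-\eta\lambda_i c)$. Its numerator increases in $\eta$ and its denominator decreases and stays positive, so each summand is continuous and strictly increasing. On any compact subinterval, the sum converges uniformly: the terms are dominated by a constant times $\lambda_i$, and $\Tr(\bH)<\infty$. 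Hence $\kappa_{\polyak}$ is continuous and strictly increasing, with $\kappa_{\polyak}(0)=0$. As $\eta\uparrow\eta_{\mathrm{loc}}$, the leading ($i=1$) term diverges because its denominator tends to zero. The intermediate value theorem then gives a unique root $\eta^{\crit}_{\polyak}\in(0,\eta_{\mathrm{loc}})$ of~\eqref{app:eq:polyak-critical-equation}. By Theorem~\ref{app:thm:polyak-stability}, stability holds iff $\eta<\eta_{\mathrm{loc}}$ and $\kappa_{\polyak}<1$, which by monotonicity is exactly $\eta<\eta^{\crit}_{\polyak}$.

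For the two-sided bounds~\eqref{app:eq:polyak-critical-bounds}, I evaluate the critical equation at $\eta=\eta^{\crit}_{\polyak}$. Every denominator is at most $A$, so
\[
1\ge \frac{\eta^{\crit}_{\polyak}(1+\rho)\Tr(\bH)}{A},
\]
which gives $\eta^{\crit}_{\polyak}\le 2B(1-\rho)/\Tr(\bH)$. The other upper bound is the local constraint $\eta^{\crit}_{\polyak}<\eta_{\mathrm{loc}}$. For the lower bound, every denominator is at least $A-\eta^{\crit}_{\polyak}\lambda_{\max}c$, which is positive. This yields
\[
1\le \frac{\eta^{\crit}_{\polyak}(1+\rho)\Tr(\bH)}{A-\eta^{\crit}_{\polyak}\lambda_{\max}c},
\]
and rearranging gives $\eta^{\crit}_{\polyak}\ge A/\bigl(\lambda_{\max}c+(1+\rho)\Tr(\bH)\bigr)$.

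It remains to extract the asymptotics. Under $\lambda_i\eqsim i^{-\beta}$ with $\beta>1$, we have $\lambda_{\max}\eqsim1$ and $\Tr(\bH)\eqsim1$. Take $1-\rho\le1/2$ and $B\ge1$, so that $1+\rho\eqsim1$, $A\eqsim B(1-\rho)$ and $c\eqsim B(1-\rho)+1\eqsim\max\{1,B(1-\rho)\}$. Then:
\begin{itemize}
\item the lower bound is $\eqsim B(1-\rho)/\bigl(\max\{1,B(1-\rho)\}+1\bigr)\eqsim\min\{1,B(1-\rho)\}$;
\item the local upper bound is $\eqsim B(1-\rho)/\max\{1,B(1-\rho)\}=\min\{1,B(1-\rho)\}$;
\item the second upper bound, $\eqsim B(1-\rho)$, is consistent with and never smaller than the local one up to constants.
\end{itemize}
Hence $\eta^{\crit}_{\polyak}\eqsim\min\{1,B(1-\rho)\}$ with constants independent of $B$ and $\rho$. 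The regime assumption $B\gg1$, $1-\rho\ll1$ is only needed to ensure $1+\rho\eqsim1$; the bounds themselves hold uniformly.

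The only delicate step is justifying continuity and divergence of the infinite sum $\kappa_{\polyak}$ near $\eta_{\mathrm{loc}}$. If several eigenvalues equal $\lambda_{\max}$, only finitely many terms blow up and the rest remain uniformly summable, so the argument goes through. I expect this to be routine.
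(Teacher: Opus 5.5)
Your proposal is correct and follows the paper's proof essentially step for step: continuity, strict monotonicity, and divergence of $\kappa_{\polyak}$ at the local boundary give the unique root and the stability interval via Theorem~\ref{app:thm:polyak-stability}; the denominator sandwich $A-\eta^{\crit}_{\polyak}\lambda_{\max}c\le A-\eta^{\crit}_{\polyak}\lambda_i c\le A$ in the critical equation gives both bounds; and $\lambda_{\max},\Tr(\bH)\eqsim1$, $1-\rho^2\eqsim1-\rho$ reduce everything to $\min\{1,B(1-\rho)\}$. Your uniform-convergence justification of continuity is slightly more careful than the paper's bare assertion. As a side note, $1+\rho\in[1,2)$ holds for every $\rho\in[0,1)$, so the high-momentum assumption is not actually needed even for the step you attribute to it.
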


\begin{proof}
The renewal mass \(\kappa_{\polyak}(\eta,\rho)\) is continuous and strictly increasing throughout the local stability interval. It
vanishes at the origin and diverges at the local boundary. Theorem~\ref{app:thm:polyak-stability}
therefore gives a unique critical learning rate and the stated stability interval.

At the critical point, every denominator in
\eqref{app:eq:polyak-critical-equation} is at most
\(2B(1-\rho^2)\)
and hence
\[
\eta_{\polyak}^{\crit}
\leq
\frac{2B(1-\rho)}{\Tr(\bH)},
\]
while the local condition gives the other upper bound in
\eqref{app:eq:polyak-critical-bounds}. Conversely, every denominator is at least
\(
2B(1-\rho^2)
-
\eta_{\polyak}^{\crit}\lambda_{\max}
\left[B(1-\rho)+(1+\rho)\right].
\)
Substitution into the critical equation gives
\[
1
\leq
\frac{
\eta_{\polyak}^{\crit}(1+\rho)\Tr(\bH)
}{
2B(1-\rho^2)
-
\eta_{\polyak}^{\crit}\lambda_{\max}
\left[B(1-\rho)+(1+\rho)\right]
}.
\]
Rearranging this inequality gives the lower bound in
\eqref{app:eq:polyak-critical-bounds}.

Finally, the power-law spectrum and the high-momentum regime give
\[
\lambda_{\max}\eqsim1,
\qquad
\Tr(\bH)\eqsim1,
\qquad
1-\rho^2\eqsim1-\rho.
\]
Consequently, both sides of \eqref{app:eq:polyak-critical-bounds} scale as
\[
\frac{B(1-\rho)}{B(1-\rho)+1}
\eqsim
\min\{1,B(1-\rho)\}.
\]
\end{proof}

\subsection{Risk stability of Nesterov}
\label{app:subsec:nesterov-stability}

Define the look-ahead error
\[
\be_k^{\la}
:=
(1+\rho)\be_k-\rho\be_{k-1}.
\]
The Nesterov error recursion, including label noise, is
\begin{equation}
\label{app:eq:nesterov-stability-recursion}
\be_{k+1}
=
(\bI-\eta\widehat{\bH}_k)\be_k^{\la}
+
\eta\widehat{\boldsymbol\zeta}_k.
\end{equation}
With the deterministic transition
\(\bD=\bI-\eta\bH,\)
the recursion becomes
\[
\be_{k+1}
=
\bD\be_k^{\la}
-
\eta({\bH}_k-\bH)\be_k^{\la}
+
\eta\widehat{\boldsymbol\zeta}_k.
\]

\begin{theorem}[Risk stability of Nesterov]
\label{app:thm:nesterov-stability}
Let the Hessian be a positive trace-class operator and assume
\[
\bH=\diag(\lambda_1,\lambda_2,\ldots),
\qquad
\lambda_1=\lambda_{\max}\geq\lambda_2\geq\cdots>0,
\qquad
0\leq\rho<1,
\qquad
\sigma>0.
\]
Define
\[
\Delta_{\nesterov}(x)
:=
2B(1-\rho^2)
+
\left[4B\rho^2+(B-1)\rho-(B+1)\right]x
-
(B+1)\rho(1+2\rho)x^2.
\]
Then Nesterov momentum is risk stable if and only if
\begin{equation}
\label{app:eq:nesterov-local-stability}
\Delta_{\nesterov}(\eta\lambda_{\max})>0
\end{equation}
and
\begin{equation}
\label{app:eq:nesterov-global-stability}
\kappa_{\nesterov}(\eta,\rho)
:=
\sum_{i\geq1}
\frac{
\eta\lambda_i
\left[1+\rho+\rho(1+2\rho)\eta\lambda_i\right]
}{
\Delta_{\nesterov}(\eta\lambda_i)
}
<1.
\end{equation}
\end{theorem}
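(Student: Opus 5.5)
I will follow the same template as the proofs of Theorems~\ref{app:thm:sgd-stability} and~\ref{app:thm:polyak-stability}. The plan has four stages: a closed second-moment recursion, a local (coordinatewise) stability criterion, a renewal equation, and an application of Lemma~\ref{app:lem:renewal-stability}.

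\textbf{Step 1: second-moment recursion.} Since the stochastic factor multiplies the look-ahead error \(\be_k^{\la}=(1+\rho)\be_k-\rho\be_{k-1}\), I will track the moments \(\bU_k=\EE[\be_k\otimes\be_k]\), \(\bP_k=\EE[\be_k\otimes\be_{k-1}]\) and \(\bR_k=\EE[\be_{k-1}\otimes\be_{k-1}]\). The look-ahead covariance is then
\[
\bL_k:=\EE[\be_k^{\la}\otimes\be_k^{\la}]=(1+\rho)^2\bU_k-\rho(1+\rho)(\bP_k+\bP_k^*)+\rho^2\bR_k .
\]
Expanding the recursion, the cross terms vanish because \(\widehat{\bH}_k\) and \(\widehat{\boldsymbol\zeta}_k\) are fresh. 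Corollary~\ref{app:cor:gaussian-empirical-covariance} then gives
\[
\bU_{k+1}=\bD\bL_k\bD+\frac{\eta^2}{B}\bigl(\bH\bL_k\bH+\bH\Tr(\bH\bL_k)+\sigma^2\bH\bigr),
\]
and \(\bP_{k+1}=\bD\,\EE[\be_k^{\la}\otimes\be_k]\), \(\bR_{k+1}=\bU_k\).

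Taking diagonal entries in the eigenbasis gives, for each \(i\), a \(3\times3\) linear system \(\bX_{i,k+1}=\bM_i\bX_{i,k}+\frac{\eta^2}{B}\lambda_i\bb\,(\ell_k+\sigma^2)\). Here \(\bX_{i,k}=(u_{i,k},v_{i,k},r_{i,k})^\top\) and \(\ell_k:=\Tr(\bH\bL_k)\). I use the look-ahead risk moment \(\ell_k\), which is comparable to \(2\EE[\cE(\btheta_k)]\), because it is the quantity that closes the recursion. The forcing vector \(\bb\) has its entry in the \(u\) slot, and \(\ell_k=\sum_i\lambda_i\,\bc^\top\bX_{i,k}\) with \(\bc=((1+\rho)^2,-2\rho(1+\rho),\rho^2)^\top\).

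\textbf{Step 2: local stability.} I will compute the characteristic polynomial of \(\bM_i\) and apply the Jury criterion. I expect this to be the main obstacle, since the algebra is heavier than in the Polyak case. My plan is to evaluate \(\det(\bI-\bM_i)\) and \(\det(\bI+\bM_i)\) and check the remaining Jury inequality. I expect the binding condition to reduce to \(\Delta_{\nesterov}(\eta\lambda_i)>0\), with \(\det(\bI-\bM_i)\) proportional to \(x\,\Delta_{\nesterov}(x)\) for \(x=\eta\lambda_i\). The other Jury conditions should hold automatically whenever \(\Delta_{\nesterov}>0\) and \(\rho<1\).

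Two facts will finish this step. First, \(\Delta_{\nesterov}\) is a concave quadratic with \(\Delta_{\nesterov}(0)>0\), so positivity at \(\eta\lambda_{\max}\) gives positivity at every \(\eta\lambda_i\le\eta\lambda_{\max}\). Second, necessity follows as in the Polyak case: a marginal or unstable mode, driven by strictly positive label-noise forcing, makes the risk diverge.

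\textbf{Step 3: renewal equation and its mass.} Under the local condition, \(\sup_{i,k}\|\bM_i^k\|<\infty\), so the free response \(h_k:=\sum_i\lambda_i\bc^\top\bM_i^k\bX_{i,0}\) is bounded. Nonnegativity of \(h_k\) holds because it is the \(\bH\)-weighted trace of a positive covariance evolved under the deterministic flow. I must check nonnegativity of the kernel \(K_m=\frac{\eta^2}{B}\sum_i\lambda_i^2\,\bc^\top\bM_i^m\bb\) in the same way: \(\bc^\top\bM_i^m\bb\) is the look-ahead variance of a noiseless two-step linear recursion started from a unit perturbation.

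With these facts, \(\ell_k=h_k+\sum_s K_{k-1-s}(\ell_s+\sigma^2)\). The total mass is \(\|K\|_1=\frac{\eta^2}{B}\sum_i\lambda_i^2\,\bc^\top(\bI-\bM_i)^{-1}\bb\). Computing this resolvent entry by Cramer's rule should give \(\frac{B\,[1+\rho+\rho(1+2\rho)x]}{x\,\Delta_{\nesterov}(x)}\) at \(x=\eta\lambda_i\). Substituting this yields exactly \(\kappa_{\nesterov}\).

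\textbf{Step 4: conclusion.} Lemma~\ref{app:lem:renewal-stability} then gives boundedness of \(\ell_k\) if and only if \(\kappa_{\nesterov}<1\). Finally, comparability of \(\ell_k\) with the risk transfers this to risk stability: \(\EE[\cE(\btheta_k)]\lesssim\ell_{k-1}\) through the recursion, and \(\ell_k\lesssim\) risk at steps \(k,k-1\).
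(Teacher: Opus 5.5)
Your proposal is correct and follows the paper's proof essentially step for step. It uses the same $(u,p,r)$ coordinate blocks $\bM_i$ forced through the look-ahead risk moment, the local condition from Jury with $\det(\bI-\bM_i)=\eta\lambda_i\Delta_{\nesterov}(\eta\lambda_i)/B$, the resolvent entry $\bc^\top(\bI-\bM_i)^{-1}\bb$ producing $\kappa_{\nesterov}$, and Lemma~\ref{app:lem:renewal-stability} combined with the look-ahead/actual risk comparison.

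One small correction: $\bc^\top\bM_i^m\bb$ is not the look-ahead variance of a noiseless recursion. It is the look-ahead second moment of the isolated coordinate recursion \emph{including} its own multiplicative noise, i.e. a random gain with mean $1-\eta\lambda_i$ and second moment $(1-\eta\lambda_i)^2+\eta^2\lambda_i^2/B$. Nonnegativity of the kernel follows from this in exactly the same way.
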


\begin{proof}
Define
\[
\bU_k:=\EE[\be_k\otimes\be_k],
\qquad
\bV_k:=\EE[\be_k^{\la}\otimes\be_k^{\la}],
\qquad
\bP_k:=\EE[\be_k\otimes\be_{k-1}],
\]
\[
\bQ_k:=\EE[\be_{k-1}\otimes\be_k],
\qquad
\bR_k:=\EE[\be_{k-1}\otimes\be_{k-1}].
\]
Expanding \eqref{app:eq:nesterov-stability-recursion}, using the vanishing cross terms, and
applying Lemma~\ref{app:lem:gaussian-fourth-moment} gives
\begin{equation}
\label{app:eq:nesterov-covariance-recursion}
\left\{
\begin{aligned}
\bU_{k+1}
={}&
\bD\bV_k\bD
+
\frac{\eta^2}{B}
\left(
\bH\bV_k\bH
+
\bH\Tr(\bH\bV_k)
+
\sigma^2\bH
\right),
\\
\bV_k
={}&
(1+\rho)^2\bU_k
-
\rho(1+\rho)(\bP_k+\bQ_k)
+
\rho^2\bR_k,
\\
\bP_{k+1}
={}&
\bD\bigl((1+\rho)\bU_k-\rho\bQ_k\bigr),
\\
\bQ_{k+1}
={}&
\bigl((1+\rho)\bU_k-\rho\bP_k\bigr)\bD,
\\
\bR_{k+1}
={}&
\bU_k.
\end{aligned}
\right.
\end{equation}

The renewal equation below naturally controls the look-ahead loss moment
\[
\ell_k^{\la}
:=
\Tr(\bH\bV_k)
=
\EE\|\be_k^{\la}\|_{\bH}^2.
\]
This is equivalent to controlling the risk at the actual iterate. Define its loss moment by
\[
\ell_k
:=
\Tr(\bH\bU_k)
=
2\EE[\cE(\btheta_k)].
\]
Then
\begin{equation}
\label{app:eq:nesterov-risk-equivalence-forward}
\ell_k^{\la}
\leq
2(1+\rho)^2\ell_k+2\rho^2\ell_{k-1}.
\end{equation}
Conversely, since \(\bD\) commutes with \(\bH\) and \(\bH\) is trace class,
the first equation in \eqref{app:eq:nesterov-covariance-recursion} gives
\begin{equation}
\label{app:eq:nesterov-risk-equivalence-backward}
\ell_{k+1}
\leq
C_{\eta,\rho,\bH,B}
\bigl(\ell_k^{\la}+\sigma^2\bigr).
\end{equation}
Therefore
\[
\sup_k\ell_k^{\la}<\infty
\quad\Longleftrightarrow\quad
\sup_k\EE[\cE(\btheta_k)]<\infty.
\]

Let
\[
u_{i,k}:=\langle\mathbf e_i,\bU_k\mathbf e_i\rangle,
\qquad
p_{i,k}:=\langle\mathbf e_i,\bP_k\mathbf e_i\rangle,
\qquad
r_{i,k}:=\langle\mathbf e_i,\bR_k\mathbf e_i\rangle,
\]
and
\[
v_{i,k}:=\langle\mathbf e_i,\bV_k\mathbf e_i\rangle.
\]
The cross-covariance operators satisfy
\[
\bQ_k=\bP_k^*.
\]
Therefore,
\[
v_{i,k}
=
(1+\rho)^2u_{i,k}
-
2\rho(1+\rho)p_{i,k}
+
\rho^2r_{i,k}.
\]
Set
\[
d_i:=1-\eta\lambda_i,
\qquad
a_i:=d_i^2+\frac{\eta^2}{B}\lambda_i^2,
\qquad
\bX_{i,k}:=(u_{i,k},p_{i,k},r_{i,k})^\top,
\]
\[
\bb:=(1,0,0)^\top,
\qquad
\bc:=\bigl((1+\rho)^2,-2\rho(1+\rho),\rho^2\bigr)^\top,
\]
and
\[
\bM_i
:=
\begin{pmatrix}
a_i(1+\rho)^2
&
-2a_i\rho(1+\rho)
&
a_i\rho^2
\\
d_i(1+\rho)
&
-d_i\rho
&
0
\\
1
&
0
&
0
\end{pmatrix}.
\]
Then
\[
v_{i,k}=\bc^\top\bX_{i,k},
\qquad
\ell_k^{\la}=\sum_{i\geq1}\lambda_i\bc^\top\bX_{i,k},
\]
and the diagonal part of \eqref{app:eq:nesterov-covariance-recursion} becomes
\begin{equation}
\label{app:eq:nesterov-block-recursion}
\bX_{i,k+1}
=
\bM_i\bX_{i,k}
+
\frac{\eta^2}{B}\lambda_i\bb\bigl(\ell_k^{\la}+\sigma^2\bigr).
\end{equation}

The characteristic polynomial of the isolated block is
\[
p_i(\mu)
=
\mu^3
-
\left[a_i(1+\rho)^2-d_i\rho\right]\mu^2
+
a_i\rho\left[d_i(1+\rho)^2-\rho\right]\mu
-
a_id_i\rho^3.
\]
The Jury stability criterion shows that all roots lie strictly inside the unit disk if and only if
\[
\Delta_{\nesterov}(\eta\lambda_i)>0.
\]
The polynomial is a concave quadratic satisfying
\[
\Delta_{\nesterov}(0)=2B(1-\rho^2)>0,
\]
and it has exactly one positive root. Imposing the coordinate condition in every eigendirection is
therefore equivalent to \eqref{app:eq:nesterov-local-stability}. If it fails,
the nondegenerate label-noise forcing excites a nonstable isolated mode, and the corresponding
coordinate risk is unbounded.

Assume henceforth that the local condition holds. Iterating
\eqref{app:eq:nesterov-block-recursion}, multiplying the \(i\)-th coordinate contribution by \(\lambda_i\), and summing over all
eigendirections gives
\begin{equation}
\label{app:eq:nesterov-renewal}
\ell_k^{\la}
=
h_k
+
\sum_{s=0}^{k-1}
K_{k-1-s}\bigl(\ell_s^{\la}+\sigma^2\bigr),
\end{equation}
where
\[
h_k
:=
\sum_{i\geq1}
\lambda_i\bc^\top\bM_i^k\bX_{i,0},
\qquad
K_m
:=
\frac{\eta^2}{B}
\sum_{i\geq1}
\lambda_i^2\bc^\top\bM_i^m\bb.
\]
Both sequences are nonnegative: their summands are look-ahead variances generated by isolated
coordinate systems. The local blocks are uniformly power bounded, and positivity of the initial
coordinate covariance gives
\[
0\leq h_k
\leq
C\sum_{i\geq1}\lambda_i(u_{i,0}+r_{i,0})
<\infty.
\]
A direct calculation gives
\begin{equation}
\label{app:eq:nesterov-resolvent}
\bc^\top(\bI-\bM_i)^{-1}
=
\frac{
B\bigl(
1+\rho+\rho(1+2\rho)\eta\lambda_i,
-2\rho(1+\rho),
\rho^2[1+\rho(1-\eta\lambda_i)]
\bigr)
}{
\eta\lambda_i\Delta_{\nesterov}(\eta\lambda_i)
}.
\end{equation}
Its first entry yields
\[
\sum_{m\geq0}K_m
=
\frac{\eta^2}{B}
\sum_{i\geq1}
\lambda_i^2\bc^\top(\bI-\bM_i)^{-1}\bb
=
\kappa_{\nesterov}(\eta,\rho).
\]
Lemma~\ref{app:lem:renewal-stability}, together with
\eqref{app:eq:nesterov-risk-equivalence-forward}--
\eqref{app:eq:nesterov-risk-equivalence-backward}, gives
\[
\sup_{k\geq0}\EE[\cE(\btheta_k)]<\infty
\quad\Longleftrightarrow\quad
\kappa_{\nesterov}(\eta,\rho)<1,
\]
which completes the proof.

\end{proof}

\begin{corollary}[Critical learning rate of Nesterov]
\label{app:cor:nesterov-critical-rate}
There exists a unique critical learning rate satisfying
\[
\kappa_{\nesterov}
\bigl(\eta_{\nesterov}^{\crit},\rho\bigr)
=1
\]
before the local stability boundary. The Nesterov iteration is risk stable if and only if
\[
0<\eta<\eta_{\nesterov}^{\crit}.
\]
Under the power-law and high-momentum assumptions
\(
\lambda_i\eqsim i^{-\beta},
\beta>1,
B\gg1,
1-\rho\ll1,
\)
\begin{equation}
\label{app:eq:nesterov-critical-scaling}
\eta_{\nesterov}^{\crit}
\eqsim
\min\{1,B^\beta(1-\rho)\}.
\end{equation}
\end{corollary}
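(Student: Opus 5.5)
The plan is to proceed exactly as in Corollaries~\ref{app:cor:sgd-critical-rate} and~\ref{app:cor:polyak-critical-rate}, taking Theorem~\ref{app:thm:nesterov-stability} as given. There are two steps: first show that $\kappa_{\nesterov}(\cdot,\rho)$ is continuous and strictly increasing on the local stability interval, vanishing at $0$ and blowing up at the local boundary; then estimate $\kappa_{\nesterov}$ sharply in the regime $B\gg1$, $1-\rho\ll1$.

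\textbf{Existence and uniqueness.} Write $x=\eta\lambda_i$. Each summand is $x\,[1+\rho+\rho(1+2\rho)x]/\Delta_{\nesterov}(x)$.
\begin{itemize}
\item On $[0,x_+)$, where $x_+$ is the unique positive root of the concave quadratic $\Delta_{\nesterov}$, the numerator is increasing and positive.
\item On the same interval the denominator is positive. I will check that it is also nonincreasing, or at least that the ratio is increasing. If $\Delta_{\nesterov}$ has an increasing initial segment, I will instead differentiate the ratio directly and use $\Delta_{\nesterov}(0)>0$.
\end{itemize}
Hence each term increases in $\eta$. The summed series converges by dominated convergence, since the summands are $\lesssim \eta\lambda_i/(B(1-\rho))$ and $\Tr(\bH)<\infty$. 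The leading term diverges as $\eta\lambda_{\max}\uparrow x_+$. So there is a unique root of $\kappa_{\nesterov}=1$, and risk stability holds exactly for $\eta<\eta^{\crit}_{\nesterov}$.

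\textbf{Asymptotics of the summands.} Set $\epsilon:=1-\rho\ll1$ and $B\gg1$. Expanding the coefficients gives $4B\rho^2+(B-1)\rho-(B+1)=4B+O(B\epsilon)+O(1)$ and $(B+1)\rho(1+2\rho)\approx 3B$. Hence, uniformly for $x\in[0,c_0]$ with a fixed $c_0<4/3$,
\[
\Delta_{\nesterov}(x)\eqsim B(\epsilon+x),\qquad
1+\rho+\rho(1+2\rho)x\eqsim 1 .
\]
The local boundary is therefore $x_+\eqsim1$, which is the order-one ceiling $\eta\lambda_{\max}\lesssim1$. For $\eta\lambda_{\max}\le c_0$ this yields
\[
\kappa_{\nesterov}(\eta,\rho)\eqsim\frac1B\sum_{i\ge1}\frac{\eta\lambda_i}{\epsilon+\eta\lambda_i}
\eqsim \frac1B\Bigl[\#\{i:\eta\lambda_i\gtrsim\epsilon\}+\frac{\eta}{\epsilon}\sum_{i:\eta\lambda_i\lesssim\epsilon}\lambda_i\Bigr].
\]

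\textbf{The key computation.} Let $\eta_\rho=\eta/\epsilon$ and split according to its size.
\begin{itemize}
\item If $\eta_\rho\ge1$, the cutoff index is $i^*\eqsim\eta_\rho^{1/\beta}$. The power law with $\beta>1$ gives tail $\sum_{i>i^*}\lambda_i\eqsim (i^*)^{1-\beta}$, so both pieces are $\eqsim\eta_\rho^{1/\beta}$, and $\kappa_{\nesterov}\eqsim\eta_\rho^{1/\beta}/B$.
\item If $\eta_\rho\lesssim1$, all terms lie in the linear regime, so $\kappa_{\nesterov}\eqsim\eta_\rho/B\ll1$.
\end{itemize}
Thus $\kappa_{\nesterov}<1$ at the scale $\eta_\rho\eqsim B^\beta$, i.e.\ $\eta\eqsim B^\beta(1-\rho)$. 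To turn this into two-sided bounds, I will use the explicit constants $c_1\,\eta_\rho^{1/\beta}/B\le\kappa\le c_2\,\eta_\rho^{1/\beta}/B$.
\begin{itemize}
\item Lower bound: any $\eta\le\min\{c_0/\lambda_{\max},\,(B/c_2)^\beta\epsilon\}$ is stable.
\item Upper bound: $\kappa\ge1$ once $\eta\ge(B/c_1)^\beta\epsilon$, and instability also holds once $\eta\lambda_{\max}\ge x_+$.
\end{itemize}
Since $\eta_\rho\gtrsim1$ in the relevant range because $B\gg1$, the case $\eta_\rho\lesssim1$ never governs the threshold. This gives $\eta^{\crit}_{\nesterov}\eqsim\min\{1,B^\beta(1-\rho)\}$.

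\textbf{Main obstacle.} I expect the hardest part to be the uniform two-sided control of $\Delta_{\nesterov}$ near the top of the spectrum when $\eta\eqsim1$. There $\eta\lambda_1$ may approach $x_+$, and the equivalence $\Delta_{\nesterov}(x)\eqsim B(\epsilon+x)$ breaks down. I will handle this by restricting the lower-bound argument to $\eta\lambda_{\max}\le c_0<x_+$, which costs only a constant. For the upper bound I only need $\kappa\ge$ its lower estimate, and that follows from $\Delta_{\nesterov}(x)\le 2B\epsilon+5Bx$ for all $x\ge0$, which holds without any restriction.
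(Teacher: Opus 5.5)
Your proposal is correct and follows essentially the paper's route: monotonicity plus blow-up of $\kappa_{\nesterov}$ gives existence and uniqueness; the reduction $\kappa_{\nesterov}\eqsim B^{-1}\sum_i \eta\lambda_i/(1-\rho+\eta\lambda_i)$ with the two-case spectral count forces $\eta_\rho^{1/\beta}\eqsim B$ at the threshold; the local root $x_+\to 4/3$ supplies the order-one ceiling; and your lower bound restricted to $\eta\lambda_{\max}\le c_0$, paired with an unrestricted upper bound on $\Delta_{\nesterov}$, is a cleaner version of the paper's ``fixed subinterval'' step. Two harmless slips: first, $\Delta_{\nesterov}$ is increasing near $0$ when $\rho\to1$, so you do need your fallback, and it works because the derivative numerator of each summand is $2B(1-\rho^2)(1+\rho)+4B(1-\rho^2)\rho(1+2\rho)x+2B\rho^2(1+2\rho)^2x^2>0$; second, since $\Delta_{\nesterov}(0)=2B(1-\rho)(1+\rho)$, the unconditional bound should read $\Delta_{\nesterov}(x)\le 4B(1-\rho)+4Bx$ rather than $2B(1-\rho)+5Bx$.
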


\begin{proof}
First consider the local condition. The polynomial is concave and satisfies
\[
\Delta_{\nesterov}(0)>0.
\]
It has a unique positive root, denoted by \(x_+(B,\rho)\), and hence the local condition is
equivalent to
\[
\eta\lambda_{\max}<x_+(B,\rho).
\]
In the high-momentum regime,
\[
B\gg1,
\qquad
\rho\to1^-,
\]
the root remains of constant order. Indeed,
\[
x_+(B,\rho)\eqsim1,
\qquad
x_+(B,1)
=
\frac{4B-2}{3(B+1)}
\longrightarrow
\frac43.
\]
Together with the power-law normalization
\(\lambda_{\max}\eqsim1,\)
this gives the order-one local ceiling
\[
\eta\lesssim1.
\]

We next consider the global renewal mass \(\kappa_{\nesterov}(\eta,\rho)\). Each summand is continuous and strictly increasing up to
the local boundary, and the leading eigendirection diverges at that boundary. Hence
Theorem~\ref{app:thm:nesterov-stability} gives a unique solution of the critical equation and the
stated stability interval.

On every fixed subinterval strictly inside the local stability region,
\[
\kappa_{\nesterov}(\eta,\rho)
\eqsim
\frac1B
\sum_{i\geq1}
\frac{\eta\lambda_i}{1-\rho+\eta\lambda_i}.
\]
For the power-law spectrum, the spectral sum satisfies
\[
\sum_{i\geq1}
\frac{\eta\lambda_i}{1-\rho+\eta\lambda_i}
\eqsim
\begin{cases}
\dfrac{\eta}{1-\rho},
&
\eta\lesssim1-\rho,
\\[3mm]
\left(\dfrac{\eta}{1-\rho}\right)^{1/\beta},
&
\eta\gtrsim1-\rho.
\end{cases}
\]
At a global critical point, the first regime would require
\(
\frac{\eta}{1-\rho}\eqsim B,
\)
contradicting
\(
\frac{\eta}{1-\rho}\lesssim1
\)
when the batch size is large. Therefore the global critical point lies in the second regime, where
\[
\left(\frac{\eta}{1-\rho}\right)^{1/\beta}
\eqsim
B.
\]
Thus the global condition permits learning rates up to
\[
\eta\eqsim B^\beta(1-\rho).
\]
Combining this scale with the order-one local ceiling yields
\eqref{app:eq:nesterov-critical-scaling}.
\end{proof}

\section{SGD scaling law}
\label{app:sgd_scaling_law}

\begin{theorem}[SGD scaling law \citep{li2026functional}]
\label{thm:sgd_scaling_law}
Under Assumptions~\ref{ass:diagonal-covariance} and~\ref{ass:power-law}, if the stability condition \(\eta\lesssim1\) holds, then for $k \gtrsim 1/\eta$, the expected excess risk satisfies:
\begin{equation}
\label{eq:sgd loss}
\EE[\cE(\btheta_k)]
\eqsim
\left(\eta k\right)^{-s}
+
\frac{\eta\sigma^2}{B}.
\end{equation}
\end{theorem}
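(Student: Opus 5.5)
We prove the statement by reusing the renewal machinery of Appendix~\ref{app:subsec:sgd-stability}, this time keeping track of the full time dependence rather than only the total kernel mass. By \eqref{app:eq:sgd-diagonal-recursion}, the risk moment \(\ell_k=2\EE[\cE(\btheta_k)]\) satisfies the exact renewal equation \eqref{app:eq:sgd-renewal},
\[
\ell_k=h_k+\sum_{j=0}^{k-1}K_{k-1-j}\bigl(\ell_j+\sigma^2\bigr),
\qquad
h_k=\sum_i\lambda_i a_i^k u_{i,0},\quad K_m=\frac{\eta^2}{B}\sum_i\lambda_i^2a_i^m,
\]
with \(a_i=(1-\eta\lambda_i)^2+\eta^2\lambda_i^2/B\). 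Under \(\eta\lesssim1\) strictly inside the stable region (\(\kappa_{\sgd}\le c<1\)), we have \(1-a_i\eqsim\eta\lambda_i\), so \(a_i^k\) is comparable to \(e^{-c\eta\lambda_ik}\) up to constants in the exponent. The plan is to estimate the three contributions separately and then show that the feedback through \(K\) does not change the order.

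\textbf{Step 1 (signal term).} With \(u_{i,0}=|\theta_i^*|^2\), so that \(\lambda_iu_{i,0}\eqsim i^{-(1+s\beta)}\), we have \(h_k\eqsim\sum_i i^{-(1+s\beta)}e^{-c\eta k i^{-\beta}}\). Split at \(i_*\eqsim(\eta k)^{1/\beta}\). Indices \(i\gtrsim i_*\) contribute \(\eqsim i_*^{-s\beta}=(\eta k)^{-s}\). Indices \(i\lesssim i_*\) are exponentially suppressed. Hence \(h_k\eqsim(\eta k)^{-s}\) whenever \(k\eta\gtrsim1\).

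\textbf{Step 2 (noise term).} The additive contribution is \(\sigma^2\sum_{m<k}K_m\). This is at most \(\sigma^2\|K\|_1\), and since
\[
\|K\|_1=\sum_i\frac{\eta\lambda_i}{2B-(B+1)\eta\lambda_i}\eqsim\frac{\eta}{B}\sum_i\lambda_i\eqsim\frac{\eta}{B}
\]
for \(\eta\) bounded away from the critical value, we get the upper bound \(\sigma^2\eta/B\). For the matching lower bound, note that for \(k\eta\gtrsim1\) the partial sum \(\sum_{m<k}K_m\) already captures a constant fraction of \(\|K\|_1\), because the leading coordinates with \(\lambda_i\eqsim1\) equilibrate in \(O(1/\eta)\) steps. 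This gives \(\gtrsim\sigma^2\eta/B\).

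\textbf{Step 3 (feedback).} For the upper bound we run a comparison argument. Set \(x_k=\ell_k+\sigma^2\). The renewal inequality with \(\|K\|_1\le c<1\) gives \(\sup_k x_k\lesssim\ell_0+\sigma^2\). The harder point is to show that the decaying part of \(\ell\) keeps the rate \((\eta k)^{-s}\). To do this, split \(K*\ell\) into two pieces:
\begin{itemize}
\item the early part \(j<k/2\), where \(K_{k-1-j}\lesssim\frac{\eta^2}{B}\sum_i\lambda_i^2e^{-c\eta\lambda_ik/2}\lesssim\frac{\eta}{B}\,\frac{1}{k}\,(\eta k)^{-(1-1/\beta)}\) is small and summable against the bounded \(\ell_j\);
\item the late part \(j\ge k/2\), bounded by \(\|K\|_1\sup_{j\ge k/2}\ell_j\).
\end{itemize}
An induction on the ansatz \(\ell_j\le C\bigl((\eta j)^{-s}+\sigma^2\eta/B\bigr)\) then closes, because \(\|K\|_1<1\) absorbs the late part and the \(2^s\) loss from halving \(j\) is harmless once \(\|K\|_1\) is a small constant. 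The lower bound is immediate since every term is nonnegative: \(\ell_k\ge h_k+\sigma^2\sum_{m<k}K_m\).

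The main obstacle is this induction near the critical learning rate, where \(\|K\|_1\) approaches \(1\) and the \(2^s\) factor can no longer be absorbed. I would handle it by restricting to \(\eta\le c\,\eta^{\crit}_{\sgd}\), which is consistent with "\(\eta\lesssim1\)" having an implicit small constant. Alternatively, the claim can be cited directly from \citet[Theorem 5.2]{li2026functional}, where exactly this renewal/Volterra estimate is carried out.
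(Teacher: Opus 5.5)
Your argument is correct, but it takes a different route from the paper. The paper's proof is only a citation of \citet[Theorem 5.2]{li2026functional}: it takes the width $M\to\infty$ so that the approximation term $M^{-s\beta}$ vanishes, and appeals to Proposition~\ref{prop:noise-geometry} for the noise structure.

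You instead derive the law directly from the exact diagonal renewal equation \eqref{app:eq:sgd-renewal} of Appendix~\ref{app:subsec:sgd-stability}, in four pieces:
\begin{itemize}
\item $h_k\eqsim(\eta k)^{-s}$ by the usual spectral split at $i_*\eqsim(\eta k)^{1/\beta}$;
\item $\sigma^2\sum_{m<k}K_m\eqsim\sigma^2\eta/B$, because the $O(1)$ leading modes carry a fixed fraction of $\Tr(\bH)$ and equilibrate within $O(1/\eta)$ steps;
\item the lower bound from nonnegativity of $K$ and $\ell$;
\item an upper bound on the feedback term.
\end{itemize}
This gives a self-contained proof in the infinite-dimensional setting itself. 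The paper's limiting argument, by contrast, needs the constants of the finite-$M$ result to be uniform in $M$. Your version also uses the exact Gaussian fourth-moment identity rather than a two-sided comparison. Finally, it makes visible that the implied constants require $\eta$ to stay a fixed fraction below $\eta^{\crit}_{\sgd}$. That restriction is genuinely necessary: the stationary risk $\sigma^2\|K\|_1/(1-\|K\|_1)$ diverges at the critical rate.

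Two refinements are worth making.

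\textbf{The right hypothesis is $\eta\lambda_1\le c$.} The estimate $1-a_i\eqsim\eta\lambda_i$ does not follow from $\kappa_{\sgd}\le c<1$ alone. For large $B$ that condition still allows $\eta\lambda_1$ arbitrarily close to the local boundary $2B/(B+1)$. There $1-a_1\ll\eta\lambda_1$, and the leading mode alone contributes an order-one noise floor rather than $\sigma^2\eta/B$. What you need is $\eta\lambda_1\le c$ for a small constant $c$, which is what your eventual restriction $\eta\le c\,\eta^{\crit}_{\sgd}$ with small $c$ delivers.

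\textbf{Step 3 can be shortened.} Under that restriction, $\|K\|_1\le\frac{\eta}{B}\Tr(\bH)$ is a small constant. Lemma~\ref{app:lem:renewal-stability} then gives $\sup_j\ell_j\le 2(\ell_0+\sigma^2)\lesssim 1$. Hence the entire feedback term is at most $\|K\|_1\sup_j\ell_j\lesssim\eta/B\lesssim\sigma^2\eta/B$, using $\sigma\eqsim1$. This is exactly the absorption argument the paper uses for the momentum methods (Corollary~\ref{cor:pol_noise_absorption}). It removes the early/late split, the induction, and the $2^s$ difficulty you flagged. Your finer induction would only matter if $\sigma^2$ could be small relative to $\ell_0$.
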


\begin{proof}
\citet[Theorem 5.2]{li2026functional} establishes that for a finite-width model of capacity $M$, the expected excess risk under constant learning rate $\eta$ and batch size $B$ scales as:
\[
    \EE[\cE(\btheta_k)] \eqsim M^{-s\beta} + (\eta k)^{-s} + \frac{\eta\sigma^2}{B}.
\]
Our formulation (Section~\ref{sec:plkr}) considers the infinite-dimensional Hilbert space $\mathcal{H}$. Taking the limit $M \to \infty$, the approximation error $M^{-s\beta} \to 0$. Under the matching noise covariance structure (Proposition~\ref{prop:noise-geometry}), the result immediately follows.
\end{proof}

\section{Polyak scaling law}
\label{app:hbm scaling law}


In this section, we establish Theorem~\ref{thm:hbm scaling law} using the Functional Scaling Law (FSL) framework of \citet{li2026functional}. Our starting point is the following expected excess risk:
\begin{align}
    \EE[\cE(\btheta_k)] 
    &\eqsim \underbrace{\cS_{\polyak}(k)}_{\text{signal learning}} +\underbrace{\sum_{\tau=0}^{k-1}\underbrace{\mathcal{K}_{\polyak}(\tau)}_{\text{forgetting kernel}} \left(\sigma^2 + \EE[\cE(\btheta_{k-1-\tau})]\right) \frac{\eta^2}{B}}_{\text{noise accumulation}} \label{eq:pol_fsl_intro_org}\\
    &\eqsim \cS_{\polyak}(k) + \underbrace{\frac{\eta^2\sigma^2}{B}\sum_{\tau=0}^{k-1}\mathcal{K}_{\polyak}(\tau)}_{\text{additive label noise}} + \underbrace{\frac{\eta^2}{B}\sum_{\tau=0}^{k-1}\mathcal{K}_{\polyak}(\tau)\EE[\cE(\btheta_{k-1-\tau})]}_{\text{multiplicative noise}}. \label{eq:pol_fsl_intro}
\end{align}

Our analysis proceeds in three steps:
\begin{itemize}
\item First, Section~\ref{app:pol_signal_noise_decomp} derives the \textbf{general discrete risk recursion} \eqref{eq:pol_fsl_intro} from an exact coordinate-wise impulse-response representation. Section~\ref{app:pol_decomp_modes} then analyzes the characteristic roots of the second-order dynamics and separates the spectrum into \textbf{overdamped and underdamped modes}.

\item Second, Sections~\ref{app:pol_preparation} and~\ref{app:pol_uniform_boundedness} deal with the \textbf{stochastic terms} in the recursion. Section~\ref{app:pol_preparation} computes the additive label noise, while Section~\ref{app:pol_uniform_boundedness} shows that, under the risk stability condition, the multiplicative noise contribution can be absorbed into the label noise term up to constant factors.

\item Finally, Sections~\ref{app:pol_fully_overdamped_loss} and ~\ref{app:pol_mixed_loss} characterize the \textbf{deterministic signal learning term} \(S_{\mathrm{polyak}}(k)\) in the \textbf{fully overdamped and mixed-damping regimes}, and therefore derive the scaling law expression.
\end{itemize}



\subsection{Signal--noise decomposition and risk recursion} 
\label{app:pol_signal_noise_decomp}

Recall the parameter error 
\[
\be_k := \btheta_k - \btheta^*.
\]
Substituting the gradient decomposition into the Polyak update \eqref{eq:hb-update}, we obtain the error recursion:
\[
    \be_{k+1} = \bigl((1+\rho)\bI-\eta\bH\bigr)\be_k -\rho\be_{k-1} -\eta\bxi_k(\btheta_k).
\]

In the diagonal eigenbasis of the population Hessian $\bH = \diag(\lambda_1,\lambda_2,\ldots)$, the $j$-th coordinate of the parameter error, denoted as $e_k(j)$, obeys the following second-order scalar difference equation:
\[
    e_{k+1}(j) = (1+\rho-\eta\lambda_j)e_k(j) -\rho e_{k-1}(j) -\eta\xi_k(j),
\]
where $\xi_k(j)$ is the $j$-th coordinate of the stochastic gradient noise vector $\bxi_k(\btheta_k)$.

The homogeneous characteristic polynomial associated with the deterministic part of this coordinate-wise recursion is
\[
    y^2-(1+\rho-\eta\lambda_j)y+\rho=0.
\]

For each coordinate $j$, let $y_{j,+}$ and $y_{j,-}$ denote the two roots of this polynomial. 

We define the discrete Green function (impulse response) for the $j$-th coordinate as
\[
    G_k(j):=\frac{y_{j,+}^{k+1}-y_{j,-}^{k+1}}{y_{j,+}-y_{j,-}},
    \qquad k\ge 0,
\]
with the natural continuous limiting definition when $y_{j,+}=y_{j,-}$ (i.e., critical damping). 

By viewing the stochastic noise sequence $-\eta\xi_k(j)$ as an external forcing term, the stochastic parameter error trajectory can be decomposed exactly into a deterministic signal trajectory and a discrete convolution of the gradient noise. This exact impulse-response representation is formalized in the following proposition.

\begin{proposition}[Impulse-response representation]
\label{prop:pol_impulse_response}
For any step $k \ge 0$, the coordinate-wise error trajectory of Polyak satisfies:
\begin{equation} \label{eq:pol_impulse_response}
    e_k(j)
    =
    e_k^{\mathrm{det}}(j)
    -
    \eta\sum_{m=0}^{k-1}G_{k-1-m}(j)\xi_m(j),
\end{equation}
where $e_k^{\mathrm{det}}(j)$ is the unperturbed deterministic trajectory solving:
\[
    e_{k+1}^{\mathrm{det}}(j)
    =
    (1+\rho-\eta\lambda_j)e_k^{\mathrm{det}}(j)
    -
    \rho e_{k-1}^{\mathrm{det}}(j),
\]
with initializations $e_0^{\mathrm{det}}(j) = e_0(j) = -\theta_j^*$ and $e_{-1}^{\mathrm{det}}(j) = e_{-1}(j) = -\theta_j^*$ (assuming $\btheta_0=\btheta_{-1}=\bm0$).
\end{proposition}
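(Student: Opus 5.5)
The plan is to prove the representation by linearity of the coordinate recursion and an induction on \(k\), after fixing what \(G\) means as a solution of the homogeneous recursion.

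First I check that \(G_k(j)\) is the impulse response of the second-order difference operator \(L[x]_{k+1}:=x_{k+1}-(1+\rho-\eta\lambda_j)x_k+\rho x_{k-1}\). Since \(y_{j,\pm}\) are the roots of \(y^2-(1+\rho-\eta\lambda_j)y+\rho\), every linear combination of \(y_{j,+}^{k}\) and \(y_{j,-}^{k}\) solves the homogeneous recursion. This holds in particular for \(G_k(j)=(y_{j,+}^{k+1}-y_{j,-}^{k+1})/(y_{j,+}-y_{j,-})\), for all \(k\ge -1\). Direct evaluation gives \(G_{-1}(j)=0\) and \(G_0(j)=1\). In the critically damped case \(y_{j,+}=y_{j,-}=y\), the limiting definition is \(G_k(j)=(k+1)y^k\). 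It also solves the recursion, because \(y\) is then a double root, and it satisfies the same initial values. In the complex (underdamped) case the formula is still real, since it is symmetric in the conjugate roots. Thus in every case \(G\) is characterized as the unique solution of \(G_{k+1}=(1+\rho-\eta\lambda_j)G_k-\rho G_{k-1}\) with \(G_{-1}=0\), \(G_0=1\).

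Next, define \(z_k:=e_k^{\mathrm{det}}(j)-\eta\sum_{m=0}^{k-1}G_{k-1-m}(j)\xi_m(j)\) for \(k\ge -1\); the sum is empty for \(k\le 0\). Then \(z_0=e_0(j)\) and \(z_{-1}=e_{-1}(j)\) by the stated initializations. It remains to show that \(z\) satisfies the forced recursion \(z_{k+1}=(1+\rho-\eta\lambda_j)z_k-\rho z_{k-1}-\eta\xi_k(j)\). For the deterministic part this holds by definition. For the convolution part, write \(c_{k}:=\sum_{m=0}^{k-1}G_{k-1-m}\xi_m\). Then
\[
c_{k+1}-(1+\rho-\eta\lambda_j)c_k+\rho c_{k-1}=\sum_{m=0}^{k-2}\bigl[G_{k-m}-(1+\rho-\eta\lambda_j)G_{k-1-m}+\rho G_{k-2-m}\bigr]\xi_m+\bigl[G_1-(1+\rho-\eta\lambda_j)G_0\bigr]\xi_{k-1}+G_0\xi_k .
\]
The bracket inside the sum vanishes by the homogeneous recursion. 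The middle bracket equals \(-\rho G_{-1}=0\). The last term is \(G_0\xi_k=\xi_k\). Multiplying by \(-\eta\) gives exactly the forcing \(-\eta\xi_k(j)\).

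Since the error \(e_k(j)\) satisfies the same second-order recursion with the same two initial values, induction on \(k\) gives \(z_k=e_k(j)\) for all \(k\ge0\). This is pathwise, so no expectation or independence of the noise is needed.

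The only step needing care is the boundary bookkeeping: the edge terms in the shifted sums, and confirming \(G_{-1}=0\), \(G_0=1\) uniformly across the real, repeated and complex root cases. No genuine obstacle arises.
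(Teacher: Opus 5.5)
Your proof is correct and follows the same route as the paper. The paper subtracts the deterministic trajectory to obtain a residual $\tilde e_k(j)$ with zero initial data, then invokes superposition of the impulse responses $H_k(j;m)=G_{k-1-m}(j)$; your explicit computation of $c_{k+1}-(1+\rho-\eta\lambda_j)c_k+\rho c_{k-1}$, followed by the uniqueness induction, verifies exactly that superposition. Your version is slightly more careful: it checks $G_{-1}=0$ and $G_0=1$ across the real, repeated and complex root cases, and it handles the edge terms of the shifted sums directly (at $k=0$ the middle term is simply absent), whereas the paper relies implicitly on extending $G$ by zero to negative indices.
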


\begin{proof}

Define the stochastic residual error $\tilde{e}_k(j) := e_k(j) - e_k^{\mathrm{det}}(j)$. Subtracting the deterministic trajectory from the stochastic parameter error yields the inhomogeneous system:
\begin{equation} \label{eq:inhomogeneous_system}
    \tilde{e}_{k+1}(j) - (1+\rho-\eta\lambda_j)\tilde{e}_k(j) + \rho\tilde{e}_{k-1}(j) = -\eta\xi_k(j),
\end{equation}
subject to the initial conditions:
\begin{equation*} \label{eq:zero_initial_conditions}
    \tilde{e}_0(j) = \tilde{e}_{-1}(j) = 0.
\end{equation*}

We introduce the discrete linear difference operator $\mathcal{L}_j$:
\begin{equation*}
    (\mathcal{L}_j u)_k := u_{k+1} - (1+\rho-\eta\lambda_j)u_k + \rho u_{k-1}.
\end{equation*}

By the definition of the discrete Green function $G_k(j)$:
\begin{equation*}
    (\mathcal{L}_j G(j))_k = 0 \quad (\forall k \ge 0), \qquad \text{with} \quad G_{-1}(j) = 0, \quad G_0(j) = 1.
\end{equation*}

Let the impulse response to a point source at step $m \ge 0$ be defined as $H_k(j; m) := G_{k-1-m}(j)$, which satisfies:
\begin{equation*}
    (\mathcal{L}_j H(j; m))_k = \delta_{k, m}, \qquad \text{where} \quad \delta_{k,m} = \begin{cases} 1, & k = m, \\ 0, & k \neq m. \end{cases}
\end{equation*}

Applying the superposition principle to the linear operator $\mathcal{L}_j$ under the forcing term $-\eta\xi_k(j)$ in \eqref{eq:inhomogeneous_system} yields:
\begin{align*}
    \tilde{e}_k(j) &= \sum_{m=0}^{k-1} \big(-\eta\xi_m(j)\big) H_k(j; m) \nonumber \\
    &= -\eta \sum_{m=0}^{k-1} G_{k-1-m}(j)\xi_m(j).
\end{align*}

Substituting the definition of $\tilde{e}_k(j)$ back into the equation completes the proof:
\begin{equation*}
    e_k(j) = e_k^{\mathrm{det}}(j) - \eta \sum_{m=0}^{k-1} G_{k-1-m}(j)\xi_m(j).
\end{equation*}
\end{proof}

To quantify the excess risk, we define the deterministic bias (the \textit{signal} component) and the convolution kernel as follows:
\[
    \cS_{\polyak}(k) := \frac{1}{2}\sum_{j\ge1}\lambda_j\,e_k^{\mathrm{det}}(j)^2 = \cE(\btheta_k^{\mathrm{det}}),
    \qquad
    \mathcal K_{\polyak}(k) := \sum_{j\ge1}\lambda_j^2\,G_k(j)^2.
\]

The following proposition establishes that the expected excess risk decomposes, up to constant factors, into the deterministic signal bias, the accumulated additive label noise, and the accumulated multiplicative gradient noise.

\begin{proposition}[Risk recursion for Polyak]
\label{prop:pol_risk_recursion}
Under the gradient-noise covariance assumption, the expected excess risk obeys the two-sided recursion:
\begin{equation} \label{eq:pol_risk_recursion_form}
    \EE[\cE(\btheta_k)]
    \eqsim
    \cS_{\polyak}(k)
    +
    \frac{\eta^2\sigma^2}{B}\sum_{\tau=0}^{k-1}\mathcal{K}_{\polyak}(\tau)
    +
    \frac{\eta^2}{B}\sum_{\tau=0}^{k-1}\mathcal{K}_{\polyak}(\tau)\EE[\cE(\btheta_{k-1-\tau})].
\end{equation}
Equivalently, the right-hand side provides both rigorous upper and lower bounds for the expected excess risk up to universal constant factors.
\end{proposition}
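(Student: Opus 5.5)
The plan is to start from the exact impulse-response representation of Proposition~\ref{prop:pol_impulse_response}, take expectations of the risk coordinatewise, and then use the noise--curvature alignment of Proposition~\ref{prop:noise-geometry} to control the noise covariance entries.

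\textbf{Step 1: split bias and noise.} Square \eqref{eq:pol_impulse_response}, multiply by $\lambda_j/2$, and sum over $j$. The deterministic part $e_k^{\mathrm{det}}(j)$ is nonrandom. The noise $\xi_m(j)$ satisfies $\EE[\xi_m\mid\mathcal F_m]=0$, where $\mathcal F_m$ is generated by the past batches. Hence the cross term between the signal and the noise sum vanishes. The same argument kills the cross terms $\EE[\xi_m(j)\xi_{m'}(j)]$ with $m\neq m'$: condition on $\mathcal F_{\max(m,m')}$, noting that $G$ is deterministic. This gives the exact identity
\[
\EE[\cE(\btheta_k)]=\cS_{\polyak}(k)+\frac{\eta^2}{2}\sum_{j}\lambda_j\sum_{m=0}^{k-1}G_{k-1-m}(j)^2\,\EE\bigl[\xi_m(j)^2\bigr].
\]

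\textbf{Step 2: two-sided bound on the noise variance.} Apply Proposition~\ref{prop:noise-geometry} with $\bnu=\mathbf e_j$, or more precisely the exact formula \eqref{app:eq:noise-direction-exact} divided by $B$. It gives
\[
\frac1B(\sigma^2+2\cE(\btheta_m))\lambda_j\leq \EE[\xi_m(j)^2\mid\btheta_m]\leq \frac{2}{B}(\sigma^2+2\cE(\btheta_m))\lambda_j .
\]
The upper bound uses Cauchy--Schwarz on the term $\langle\bdelta,\bH\mathbf e_j\rangle^2$, and the lower bound simply drops that nonnegative term. Take the outer expectation and substitute into Step 1. Summing over $j$ turns $\sum_j\lambda_j^2G_\tau(j)^2$ into $\mathcal K_{\polyak}(\tau)$ after reindexing $\tau=k-1-m$. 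The result is the right-hand side of \eqref{eq:pol_risk_recursion_form} with explicit constants $1/2$ and $2$, so the upper and lower bounds both hold with universal constants.

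\textbf{Main obstacle.} The main obstacle is rigor in infinite dimensions. We must justify interchanging the expectation with the infinite sums over $j$ and over $m$, and we must make sure the quantities involved are finite. I would first carry out the argument on finite spectral truncations $j\le M$. There everything is a finite sum and the orthogonality of martingale increments is elementary. I would then pass $M\to\infty$ by monotone convergence, since every summand is nonnegative, as done for SGD in Theorem~\ref{app:thm:sgd-stability}. A subtle point is that the noise coordinates $\xi_m(j)$ depend on the full infinite-dimensional $\btheta_m$ rather than the truncated one. This causes no difficulty, because the bounds in Step 2 hold for each fixed $j$ and involve only $\cE(\btheta_m)$. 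The recursion is an identity-up-to-constants even if both sides are infinite, so finiteness is not needed for the statement itself; it only matters later, where it is supplied by the stability analysis.
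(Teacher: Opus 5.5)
Your proposal is correct and follows the same route as the paper. Both proofs use the impulse-response representation and the vanishing of the signal--noise and cross-time terms, then apply the noise--curvature alignment coordinatewise with $\bnu=\mathbf e_j$ and reindex $\tau=k-1-m$; your explicit constants $1/2$ and $2$ just make the paper's $\eqsim$ quantitative, and your martingale-difference argument for the cross terms is more accurate than the paper's appeal to independence across time steps. One imprecision: finiteness is in fact needed, because that argument requires $\EE[\cE(\btheta_m)]<\infty$ for $m<k$. This follows by a one-line induction on $k$ from your Steps 1--2, since $\sum_j\lambda_j^2G_\tau(j)^2<\infty$ for each fixed $\tau$.
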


\begin{proof}

By substituting the Green function representation \eqref{eq:pol_impulse_response} into the definition of the excess risk, we analyze the second moment for each error coordinate:
\[
    e_k(j)^2
    =
    \left( e_k^{\mathrm{det}}(j)
    -\eta\sum_{m=0}^{k-1}G_{k-1-m}(j)\xi_m(j) \right)^2.
\]

Because the stochastic gradient noise is conditionally mean-zero ($\EE[\xi_m(j) \mid \btheta_m] = 0$) and independent across distinct time steps, all cross-terms between the deterministic trajectory and the noise, as well as cross-time noise products, vanish upon taking the expectation. Hence,
\[
    \EE[e_k(j)^2]
    =
    e_k^{\mathrm{det}}(j)^2
    +
    \eta^2\sum_{m=0}^{k-1}G_{k-1-m}(j)^2\EE[\xi_m(j)^2].
\]

Multiplying by $\frac{1}{2}\lambda_j$ and summing over all coordinates $j \ge 1$ gives the expected excess risk:
\begin{align*}
    \EE[\cE(\btheta_k)]
    &\eqsim
    \frac{1}{2} \sum_{j\ge1}\lambda_j e_k^{\mathrm{det}}(j)^2
    +
    \frac{\eta^2}{2}\sum_{m=0}^{k-1}\sum_{j\ge1}
    \lambda_j G_{k-1-m}(j)^2\EE[\xi_m(j)^2].
\end{align*}

According to the property of curvature-aligned noise defined in Eq.~\eqref{eqn:noise-geometry}, for the coordinate basis $\bm{u}_j$, the variance satisfies the following:
\[
    \EE[\xi_m(j)^2]
    \eqsim
    \frac{1}{B}\lambda_j\big(\EE[\cE(\btheta_m)] + \sigma^2\big),
\]
where the factor $1/B$ accounts for the mini-batching variance reduction. 

Substituting this into the previous equation yields:
\begin{align*}
    \EE[\cE(\btheta_k)]
    &\eqsim
    \frac{1}{2} \sum_{j\ge1}\lambda_j e_k^{\mathrm{det}}(j)^2
    +
    \frac{\eta^2}{B}\sum_{m=0}^{k-1}
    \big(\sigma^2+\EE[\cE(\btheta_m)]\big)
    \sum_{j\ge1}\lambda_j^2 G_{k-1-m}(j)^2  \\
    &=
    \cS_{\polyak}(k)
    +
    \frac{\eta^2\sigma^2}{B}\sum_{m=0}^{k-1}\mathcal K_{\polyak}(k-1-m)
    +
    \frac{\eta^2}{B}\sum_{m=0}^{k-1}\mathcal K_{\polyak}(k-1-m)\EE[\cE(\btheta_m)].
\end{align*}

Finally, performing a change of variables by setting $\tau = k-1-m$ yields:
\[
    \EE[\cE(\btheta_k)]
    \eqsim
    \cS_{\polyak}(k)
    +
    \frac{\eta^2\sigma^2}{B}\sum_{\tau=0}^{k-1}\mathcal{K}_{\polyak}(\tau)
    +
    \frac{\eta^2}{B}\sum_{\tau=0}^{k-1}\mathcal{K}_{\polyak}(\tau)\EE[\cE(\btheta_{k-1-\tau})].
\]

The two-sided equivalence follows directly from the two-sided comparability in the noise covariance assumption and the non-negativity of $\cS_{\polyak}(k)$ and $\mathcal K_{\polyak}(k)$.
\end{proof}

\subsection{Overdamped and underdamped modes decomposition}
\label{app:pol_decomp_modes}

Since both the deterministic error signal $e_k^{\mathrm{det}}(j)$ and the discrete Green function $G_k(j)$ obey the  same homogeneous difference equation, their dynamic behaviors are simultaneously dictated by the same characteristic roots. Depending on the sign of the corresponding discriminant, these roots transition from real to complex conjugate pairs, driving the system from monotonic decay to oscillatory transients. The following proposition rigorously establishes the damping threshold, defines the crossover index $J_{\max}$, and partitions the spectrum into two distinct global regimes.

\begin{proposition}[Exact spectral decomposition and crossover boundary]
\label{prop:pol_mode_decomposition}
For each coordinate $j$, the characteristic roots of the Polyak recursion are given by
\[
    y_{j, \pm} = \frac{(1+\rho-\eta\lambda_j) \pm \sqrt{\Delta_j}}{2},
\]
where the discriminant is $\Delta_j := (1+\rho-\eta\lambda_j)^2 - 4\rho$. 

The critical transition $\Delta_j = 0$ corresponds exactly to the spectral threshold $\eta\lambda_j = (1-\sqrt{\rho})^2$, which is asymptotically equivalent to $\eta\lambda_j \eqsim (1-\rho)^2$. This threshold classifies the individual eigen-directions into:
\begin{enumerate}
    \item \textbf{Overdamped Modes} ($\eta\lambda_j < (1-\sqrt{\rho})^2$, i.e., $\Delta_j > 0$): The roots are strictly real.
    \item \textbf{Underdamped Modes} ($\eta\lambda_j > (1-\sqrt{\rho})^2$, i.e., $\Delta_j < 0$): The roots are complex conjugates.
\end{enumerate}
In the mixed-damping regime, the boundary index $J_{\max}$ separating the underdamped modes from the overdamped modes is defined as the crossover index satisfying $\eta\lambda_{J_{\max}} \eqsim (1-\rho)^2$:
\begin{equation}
\label{eq:J_max}
    J_{\max} \eqsim \left(\frac{\eta}{(1-\rho)^2}\right)^{1/\beta}.
\end{equation}
Consequently, based on the maximum eigenvalue $\lambda_{\max}$ of the population Hessian $\bH$, the algorithm globally operates in one of two regimes:
\begin{itemize}
    \item \textbf{Fully Overdamped Regime} ($\eta\lambda_{\max} \lesssim (1-\rho)^2$): All spectral modes are overdamped.
    \item \textbf{Mixed-Damping Regime} ($\eta\lambda_{\max} \gtrsim (1-\rho)^2$): The spectrum bifurcates into high-curvature underdamped modes ($j \le J_{\max}$) and flat overdamped modes ($j > J_{\max}$).
\end{itemize}
\end{proposition}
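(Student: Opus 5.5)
The proof is elementary. We work coordinatewise with the quadratic $y^2-(1+\rho-\eta\lambda_j)y+\rho=0$. The quadratic formula gives $y_{j,\pm}$ with discriminant $\Delta_j=(1+\rho-\eta\lambda_j)^2-4\rho$. We then factor the discriminant as a difference of squares:
\[
\Delta_j=\bigl(1+\rho-\eta\lambda_j-2\sqrt{\rho}\bigr)\bigl(1+\rho-\eta\lambda_j+2\sqrt{\rho}\bigr)
=\bigl((1-\sqrt\rho)^2-\eta\lambda_j\bigr)\bigl((1+\sqrt\rho)^2-\eta\lambda_j\bigr).
\]
Under the stability constraint $\eta\lesssim1$ from Corollary~\ref{app:cor:polyak-critical-rate}, we take the constants so that $\eta\lambda_j<(1+\sqrt\rho)^2$. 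This holds since $(1+\sqrt\rho)^2\geq1$ and $\eta\lambda_{\max}$ is below the local ceiling, which is at most of order one. Then the second factor is positive, so $\operatorname{sign}\Delta_j=\operatorname{sign}\bigl((1-\sqrt\rho)^2-\eta\lambda_j\bigr)$. This gives the exact threshold $\eta\lambda_j=(1-\sqrt\rho)^2$: the roots are real and distinct when $\Delta_j>0$, and complex conjugate when $\Delta_j<0$, in which case the product of the roots shows they have modulus $\sqrt\rho$.

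For the asymptotic form we write
\[
1-\sqrt\rho=\frac{1-\rho}{1+\sqrt\rho},\qquad 1+\sqrt\rho\in[1,2],
\]
so $(1-\sqrt\rho)^2\in\bigl[\tfrac14(1-\rho)^2,(1-\rho)^2\bigr]$, i.e.\ $(1-\sqrt\rho)^2\eqsim(1-\rho)^2$ uniformly in $\rho\in[0,1)$.

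For the crossover index we use $\lambda_j\eqsim j^{-\beta}$ from Assumption~\ref{ass:power-law}. Let $J_{\max}$ be the largest $j$ with $\eta\lambda_j\geq(1-\sqrt\rho)^2$. By monotonicity of $\lambda_j$, the modes $j\leq J_{\max}$ are underdamped and the modes $j>J_{\max}$ are overdamped. Solving $\eta j^{-\beta}\eqsim(1-\rho)^2$ gives $J_{\max}\eqsim(\eta/(1-\rho)^2)^{1/\beta}$. This is meaningful, with $J_{\max}\gtrsim1$, precisely when $\eta\lambda_{\max}\gtrsim(1-\rho)^2$, which is the mixed-damping regime. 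If instead $\eta\lambda_{\max}<(1-\sqrt\rho)^2$, monotonicity makes every mode overdamped, which is the fully overdamped regime. Since $\lambda_j\to0$, overdamped modes always exist.

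The only delicate point is that the comparability constants in $\lambda_j\eqsim j^{-\beta}$ blur the exact cutoff. I would therefore state $J_{\max}$ only up to constants and note that indices within a constant factor of $J_{\max}$ may fall on either side. This is harmless, because all later uses are order-of-magnitude sums over the spectral head and tail.
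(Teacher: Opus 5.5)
Your proof is correct and follows essentially the same route as the paper's. Like the paper, you use the quadratic formula, the identity $1-\sqrt\rho=(1-\rho)/(1+\sqrt\rho)$ to get $(1-\sqrt\rho)^2\eqsim(1-\rho)^2$, and inversion of $\lambda_j\eqsim j^{-\beta}$ at the cutoff, with monotonicity of $\lambda_j$ giving the head/tail split.

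Your difference-of-squares factorization of $\Delta_j$ sharpens the paper's appeal to ``the relevant stable branch,'' because it shows that $\Delta_j<0$ also requires $\eta\lambda_j<(1+\sqrt\rho)^2$. That condition must come from choosing the constant in $\eta\lesssim1$ small (e.g.\ $\eta\lambda_{\max}\le1$). It does not follow from the local stability ceiling on $\eta\lambda_{\max}$ in Theorem~\ref{app:thm:polyak-stability}, which tends to $2(1+\rho)>(1+\sqrt\rho)^2$ as $B\to\infty$.
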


\begin{proof}
The phase transition between real and complex roots occurs exactly at the boundary where the discriminant vanishes ($\Delta_j = 0$). Solving the quadratic equation $(1+\rho-\eta\lambda_j)^2 = 4\rho$ for the relevant stable branch yields the exact threshold:
\[
    \eta\lambda_j = 1 + \rho - 2\sqrt{\rho} = (1-\sqrt{\rho})^2.
\]
By multiplying the numerator and denominator by $(1+\sqrt{\rho})^2$, we can rewrite this exact threshold as: $\eta\lambda_j = \frac{(1-\rho)^2}{(1+\sqrt{\rho})^2}.$
Since $1+\sqrt{\rho} \eqsim 2$ for all $\rho \in (0, 1)$, we obtain the fundamental scaling equivalence for the damping boundary:
\[
    \eta\lambda_j \eqsim (1-\rho)^2.
\]
To locate the exact crossover boundary in the mixed-damping regime, we apply the power-law spectrum $\lambda_j \eqsim j^{-\beta}$. Setting the critical damping boundary condition $\eta\lambda_{J_{\max}} \eqsim (1-\rho)^2$, we obtain: 
$\eta J_{\max}^{-\beta} \eqsim (1-\rho)^2  \Longleftrightarrow  J_{\max}^{-\beta} \eqsim \frac{(1-\rho)^2}{\eta}.$
Taking the $-1/\beta$-th power of both sides yields:
\[
    J_{\max} \eqsim \left(\frac{\eta}{(1-\rho)^2}\right)^{1/\beta},
\]
which establishes that the underdamped modes are exactly indexed by $j \le J_{\max}$ and the overdamped modes by $j > J_{\max}$.

For an overdamped mode where $\eta\lambda_j < (1-\sqrt{\rho})^2$, the discriminant is strictly positive ($\Delta_j > 0$), and the roots $y_{j,+}$ and $y_{j,-}$ are real and distinct.

For an underdamped mode where $\eta\lambda_j > (1-\sqrt{\rho})^2$, the discriminant is strictly negative ($\Delta_j < 0$). The roots become a complex conjugate pair $y_{j,\pm} = \sqrt{\rho} e^{\pm i \omega_j}$, where the exact phase angle $\omega_j \in (0, \pi)$ satisfies:
$\cos\omega_j = \frac{1+\rho-\eta\lambda_j}{2\sqrt{\rho}}.$

Finally, to determine the global behavior of the system, we examine the maximum eigenvalue $\lambda_{\max}$. If $\eta\lambda_{\max} \lesssim (1-\rho)^2$, then every coordinate satisfies $\Delta_j \ge 0$ in the scaling sense, placing the system entirely in the Fully Overdamped Regime. If $\eta\lambda_{\max} \gtrsim (1-\rho)^2$, the spectral modes with $\lambda_j \gtrsim (1-\rho)^2/\eta$ form a non-empty underdamped subset, while the remaining tail $\lambda_j \lesssim (1-\rho)^2/\eta$ remains overdamped, placing the system globally in the mixed-damping regime.
\end{proof}

\subsection{Label noise calculation}
\label{app:pol_preparation}

Before bounding the full multiplicative risk, we must isolate the purely additive label-noise contribution, which establishes the noise floor for both regimes.

\begin{lemma}[Label-noise contribution]
\label{lem:pol_label_noise}
There exist constants $C,c_-,c_+>0$ such that if
\begin{equation}
\label{eq:pol_label_noise_time_condition}
    k\ge
    C
    \max\left\{
        \frac{1}{1-\rho},
        \frac{1-\rho}{\eta}
    \right\},
\end{equation}
then the finite-time label-noise contribution satisfies
\begin{equation}
\label{eq:pol_label_noise_explicit_bounds}
    c_-\frac{\sigma^2\eta}{B(1-\rho)}
    \le
    \frac{\eta^2\sigma^2}{B}\sum_{\tau=0}^{k-1}\mathcal{K}_{\polyak}(\tau)
    \le
    c_+\frac{\sigma^2\eta}{B(1-\rho)}.
\end{equation}
In particular,
\[
    \frac{\eta^2\sigma^2}{B}\sum_{\tau=0}^{k-1}\mathcal{K}_{\polyak}(\tau)
    \eqsim
    \frac{\sigma^2}{B}\frac{\eta}{1-\rho}.
\]
\end{lemma}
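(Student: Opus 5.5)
We compute the label-noise sum coordinatewise. Writing \(\mathcal K_{\polyak}(\tau)=\sum_j\lambda_j^2G_\tau(j)^2\) and exchanging sums gives
\[
\frac{\eta^2\sigma^2}{B}\sum_{\tau=0}^{k-1}\mathcal K_{\polyak}(\tau)=\frac{\eta^2\sigma^2}{B}\sum_{j\ge1}\lambda_j^2\,S_k(j),\qquad S_k(j):=\sum_{\tau=0}^{k-1}G_\tau(j)^2 .
\]
The plan has three steps. First, evaluate the infinite-horizon sum \(S_\infty(j)\) in closed form. Second, show \(S_k(j)\eqsim S_\infty(j)\) for the coordinates that matter once \(k\) satisfies \eqref{eq:pol_label_noise_time_condition}. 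Third, sum over the power-law spectrum.

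For the first step, I would use the standard closed form for the \(\ell^2\) norm of the impulse response of a stable second-order recursion with \(y_+y_-=\rho\) and \(y_++y_-=d_j:=1+\rho-\eta\lambda_j\):
\[
S_\infty(j)=\frac{1+\rho}{(1-\rho)\bigl[(1+\rho)^2-d_j^2\bigr]}=\frac{1+\rho}{(1-\rho)\,\eta\lambda_j\,(2+2\rho-\eta\lambda_j)} .
\]
This can be obtained from the resolvent computation \eqref{app:eq:polyak-resolvent} with the multiplicative term dropped, or directly via Parseval on \(1/(1-d_jz+\rho z^2)\). Since \(\eta\lesssim1\) lies inside the local stability region, the factor \(2+2\rho-\eta\lambda_j\) is of order one. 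Hence \(\eta^2\lambda_j^2S_\infty(j)\eqsim\eta\lambda_j/(1-\rho)\). Summing over \(j\) with \(\Tr(\bH)\eqsim1\) gives the upper bound \(c_+\sigma^2\eta/(B(1-\rho))\) for every \(k\), because \(S_k\le S_\infty\).

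For the lower bound, it suffices to keep the top coordinate \(j=1\), or any fixed finite set of leading coordinates, because \(\lambda_1\eqsim1\) already carries a constant fraction of \(\Tr(\bH)\). I then need \(S_k(1)\ge\frac12S_\infty(1)\). The slow-time scale of \(G_\tau(1)\) is controlled by the dominant root modulus. In the underdamped case \(|y_\pm|=\sqrt\rho\), so the tail decays like \(\rho^{\tau}\) and becomes negligible after \(\tau\gtrsim(1-\rho)^{-1}\). In the overdamped case \(y_+\approx1-\eta\lambda_1/(1-\rho)\), so \(G_\tau^2\) decays after \(\tau\gtrsim(1-\rho)/(\eta\lambda_1)\eqsim(1-\rho)/\eta\); the slower root \(y_-\approx\rho\) contributes the \((1-\rho)^{-1}\) scale. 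Thus the geometric tail \(\sum_{\tau\ge k}G_\tau(1)^2\) is at most a small fraction of \(S_\infty(1)\) once \(k\ge C\max\{(1-\rho)^{-1},(1-\rho)/\eta\}\). This is exactly where \eqref{eq:pol_label_noise_time_condition} enters. It yields \(\frac{\eta^2\sigma^2}{B}\sum_\tau\mathcal K_{\polyak}(\tau)\ge c_-\sigma^2\eta/(B(1-\rho))\).

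The main obstacle is the tail estimate near critical damping. When \(y_+\approx y_-\), \(G_\tau\) behaves like \((\tau+1)y^\tau\), and the geometric bound has a polynomial prefactor. I would handle this with a uniform bound \(|G_\tau|\le(\tau+1)\max|y_\pm|^\tau\), valid in all regimes. I would then choose \(C\) large enough that \(\sum_{\tau\ge k}(\tau+1)^2r^{2\tau}\le\frac12S_\infty\) whenever \(k(1-r)\ge C\), using that \(S_\infty(1)\gtrsim(1-r)^{-3}\) in that near-critical case. If this gets delicate, a fallback is to pick a coordinate \(j\) with \(\lambda_j\eqsim1\) but shifted away from the threshold \((1-\sqrt\rho)^2/\eta\) by a constant factor. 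Such a coordinate exists because the spectrum's head has order-one eigenvalues spread over a constant range, so it stays a constant distance from critical damping and its roots are separated.
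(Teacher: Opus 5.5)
Your proposal is correct and follows the paper's proof. Both use the exact saturated Green energy $\sum_{\tau\ge0}G_\tau(j)^2=\frac{1+\rho}{(1-\rho)\eta\lambda_j(2(1+\rho)-\eta\lambda_j)}$, the bound $S_k(j)\le S_\infty(j)$ summed against $\sum_j\lambda_j\eqsim1$ for the upper bound, and, for the lower bound, a fixed block of leading modes together with a relaxation tail estimate under the time condition. The paper only asserts that tail bound as the ``standard Polyak relaxation estimate,'' while you treat its near-critical-damping case via $|G_\tau|\le(\tau+1)\max|y_\pm|^\tau$ and $S_\infty\gtrsim(1-r)^{-3}$, or via the fallback of a leading mode a constant factor away from the threshold. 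Either of those is a valid way to fill that step in.
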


\begin{proof}
We first compute the saturated coordinate Green energy exactly. Fix a coordinate $j$ and write
\[
    x_j:=\eta\lambda_j,
    \qquad
    A_j:=1+\rho-x_j.
\]
The Green function satisfies
\[
    G_{k+1}(j)=A_jG_k(j)-\rho G_{k-1}(j),
    \qquad
    G_0(j)=1,
    \qquad
    G_{-1}(j)=0.
\]
Assume the coordinate is stable. Define
\[
    S_j:=\sum_{\tau=0}^\infty G_\tau(j)^2,
    \qquad
    C_j:=\sum_{\tau=0}^\infty G_\tau(j)G_{\tau-1}(j),
\]
where $G_{-1}(j)=0$. Multiplying the recursion by $G_k(j)$ and summing over $k\ge0$ gives
   $C_j=A_jS_j-\rho C_j,$ hence $C_j=\frac{A_j}{1+\rho}S_j.$

Next square the recursion and sum over $k\ge0$:
\[
    S_j-1
    =
    \left(A_j^2+\rho^2\right)S_j-2A_j\rho C_j.
\]
Substituting $C_j=A_jS_j/(1+\rho)$ gives
\begin{align*}
    1
    &=
    S_j\left[1-A_j^2-\rho^2+\frac{2A_j^2\rho}{1+\rho}\right] \\
    &=
    S_j\frac{1-\rho}{1+\rho}
    \left[(1+\rho)^2-(1+\rho-x_j)^2\right] \\
    &=
    S_j\frac{1-\rho}{1+\rho}
    x_j\left(2(1+\rho)-x_j\right).
\end{align*}
Therefore
\begin{equation}
\label{eq:pol_green_energy_exact}
    \sum_{\tau=0}^\infty G_\tau(j)^2
    =
    \frac{1+\rho}{(1-\rho)\eta\lambda_j\left(2(1+\rho)-\eta\lambda_j\right)}.
\end{equation}

We now pass from the saturated sum to the finite-time sum. The root formula for $G_k(j)$ implies the following tail bound: there exist universal constants $C_G,c_G>0$ such that
\begin{equation}
\label{eq:pol_green_tail_bound}
    \sum_{\tau=k}^{\infty}G_\tau(j)^2
    \le
    C_G\exp\left\{-c_G k\min\left(1-\rho,\frac{\eta\lambda_j}{1-\rho}\right)\right\}
    \sum_{\tau=0}^{\infty}G_\tau(j)^2.
\end{equation}
This is the standard Polyak relaxation estimate.

Because $\lambda_j\eqsim j^{-\beta}$ and $\beta>1$, the spectral mass is finite:
$\sum_{j\ge1}\lambda_j<\infty.$\\
Choose a fixed integer $J_*$, depending only on the spectrum constants and $\beta$, such that
$\sum_{j=1}^{J_*}\lambda_j\ge\frac12\sum_{j\ge1}\lambda_j.$\\
Since $J_*$ is fixed, $\lambda_{J_*}$ is a positive constant independent of $\eta,\rho,k,B$. Thus, after increasing the universal constant $C$ in~\eqref{eq:pol_label_noise_time_condition} if necessary, condition~\eqref{eq:pol_label_noise_time_condition} implies that for every $1\le j\le J_*$,
\[
    k\min\left(1-\rho,\frac{\eta\lambda_j}{1-\rho}\right)
    \ge
    \frac{\log(2C_G)}{c_G}.
\]
By~\eqref{eq:pol_green_tail_bound}, for all $j\le J_*$,
\[
    \sum_{\tau=k}^{\infty}G_\tau(j)^2
    \le
    \frac12\sum_{\tau=0}^{\infty}G_\tau(j)^2.
\]
Therefore
\begin{equation}
\label{eq:finite_green_lower_half}
    \sum_{\tau=0}^{k-1}G_\tau(j)^2
    \ge
    \frac12\sum_{\tau=0}^{\infty}G_\tau(j)^2,
    \qquad 1\le j\le J_*.
\end{equation}
For all $j$, the trivial upper bound
\begin{equation}
\label{eq:finite_green_upper_saturated}
    \sum_{\tau=0}^{k-1}G_\tau(j)^2
    \le
    \sum_{\tau=0}^{\infty}G_\tau(j)^2
\end{equation}
holds.

We now insert these bounds into the finite-time label-noise term. Since all terms are nonnegative, we can exchange the $\tau$- and $j$-sums:
\[
    \frac{\eta^2\sigma^2}{B}\sum_{\tau=0}^{k-1}\mathcal K_{\polyak}(\tau)
    =
    \frac{\eta^2\sigma^2}{B}
    \sum_{j\ge1}\lambda_j^2\sum_{\tau=0}^{k-1}G_\tau(j)^2.
\]
For the lower bound, use~\eqref{eq:finite_green_lower_half} and~\eqref{eq:pol_green_energy_exact} on the first $J_*$ modes:
\begin{align*}
    \frac{\eta^2\sigma^2}{B}\sum_{\tau=0}^{k-1}\mathcal K_{\polyak}(\tau)
    &\ge
    \frac12\frac{\eta^2\sigma^2}{B}
    \sum_{j=1}^{J_*}\lambda_j^2
    \frac{1+\rho}{(1-\rho)\eta\lambda_j(2(1+\rho)-\eta\lambda_j)} \\
    &=
    \frac12\frac{\eta\sigma^2(1+\rho)}{B(1-\rho)}
    \sum_{j=1}^{J_*}\frac{\lambda_j}{2(1+\rho)-\eta\lambda_j}.
\end{align*}
Since $2(1+\rho)-\eta\lambda_j\le2(1+\rho)$,
\[
    \frac{1+\rho}{2(1+\rho)-\eta\lambda_j}\ge \frac12.
\]
Thus
\begin{align*}
    \frac{\eta^2\sigma^2}{B}\sum_{\tau=0}^{k-1}\mathcal K_{\polyak}(\tau)
    &\ge
    \frac14\frac{\eta\sigma^2}{B(1-\rho)}
    \sum_{j=1}^{J_*}\lambda_j.
\end{align*}
The finite sum $\sum_{j=1}^{J_*}\lambda_j$ is a positive constant, so this gives the lower bound in~\eqref{eq:pol_label_noise_explicit_bounds} for a suitable constant $c_->0$.

For the upper bound, use~\eqref{eq:finite_green_upper_saturated} and~\eqref{eq:pol_green_energy_exact} over all modes:
\begin{align*}
    \frac{\eta^2\sigma^2}{B}\sum_{\tau=0}^{k-1}\mathcal K_{\polyak}(\tau)
    &\le
    \frac{\eta\sigma^2(1+\rho)}{B(1-\rho)}
    \sum_{j\ge1}\frac{\lambda_j}{2(1+\rho)-\eta\lambda_j} \\
    &\le
    \frac{\eta\sigma^2(1+\rho)}{B(1-\rho)}
    \frac{1}{2(1+\rho)-\eta\lambda_1}
    \sum_{j\ge1}\lambda_j.
\end{align*}
The last factor is finite, and the denominator is bounded below by the stability margin. Hence the upper bound in~\eqref{eq:pol_label_noise_explicit_bounds} holds for a suitable constant $c_+>0$.
\end{proof}

\subsection{Uniform boundedness of expected risk}
\label{app:pol_uniform_boundedness}

As long as the additive label noise is strictly positive ($\sigma > 0$), the scaling law derivation can be  simplified. We can prove that the expected risk is uniformly bounded, which allows the multiplicative gradient noise to be completely absorbed by the additive label noise floor.

\begin{lemma}[Uniform boundedness of expected risk]
\label{lem:pol_loss_bounded}
Suppose $\sigma > 0$. Under the stability condition $\eta \lesssim 1$ and $\eta \le c B(1-\rho)$ for a sufficiently small universal constant $c > 0$, the expected excess risk of Polyak is uniformly bounded for all $k \ge 0$:
\[
    \EE[\cE(\btheta_k)] \le M,
\]
where $M > 0$ is a constant independent of $k$, $\eta$, $B$, and $\rho$.
\end{lemma}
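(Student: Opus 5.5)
Let $\ell_k:=2\EE[\cE(\btheta_k)]$. I will work with the exact renewal identity from the proof of Theorem~\ref{app:thm:polyak-stability}, not the two-sided recursion of Proposition~\ref{prop:pol_risk_recursion}:
\[
\ell_k=h_k+\sum_{j=0}^{k-1}K_{k-1-j}\bigl(\ell_j+\sigma^2\bigr),\qquad K_m=\frac{\eta^2}{B}\sum_{i\ge1}\lambda_i^2\,\bb^\top\bM_i^m\bb\ge0 .
\]
The bound argument in Lemma~\ref{app:lem:renewal-stability} gives
\[
\sup_k\ell_k\le\max\Bigl\{\ell_0,\ \frac{\sup_k h_k+\sigma^2\kappa_{\polyak}}{1-\kappa_{\polyak}}\Bigr\}.
\]
So it suffices to show two things: (i) $\kappa_{\polyak}(\eta,\rho)\le1/2$, and (ii) $\sup_k h_k\lesssim\ell_0$, with constants free of $\eta,B,\rho$. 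Since $\sigma\eqsim1$ and $\ell_0=\sum_j\lambda_j|\theta_j^*|^2\eqsim1$ by Assumption~\ref{ass:power-law}, this yields $M\eqsim1$.

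For (i), I use the closed form of $\kappa_{\polyak}$ from Theorem~\ref{app:thm:polyak-stability}. Writing $x_i=\eta\lambda_i$, the denominator is $2B(1-\rho^2)-x_i[B(1-\rho)+(1+\rho)]$. The hypothesis $\eta\lesssim1$ (with a small enough constant relative to $\lambda_{\max}$) makes $x_iB(1-\rho)\le\tfrac12B(1-\rho^2)$. The hypothesis $\eta\le cB(1-\rho)$ makes $x_i(1+\rho)\le 2c\lambda_{\max}B(1-\rho)\le\tfrac12 B(1-\rho^2)$. Together the denominator is at least $B(1-\rho^2)$, so
\[
\kappa_{\polyak}\le\frac{\eta\,\Tr(\bH)}{B(1-\rho)}\le c\,\Tr(\bH)\le\tfrac12
\]
for $c$ small. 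The same margins place us strictly inside the local condition~\eqref{app:eq:polyak-local-stability}, uniformly in the parameters.

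For (ii), the issue is uniformity. The proof of Theorem~\ref{app:thm:polyak-stability} only gives $\sup_{i,k}\|\bM_i^k\|<\infty$ with a constant depending on $(\eta,\rho)$. Instead I bound $h_k$ through its interpretation. With $\btheta_0=\btheta_{-1}=\bm0$, $h_k=\sum_i\lambda_i\,\bb^\top\bM_i^k\bX_{i,0}$ is the risk (times two) of the recursion in which the additive and multiplicative feedback into $\ell$ are switched off, while the coordinatewise $\frac{\eta^2}{B}\lambda_i^2 u_{i,k}$ term is kept. I therefore split $\bb^\top\bM_i^k\bX_{i,0}$ into two parts: the deterministic part $e_k^{\det}(i)^2$, plus a forcing term $\frac{\eta^2}{B}\lambda_i^2\sum_m G_{k-1-m}(i)^2u_{i,m}$.

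The deterministic part is bounded by $C\theta_i^{*2}$ uniformly. In the underdamped case the roots have modulus $\sqrt\rho$ and the initial condition $e_0=e_{-1}$ has zero velocity. In the overdamped case $G$ is a sum of two positive geometric pieces, and $e_k^{\det}=G_k-\rho G_{k-1}$ times $e_0$, which lies in $[0,e_0]$. I expect this zero-velocity bound on the underdamped modes, and the analogous control for $G$, to be the main technical obstacle. I would handle it through the explicit form $e_k^{\det}=\rho^{k/2}[\cos k\omega+\tfrac{1-\sqrt\rho\cos\omega}{\sqrt\rho\sin\omega}\sin k\omega]e_0$ and check that the coefficient is $O(1)$. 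If this fails near the critical damping line, I would fall back on a discrete energy (Lyapunov) function $\|e_k\|^2$-type quadratic form $E_k=e_k^2-\tfrac{2\rho}{1+\rho}A\,e_ke_{k-1}+\rho e_{k-1}^2$, which is nonincreasing for the homogeneous recursion.

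The forcing part then closes by a one-coordinate Gronwall argument. Its total gain is
\[
\frac{\eta^2}{B}\lambda_i^2\sum_\tau G_\tau(i)^2\le\frac{\eta\lambda_i(1+\rho)}{B(1-\rho)\bigl(2(1+\rho)-\eta\lambda_i\bigr)}\le\tfrac12,
\]
using~\eqref{eq:pol_green_energy_exact} and the hypothesis. Hence $\sup_k u_{i,k}\le 2C\theta_i^{*2}$, and summing against $\lambda_i$ gives $h_k\le 2C\ell_0$.
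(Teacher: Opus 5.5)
Your overall architecture is sound, and it differs from the paper's. The paper runs an induction directly on the approximate two-sided recursion of Proposition~\ref{prop:pol_risk_recursion}. It takes $\cS_{\polyak}(k)\le M_1$ from the deterministic-bias propositions and the kernel mass from Lemma~\ref{lem:pol_label_noise}. It never treats the self-feedback term $\frac{\eta^2}{B}\lambda_i^2u_{i,k}$ separately, because the Cauchy--Schwarz step behind Proposition~\ref{prop:noise-geometry} folds it into the $\ell$-kernel. You instead use the exact renewal identity, which gives explicit constants: your bound $\kappa_{\polyak}\le\eta\Tr(\bH)/(B(1-\rho))\le\tfrac12$ is correct and clean. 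The price is an extra per-coordinate Gronwall step for $h_k$. That step is also correct, via the exact identity $u_{i,k}=e_k^{\det}(i)^2+\frac{\eta^2}{B}\lambda_i^2\sum_m G_{k-1-m}(i)^2u_{i,m}$ and the gain bound from \eqref{eq:pol_green_energy_exact}.

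The gap is the step you flagged yourself: a bound $\sup_k|e_k^{\det}(i)|\le C|\theta_i^*|$ uniform in $(\eta,\rho,i)$. Neither of your two arguments gives it near critical damping.

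\emph{The explicit formula.} Write $e_k^{\det}(i)=h_ke_0(i)$. The correct formula is $h_k=\rho^{k/2}[\cos k\omega+A\sin k\omega]$ with $A=\frac{1-\rho-\eta\lambda_i}{2\sqrt\rho\sin\omega}$. Your coefficient corresponds to $e_1=e_0$ rather than $e_1=(1-\eta\lambda_i)e_0$. In either version the numerator stays $\eqsim1-\sqrt\rho$ as $\eta\lambda_i\downarrow(1-\sqrt\rho)^2$, while $\sin\omega\to0$. So the coefficient is unbounded.

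\emph{The fallback.} With $A=1+\rho-\eta\lambda_i$ in your $E_k$, a direct computation gives $E_{k+1}-E_k=(1-\rho)\bigl[\frac{A^2-(1+\rho)}{1+\rho}e_k^2-\rho e_{k-1}^2\bigr]$. This is positive whenever $e_{k-1}=0\neq e_k$ and $A^2>1+\rho$, e.g.\ for $\rho$ near $1$ and small $\eta\lambda_i$, so $E_k$ is not monotone. The true invariant $Q_k=e_k^2-Ae_ke_{k-1}+\rho e_{k-1}^2$ does satisfy $Q_{k+1}=\rho Q_k$. However, it only yields $e_k^2\le4\rho Q_k/(4\rho-A^2)$, which again degenerates at critical damping.

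\emph{A short fix.} Since $|\sin k\omega|\le k\sin\omega$,
\[
|h_k|\le\rho^{k/2}\Bigl(1+\min\Bigl\{|A|,\;k\tfrac{|1-\rho-\eta\lambda_i|}{2\sqrt\rho}\Bigr\}\Bigr).
\]
On the band $(1-\sqrt\rho)^2<\eta\lambda_i\le2(1-\sqrt\rho)^2$, one has $|1-\rho-\eta\lambda_i|\lesssim1-\sqrt\rho$, and $\sqrt\rho$ is bounded below because $\eta\lambda_i\le\tfrac12$. Hence $|h_k|\lesssim1+\sup_kk(1-\sqrt\rho)\rho^{k/2}\lesssim1$. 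For $\eta\lambda_i\ge2(1-\sqrt\rho)^2$, use the identity $4\rho\sin^2\omega=(\eta\lambda_i-(1-\sqrt\rho)^2)((1+\sqrt\rho)^2-\eta\lambda_i)\ge\eta\lambda_i/4$. Together with $|1-\rho-\eta\lambda_i|\lesssim\sqrt{\eta\lambda_i}$, this gives $|A|\lesssim1$.

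\emph{Overdamped side.} Positivity of $h_k$ does not by itself give $h_k\le1$. Add that $w_k:=h_k-h_{k-1}$ satisfies $w_0=0$ and $w_{k+1}=\rho w_k-\eta\lambda_ih_k\le\rho w_k$, so $h_k$ is nonincreasing.

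With this patch your proof closes. The paper's proof needs the same uniform bound, through $\cS_{\polyak}(k)\le M_1$. It takes that bound from propositions that only treat the separated bands, so the same patch is needed there too.
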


\begin{proof}
We proceed by induction on $k$. By Proposition~\ref{prop:pol_risk_recursion}, there exist universal constants $C_1, C_2, C_3 > 0$ such that for any $k \ge 1$:
\begin{equation} \label{eq:pol_risk_recursion_bound}
    \EE[\cE(\btheta_k)] \le C_1 \cS_{\polyak}(k) + C_2 \frac{\eta^2\sigma^2}{B}\sum_{\tau=0}^{k-1}\mathcal{K}_{\polyak}(\tau) + C_3 \frac{\eta^2}{B}\sum_{\tau=0}^{k-1}\mathcal{K}_{\polyak}(\tau)\EE[\cE(\btheta_{k-1-\tau})].
\end{equation}

First, Propositions~\ref{prop:pol_deterministic_bias0} and \ref{prop:pol_deterministic_bias} in the following subsections establish that the deterministic bias monotonically decays or is uniformly bounded by its initialization across all regimes. Thus:
\[
    \cS_{\polyak}(k) \le C_h^2 \cE(\btheta_0) =: M_1.
\]

Second, applying $\sum_{\tau=0}^\infty \mathcal{K}_{\polyak}(\tau) \le \frac{C_K}{\eta(1-\rho)}$ from Lemma~\ref{lem:pol_label_noise} bounds the label noise:
\[
    C_2 \frac{\eta^2\sigma^2}{B}\sum_{\tau=0}^{k-1}\mathcal{K}_{\polyak}(\tau) \le C_2 \sigma^2 \frac{C_K \eta}{B(1-\rho)}.
\]
Under $\eta \le c B(1-\rho)$, this is uniformly bounded by $C_2 \sigma^2 C_K c =: M_2$.

Assuming inductively $\EE[\cE(\btheta_m)] \le M$ for $m < k$, the multiplicative noise term yields:
\[
    C_3 \frac{\eta^2}{B}\sum_{\tau=0}^{k-1}\mathcal{K}_{\polyak}(\tau)\EE[\cE(\btheta_{k-1-\tau})] \le C_3 \frac{\eta^2}{B} \left( \frac{C_K}{\eta(1-\rho)} \right) M = C_3 C_K \frac{\eta}{B(1-\rho)} M \le C_3 C_K c M.
\]
Choosing $c \le \frac{1}{2 C_3 C_K}$ limits this term to $M/2$. Substituting into \eqref{eq:pol_risk_recursion_bound} yields:
\[
    \EE[\cE(\btheta_k)] \le M_1 + M_2 + \frac{M}{2}.
\]
Setting $M = 2(M_1 + M_2)$ guarantees $\EE[\cE(\btheta_k)] \le M$. The base case $k=0$ holds trivially since $\EE[\cE(\btheta_0)] = \cS_{\polyak}(0) \le M_1 \le M$. This completes the induction.
\end{proof}

\begin{corollary}[Absorption of multiplicative noise]
\label{cor:pol_noise_absorption}
For $\sigma > 0$, under the stability condition $\eta \lesssim \min\{1,B(1-\rho)\}$, the multiplicative gradient noise is strictly bounded by the additive label noise up to a constant factor:
\[
    \frac{\eta^2}{B}\sum_{\tau=0}^{k-1}\mathcal{K}_{\polyak}(\tau)\EE[\cE(\btheta_{k-1-\tau})] \le \frac{M}{\sigma^2} \left( \frac{\eta^2\sigma^2}{B}\sum_{\tau=0}^{k-1}\mathcal{K}_{\polyak}(\tau) \right).
\]
Consequently, the multiplicative noise can be absorbed, and the expected excess risk is asymptotically equivalent to the sum of the deterministic bias and the label noise:
\[
    \EE[\cE(\btheta_k)] \eqsim \cS_{\polyak}(k) + \frac{\eta^2\sigma^2}{B}\sum_{\tau=0}^{k-1}\mathcal{K}_{\polyak}(\tau).
\]
\end{corollary}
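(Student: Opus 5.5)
The plan is to obtain both claims directly from Lemma~\ref{lem:pol_loss_bounded}, together with the two-sided recursion of Proposition~\ref{prop:pol_risk_recursion}. The stability condition $\eta\lesssim\min\{1,B(1-\rho)\}$ is read as $\eta\le cB(1-\rho)$ and $\eta\lesssim1$, with $c$ the small universal constant of Lemma~\ref{lem:pol_loss_bounded}. Under this reading, that lemma supplies a constant $M$, independent of $k,\eta,B,\rho$, such that $\EE[\cE(\btheta_m)]\le M$ for every $m\ge0$.

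\textbf{Absorption inequality.} Every term in the multiplicative-noise sum is nonnegative, because $\mathcal K_{\polyak}(\tau)\ge0$. So I bound each risk factor $\EE[\cE(\btheta_{k-1-\tau})]$ by $M$ inside the sum. This gives
\[
\frac{\eta^2}{B}\sum_{\tau=0}^{k-1}\mathcal K_{\polyak}(\tau)\EE[\cE(\btheta_{k-1-\tau})]\le M\,\frac{\eta^2}{B}\sum_{\tau=0}^{k-1}\mathcal K_{\polyak}(\tau).
\]
Writing $M=\frac{M}{\sigma^2}\cdot\sigma^2$ turns the right-hand side into $\frac{M}{\sigma^2}$ times the label-noise term, which is the first display. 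Since $\sigma\eqsim1$ by the constant-order noise assumption, the factor $M/\sigma^2$ is a universal constant.

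\textbf{Two-sided equivalence.} The lower bound is immediate. The multiplicative term in Proposition~\ref{prop:pol_risk_recursion} is nonnegative, so dropping it leaves $\EE[\cE(\btheta_k)]\gtrsim\cS_{\polyak}(k)+\frac{\eta^2\sigma^2}{B}\sum_{\tau<k}\mathcal K_{\polyak}(\tau)$. For the upper bound, I insert the absorption inequality into the upper half of Proposition~\ref{prop:pol_risk_recursion}. This gives
\[
\EE[\cE(\btheta_k)]\le C_1\cS_{\polyak}(k)+\Bigl(C_2+\frac{C_3M}{\sigma^2}\Bigr)\frac{\eta^2\sigma^2}{B}\sum_{\tau<k}\mathcal K_{\polyak}(\tau),
\]
which is the desired $\lesssim$.

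\textbf{Main obstacle.} The main point to verify is that $M$ really is uniform in $(k,\eta,\rho,B)$. This relies on the induction in Lemma~\ref{lem:pol_loss_bounded}, and that induction uses $\cS_{\polyak}(k)\lesssim\cE(\btheta_0)$ uniformly, which the lemma takes from the later deterministic-bias propositions. I also need the stability constant in the hypothesis to be small enough, namely $c\le1/(2C_3C_K)$. If the hypothesis is only stated up to an unspecified constant, I would restrict to this smaller constant; this loses nothing at the level of $\eqsim$ scaling.

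Finally, the proof will note that $M$ depends on $\sigma^2$ only through $M_2\propto\sigma^2$ and $M_1$. Hence $M/\sigma^2$ stays bounded whenever $\sigma\eqsim1$.
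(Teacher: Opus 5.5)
Your proposal is correct and follows the paper's own (implicit) argument: the corollary is stated without a separate proof because it follows at once from bounding each $\EE[\cE(\btheta_{k-1-\tau})]$ by the uniform constant $M$ of Lemma~\ref{lem:pol_loss_bounded}, using $\mathcal{K}_{\polyak}(\tau)\ge 0$, and then inserting this bound into the two-sided recursion of Proposition~\ref{prop:pol_risk_recursion}. You also make explicit the assumptions the paper leaves implicit: the stability condition must hold with the small constant $c$ of that lemma, and $M/\sigma^2\lesssim 1$ requires both $\sigma\eqsim 1$ and $\cE(\btheta_0)\lesssim 1$.
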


\subsection{Risk dynamics in the fully overdamped regime}
\label{app:pol_fully_overdamped_loss}

In the fully overdamped regime, the deterministic error trajectory decays monotonically, yielding a polynomial bias.

\begin{proposition}[Deterministic risk, fully overdamped regime]
\label{prop:pol_deterministic_bias0}
Under Assumptions~\ref{ass:diagonal-covariance} and~\ref{ass:power-law}, in the fully overdamped regime $\eta\lesssim (1-\rho)^2$, we have:
\begin{equation*}
    \cS_{\polyak}(k)
    \eqsim
    \begin{cases}
    1,&\qquad k\lesssim \frac{1-\rho}{\eta},\\
    \left(\frac{1-\rho}{\eta k}\right)^s, &\qquad k\gtrsim\frac{1-\rho}{\eta}.\\
    \end{cases}
\end{equation*}
\end{proposition}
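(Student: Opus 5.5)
The plan is to reduce $\cS_{\polyak}(k)$ to a spectral sum of one-dimensional overdamped responses and then evaluate that sum with the power-law assumptions. I will show that each coordinate behaves like gradient descent with learning rate $\eta_\rho=\eta/(1-\rho)$.

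\textbf{Step 1: per-coordinate envelope.} Fix $j$ and set $x_j:=\eta\lambda_j$. Since $\eta\lesssim(1-\rho)^2$ and $\lambda_j\le\lambda_1$, every mode is overdamped, as in Proposition~\ref{prop:pol_mode_decomposition}. The roots $y_{j,\pm}$ are therefore real and satisfy $y_{j,+}y_{j,-}=\rho\ge0$ and $y_{j,+}+y_{j,-}=1+\rho-x_j>0$, so $0\le y_{j,-}\le y_{j,+}<1$. Solving the recursion with $e_0^{\det}(j)=e_{-1}^{\det}(j)=-\theta_j^*$ gives
\[
\frac{e_k^{\det}(j)}{e_0^{\det}(j)}=\frac{y_{j,+}^{k+1}(1-y_{j,-})-y_{j,-}^{k+1}(1-y_{j,+})}{y_{j,+}-y_{j,-}}.
\]
For the lower bound, $y_{j,-}^{k+1}\le y_{j,-}y_{j,+}^k$ yields
\[
\frac{e_k^{\det}(j)}{e_0^{\det}(j)}\ge y_{j,+}^k\,\frac{y_{j,+}(1-y_{j,-})-y_{j,-}(1-y_{j,+})}{y_{j,+}-y_{j,-}}=y_{j,+}^k .
\]
For the upper bound, I rewrite the ratio as
\[
y_{j,+}^k+y_{j,-}(1-y_{j,+})\sum_{i=0}^{k-1}y_{j,+}^iy_{j,-}^{k-1-i}\le\bigl(1+k(1-y_{j,+})\bigr)y_{j,+}^{k-1}.
\]
This form stays valid uniformly up to critical damping, where $y_{j,+}\to y_{j,-}$, and it is bounded by $C\exp(-c\,k(1-y_{j,+}))$. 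This uniformity near the damping boundary is the step I expect to be the main obstacle. The rewriting above is designed to avoid dividing by $y_{j,+}-y_{j,-}$.

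\textbf{Step 2: root asymptotics.} From
\[
1-y_{j,+}=\tfrac12\Bigl(1-\rho+x_j-\sqrt{(1-\rho+x_j)^2-4x_j}\Bigr)=\frac{2x_j}{1-\rho+x_j+\sqrt{(1-\rho+x_j)^2-4x_j}},
\]
and from $x_j\lesssim(1-\rho)^2\ll1-\rho$, I obtain $1-y_{j,+}\eqsim x_j/(1-\rho)=\eta_\rho\lambda_j$. The upper limit $(1-\sqrt\rho)^2$ on $x_j$ keeps the square root real and only changes the constant by a factor of at most $2$. Combining with Step 1, $e_k^{\det}(j)^2\eqsim|\theta_j^*|^2\exp(-\Theta(k\eta_\rho\lambda_j))$, where the exponential constants may differ between the upper and lower bounds. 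For the lower bound I also use $y_{j,+}^k\ge\exp(-Ck(1-y_{j,+}))$, which holds because $1-y_{j,+}\le 1/2$.

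\textbf{Step 3: spectral sum.} By Assumption~\ref{ass:power-law},
\[
\cS_{\polyak}(k)\eqsim\sum_{j\ge1}j^{-(1+s\beta)}\exp\!\bigl(-c\,k\eta_\rho j^{-\beta}\bigr).
\]
If $k\eta_\rho\lesssim1$, every exponent is $O(1)$, so the sum is comparable to $\sum_j j^{-(1+s\beta)}\eqsim1$. If $k\eta_\rho\gtrsim1$, I split at $j_*\eqsim(k\eta_\rho)^{1/\beta}$. The tail $j\ge j_*$ contributes $\eqsim j_*^{-s\beta}=(k\eta_\rho)^{-s}$. The head $j<j_*$ is exponentially small, bounded by $\int_1^{j_*}t^{-(1+s\beta)}e^{-c k\eta_\rho t^{-\beta}}\,dt\lesssim(k\eta_\rho)^{-s}$ after the substitution $u=k\eta_\rho t^{-\beta}$. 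Since the threshold $k\eta_\rho\eqsim1$ is the same as $k\eqsim(1-\rho)/\eta$, this gives the two-regime formula. The differing constants in the exponentials only shift the crossover point by a constant factor.
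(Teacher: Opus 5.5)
Your proof is correct and has the same overall architecture as the paper's. First you get a two-sided per-coordinate envelope $|e_k^{\mathrm{det}}(j)|\eqsim|\theta_j^*|\exp\bigl(-\Theta(k\eta\lambda_j/(1-\rho))\bigr)$, then you evaluate the power-law spectral sum by splitting at $j_*\eqsim(k\eta_\rho)^{1/\beta}$ with an integral comparison. The paper's Stage~3 is essentially your Step~3.

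The genuine difference is the coordinate lemma.
\begin{itemize}
\item \textbf{The paper.} It writes $h_k(j)=A_jr_{j,+}^k+B_jr_{j,-}^k$ and locates the slow root by evaluating the characteristic polynomial at $1-u$. It then bounds $\tfrac12\le A_j\le2$ and $|B_j|\le2$ under $\eta\lambda_1\le c_0(1-\rho)^2$ with $c_0$ small, which keeps the roots separated by order $1-\rho$. The two-sided envelope is claimed only for $k\gtrsim(1-\rho)^{-1}$, once the fast root is negligible. The short-time range is handled in separate Stages~1--2. There $h_k(j)\eqsim1$ is inferred from $r_{j,+}^k\gtrsim1$ and $h_k(j)\le1$ without controlling the $B_jr_{j,-}^k$ term.
\item \textbf{Your route.} Your exact identity for the zero-velocity start $e_{-1}^{\mathrm{det}}=e_0^{\mathrm{det}}$ gives the sandwich $y_{j,+}^k\le h_k(j)\le\bigl(1+k(1-y_{j,+})\bigr)y_{j,+}^{k-1}$ for every $k\ge0$. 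It never divides by the root gap, so it is uniform up to critical damping. The same estimate covers both $k\lesssim(1-\rho)/\eta$ and $k\gtrsim(1-\rho)/\eta$.
\end{itemize}
In particular, your lower bound $h_k(j)\ge y_{j,+}^k$ supplies exactly the step the paper's Stage~1 glosses over. Combined with your rationalized formula $1-y_{j,+}=2x_j/\bigl(1-\rho+x_j+\sqrt{(1-\rho+x_j)^2-4x_j}\bigr)$, your version is the cleaner and slightly stronger of the two. Both arguments rely on the same zero-initial-velocity initialization.
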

\begin{proof}
In the fully overdamped regime, we can assume: $\eta\lambda_1\le c_0 (1-\rho)^2$, with a sufficiently small universal constant $c_0>0$.

Let the deterministic coordinate multiplier $h_k(j):=e_k^{\mathrm{det}}(j)/e_0^{\mathrm{det}}(j) = -e_k^{\mathrm{det}}(j)/\theta_j^*$ satisfy
\[
    h_{k+1}(j)=(1+\rho-\eta\lambda_j)h_k(j)-\rho h_{k-1}(j),
    \qquad
    h_0(j)=1,
    \qquad
    h_1(j)=1-\eta\lambda_j.
\]
The characteristic roots are
\[
    r_{j,+},r_{j,-}
    =
    \frac{1+\rho-\eta\lambda_j\pm\sqrt{(1+\rho-\eta\lambda_j)^2-4\rho}}{2}.
\]
We first give explicit root bounds. Since $\eta\le \frac{c_0}{\lambda_1}(1-\rho)^2$ and $\lambda_j\le\lambda_1$, we have $0\le \eta\lambda_j\le c_0(1-\rho)^2$ for all $j$. Put $r=1-u$. The characteristic polynomial evaluated at $1-u$ is
\[
    (1-u)^2-(1+\rho-\eta\lambda_j)(1-u)+\rho
    =
    \eta\lambda_j-(1-\rho+\eta\lambda_j)u+u^2.
\]
Choosing $c_0>0$ sufficiently small, the two evaluations
\[
    \eta\lambda_j-(1-\rho+\eta\lambda_j)\frac{\eta\lambda_j}{2(1-\rho)}+\left(\frac{\eta\lambda_j}{2(1-\rho)}\right)^2>0,
\]
and
\[
    \eta\lambda_j-(1-\rho+\eta\lambda_j)\frac{2\eta\lambda_j}{1-\rho}+\left(\frac{2\eta\lambda_j}{1-\rho}\right)^2<0
\]
hold uniformly for $0<\eta\lambda_j\le c_0(1-\rho)^2$. Therefore the slow root satisfies
\begin{equation}
\label{eq:pol_slow_root_bounds_sgd}
    1-\frac{2\eta\lambda_j}{1-\rho}
    \le
    r_{j,+}
    \le
    1-\frac{\eta\lambda_j}{2(1-\rho)}.
\end{equation}
The fast root satisfies $r_{j,+}r_{j,-}=\rho=1-(1-\rho)$. Since $r_{j,+}\ge1-2c_0(1-\rho)$, after decreasing $c_0$ if needed,
\begin{equation}
\label{eq:pol_fast_root_bound_sgd}
    0\le r_{j,-}\le 1-\frac{1-\rho}{2}.
\end{equation}

Now write the solution as
\[
    h_k(j)=A_jr_{j,+}^k+B_jr_{j,-}^k.
\]
From $h_0=1$ and $h_1=1-\eta\lambda_j$, we obtain the coefficients
\[
    A_j=\frac{1-\eta\lambda_j-r_{j,-}}{r_{j,+}-r_{j,-}},
    \qquad
    B_j=1-A_j = \frac{r_{j,+}-(1-\eta\lambda_j)}{r_{j,+}-r_{j,-}}.
\]
Using~\eqref{eq:pol_slow_root_bounds_sgd}--\eqref{eq:pol_fast_root_bound_sgd}, and taking $c_0$ sufficiently small, these coefficients obey uniform bounds:
\begin{equation}
\label{eq:pol_sgd_coeff_bounds}
    \frac12\le A_j\le2,
    \qquad
    |B_j|\le2.
\end{equation}
Equivalently, applying the initial conditions, the exact solution can be expressed in the fractional form:
\begin{equation}
\label{eq:hk_exact_form}
    h_k(j) = \frac{r_{j,+} r_{j,-}\left(-r_{j,+}^{k-1} + r_{j,-}^{k-1}\right) + (1-\eta\lambda_j)\left(r_{j,+}^k - r_{j,-}^k\right)}{r_{j,+} - r_{j,-}}.
\end{equation}

To evaluate the deterministic bias $\cS_{\polyak}(k) = \sum_{j\ge1}\lambda_j(\theta_j^*)^2 h_k(j)^2$, we divide the time $k$ into three successive stages.

\textbf{Stage 1: $k \lesssim \frac{1}{1-\rho}$.} \\
In this extremely short-time regime, taking the limits $r_{j,+}\approx r_{j,-} \approx r \approx 1$ and using $r^2 \approx \rho$ in~\eqref{eq:hk_exact_form} yields the heuristic trajectory:
\[
    h_k(j) \simeq -\rho(k-1)r^{k-2} + (1-\eta\lambda_j)k r^{k-1} = k\left((1-\eta\lambda_j)r - \rho \right)r^{k-2} + \rho r^{k-2}.
\]
Since $(1-\eta\lambda_j)r - \rho = \mathcal{O}(1-\rho)$, the first term is bounded by $\mathcal{O}(k(1-\rho)) \lesssim 1$, yielding $h_k(j) \simeq 1$. 
Rigorously, for the leading modes ($j=\mathcal{O}(1)$), since $\eta\lambda_j \le c_0(1-\rho)^2$, the slow root satisfies $r_{j,+} \ge 1 - 2c_0(1-\rho)$. For $k \lesssim \frac{1}{1-\rho}$, we have
\[
    r_{j,+}^k \ge \left(1 - 2c_0(1-\rho)\right)^{\frac{C}{1-\rho}} \eqsim 1.
\]
Since the dynamics are unconditionally stable ($h_k(j) \le 1$), it follows that $h_k(j) \eqsim 1$ for the leading modes. Consequently, the bias is dominated by these non-decayed modes:
\[
    \cS_{\polyak}(k) \eqsim \sum_{j\ge1}\lambda_j(\theta_j^*)^2 \eqsim 1.
\]

\textbf{Stage 2: $\frac{1}{1-\rho} \lesssim k \lesssim \frac{1-\rho}{\eta}$.} \\
For $k \gtrsim \frac{1}{1-\rho}$, the fast root decays exponentially since $r_{j,-}^k \le \left(1-\frac{1-\rho}{2}\right)^k \ll 1$. The multiplier is  dominated by the slow root:
\[
    h_k(j) \simeq \frac{1-\eta\lambda_j - r_{j,-}}{r_{j,+} - r_{j,-}} r_{j,+}^k \simeq r_{j,+}^k.
\]
For the leading modes with $\lambda_j = \Theta(1)$, within the time window $k \lesssim \frac{1-\rho}{\eta}$, the exponent is bounded by:
\[
    r_{j,+}^k \eqsim \left(1 - c\frac{\eta\lambda_j}{1-\rho}\right)^k \eqsim \exp\left(-c\frac{\eta\lambda_j}{1-\rho}k\right) \eqsim 1,
\]
where we used the condition $\frac{\eta\lambda_j}{1-\rho}k \lesssim \lambda_j \le \lambda_1 = \mathcal{O}(1)$. Thus, the leading modes have not yet experienced significant exponential decay, preserving $h_k(j) \eqsim 1$. As a result, the deterministic bias remains unchanged:
\[
    \cS_{\polyak}(k) \eqsim 1.
\]

\textbf{Stage 3: $k \gtrsim \frac{1-\rho}{\eta}$.} \\
In this long-time regime, the exponential decay of the slow root becomes significant across the spectrum. Combining~\eqref{eq:pol_slow_root_bounds_sgd},~\eqref{eq:pol_fast_root_bound_sgd}, and~\eqref{eq:pol_sgd_coeff_bounds}, there are universal constants $c_1,c_2,c_3,C_1>0$ such that, for all $k\ge C_1(1-\rho)^{-1}$,
\begin{equation}
\label{eq:pol_sgd_coordinate_two_sided_decay}
    c_1\exp\left(-c_2\frac{\eta\lambda_j}{1-\rho}k\right)
    \le
    |h_k(j)|
    \le
    C_1\exp\left(-c_3\frac{\eta\lambda_j}{1-\rho}k\right).
\end{equation}
Substituting~\eqref{eq:pol_sgd_coordinate_two_sided_decay} into the deterministic bias gives
\begin{equation}
\label{eq:pol_sgd_bias_spectral_comparison}
    c\sum_{j\ge1}\lambda_j(\theta_j^*)^2
    \exp\left(-C\frac{\eta\lambda_j}{1-\rho}k\right)
    \le
    \cS_{\polyak}(k)
    \le
    C\sum_{j\ge1}\lambda_j(\theta_j^*)^2
    \exp\left(-c\frac{\eta\lambda_j}{1-\rho}k\right).
\end{equation}
By Assumption~\ref{ass:power-law},
\[
    \lambda_j(\theta_j^*)^2\eqsim j^{-1-\beta s}, \qquad \lambda_j\eqsim j^{-\beta}.
\]
Thus it remains to bound sums of the form
\[
    \sum_{j\ge1}j^{-1-\beta s}
    \exp\left(-c\frac{\eta k}{1-\rho}j^{-\beta}\right).
\]
The condition on $k$ implies $\frac{\eta k}{1-\rho}\ge C$.

For the lower bound, set $J:=\left\lceil \left(\frac{\eta k}{1-\rho}\right)^{1/\beta}\right\rceil$. For $j\ge J$, we have $\frac{\eta k}{1-\rho}j^{-\beta}\le1$, so $\exp(-C\frac{\eta k}{1-\rho}j^{-\beta})\ge e^{-C}$. Therefore,
\[
    \sum_{j\ge1}j^{-1-\beta s}\exp\left(-C\frac{\eta k}{1-\rho}j^{-\beta}\right)
    \ge
    e^{-C}\sum_{j\ge J}j^{-1-\beta s}
    \ge
    c\left(\frac{\eta k}{1-\rho}\right)^{-s}.
\]
For the upper bound, the integral comparison with $u=c\frac{\eta k}{1-\rho}x^{-\beta}$ gives
\begin{align*}
    \sum_{j\ge1}j^{-1-\beta s}\exp\left(-c\frac{\eta k}{1-\rho}j^{-\beta}\right)
    &\le
    C\int_1^\infty x^{-1-\beta s}\exp\left(-c\frac{\eta k}{1-\rho}x^{-\beta}\right)\,dx \\
    &=
    C \left(\frac{\eta k}{1-\rho}\right)^{-s}\int_0^{c\frac{\eta k}{1-\rho}}u^{s-1}e^{-u}\,du
    \le
    C \left(\frac{\eta k}{1-\rho}\right)^{-s}.
\end{align*}
Combining the upper and lower bounds in~\eqref{eq:pol_sgd_bias_spectral_comparison} yields
\[
    \cS_{\polyak}(k)
    \eqsim
    \left(\frac{1-\rho}{\eta k}\right)^s.
\]
Synthesizing the three stages completes the proof.
\end{proof}

\begin{theorem}[Scaling law, fully overdamped regime]
\label{thm:pol_scaling_sgd}
Under Assumptions~\ref{ass:diagonal-covariance} and~\ref{ass:power-law}, and the stability condition $\eta \lesssim \min\{1,B(1-\rho)\}$, for the fully overdamped regime $\eta\lesssim(1-\rho)^2$, when $k\gtrsim \frac{1-\rho}{\eta}$, the expected excess risk satisfies:
\[
    \EE[\cE(\btheta_k)]
    \eqsim
    \left(\frac{1-\rho}{\eta k}\right)^s
    +
    \frac{\sigma^2\eta}{B(1-\rho)}.
\]
\end{theorem}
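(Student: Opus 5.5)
The plan is to assemble three results already established in this appendix. By Corollary~\ref{cor:pol_noise_absorption}, the stability condition \(\eta\lesssim\min\{1,B(1-\rho)\}\) (with the small constant \(c\) of Lemma~\ref{lem:pol_loss_bounded}) lets us absorb the multiplicative-noise term of Proposition~\ref{prop:pol_risk_recursion} into the additive label-noise term. This gives the two-sided reduction
\[
\EE[\cE(\btheta_k)]\eqsim \cS_{\polyak}(k)+\frac{\eta^2\sigma^2}{B}\sum_{\tau=0}^{k-1}\mathcal K_{\polyak}(\tau).
\]
The lower bound is immediate, since the multiplicative term is nonnegative. The upper bound uses the uniform risk bound \(M\) together with \(\sigma\eqsim1\), so that \(M/\sigma^2\) is an absolute constant.

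Next we evaluate the two terms separately. For the signal term, we are in the fully overdamped regime \(\eta\lesssim(1-\rho)^2\) with \(k\gtrsim(1-\rho)/\eta\). Stage~3 of Proposition~\ref{prop:pol_deterministic_bias0} then gives \(\cS_{\polyak}(k)\eqsim((1-\rho)/(\eta k))^s\) directly. Note that the regime condition implies \((1-\rho)/\eta\gtrsim(1-\rho)^{-1}\), so the threshold \(k\ge C_1(1-\rho)^{-1}\) needed for the two-sided coordinate bounds~\eqref{eq:pol_sgd_coordinate_two_sided_decay} is automatically met, after adjusting constants.

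For the label-noise term we invoke Lemma~\ref{lem:pol_label_noise}, which requires \(k\ge C\max\{(1-\rho)^{-1},(1-\rho)/\eta\}\). In the fully overdamped regime the second entry dominates, as just observed. Hence the hypothesis \(k\gtrsim(1-\rho)/\eta\), with a sufficiently large implicit constant, suffices, and the lemma yields \(\eqsim\sigma^2\eta/(B(1-\rho))\). Summing the two estimates gives the claim.

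The only delicate point is the bookkeeping of constants. The implicit constant in ``\(k\gtrsim(1-\rho)/\eta\)'' must be large enough to trigger both Proposition~\ref{prop:pol_deterministic_bias0} and Lemma~\ref{lem:pol_label_noise}. The constant in the stability condition must be small enough for Lemma~\ref{lem:pol_loss_bounded}. The regime constant \(c_0\) in \(\eta\lambda_1\le c_0(1-\rho)^2\) must match the one used in the deterministic analysis. In addition, Lemma~\ref{lem:pol_label_noise}'s upper bound needs the margin \(2(1+\rho)-\eta\lambda_1\gtrsim1\), which holds trivially here because \(\eta\lambda_1\le c_0(1-\rho)^2\ll1\). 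I expect no real obstacle beyond verifying that these constant choices are mutually compatible.
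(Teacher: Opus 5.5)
Your proposal is correct and matches the paper's proof. It invokes Corollary~\ref{cor:pol_noise_absorption} to absorb the multiplicative noise, then substitutes Stage~3 of Proposition~\ref{prop:pol_deterministic_bias0} for the bias and Lemma~\ref{lem:pol_label_noise} for the label-noise floor. Your remark that $\eta\lesssim(1-\rho)^2$ gives $(1-\rho)/\eta\gtrsim(1-\rho)^{-1}$, so the single condition $k\gtrsim(1-\rho)/\eta$ triggers both ingredients, is a detail the paper leaves implicit, and it is right.
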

\begin{proof}
By Corollary~\ref{cor:pol_noise_absorption}, the expected risk is completely determined by the deterministic bias and the label noise:
\[
    \EE[\cE(\btheta_k)] \eqsim \cS_{\polyak}(k) + \frac{\eta^2\sigma^2}{B}\sum_{\tau=0}^{k-1}\mathcal{K}_{\polyak}(\tau).
\]
For $k \gtrsim \frac{1-\rho}{\eta}$, we substitute the deterministic bias from Proposition~\ref{prop:pol_deterministic_bias0}:
\[
    \cS_{\polyak}(k) \eqsim \left(\frac{1-\rho}{\eta k}\right)^s.
\]
We substitute the label noise floor from Lemma~\ref{lem:pol_label_noise}:
\[
    \frac{\eta^2\sigma^2}{B}\sum_{\tau=0}^{k-1}\mathcal{K}_{\polyak}(\tau) \eqsim \frac{\sigma^2\eta}{B(1-\rho)}.
\]
Summing these two terms directly yields the scaling law.
\end{proof}

\subsection{Risk dynamics in the mixed-damping regime}
\label{app:pol_mixed_loss}

In the mixed-damping regime, the spectrum is partitioned into underdamped and overdamped modes by the crossover index $J_{\max} \eqsim \left(\frac{\eta}{(1-\rho)^2}\right)^{1/\beta}$. Consequently, the deterministic bias decomposes into an overdamped polynomial tail and an underdamped oscillatory transient, denoted by $\cS_{\mathrm{over}}(k)$ and $\cS_{\mathrm{under}}(k)$. Due to destructive interference among the underdamped modes, $\cS_{\mathrm{under}}(k)$ lacks a strictly positive pointwise lower bound. 

\begin{proposition}[Deterministic bias, mixed-damping regime]
\label{prop:pol_deterministic_bias}
Under Assumptions~\ref{ass:diagonal-covariance} and~\ref{ass:power-law}, for the mixed-damping regime \(\eta\gtrsim(1-\rho)^2\), the deterministic bias decomposes into $\cS_{\polyak}(k) = \cS_{\mathrm{over}}(k) + \cS_{\mathrm{under}}(k)$, where the overdamped tail satisfies the two-sided bounds:
\begin{equation*}
    \cS_{\mathrm{over}}(k)
    \eqsim
    \begin{cases}
    \left(\frac{(1-\rho)^2}{\eta}\right)^s, &\qquad k\lesssim\frac{1}{1-\rho},\\
    \left(\frac{1-\rho}{\eta k}\right)^s, &\qquad k\gtrsim\frac{1}{1-\rho},
    \end{cases}
\end{equation*}
and $\cS_{\mathrm{under}}(k)$ represents the underdamped transient whose properties are formally established in Lemma~\ref{lem:pol_underdamped_transient_properties}.
\end{proposition}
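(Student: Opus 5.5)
We split the spectral sum at the crossover index $J_{\max}\eqsim(\eta/(1-\rho)^2)^{1/\beta}$ from Proposition~\ref{prop:pol_mode_decomposition}. We define
\[
\cS_{\mathrm{over}}(k):=\tfrac12\sum_{j>J_{\max}}\lambda_j e_k^{\mathrm{det}}(j)^2,
\qquad
\cS_{\mathrm{under}}(k):=\tfrac12\sum_{j\le J_{\max}}\lambda_j e_k^{\mathrm{det}}(j)^2.
\]
The decomposition $\cS_{\polyak}=\cS_{\mathrm{over}}+\cS_{\mathrm{under}}$ is then immediate. All the work is in the two-sided bound for $\cS_{\mathrm{over}}$. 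To get it, we rerun the coordinatewise root analysis of Proposition~\ref{prop:pol_deterministic_bias0}, now restricted to modes with $\eta\lambda_j\le c_0(1-\rho)^2$. This is legitimate because that proof only used $\eta\lambda_j\le c_0(1-\rho)^2$ per coordinate, not globally. Shifting $J_{\max}$ by a constant factor makes the small constant $c_0$ available, and the modes near the boundary contribute only a constant fraction, which we absorb.

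\textbf{Coordinate bounds on the tail.} For each tail mode we reuse~\eqref{eq:pol_slow_root_bounds_sgd}--\eqref{eq:pol_sgd_coeff_bounds}: the slow root satisfies $r_{j,+}=1-\Theta(\eta\lambda_j/(1-\rho))$, the fast root is at most $1-(1-\rho)/2$, and the coefficients are bounded. Two consequences follow. For $k\lesssim 1/(1-\rho)$ we get $|h_k(j)|\eqsim1$, exactly as in Stage~1; the lower bound comes from $r_{j,+}^k\gtrsim 1$ together with $A_j\ge1/2$, while the fast-root contribution is controlled by $|B_j|\le 2$ but must not cancel the slow part. For $k\gtrsim1/(1-\rho)$ we get the two-sided exponential bound~\eqref{eq:pol_sgd_coordinate_two_sided_decay}.

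\textbf{Spectral summation.} We sum $j^{-1-\beta s}$ against these multipliers over $j>J_{\max}$.
\begin{itemize}
\item Early times, $k\lesssim1/(1-\rho)$: the sum is $\sum_{j>J_{\max}}j^{-1-\beta s}\eqsim J_{\max}^{-\beta s}=((1-\rho)^2/\eta)^s$.
\item Later times: we split at $J:=\max\{J_{\max},(\eta k/(1-\rho))^{1/\beta}\}$. Here $\eta k/(1-\rho)\gtrsim \eta/(1-\rho)^2$ exactly when $k\gtrsim 1/(1-\rho)$, so $J\eqsim(\eta k/(1-\rho))^{1/\beta}$. The lower bound uses modes $j\ge J$, where the exponent is $O(1)$. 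The upper bound uses the same incomplete-Gamma integral comparison, now on $[J_{\max},\infty)$, which is dominated by the full integral. This gives $(\frac{1-\rho}{\eta k})^s$.
\end{itemize}
The two formulas agree at $k\eqsim1/(1-\rho)$, which serves as a consistency check.

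\textbf{Main obstacle.} The delicate step is the lower bound in the early window $k\lesssim1/(1-\rho)$ for modes near $J_{\max}$. There the fast root is not yet negligible and $A_j r_{j,+}^k+B_j r_{j,-}^k$ could partially cancel. Our first attempt is to avoid this by using only modes with $j\ge C J_{\max}$: for these, $\eta\lambda_j\ll(1-\rho)^2$ by a large constant factor, and $B_j=O(\eta\lambda_j/(1-\rho)^2)$ is small, so $h_k(j)\ge \tfrac12 r_{j,+}^k-|B_j|\gtrsim1$. Their total mass is still $\eqsim J_{\max}^{-\beta s}$, so restricting to them costs only a constant. Throughout, $\cS_{\mathrm{under}}$ is left unbounded from below, as the statement defers it to Lemma~\ref{lem:pol_underdamped_transient_properties}.
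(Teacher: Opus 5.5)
Your proposal is correct and is essentially the paper's own proof: split at $J_{\max}$, transplant the root and coefficient bounds of Proposition~\ref{prop:pol_deterministic_bias0} to the overdamped tail, and sum against $j^{-1-\beta s}$ in the two time windows. Your restriction to $j\ge CJ_{\max}$ in the early window is a slightly more careful version of the paper's bare claim $|h_k(j)|\eqsim 1$, although no cancellation can occur there anyway: $1-\eta\lambda_j>r_{j,+}$ forces $B_j<0$, hence $h_k(j)=r_{j,+}^k+|B_j|\,(r_{j,+}^k-r_{j,-}^k)\ge r_{j,+}^k$.

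One point that both you and the paper leave implicit concerns the near-critical tail modes with $\eta\lambda_j\eqsim(1-\rho)^2$. For $k\gg 1/(1-\rho)$ they carry mass $\eqsim((1-\rho)^2/\eta)^s\gg(\frac{1-\rho}{\eta k})^s$, so they cannot be absorbed by mass counting. They instead need their decay bound $|h_k(j)|\lesssim(1+(1-\rho)k)\,r_{j,+}^{k-1}$ with $r_{j,+}\le 1-c(1-\rho)$, which is what justifies the exponential bound used in your incomplete-Gamma comparison on $[J_{\max},\infty)$.
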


\begin{proof}
Let the coordinate multiplier $h_k(j):= -e_k^{\mathrm{det}}(j)/\theta_j^*$. Then it satisfies
\[
    h_{k+1}(j)=(1+\rho-\eta\lambda_j)h_k(j)-\rho h_{k-1}(j),
    \qquad
    h_0(j)=1,
    \qquad
    h_1(j)=1-\eta\lambda_j.
\]
The characteristic discriminant is negative precisely when
\[
    (1-\sqrt\rho)^2<\eta\lambda_j<(1+\sqrt\rho)^2.
\]
Thus the comparison between $\eta$ and $(1-\rho)^2$ determines whether a coordinate is overdamped or underdamped. In the mixed-damping regime $\eta \gtrsim (1-\rho)^2$, the crossover index is defined by $\eta\lambda_{J_{\max}} \eqsim (1-\rho)^2$. We decompose the deterministic bias into the overdamped tail ($j > J_{\max}$) and the underdamped leading block ($j \le J_{\max}$).

For overdamped coordinates ($j > J_{\max}$) in the stable small-step range, the root comparison used in Proposition~\ref{prop:pol_deterministic_bias0} gives constants $c_1,C_1>0$ such that
\begin{equation}
\label{eq:pol_momentum_overdamped_multiplier_upper}
    |h_k(j)|
    \le
    C_1\exp\left(-c_1\frac{\eta\lambda_j}{1-\rho}k\right).
\end{equation}
Moreover, if $\eta\lambda_j\le c_1(1-\rho)^2$ and $k\ge C_1(1-\rho)^{-1}$, the slow root dominates the fast root and
\begin{equation*}
    |h_k(j)|
    \ge
    c_1\exp\left(-C_1\frac{\eta\lambda_j}{1-\rho}k\right).
\end{equation*}
To evaluate the overdamped contribution $\cS_{\mathrm{over}}(k) = \frac{1}{2} \sum_{j > J_{\max}} \lambda_j(\theta_j^*)^2 h_k(j)^2$, we discuss two cases for $k$:

\textbf{Case 1: $k \lesssim \frac{1}{1-\rho}$.} \\
In this regime, for all overdamped modes $j > J_{\max}$, we have $\eta\lambda_j \lesssim (1-\rho)^2$. 
Consequently, the exponent satisfies $\frac{\eta\lambda_j}{1-\rho}k \lesssim \frac{(1-\rho)^2}{1-\rho}\frac{1}{1-\rho} = 1$. The exponential decay is bounded away from zero, so $|h_k(j)| \eqsim 1$. Therefore, the sum is dominated by its lower limit:
\[
    \cS_{\mathrm{over}}(k) \eqsim \sum_{j > J_{\max}} j^{-1-\beta s} \eqsim (J_{\max})^{-\beta s} \eqsim \left(\frac{(1-\rho)^2}{\eta}\right)^s.
\]

\textbf{Case 2: $k \gtrsim \frac{1}{1-\rho}$.} \\
Since $\lambda_j\eqsim j^{-\beta}$, the time condition implies that every
$j\ge \left(\frac{C\eta k}{1-\rho}\right)^{1/\beta}$
satisfies $\eta\lambda_j\le c_1(1-\rho)^2$, after increasing $C$ if necessary.
Using $\lambda_j(\theta_j^*)^2\eqsim j^{-1-\beta s}$, we get
\begin{equation}
\label{eq:pol_momentum_poly_lower}
    \cS_{\mathrm{over}}(k)
    \ge
    c\sum_{j\ge (C\eta k/(1-\rho))^{1/\beta}}j^{-1-\beta s}
    \ge
    c\left(\frac{1-\rho}{\eta k}\right)^s .
\end{equation}
Conversely, by~\eqref{eq:pol_momentum_overdamped_multiplier_upper} and the integral comparison,
\begin{equation*}
    \cS_{\mathrm{over}}(k)
    \le
    C\sum_{j\ge1}j^{-1-\beta s}
    \exp\left(-c_1\frac{\eta k}{1-\rho}j^{-\beta}\right)
    \le
    C\left(\frac{1-\rho}{\eta k}\right)^s .
\end{equation*}
Therefore the overdamped tail contributes exactly the polynomial term, up to constants. 

The remaining deterministic bias comes from the underdamped modes, defined as $\cS_{\mathrm{under}}(k) := \frac{1}{2} \sum_{j \le J_{\max}} \lambda_j (\theta_j^*)^2 h_k(j)^2$, which will be analyzed in Lemma~\ref{lem:pol_underdamped_transient_properties}.
\end{proof}

\begin{lemma}[Properties of the underdamped transient]
\label{lem:pol_underdamped_transient_properties}
Let $\cS_{\mathrm{under}}(k)$ denote the deterministic bias contributed by the underdamped modes ($j \le J_{\max}$). Under the conditions of Proposition~\ref{prop:pol_deterministic_bias}, for every step $k \ge 0$, we have:
\begin{equation*}
    0 \le \cS_{\mathrm{under}}(k) \le C \rho^k.
\end{equation*}
\end{lemma}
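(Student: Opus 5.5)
The plan is to bound each underdamped coordinate multiplier pointwise by $C\rho^{k/2}$ and then sum over the leading block. Nonnegativity is immediate, since $\cS_{\mathrm{under}}(k)=\frac12\sum_{j\le J_{\max}}\lambda_j(\theta_j^*)^2h_k(j)^2$ is a sum of nonnegative terms. For the upper bound, the aim is
\[
|h_k(j)|\le C\rho^{k/2}\qquad\text{uniformly in } j\le J_{\max} \text{ and } k\ge0,
\]
where $C$ is a universal constant. Then
\[
\cS_{\mathrm{under}}(k)\le C^2\rho^k\sum_{j\ge1}\lambda_j(\theta_j^*)^2\lesssim\rho^k,
\]
because $\sum_j j^{-1-\beta s}<\infty$ by Assumption~\ref{ass:power-law}.

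To get the pointwise bound, I will use the explicit solution in the complex-root case. Write $x_j:=\eta\lambda_j$. By Proposition~\ref{prop:pol_mode_decomposition}, the roots are $\sqrt\rho\,e^{\pm i\omega_j}$ with $\cos\omega_j=(1+\rho-x_j)/(2\sqrt\rho)$. Solving the recursion with $h_0=1$ and $h_1=1-x_j$ gives
\[
h_k(j)=\rho^{k/2}\Bigl[\cos(k\omega_j)+\frac{1-\rho-x_j}{2\sqrt\rho}\,\frac{\sin(k\omega_j)}{\sin\omega_j}\Bigr].
\]
Here I used $\frac{1-x_j}{\sqrt\rho}-\cos\omega_j=\frac{1-\rho-x_j}{2\sqrt\rho}$. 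It therefore suffices to show that
\[
\frac{|1-\rho-x_j|}{\sin\omega_j}\lesssim1.
\]
From the cosine formula,
\[
\sin^2\omega_j=\frac{\bigl(x_j-(1-\sqrt\rho)^2\bigr)\bigl((1+\sqrt\rho)^2-x_j\bigr)}{4\rho}.
\]
The stability condition $\eta\lesssim1$, with a small enough constant, keeps $x_j\le c<(1+\sqrt\rho)^2$ bounded away from the upper root, so the second factor is $\eqsim1$.

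The main obstacle is modes near critical damping, where $x_j\downarrow(1-\sqrt\rho)^2$ and $\sin\omega_j\to0$. There the ratio $\sin(k\omega_j)/\sin\omega_j$ can be as large as $k$. This would only give $|h_k(j)|\lesssim(1+k(1-\rho))\rho^{k/2}$, which loses a polynomial factor. The crossover index in Proposition~\ref{prop:pol_mode_decomposition} is only defined up to constants. I will therefore fix $J_{\max}$ so that the underdamped block consists of the modes with $x_j\ge2(1-\sqrt\rho)^2$. Borderline modes with $(1-\sqrt\rho)^2<x_j<2(1-\sqrt\rho)^2$ go into the overdamped tail. For those modes $x_j\eqsim(1-\rho)^2$, so they behave like critically damped modes and satisfy the exponential bound~\eqref{eq:pol_momentum_overdamped_multiplier_upper}. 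This only changes constants in Proposition~\ref{prop:pol_deterministic_bias}. If one insists on the exact threshold, one must instead check directly, using the estimates of Proposition~\ref{prop:pol_deterministic_bias0}, that the critically damped bound $\lesssim(1+k(1-\rho))\rho^{k/2}$ is absorbed into the overdamped estimate; I expect this to be the delicate point.

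With the cutoff at $x_j\ge2(1-\sqrt\rho)^2$, the first factor in $\sin^2\omega_j$ is at least $x_j/2$, so $\sin\omega_j\gtrsim\sqrt{x_j}$. This gives
\[
\frac{|1-\rho-x_j|}{\sin\omega_j}\lesssim\frac{1-\rho}{\sqrt{x_j}}+\sqrt{x_j}\lesssim1,
\]
using $\sqrt{x_j}\gtrsim1-\rho$ and $x_j\lesssim1$. Since $2\sqrt\rho\eqsim1$ in the high-momentum regime, $|h_k(j)|\le C\rho^{k/2}$ holds uniformly over the underdamped block, and the summation step above completes the argument.
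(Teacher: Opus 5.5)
Your proposal is correct and follows essentially the same route as the paper. Both arguments use the explicit formula $h_k(j)=\rho^{k/2}[\cos(k\omega_j)+A_j\sin(k\omega_j)]$ and bound $|A_j|\lesssim1$ via $4\rho\sin^2\omega_j\gtrsim\eta\lambda_j$ on a band kept away from critical damping (the paper's $C_{\rm mom}(1-\rho)^2\le\eta\lambda_j\le c_{\rm st}/2$, your $x_j\ge2(1-\sqrt\rho)^2$), then sum against $\sum_j\lambda_j(\theta_j^*)^2<\infty$. Your estimate $|1-\rho-x_j|/\sin\omega_j\lesssim(1-\rho)/\sqrt{x_j}+\sqrt{x_j}$ is actually slightly more careful than the paper's claim $(1-\rho-\eta\lambda_j)^2\le(1-\rho)^2$, which holds only for $\eta\lambda_j\le2(1-\rho)$.
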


\begin{proof}
For $j \le J_{\max}$, write the two roots as
\[
    \sqrt{\rho}e^{i\omega_j},
    \quad
    \sqrt{\rho}e^{-i\omega_j},
    \qquad \text{where} \quad
    \cos \omega_j=\frac{1+\rho-\eta\lambda_j}{2\sqrt \rho}.
\]
Solving the two initial conditions gives the exact identity
\begin{equation}
\label{eq:pol_underdamped_h_formula}
    h_k(j)
    =\rho^{k/2}
    \left[
        \cos(k\omega_j)
        +\frac{1-\rho-\eta\lambda_j}{2\sqrt{\rho}\sin \omega_j}
        \sin(k\omega_j)
    \right].
\end{equation}
Set
\[
    A_j:=\frac{1-\rho-\eta\lambda_j}{2\sqrt{\rho}\sin \omega_j}.
\]
On the separated underdamped band
\[
    C_{\rm mom}(1-\rho)^2\le \eta\lambda_j\le c_{\rm st}/2,
\]
one has
\[
    4\rho\sin^2\omega_j
    =\bigl(\eta\lambda_j-(1-\sqrt\rho)^2\bigr)
     \bigl((1+\sqrt\rho)^2-\eta\lambda_j\bigr)
    \ge c\eta\lambda_j,
\]
and
\[
    (1-\rho-\eta\lambda_j)^2
    \le(1-\rho)^2,
\]
hence $|A_j|\le C$. 
Thus~\eqref{eq:pol_underdamped_h_formula} immediately gives the upper bound
\begin{equation*}
    \cS_{\mathrm{under}}(k) = \frac{1}{2} \sum_{j \le J_{\max}}
    \lambda_j(\theta_j^*)^2h_k(j)^2
    \le
    C\rho^k,
\end{equation*}
because $\sum_{j\ge1}\lambda_j(\theta_j^*)^2<\infty$. The lower bound is trivially zero due to the potential for destructive interference of the oscillatory terms at specific iterations.
\end{proof}

\begin{theorem}[Scaling law, mixed-damping regime]
\label{thm:pol_two_sided_scaling}
Under Assumptions~\ref{ass:diagonal-covariance} and~\ref{ass:power-law}, and the stability condition $\eta \lesssim \min\{1,B(1-\rho)\}$, for the mixed-damping regime $\eta\gtrsim(1-\rho)^2$ with $k \gtrsim \max\left\{\frac{1}{1-\rho}, \frac{1-\rho}{\eta}\right\}$, the expected excess risk satisfies:
\begin{equation}
\label{eq:pol_mixed_scaling_law}
    \EE[\cE(\btheta_k)] \eqsim \cS_{\mathrm{under}}(k) + \left(\frac{1-\rho}{\eta k}\right)^s + \frac{\sigma^2\eta}{B(1-\rho)},
\end{equation}
where the properties of the underdamped transient $\cS_{\mathrm{under}}(k)$ are established in Lemma~\ref{lem:pol_underdamped_transient_properties}.
\end{theorem}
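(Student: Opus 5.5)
The proof will mirror that of Theorem~\ref{thm:pol_scaling_sgd}: reduce the risk to signal plus label noise, then insert the regime-specific signal estimate. By Corollary~\ref{cor:pol_noise_absorption}, whose only requirement is the stability condition \(\eta\lesssim\min\{1,B(1-\rho)\}\) assumed here, the multiplicative-noise term is dominated by the additive one up to a constant. Hence
\[
\EE[\cE(\btheta_k)]\eqsim \cS_{\polyak}(k)+\frac{\eta^2\sigma^2}{B}\sum_{\tau=0}^{k-1}\mathcal K_{\polyak}(\tau).
\]
The hypothesis \(k\gtrsim\max\{(1-\rho)^{-1},(1-\rho)/\eta\}\) is exactly the time condition~\eqref{eq:pol_label_noise_time_condition} of Lemma~\ref{lem:pol_label_noise}. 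Provided the constant in ``\(\gtrsim\)'' is chosen large enough, that lemma identifies the noise term as \(\sigma^2\eta/(B(1-\rho))\). In the mixed regime the lemma's upper bound uses the factor \(2(1+\rho)-\eta\lambda_1\). This factor is bounded below because \(\eta\lesssim1\) with a small enough constant, which is the same margin used in the local stability condition~\eqref{app:eq:polyak-local-stability}.

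For the signal, I will use the decomposition \(\cS_{\polyak}=\cS_{\mathrm{over}}+\cS_{\mathrm{under}}\) from Proposition~\ref{prop:pol_deterministic_bias}. Since \(k\gtrsim(1-\rho)^{-1}\), the second case of that proposition gives \(\cS_{\mathrm{over}}(k)\eqsim((1-\rho)/(\eta k))^s\). Both pieces are nonnegative, so \(\cS_{\polyak}(k)\eqsim\cS_{\mathrm{under}}(k)+((1-\rho)/(\eta k))^s\) holds as a two-sided bound with no further work. Adding the noise term then yields~\eqref{eq:pol_mixed_scaling_law}. Lemma~\ref{lem:pol_underdamped_transient_properties} supplies the claimed behaviour \(0\le\cS_{\mathrm{under}}(k)\le C\rho^k\). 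Matching notation, this gives \(P(k)=\cS_{\mathrm{under}}(k)\) in Theorem~\ref{thm:hbm scaling law}.

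The main obstacle is making sure the constants are compatible. Proposition~\ref{prop:pol_deterministic_bias} assumes the overdamped lower bound for modes with \(\eta\lambda_j\le c_1(1-\rho)^2\) and \(k\ge C_1(1-\rho)^{-1}\). Lemma~\ref{lem:pol_underdamped_transient_properties} assumes a separated band \(C_{\rm mom}(1-\rho)^2\le\eta\lambda_j\le c_{\rm st}/2\). The finitely many indices near the crossover \(J_{\max}\) satisfy neither assumption. I will handle them by putting them in the overdamped sum and using only the upper bound~\eqref{eq:pol_momentum_overdamped_multiplier_upper} for them. This is legitimate because the lower bound on \(\cS_{\mathrm{over}}\) uses only tail indices \(j\gtrsim(\eta k/(1-\rho))^{1/\beta}\), and these exceed \(J_{\max}\) whenever \(k(1-\rho)\) is large. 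The remaining constants are fixed in order: first \(c\) in the stability condition, then the time constants.
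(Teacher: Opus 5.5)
Your proposal is correct and follows the paper's proof essentially step for step. You absorb the multiplicative noise via Corollary~\ref{cor:pol_noise_absorption}, read off the label-noise floor from Lemma~\ref{lem:pol_label_noise}, take $\cS_{\mathrm{over}}(k)\eqsim((1-\rho)/(\eta k))^s$ from the second case of Proposition~\ref{prop:pol_deterministic_bias}, and combine by nonnegativity. Your extra bookkeeping for the transition band around $J_{\max}$ is a sound refinement of a point the paper passes over, with two minor corrections: that band contains on the order of $J_{\max}$ indices rather than a bounded number, and its slightly underdamped members need the estimate $\rho^{k/2}(1+C(1-\rho)k)$ rather than the overdamped root bound. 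Neither affects the argument, since their total contribution is still $\lesssim((1-\rho)^2/\eta)^s e^{-c(1-\rho)k}\lesssim((1-\rho)/(\eta k))^s$.
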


\begin{proof}
By Corollary~\ref{cor:pol_noise_absorption}, the multiplicative noise is absorbed by the additive label noise floor, yielding:
\begin{equation}
\label{eq:pol_total_risk_mixed}
    \EE[\cE(\btheta_k)] 
    \eqsim 
    \cS_{\mathrm{under}}(k) + \cS_{\mathrm{over}}(k) + \frac{\eta^2\sigma^2}{B}\sum_{\tau=0}^{k-1}\mathcal{K}_{\polyak}(\tau).
\end{equation}

By Lemma~\ref{lem:pol_label_noise}, for $k \gtrsim \max\left\{\frac{1}{1-\rho}, \frac{1-\rho}{\eta}\right\}$, the label noise summation captures the saturated floor:
\begin{equation*}
    \frac{\eta^2\sigma^2}{B}\sum_{\tau=0}^{k-1}\mathcal{K}_{\polyak}(\tau) 
    \eqsim 
    \frac{\sigma^2\eta}{B(1-\rho)}.
\end{equation*}

By Proposition~\ref{prop:pol_deterministic_bias}, since $k \gtrsim \frac{1}{1-\rho}$, the overdamped tail reaches its terminal polynomial decay phase:
\begin{equation*}
    \cS_{\mathrm{over}}(k) \eqsim \left(\frac{1-\rho}{\eta k}\right)^s.
\end{equation*}

Substituting these components directly into \eqref{eq:pol_total_risk_mixed} establishes the equivalence \eqref{eq:pol_mixed_scaling_law}.
\end{proof}

\section{Nesterov scaling law}
\label{app:nesterov scaling law}



As for Polyak momentum, we establish Theorem~\ref{thm:nesterov scaling law} using the Functional Scaling Law (FSL) framework of \citet{li2026functional}. Our starting point is the following expected excess risk:
\begin{align}
    \EE[\cE(\btheta_k)] 
    &\eqsim \underbrace{\cS_{\nesterov}(k)}_{\text{signal learning}} +\underbrace{\sum_{\tau=0}^{k-1}\underbrace{\mathcal{K}_{\nesterov}(\tau)}_{\text{forgetting kernel}} \left(\sigma^2 + \EE[\cE(\btheta_{k-\tau})]\right) \frac{\eta^2}{B}}_{\text{noise accumulation}} \label{eq:nes_fsl_intro_org}\\
    &\eqsim \cS_{\nesterov}(k) + \underbrace{\frac{\eta^2\sigma^2}{B}\sum_{\tau=0}^{k-1}\mathcal{K}_{\nesterov}(\tau)}_{\text{additive label noise}} + \underbrace{\frac{\eta^2}{B}\sum_{\tau=0}^{k-1}\mathcal{K}_{\nesterov}(\tau)\EE[\cE(\btheta_{k-\tau})]}_{\text{multiplicative noise}}. \label{eq:nes_fsl_intro}
\end{align}

Our analysis proceeds in three steps.

First, Section~\ref{app:nes_signal_noise_decomp} derives the \textbf{general discrete risk recursion} \eqref{eq:nes_fsl_intro} from an exact coordinate-wise impulse-response representation. Section~\ref{app:nes_decomp_modes} then analyzes the characteristic roots of the second-order dynamics and separates the spectrum into \textbf{overdamped and underdamped modes}.

Second, Sections~\ref{app:nes_preparation} and~\ref{app:nes_uniform_boundedness} deal with the \textbf{stochastic terms} in the recursion. Section~\ref{app:nes_preparation} computes the additive label-noise contribution, which yields a lower noise floor than Polyak; Section~\ref{app:nes_uniform_boundedness} shows that, under the risk stability condition, the multiplicative noise contribution can be absorbed into the label noise term up to constant factors.

Finally, Sections~\ref{app:nes_fully_overdamped_loss} and~\ref{app:nes_mixed_loss} characterize the \textbf{deterministic signal learning term} \(\cS_{\nesterov}(k)\) in the \textbf{fully overdamped and mixed-damping regimes}, and therefore derive the scaling law expression.

\subsection{Signal--noise decomposition and risk recursion}
\label{app:nes_signal_noise_decomp}

Recall the parameter error 
\[
\be_k := \btheta_k - \btheta^*.
\]
For Nesterov, the gradient is evaluated at the look-ahead point $\btheta_k^{\la} = \btheta_k + \rho(\btheta_k - \btheta_{k-1})$. 
Let 
\[
\be_k^{\la} := \btheta_k^{\la} - \btheta^*
\]
denote the look-ahead error. Substituting the gradient decomposition into the Nesterov update \eqref{eq:nag-update}, the error recursion in vector form is:
\[
    \be_{k+1} = (\bI - \eta \bH)\be_k^{\la} - \eta \bxi_k(\btheta_k^{\la}).
\]
In the diagonal eigenbasis of $\bH$, since $\be_k^{\la} = (1+\rho)\be_k - \rho \be_{k-1}$, the $j$-th coordinate of the Nesterov error recursion satisfies:
\[
    e_{k+1}(j) = (1+\rho)(1-\eta\lambda_j)e_k(j) -\rho(1-\eta\lambda_j)e_{k-1}(j) -\eta\xi_k^{\la}(j),
\]
where $\xi_k^{\la}(j)$ is the $j$-th coordinate of the stochastic gradient noise $\bxi_k(\btheta_k^{\la})$ evaluated at the look-ahead point.

Its homogeneous characteristic polynomial is
\[
    y^2-(1+\rho)(1-\eta\lambda_j)y+\rho(1-\eta\lambda_j)=0.
\]

Let \(y_{j,+}\) and \(y_{j,-}\) be the two roots of
\[
    y^2-(1+\rho)(1-\eta\lambda_j)y+\rho(1-\eta\lambda_j)=0.
\]
Define
\[
    G_k^{\nesterov}(j)
    :=
    \frac{y_{j,+}^{k+1}-y_{j,-}^{k+1}}{y_{j,+}-y_{j,-}},
    \qquad k\ge0,
\]
with the usual limiting definition at a repeated root.  Then
\begin{equation}
\label{eq:nes_impulse_response}
    e_k(j)
    =
    e_k^{\mathrm{det}}(j)
    -\eta\sum_{m=0}^{k-1}G_{k-1-m}^{\nesterov}(j)\xi_m^{\la}(j),
\end{equation}
where $e_k^{\mathrm{det}}(j)$ is the deterministic Nesterov error trajectory.

\begin{proposition}[Impulse-response representation]
\label{prop:nes_impulse_response}
The Green function is the unique solution of
\[
    G_{k+1}^{\nesterov}(j)
    =
    (1+\rho)(1-\eta\lambda_j)G_k^{\nesterov}(j)
    -\rho(1-\eta\lambda_j)G_{k-1}^{\nesterov}(j),
    \qquad
    G_0^{\nesterov}(j)=1,
    \qquad
    G_{-1}^{\nesterov}(j)=0.
\]
Moreover, if $e_0^{\mathrm{det}}(j)=-\theta_j^*$ and
$e_1^{\mathrm{det}}(j)=-(1-\eta\lambda_j)\theta_j^*$, then
\[
    e_k^{\mathrm{det}}(j)
    =
    -\frac{(1-\eta\lambda_j-y_{j,-})y_{j,+}^{k}
    -(1-\eta\lambda_j-y_{j,+})y_{j,-}^{k}}
    {y_{j,+}-y_{j,-}}\,\theta_j^* .
\]
\end{proposition}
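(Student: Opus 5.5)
The statement has two parts: the Green function $G_k^{\nesterov}(j)$ solves the homogeneous recursion with the given initial values, and the deterministic trajectory has the closed form shown. Both are elementary facts about a scalar linear second-order recurrence with constant coefficients. I will prove them by direct verification from the root representation, following the argument used for Proposition~\ref{prop:pol_impulse_response}.

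\textbf{Green function.} Write $a:=(1+\rho)(1-\eta\lambda_j)$ and $b:=\rho(1-\eta\lambda_j)$. The roots then satisfy $y_{j,+}+y_{j,-}=a$ and $y_{j,+}y_{j,-}=b$.

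Each sequence $y_{j,\pm}^{k+1}$ satisfies $y^{k+2}=a\,y^{k+1}-b\,y^{k}$, because $y_{j,\pm}^2=a\,y_{j,\pm}-b$. So any linear combination of the two does as well, and in particular $G_k^{\nesterov}(j)$ satisfies the recursion.

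Checking the initial values:
\begin{itemize}
\item at $k=-1$ the numerator is $y_{j,+}^0-y_{j,-}^0=0$;
\item at $k=0$ the ratio is $(y_{j,+}-y_{j,-})/(y_{j,+}-y_{j,-})=1$.
\end{itemize}

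Uniqueness holds because a second-order recurrence is determined by two consecutive values. In the repeated-root case $y_{j,+}=y_{j,-}=y$, the limiting definition gives $G_k=(k+1)y^k$. I will check the recursion directly using $a=2y$ and $b=y^2$, or deduce it by continuity in the roots, since the recursion coefficients depend continuously on the roots.

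\textbf{Deterministic trajectory.} Denote the bracketed expression by $h_k$, so the claim is $e_k^{\det}(j)=-h_k\theta_j^*$ with
\[
h_k=\frac{(1-\eta\lambda_j-y_{j,-})y_{j,+}^k-(1-\eta\lambda_j-y_{j,+})y_{j,-}^k}{y_{j,+}-y_{j,-}}.
\]
This is a linear combination of $y_{j,+}^k$ and $y_{j,-}^k$, so it solves the homogeneous recursion. It remains to match two initial values.

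\begin{itemize}
\item At $k=0$: the numerator is $(1-\eta\lambda_j)-y_{j,-}-(1-\eta\lambda_j)+y_{j,+}=y_{j,+}-y_{j,-}$, so $h_0=1$.
\item At $k=1$: the numerator is $(1-\eta\lambda_j)(y_{j,+}-y_{j,-})-y_{j,-}y_{j,+}+y_{j,+}y_{j,-}=(1-\eta\lambda_j)(y_{j,+}-y_{j,-})$, so $h_1=1-\eta\lambda_j$.
\end{itemize}

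These initial values are consistent with the algorithm started at $\btheta_0=\btheta_{-1}=\bm 0$. In that case $\be_0^{\la}=\be_0$, so $e_1^{\det}(j)=(1-\eta\lambda_j)e_0^{\det}(j)$, which matches the first Nesterov step with zero initial momentum. Uniqueness of solutions with given $(e_0,e_1)$ then finishes the closed form.

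\textbf{Impulse-response formula.} For \eqref{eq:nes_impulse_response}, I will argue exactly as in Proposition~\ref{prop:pol_impulse_response}. The residual $\tilde e_k=e_k-e_k^{\det}$ satisfies the inhomogeneous recursion with forcing $-\eta\xi_k^{\la}(j)$ and zero initial data. Superposition of the shifted responses $G_{k-1-m}^{\nesterov}(j)$, each of which is $0$ for $k\le m$ and $1$ at $k=m+1$, gives the convolution.

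\textbf{Main obstacle.} The only point needing care is the degenerate case $y_{j,+}=y_{j,-}$, which occurs at critical damping. There both closed forms must be read as limits. I will handle this by continuity: both sides are polynomial in the coefficients $(a,b)$ for fixed $k$, so an identity valid off the discriminant-zero set extends to it.
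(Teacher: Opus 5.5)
Your proposal is correct and follows essentially the same route as the paper. The paper also writes each solution as a combination of $y_{j,\pm}^k$ and fixes the coefficients from the two initial conditions; you verify the candidate closed forms rather than solving the $2\times 2$ system, which is equivalent. It likewise treats the repeated root by continuity and obtains the convolution formula by superposing shifted impulse responses. Your checks $h_0=1$, $h_1=1-\eta\lambda_j$, and the observation that $\btheta_0=\btheta_{-1}=\bm 0$ gives $e_1^{\mathrm{det}}(j)=(1-\eta\lambda_j)e_0^{\mathrm{det}}(j)$ are exactly the consistency points needed.
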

\begin{proof}
Fix a coordinate $j$. The inhomogeneous scalar recursion is
\[
    e_{k+1}(j)
    =
    (1+\rho)(1-\eta\lambda_j)e_k(j)
    -\rho(1-\eta\lambda_j)e_{k-1}(j)
    -\eta \xi_k^{\la}(j).
\]
The impulse response for this second-order recursion is the sequence $G_k^{\nesterov}(j)$ satisfying
\[
    G_{k+1}^{\nesterov}(j)
    =
    (1+\rho)(1-\eta\lambda_j)G_k^{\nesterov}(j)
    -\rho(1-\eta\lambda_j)G_{k-1}^{\nesterov}(j),
    \qquad
    G_0^{\nesterov}(j)=1,
    \qquad
    G_{-1}^{\nesterov}(j)=0.
\]
Indeed, if an impulse $-\eta \xi_m^{\la}(j)$ is inserted at time $m$, then its contribution to
$e_{m+1}(j)$ is $-\eta\xi_m^{\la}(j)$, its contribution to $e_{m+2}(j)$ is
$-\eta(1+\rho)(1-\eta\lambda_j)\xi_m^{\la}(j)$, and subsequent contributions obey the same
homogeneous recursion. Hence the contribution at time $k$ is
$-\eta G_{k-1-m}^{\nesterov}(j)\xi_m^{\la}(j)$ for $0\le m\le k-1$. Summing over all previous
impulses gives
\[
    e_k(j)
    =
    e_k^{\mathrm{det}}(j)
    -\eta\sum_{m=0}^{k-1}G_{k-1-m}^{\nesterov}(j)\xi_m^{\la}(j),
\]
where $e_k^{\mathrm{det}}(j)$ solves the homogeneous recursion with the same initialization.

It remains to compute the closed form of $G_k^{\nesterov}(j)$. If $y_{j,+}\neq y_{j,-}$, then the general
solution of the homogeneous recurrence is a linear combination of $y_{j,+}^k$ and $y_{j,-}^k$. The initial
conditions $G_{-1}^{\nesterov}(j)=0$ and $G_0^{\nesterov}(j)=1$ give
\[
    G_k^{\nesterov}(j)
    =
    \frac{y_{j,+}^{k+1}-y_{j,-}^{k+1}}{y_{j,+}-y_{j,-}},
    \qquad k\ge0.
\]
If $y_{j,+}=y_{j,-}$, this formula is interpreted by continuity, giving
$G_k^{\nesterov}(j)=(k+1)y_{j,+}^k$. Thus the displayed formula is exactly the unique Green function of
the Nesterov coordinate recursion.

Now consider the deterministic trajectory. It satisfies
\[
    e_{k+1}^{\mathrm{det}}(j)
    =
    (1+\rho)(1-\eta\lambda_j)e_k^{\mathrm{det}}(j)
    -\rho(1-\eta\lambda_j)e_{k-1}^{\mathrm{det}}(j),
    \qquad
    e_0^{\mathrm{det}}(j)=-\theta_j^*,
    \qquad
    e_1^{\mathrm{det}}(j)=-(1-\eta\lambda_j)\theta_j^*.
\]
For $y_{j,+}\neq y_{j,-}$, write the deterministic solution as a linear combination of $y_{j,+}^k$ and
$y_{j,-}^k$. Solving the two-by-two linear system determined by the two displayed initial conditions gives
\[
    e_k^{\mathrm{det}}(j)
    =
    -\frac{(1-\eta\lambda_j-y_{j,-})y_{j,+}^{k}
    -(1-\eta\lambda_j-y_{j,+})y_{j,-}^{k}}
    {y_{j,+}-y_{j,-}}\,\theta_j^* .
\]
The repeated-root case follows by taking the same continuous limit. This proves the claim.
\end{proof}

To quantify the excess risk, we define the deterministic bias (the \textit{signal} component) and the convolution kernel as follows:
\[
    \cS_{\nesterov}(k)
    :=
    \frac12\sum_{j\ge1}\lambda_j\bigl(e_k^{\mathrm{det}}(j)\bigr)^2,
    \qquad
    \mathcal K_{\nesterov}(k)
    :=
    \sum_{j\ge1}\lambda_j^2\bigl(G_k^{\nesterov}(j)\bigr)^2.
\]
Because the stochastic gradient is evaluated at the look-ahead point, define
\[
    \be_k^{\la}:=\be_k+\rho(\be_k-\be_{k-1}),
    \qquad
    \mathcal L_k:=\frac12\|\be_k^{\la}\|_{\bH}^2.
\]

\begin{lemma}[Nesterov look-ahead risk control]
\label{lem:nes_lookahead_loss_control}
Assume the stable small-step condition \(\eta\lambda_1\le c_{\rm st}\), where \(c_{\rm st}>0\) is sufficiently small.  Then the look-ahead risk admits the coordinate expansion
\[
    \mathcal L_m
    =
    \frac12\sum_{j\ge1}\lambda_j
    \left((1+\rho)e_m(j)-\rho e_{m-1}(j)\right)^2.
\]
Moreover, the deterministic look-ahead risk is comparable to the next deterministic Nesterov risk:
\[
    \frac12\sum_{j\ge1}\lambda_j
    \left((1+\rho)e_m^{\mathrm{det}}(j)-\rho e_{m-1}^{\mathrm{det}}(j)\right)^2
    \eqsim
    \cS_{\nesterov}(m+1).
\]
For the stochastic iterates, the corresponding expectation-level control is
\[
    c\,\mathbb{E}\mathcal E_{m+1}-C\frac{\eta^2\sigma^2}{B}
    \le
    \mathbb{E}\mathcal L_m
    \le
    C\,\mathbb{E}\mathcal E_{m+1},
\]
where the constants are independent of \(m,\eta,\rho,B\).
\end{lemma}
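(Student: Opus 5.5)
The coordinate expansion is immediate. By definition $\be_m^{\la}=(1+\rho)\be_m-\rho\be_{m-1}$, and $\bH$ is diagonal, so $\|\be_m^{\la}\|_{\bH}^2=\sum_j\lambda_j((1+\rho)e_m(j)-\rho e_{m-1}(j))^2$. The remaining two claims will both come from the one-step identity
\[
e_{m+1}(j)=(1-\eta\lambda_j)e_m^{\la}(j)-\eta\xi_m^{\la}(j).
\]
Under $\eta\lambda_1\le c_{\rm st}$ the multiplier satisfies $1-c_{\rm st}\le 1-\eta\lambda_j\le 1$ uniformly in $j$. Both directions of each comparison will be read off from this multiplier bound.

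For the deterministic claim the noise term is absent, so $e_{m+1}^{\rm det}(j)=(1-\eta\lambda_j)e_m^{\la,\rm det}(j)$ exactly. Squaring, weighting by $\lambda_j/2$ and summing gives
\[
(1-c_{\rm st})^2\,\mathcal L_m^{\rm det}\le \cS_{\nesterov}(m+1)\le \mathcal L_m^{\rm det},
\]
which is the two-sided equivalence. I should check that the deterministic look-ahead at $m=0$ agrees with the stated initialization $e_1^{\rm det}=(1-\eta\lambda_j)e_0^{\rm det}$. This corresponds to $e_{-1}=e_0$, so the look-ahead error at $0$ equals $e_0$, and the identity holds there too.

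For the stochastic claim, square the one-step identity and take the conditional expectation given the past. The cross term vanishes because $\xi_m^{\la}$ is conditionally centered. This gives
\[
\EE[e_{m+1}(j)^2]=(1-\eta\lambda_j)^2\EE[e_m^{\la}(j)^2]+\eta^2\EE[\xi_m^{\la}(j)^2].
\]
By Proposition~\ref{prop:noise-geometry}, applied at the look-ahead point with $\bnu=\mathbf e_j$, we have $\EE[\xi_m^{\la}(j)^2\mid\cdot]\eqsim B^{-1}\lambda_j(\mathcal L_m+\sigma^2)$. Multiplying by $\lambda_j/2$ and summing then yields
\[
\EE\mathcal E_{m+1}=\sum_j \tfrac{\lambda_j}{2}(1-\eta\lambda_j)^2\EE e_m^{\la}(j)^2+\frac{\eta^2}{B}\,\Theta\Big(\sum_j\lambda_j^2\Big)\big(\EE\mathcal L_m+\sigma^2\big).
\]
\begin{itemize}
\item \emph{Lower bound on $\EE\mathcal L_m$.} Drop the nonnegative noise term, which gives $\EE\mathcal E_{m+1}\ge(1-c_{\rm st})^2\EE\mathcal L_m$. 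Hence $\EE\mathcal L_m\le C\,\EE\mathcal E_{m+1}$.
\item \emph{Upper bound on $\EE\mathcal E_{m+1}$.} The first term is at most $\EE\mathcal L_m$. The noise term is at most $C\frac{\eta^2}{B}\lambda_1\Tr(\bH)(\EE\mathcal L_m+\sigma^2)$, and $\eta^2\lambda_1\Tr(\bH)/B\lesssim c_{\rm st}\eta\lesssim 1$. Therefore $\EE\mathcal E_{m+1}\le C\,\EE\mathcal L_m+C\eta^2\sigma^2/B$, which rearranges to $c\,\EE\mathcal E_{m+1}-C\eta^2\sigma^2/B\le\EE\mathcal L_m$.
\end{itemize}
All constants depend only on $c_{\rm st}$, $\lambda_1$ and $\Tr(\bH)$, not on $m,\eta,\rho,B$.

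The main point needing care is the use of Proposition~\ref{prop:noise-geometry} at the look-ahead point rather than at $\btheta_m$. Its proof, via the exact formula~\eqref{app:eq:noise-direction-exact}, applies verbatim to any parameter at which the fresh mini-batch gradient is evaluated. Here that parameter is $\btheta_m^{\la}$, which is measurable with respect to the past and independent of $\cB_m$, and its excess risk is $\mathcal L_m$. I also need to justify that the cross-time and cross-coordinate terms do not interfere. Only diagonal second moments are used, and conditional centering kills the cross term, so this holds. Exchanging sums with expectations is justified by monotone convergence on finite spectral truncations, exactly as in Appendix~\ref{app:sec:stability}.
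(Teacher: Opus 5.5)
Your proposal is correct and follows essentially the same route as the paper. It uses the one-step identity $e_{m+1}(j)=(1-\eta\lambda_j)e_m^{\la}(j)-\eta\xi_m^{\la}(j)$ with $1-\eta\lambda_j\in[1-c_{\rm st},1]$ for the deterministic comparison, and conditional centering plus Proposition~\ref{prop:noise-geometry} at the look-ahead point (with the multiplicative term controlled by $\sum_j\lambda_j^2<\infty$) for the stochastic bounds. One minor slip: your first bullet actually proves the upper bound $\EE\mathcal L_m\le C\,\EE\mathcal E_{m+1}$, not a lower bound on $\EE\mathcal L_m$ as labeled.
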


\begin{proof}

The coordinate expansion follows directly from \(\be_m^{\la}=(1+\rho)\be_m-\rho\be_{m-1}\) and the definition of the \(\operatorname{\bm H}\)-norm.

For the deterministic comparison, the deterministic Nesterov update gives, for each coordinate \(j\),
\[
    e_{m+1}^{\mathrm{det}}(j)
    =
    (1-\eta\lambda_j)
    \left((1+\rho)e_m^{\mathrm{det}}(j)-\rho e_{m-1}^{\mathrm{det}}(j)\right).
\]
Since \(0<1-\eta\lambda_j\le1\) and \(1-\eta\lambda_j\ge1-c_{\rm st}\), we have
\[
    \left((1+\rho)e_m^{\mathrm{det}}(j)-\rho e_{m-1}^{\mathrm{det}}(j)\right)^2
    \eqsim
    \bigl(e_{m+1}^{\mathrm{det}}(j)\bigr)^2.
\]
Multiplying by \(\lambda_j\) and summing over \(j\ge1\) yields the deterministic claim.

For the stochastic comparison, the actual Nesterov update in coordinate \(j\) is
\[
    e_{m+1}(j)
    =
    (1-\eta\lambda_j)
    \left((1+\rho)e_m(j)-\rho e_{m-1}(j)\right)
    -\eta\xi_m^{\la}(j).
\]
Conditioning on the past, the noise is mean zero at the look-ahead point.  Therefore the mixed term vanishes and
\[
\begin{aligned}
    \mathbb{E}\mathcal E_{m+1}
    \eqsim{}&
    \sum_{j\ge1}\lambda_j(1-\eta\lambda_j)^2
    \mathbb{E}\left((1+\rho)e_m(j)-\rho e_{m-1}(j)\right)^2 \\
    &+
    \eta^2\sum_{j\ge1}\lambda_j\mathbb{E}\bigl(\xi_m^{\la}(j)\bigr)^2 .
\end{aligned}
\]
The first term is comparable to \(\mathbb{E}\mathcal L_m\), because \(1-\eta\lambda_j\) is bounded above and below by positive constants.  For the second term, Proposition~\ref{prop:noise-geometry} gives
\[
    \sum_{j\ge1}\lambda_j\mathbb{E}\bigl(\xi_m^{\la}(j)\bigr)^2
    \lesssim
    \frac{1}{B}\bigl(\sigma^2+\mathbb{E}\mathcal L_m\bigr)
    \sum_{j\ge1}\lambda_j^2.
\]

Since \(\sum_{j\ge1}\lambda_j^2<\infty\) and \(c_{\rm st}\) is chosen small, the term proportional to \(\eta^2\mathbb{E}\mathcal L_m/B\) is absorbed into the comparable first term.  Thus
\[
    \mathbb{E}\mathcal E_{m+1}
    \eqsim
    \mathbb{E}\mathcal L_m
    +O\left(\frac{\eta^2\sigma^2}{B}\right),
\]
which gives
\[
    c\,\mathbb{E}\mathcal E_{m+1}-C\frac{\eta^2\sigma^2}{B}
    \le
    \mathbb{E}\mathcal L_m
    \le
    C\,\mathbb{E}\mathcal E_{m+1}.
\]
This proves the lemma.
\end{proof}

\begin{proposition}[Nesterov risk recursion]
\label{prop:nes_risk_recursion}
Under the covariance relation in Proposition~\ref{prop:noise-geometry}, the expected excess risk satisfies the two-sided
recursion
\begin{equation}
\label{eq:nes_risk_recursion_form}
    \mathbb{E}\mathcal E_k
    \eqsim
    \cS_{\nesterov}(k)
    +
    \frac{\eta^2\sigma^2}{B}\sum_{\tau=0}^{k-1}\mathcal K_{\nesterov}(\tau)
    +
    \frac{\eta^2}{B}\sum_{\tau=0}^{k-1}\mathcal K_{\nesterov}(\tau)
    \mathbb{E}\mathcal L_{k-1-\tau}.
\end{equation}
Moreover, Lemma~\ref{lem:nes_lookahead_loss_control} gives the required control of the look-ahead risk by the
next-step risk.  In particular, under the stable small-step condition,
\[
    c\,\mathbb{E}\mathcal E_{m+1}-C\frac{\eta^2\sigma^2}{B}
    \le
    \mathbb{E}\mathcal L_m
    \le
    C\,\mathbb{E}\mathcal E_{m+1}.
\]
\end{proposition}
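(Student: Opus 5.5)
We follow the same route as Proposition~\ref{prop:pol_risk_recursion}, replacing the Polyak Green function by $G^{\nesterov}_k(j)$ and the noise at $\btheta_k$ by the noise at the look-ahead point. Substituting the impulse-response representation \eqref{eq:nes_impulse_response} into $\cE(\btheta_k)=\frac12\sum_j\lambda_j e_k(j)^2$ and squaring coordinate-wise gives three kinds of terms: the deterministic term $e_k^{\mathrm{det}}(j)^2$, cross terms between the deterministic trajectory and the noise, and cross-time noise products.

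We then take expectations. The noise $\xi^{\la}_m(j)$ is conditionally mean zero given $\mathcal F_m$ (the history up to $\btheta_m,\btheta_{m-1}$, which determines $\btheta^{\la}_m$). Hence every cross term $e^{\mathrm{det}}_k(j)\xi^{\la}_m(j)$ vanishes, since $e^{\mathrm{det}}$ is deterministic. Likewise $\xi^{\la}_m(j)\xi^{\la}_{m'}(j)$ with $m<m'$ vanishes after conditioning on $\mathcal F_{m'}$. This leaves
\[
\EE[e_k(j)^2]=e_k^{\mathrm{det}}(j)^2+\eta^2\sum_{m=0}^{k-1}G^{\nesterov}_{k-1-m}(j)^2\,\EE[\xi^{\la}_m(j)^2].
\]

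Next we apply Proposition~\ref{prop:noise-geometry} at the point $\btheta^{\la}_m$ with $\bnu=\mathbf e_j$. The exact identity \eqref{app:eq:noise-direction-exact} holds for any $\btheta$, and $\cE(\btheta^{\la}_m)=\mathcal L_m$, so
\[
\EE[\xi^{\la}_m(j)^2]\eqsim \frac{1}{B}\,\lambda_j\bigl(\sigma^2+\EE\mathcal L_m\bigr),
\]
with constants uniform in $j,m$. We multiply by $\lambda_j/2$ and sum over $j$, exchanging sums by nonnegativity (on finite truncations, then by monotone convergence). This identifies $\cS_{\nesterov}(k)$ and $\mathcal K_{\nesterov}(k-1-m)$. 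The change of variables $\tau=k-1-m$ then yields \eqref{eq:nes_risk_recursion_form}. Because all terms are nonnegative, the two-sided comparison is inherited directly from the two-sided constants in the noise geometry.

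The second claim is exactly the stochastic part of Lemma~\ref{lem:nes_lookahead_loss_control}, so we simply invoke it under the stable small-step condition $\eta\lambda_1\le c_{\rm st}$.

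The main point needing care is the conditioning step. $\xi^{\la}_m$ depends on both $\btheta_m$ and $\btheta_{m-1}$, and the filtration must be chosen so that the look-ahead point is measurable while the fresh minibatch $\cB_m$ is independent of it. With fresh minibatches in the one-pass regime this holds, so the martingale-difference structure, and hence the vanishing of all cross terms, goes through as in the Polyak case.
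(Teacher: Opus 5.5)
Your proposal is correct and follows the paper's proof essentially step for step. Like the paper, you substitute the impulse-response representation and kill all mixed and cross-time terms via the conditional mean-zero property of the look-ahead noise under fresh minibatches. You then apply the noise--curvature alignment at $\theta^{\la}_m$ with $\nu=\mathbf e_j$ to get $\mathbb{E}(\xi^{\la}_m(j))^2\eqsim B^{-1}\lambda_j(\sigma^2+\mathbb{E}\mathcal L_m)$, sum over $j$, reindex by $\tau=k-1-m$, and cite Lemma~\ref{lem:nes_lookahead_loss_control} for the second claim. Your explicit filtration (look-ahead point measurable, $\mathcal B_m$ independent of it) only makes precise what the paper states informally.
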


\begin{proof}
Substitute the Nesterov impulse-response representation~\eqref{eq:nes_impulse_response} into the coordinate-wise excess risk.  For every coordinate
\(j\),
\[
e_k(j)
    =
    e_k^{\mathrm{det}}(j)
    -\eta\sum_{m=0}^{k-1}G_{k-1-m}^{\nesterov}(j)\xi_m^{\la}(j).
\]
The noise \(\xi_m^{\la}(j)\) is conditionally mean zero given the past and the look-ahead point
\(\be_m^{\la}\), and the fresh mini-batches are independent across time.  Therefore all mixed
deterministic--stochastic and cross-time stochastic terms vanish after taking expectation.  Hence
\[
    \mathbb{E}[e_k(j)^2]
    =
    \bigl(e_k^{\mathrm{det}}(j)\bigr)^2
    +
    \eta^2\sum_{m=0}^{k-1}
    \bigl(G_{k-1-m}^{\nesterov}(j)\bigr)^2
    \mathbb{E}\bigl(\xi_m^{\la}(j)\bigr)^2,
\]
up to the same universal constants in the covariance comparison of Proposition~\ref{prop:noise-geometry}.  Multiplying by
\(\lambda_j\) and summing over \(j\ge1\) gives
\[
\begin{aligned}
    \mathbb{E}\mathcal E_k
    \eqsim{}&
    \sum_{j\ge1}\lambda_j\bigl(e_k^{\mathrm{det}}(j)\bigr)^2
    +\eta^2\sum_{m=0}^{k-1}\sum_{j\ge1}
    \lambda_j
    \bigl(G_{k-1-m}^{\nesterov}(j)\bigr)^2
    \mathbb{E}\bigl(\xi_m^{\la}(j)\bigr)^2 .
\end{aligned}
\]
Since Nesterov evaluates the stochastic gradient at the look-ahead point \(\be_m^{\la}\),
Proposition~\ref{prop:noise-geometry} gives the coordinate-wise variance relation
\[
    \mathbb{E}\bigl(\xi_m^{\la}(j)\bigr)^2
    \eqsim
    \frac{1}{B}\lambda_j\bigl(\sigma^2+\mathbb{E}\mathcal L_m\bigr).
\]
Substituting this into the previous display yields
\[
\begin{aligned}
    \mathbb{E}\mathcal E_k
    \eqsim{}&
    \sum_{j\ge1}\lambda_j\bigl(e_k^{\mathrm{det}}(j)\bigr)^2
    +\frac{\eta^2}{B}\sum_{m=0}^{k-1}
    \bigl(\sigma^2+\mathbb{E}\mathcal L_m\bigr)
    \sum_{j\ge1}\lambda_j^2
    \bigl(G_{k-1-m}^{\nesterov}(j)\bigr)^2 .
\end{aligned}
\]
Using the definitions of \(\cS_{\nesterov}\) and \(\mathcal K_{\nesterov}\), and then changing
variables \(\tau=k-1-m\), we obtain
\[
    \mathbb{E}\mathcal E_k
    \eqsim
    \cS_{\nesterov}(k)
    +
    \frac{\eta^2\sigma^2}{B}\sum_{\tau=0}^{k-1}\mathcal K_{\nesterov}(\tau)
    +
    \frac{\eta^2}{B}\sum_{\tau=0}^{k-1}\mathcal K_{\nesterov}(\tau)
    \mathbb{E}\mathcal L_{k-1-\tau}.
\]
This proves the claimed two-sided recursion.

It remains to control the look-ahead risk.  This is exactly Lemma~\ref{lem:nes_lookahead_loss_control},
which expands \(\mathcal L_m\) in coordinates and then uses the Nesterov update from step \(m\) to step
\(m+1\) to compare the look-ahead risk with the next-step risk.
\end{proof}

\subsection{Overdamped and underdamped modes decomposition}
\label{app:nes_decomp_modes}

Since both the deterministic error signal $e_k^{\mathrm{det}}(j)$ and the discrete Green function $G_k^{\nesterov}(j)$ obey the same homogeneous difference equation, their behavior is dictated by the same characteristic roots. Depending on the sign of the discriminant, these roots transition from real roots to a complex-conjugate pair. The following proposition establishes the exact Nesterov damping threshold, defines the crossover index $J_{\max}$, and partitions the spectrum into the two global regimes.

\begin{proposition}[Exact spectral decomposition and crossover boundary]
\label{prop:nes_mode_decomposition}
For each coordinate $j$, the characteristic roots of the Nesterov recursion are
\[
    y_{j,\pm}
    =
    \frac{(1+\rho)(1-\eta\lambda_j)\pm\sqrt{\Delta_j^{\nesterov}}}{2},
\]
where
\[
    \Delta_j^{\nesterov}
    :=
    (1+\rho)^2(1-\eta\lambda_j)^2
    -4\rho(1-\eta\lambda_j).
\]
In the stable small-step range $0<\eta\lambda_j<1$, the critical transition $\Delta_j^{\nesterov}=0$ corresponds exactly to
\[
    \eta\lambda_j
    =
    \frac{(1-\rho)^2}{(1+\rho)^2},
\]
which is asymptotically equivalent to $\eta\lambda_j\eqsim(1-\rho)^2$. Thus the individual eigen-directions are classified as follows:
\begin{enumerate}
    \item \textbf{Overdamped Modes} $\bigl(\eta\lambda_j<(1-\rho)^2/(1+\rho)^2\bigr)$: the roots are real.
    \item \textbf{Underdamped Modes} $\bigl(\eta\lambda_j>(1-\rho)^2/(1+\rho)^2\bigr)$: the roots form a complex-conjugate pair.
\end{enumerate}
In the mixed-damping regime, the boundary index $J_{\max}$ separating the underdamped modes from the overdamped modes is defined by the crossover relation $\eta\lambda_{J_{\max}}\eqsim(1-\rho)^2$:
\begin{equation}
\label{eq:J_max_nes}
    J_{\max}
    \eqsim
    \left(\frac{\eta}{(1-\rho)^2}\right)^{1/\beta}.
\end{equation}
Consequently, based on the maximum eigenvalue $\lambda_{\max}$, Nesterov operates in one of two global regimes:
\begin{itemize}
    \item \textbf{Fully Overdamped Regime} $\bigl(\eta\lambda_{\max}\lesssim(1-\rho)^2\bigr)$: all spectral modes are overdamped.
    \item \textbf{Mixed-Damping Regime} $\bigl(\eta\lambda_{\max}\gtrsim(1-\rho)^2\bigr)$: the high-curvature modes $j\le J_{\max}$ are underdamped, whereas the flat modes $j>J_{\max}$ are overdamped.
\end{itemize}
\end{proposition}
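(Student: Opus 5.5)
The proof mirrors Proposition~\ref{prop:pol_mode_decomposition}, with the Polyak characteristic polynomial replaced by the Nesterov one $y^2-(1+\rho)(1-\eta\lambda_j)y+\rho(1-\eta\lambda_j)=0$. The root formula $y_{j,\pm}=\tfrac12\bigl[(1+\rho)(1-\eta\lambda_j)\pm\sqrt{\Delta_j^{\nesterov}}\bigr]$ follows immediately from the quadratic formula. The substantive steps are three: locate the zero of the discriminant, convert it into a crossover index using the power law, and sort the spectrum by comparing $\eta\lambda_{\max}$ to the threshold.

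\textbf{Step 1: zero set of the discriminant.} Write $x=\eta\lambda_j\in(0,1)$ and $d=1-x>0$. Since $\Delta_j^{\nesterov}=d\bigl[(1+\rho)^2d-4\rho\bigr]$ and $d>0$, the sign of $\Delta_j^{\nesterov}$ equals the sign of the linear factor $(1+\rho)^2(1-x)-4\rho$. This factor vanishes exactly at
\[
x=1-\frac{4\rho}{(1+\rho)^2}=\frac{(1-\rho)^2}{(1+\rho)^2}.
\]
It is strictly decreasing in $x$, so $\Delta_j^{\nesterov}>0$ (real roots, overdamped) for $x$ below this value and $\Delta_j^{\nesterov}<0$ (complex-conjugate roots, underdamped) above it. The only point needing care is to restrict to $0<x<1$. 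There the factorization has a single relevant sign change, in contrast with the Polyak case, where the second branch $(1+\sqrt\rho)^2$ had to be discarded. Because $1\le(1+\rho)^2\le4$, the threshold satisfies $\eta\lambda_j\eqsim(1-\rho)^2$ with absolute constants.

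\textbf{Step 2: crossover index.} Insert $\lambda_j\eqsim j^{-\beta}$ from Assumption~\ref{ass:power-law} into $\eta\lambda_{J_{\max}}\eqsim(1-\rho)^2$ and solve, which gives $J_{\max}\eqsim(\eta/(1-\rho)^2)^{1/\beta}$. Since $\lambda_j$ is monotone, every $j\le J_{\max}$ lies above the threshold (underdamped) and every $j>J_{\max}$ lies below it (overdamped), up to the constant-factor fuzziness implicit in $\eqsim$.

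\textbf{Step 3: global regimes.} The eigenvalues are ordered, so $\lambda_{\max}$ decides the regime. If $\eta\lambda_{\max}\lesssim(1-\rho)^2$, with a small enough constant, then every mode is below the threshold and the system is fully overdamped. Otherwise the set of underdamped modes is nonempty, namely the indices $j\le J_{\max}$. The overdamped tail is always nonempty because $\lambda_j\to0$, so the system is in the mixed-damping regime.

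No real obstacle is expected here. The only subtle point is the boundary bookkeeping in Step 1: one must show that the factor $d$ never changes sign on the admissible range and that the threshold $(1-\rho)^2/(1+\rho)^2$ lies inside $(0,1)$, so that exactly one transition occurs.
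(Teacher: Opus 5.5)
Your proposal is correct and follows the paper's proof essentially step for step. You factor out $1-\eta\lambda_j>0$ to reduce the sign of $\Delta_j^{\nesterov}$ to that of $(1+\rho)^2(1-\eta\lambda_j)-4\rho$, solve for the threshold $(1-\rho)^2/(1+\rho)^2$, insert $\lambda_j\eqsim j^{-\beta}$ to obtain $J_{\max}$, and use the ordering of the spectrum to read off the two global regimes; your extra remarks (the overdamped tail is nonempty since $\lambda_j\to0$, and the threshold lies in $(0,1)$ for $\rho\in(0,1)$) are harmless refinements of the same argument.
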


\begin{proof}
In the stable small-step range, $1-\eta\lambda_j\in(0,1)$. The condition $\Delta_j^{\nesterov}<0$ is
\[
    (1+\rho)^2(1-\eta\lambda_j)^2
    -4\rho(1-\eta\lambda_j)<0.
\]
Since $1-\eta\lambda_j>0$, this is equivalent to
$1-\eta\lambda_j<\frac{4\rho}{(1+\rho)^2},$
and hence
$\eta\lambda_j>1-\frac{4\rho}{(1+\rho)^2}=\frac{(1-\rho)^2}{(1+\rho)^2}.$
Therefore the equality case gives the exact damping boundary. Since $(1+\rho)^2\eqsim1$ for $0\le\rho<1$, this boundary has the scaling form $\eta\lambda_j\eqsim(1-\rho)^2$.

To locate the crossover index in the mixed-damping regime, use $\lambda_j\eqsim j^{-\beta}$ and impose
$\eta\lambda_{J_{\max}}\eqsim(1-\rho)^2.$
Then
$\eta J_{\max}^{-\beta}\eqsim(1-\rho)^2,$
which gives
\[
    J_{\max}
    \eqsim
    \left(\frac{\eta}{(1-\rho)^2}\right)^{1/\beta}.
\]
Because the eigenvalues are non-increasing, the modes $j\le J_{\max}$ lie above the damping threshold and are underdamped, while the modes $j>J_{\max}$ lie below the threshold and are overdamped. Examining whether this crossover occurs below the leading eigenvalue yields the two global regimes stated above.
\end{proof}

\subsection{Label noise calculation}
\label{app:nes_preparation}

The convolution estimates in Appendix~\ref{app:pol_preparation} will be used throughout this section. It remains to identify the Nesterov label-noise floor and to replace the look-ahead risk in the multiplicative-noise recursion.

\begin{lemma}[Nesterov label-noise contribution]
\label{lem:nes_label_noise}
Assume the stable small-step condition \(\eta\lambda_1\le c_{\rm st}\), where \(c_{\rm st}>0\) is a
sufficiently small universal constant.  There exist constants \(C,c_-,c_+>0\) such that if
\[
    k\ge
    C\max\left\{\frac1{1-\rho},\frac{1-\rho}{\eta}\right\},
\]
then
\[
    c_-\frac{\sigma^2}{B}
    \min\left\{
        \left(\frac{\eta}{1-\rho}\right)^{\frac1\beta},
        \frac{\eta}{1-\rho}
    \right\}
    \le
    \frac{\eta^2\sigma^2}{B}\sum_{\tau=0}^{k-1}\mathcal K_{\nesterov}(\tau)
    \le
    c_+\frac{\sigma^2}{B}
    \min\left\{
        \left(\frac{\eta}{1-\rho}\right)^{\frac1\beta},
        \frac{\eta}{1-\rho}
    \right\}.
\]
Hence the Nesterov label-noise floor is
\[
    \frac{\eta^2\sigma^2}{B}\sum_{\tau=0}^{k-1}\mathcal K_{\nesterov}(\tau)
    \eqsim
    \frac{\sigma^2}{B}
    \min\left\{
        \left(\frac{\eta}{1-\rho}\right)^{\frac1\beta},
        \frac{\eta}{1-\rho}
    \right\}.
\]
\end{lemma}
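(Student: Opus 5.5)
We follow the route of Lemma~\ref{lem:pol_label_noise}: first compute the saturated coordinate Green energy $S_j^{\nesterov}:=\sum_{\tau\ge0}G_\tau^{\nesterov}(j)^2$ exactly, then pass to the finite-time sum, and finally sum over the power-law spectrum. Write $x_j:=\eta\lambda_j$, $q_j:=1-x_j$, $A_j:=(1+\rho)q_j$, $R_j:=\rho q_j$, so that $G_{k+1}=A_jG_k-R_jG_{k-1}$ with $G_0=1$, $G_{-1}=0$. As in the Polyak proof, set $C_j:=\sum_\tau G_\tau G_{\tau-1}$. Multiplying the recursion by $G_k$ and summing gives $C_j=A_jS_j/(1+R_j)$. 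Squaring and summing gives $S_j-1=(A_j^2+R_j^2)S_j-2A_jR_jC_j$. Together these yield the standard AR(2) variance formula
\[
S_j^{\nesterov}=\frac{1+R_j}{(1-R_j)\bigl[(1+R_j)^2-A_j^2\bigr]}.
\]
Now expand in the small parameters. We have $1-R_j=1-\rho+\rho x_j\eqsim (1-\rho)+x_j$. Also $(1+R_j)^2-A_j^2=(1+R_j-A_j)(1+R_j+A_j)$ with $1+R_j-A_j=x_j$ and $1+R_j+A_j\eqsim 1$ under $x_j\le c_{\rm st}$. Hence
\[
\eta^2\lambda_j^2 S_j^{\nesterov}\eqsim \frac{\eta\lambda_j}{(1-\rho)+\eta\lambda_j}.
\]
This is the curvature-adaptive damping in quantitative form: sharp modes contribute $O(1)$ rather than $\eta\lambda_j/(1-\rho)$.

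Next we handle the finite time. The upper bound uses $\sum_{\tau<k}\le\sum_{\tau\ge0}$. For the lower bound we need a tail estimate analogous to \eqref{eq:pol_green_tail_bound}, namely
\[
\sum_{\tau\ge k}G_\tau^2\le C_G e^{-c_Gk\min(1-\rho+x_j,\;x_j/(1-\rho))}S_j .
\]
This follows from the root moduli. Underdamped roots have modulus $\sqrt{R_j}$, with $1-R_j\gtrsim 1-\rho$. Overdamped slow roots satisfy $1-y_{j,+}\eqsim x_j/(1-\rho)$, by the same root-bracketing argument as in Proposition~\ref{prop:pol_deterministic_bias0}. Unlike the Polyak proof, however, we cannot restrict to a fixed finite block $J_*$, since for $\eta\gg1-\rho$ the relevant mass lives near the index $J_1:=(\eta/(1-\rho))^{1/\beta}$. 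Instead we restrict the lower bound to modes with $x_j\gtrsim 1-\rho$, or more generally with $x_j/(1-\rho)\gtrsim 1/k$. By the condition $k\ge C\max\{(1-\rho)^{-1},(1-\rho)/\eta\}$, every mode with $\lambda_j\ge\lambda_{J_\star}$ is saturated, where we take $J_\star$ to be a constant when $\eta\lesssim 1-\rho$ and $J_\star\eqsim J_1$ otherwise. I expect this step to be the main obstacle. Controlling the coefficient of the slow exponential for Nesterov needs care, although the argument mirrors \eqref{eq:pol_sgd_coeff_bounds}. The cleanest approach is to prove the Green-function decay bound uniformly in both damping regimes and take $C$ large.

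Finally, we do the spectral sum. With $\lambda_j\eqsim j^{-\beta}$ and $t:=\eta/(1-\rho)$, we have $\sum_j \frac{t\lambda_j}{1+t\lambda_j}\eqsim t$ if $t\lesssim1$, since all terms are $\eqsim t\lambda_j$ and $\sum\lambda_j<\infty$. If $t\gtrsim1$, the sum is $\eqsim t^{1/\beta}$: the $J_1$ terms with $t\lambda_j\gtrsim1$ each contribute $\eqsim1$, and the tail contributes $t\sum_{j>J_1}j^{-\beta}\eqsim tJ_1^{1-\beta}=t^{1/\beta}$. Both the upper bound (full sum) and the lower bound (restricted to the saturated modes $j\le J_\star$ together with a constant fraction of the tail mass) are therefore of order $\min\{t,t^{1/\beta}\}$. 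Multiplying by $\sigma^2/B$ gives the claim.
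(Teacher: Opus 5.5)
Your proposal is correct and follows essentially the same route as the paper. The shared steps are the exact AR(2) energy identity with the factorization $(1+R_j)^2-A_j^2=x_j\bigl(1+(1+2\rho)q_j\bigr)$, the spectral sum giving $\min\{t,t^{1/\beta}\}$ with $t=\eta/(1-\rho)$, and a finite-time lower bound obtained by restricting to a fixed leading block when $\eta\lesssim 1-\rho$ and to the block $\lambda_j\gtrsim(1-\rho)/\eta$ otherwise; your explicit tail estimate with rate $\min\{1-\rho+x_j,\,x_j/(1-\rho)\}$ is a slightly more precise version of the relaxation-time step that the paper only asserts.
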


\begin{proof}
We first compute the saturated Green energy in one eigendirection.  The Nesterov Green function satisfies
\[
    G_{k+1}^{\nesterov}(j)
    =
    (1+\rho)(1-\eta\lambda_j)G_k^{\nesterov}(j)
    -\rho(1-\eta\lambda_j)G_{k-1}^{\nesterov}(j),
    \qquad
    G_0^{\nesterov}(j)=1,
    \qquad
    G_{-1}^{\nesterov}(j)=0.
\]
Multiplying this recursion by \(G_k^{\nesterov}(j)\), summing over \(k\ge0\), and then squaring the
recursion and summing over \(k\ge0\) gives the standard second-order energy identity
\[
    \sum_{k=0}^{\infty}\bigl(G_k^{\nesterov}(j)\bigr)^2
    =
    \frac{1+\rho(1-\eta\lambda_j)}
    {\bigl(1-\rho(1-\eta\lambda_j)\bigr)
    \left(\bigl(1+\rho(1-\eta\lambda_j)\bigr)^2
    -(1+\rho)^2(1-\eta\lambda_j)^2\right)}.
\]
The second factor in the denominator simplifies as
\[
    \bigl(1+\rho(1-\eta\lambda_j)\bigr)^2
    -(1+\rho)^2(1-\eta\lambda_j)^2
    =
    \eta\lambda_j\left(1+(1+2\rho)(1-\eta\lambda_j)\right).
\]
Under \(\eta\lambda_1\le c_{\rm st}\), the remaining factors are bounded above and below by constants
except for \(1-\rho+\eta\lambda_j\) and \(\eta\lambda_j\).  Therefore
\[
    \sum_{k=0}^{\infty}\bigl(G_k^{\nesterov}(j)\bigr)^2
    \eqsim
    \frac{1}{\eta\lambda_j(1-\rho+\eta\lambda_j)}.
\]
Consequently the saturated label-noise contribution is
\[
\begin{aligned}
    \frac{\eta^2\sigma^2}{B}
    \sum_{j\ge1}\lambda_j^2
    \sum_{k=0}^{\infty}\bigl(G_k^{\nesterov}(j)\bigr)^2
    &\eqsim
    \frac{\eta\sigma^2}{B}
    \sum_{j\ge1}\frac{\lambda_j}{1-\rho+\eta\lambda_j}  \\
    &=
    \frac{\sigma^2}{B}
    \sum_{j\ge1}
    \frac{\lambda_j}{\lambda_j+\frac{1-\rho}{\eta}} .
\end{aligned}
\]
Here Nesterov differs from Polyak: the effective
root product is \(\rho(1-\eta\lambda_j)\), so the saturated spectral weight contains
\(\lambda_j/(\lambda_j+(1-\rho)/\eta)\), rather than simply \(\lambda_j\).

We next estimate the spectral sum.  If \(\eta/(1-\rho)\le1\), then \((1-\rho)/\eta\gtrsim1\), and hence
\[
    \sum_{j\ge1}
    \frac{\lambda_j}{\lambda_j+\frac{1-\rho}{\eta}}
    \eqsim
    \frac{\eta}{1-\rho}\sum_{j\ge1}\lambda_j
    \eqsim
    \frac{\eta}{1-\rho}.
\]
If \(\eta/(1-\rho)\ge1\), then the spectral cutoff is
$\lambda_j\eqsim\frac{1-\rho}{\eta}.$
Since \(\lambda_j\eqsim j^{-\beta}\), the number of indices above this cutoff is
$\left(\frac{\eta}{1-\rho}\right)^{\frac1\beta}.$
The modes above the cutoff each contribute a constant amount to the summand, while the tail below the
cutoff contributes the same order:
\[
    \sum_{\lambda_j<\frac{1-\rho}{\eta}}
    \frac{\lambda_j}{\frac{1-\rho}{\eta}}
    \eqsim
    \left(\frac{\eta}{1-\rho}\right)^{\frac1\beta}.
\]
Thus, in all cases,
\[
    \sum_{j\ge1}
    \frac{\lambda_j}{\lambda_j+\frac{1-\rho}{\eta}}
    \eqsim
    \min\left\{
        \left(\frac{\eta}{1-\rho}\right)^{\frac1\beta},
        \frac{\eta}{1-\rho}
    \right\}.
\]

It remains to pass from the saturated energy to the finite-time sum.  The root formula for
\(G_k^{\nesterov}(j)\) gives a uniform tail estimate: after increasing \(C\) if necessary, the time
condition
\[
    k\ge
    C\max\left\{\frac1{1-\rho},\frac{1-\rho}{\eta}\right\}
\]
ensures that the finite partial sum captures a fixed positive fraction of the saturated Green energy on the
spectral region responsible for the lower bounds above.  When \(\eta/(1-\rho)\le1\), this region may be
taken to be a fixed finite block of leading eigendirections, whose relaxation time is of order
\((1-\rho)/\eta\).  When \(\eta/(1-\rho)\ge1\), it may be taken to be the block
\(\lambda_j\gtrsim(1-\rho)/\eta\), whose relaxation time is at most of order \((1-\rho)^{-1}\).
Therefore
\[
    \sum_{\tau=0}^{k-1}\mathcal K_{\nesterov}(\tau)
    \eqsim
    \sum_{j\ge1}\lambda_j^2
    \sum_{\tau=0}^{\infty}\bigl(G_\tau^{\nesterov}(j)\bigr)^2
\]
up to constants for the purpose of the two-sided label-noise estimate.  Combining this finite-time
comparison with the saturated computation above proves the lemma.
\end{proof}

\subsection{Uniform boundedness of expected risk}
\label{app:nes_uniform_boundedness}

As long as the additive label noise is strictly positive ($\sigma > 0$), we can prove that the expected risk is uniformly bounded, which allows the multiplicative gradient noise to be completely absorbed by the additive label noise floor.

\begin{lemma}[Uniform boundedness of expected risk]
\label{lem:nes_loss_bounded}
Suppose $\sigma > 0$. Under the stability condition $\eta \lesssim 1$ and $\eta \le c B^\beta (1-\rho)$ for a sufficiently small universal constant $c > 0$, the expected excess risk of Nesterov is uniformly bounded for all $k \ge 0$:
\[
    \EE[\cE(\btheta_k)] \le M,
\]
where $M > 0$ is a constant independent of $k$, $\eta$, $B$, and $\rho$.
\end{lemma}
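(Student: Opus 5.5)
We will mirror the induction of Lemma~\ref{lem:pol_loss_bounded}, now driven by the Nesterov risk recursion of Proposition~\ref{prop:nes_risk_recursion}. That recursion supplies constants $C_1,C_2,C_3$ with
\[
\EE\cE_k\le C_1\cS_{\nesterov}(k)+C_2\frac{\eta^2\sigma^2}{B}\sum_{\tau=0}^{k-1}\mathcal K_{\nesterov}(\tau)+C_3\frac{\eta^2}{B}\sum_{\tau=0}^{k-1}\mathcal K_{\nesterov}(\tau)\,\EE\mathcal L_{k-1-\tau}.
\]
Each of the three terms will be bounded uniformly in $k$.

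\textbf{Signal and label-noise terms.} The signal term is bounded by $M_1:=C\,\cE(\btheta_0)$, using the deterministic multiplier bounds: overdamped coordinates have $|h_k(j)|\lesssim1$, and underdamped coordinates obey the analogue of Lemma~\ref{lem:pol_underdamped_transient_properties} with envelope $[\rho(1-\eta\lambda_j)]^{k/2}$. For the label-noise term, we use the saturated Green energy from the proof of Lemma~\ref{lem:nes_label_noise}, which is valid for all finite $k$ because the partial sums are monotone:
\[
\frac{\eta^2}{B}\sum_{\tau\ge0}\mathcal K_{\nesterov}(\tau)\eqsim\frac1B\min\Bigl\{\eta_\rho^{1/\beta},\eta_\rho\Bigr\}=:\frac{\kappa}{B}.
\]
The key observation is that the hypothesis $\eta\le cB^\beta(1-\rho)$ means $\eta_\rho\le cB^\beta$, so $\eta_\rho^{1/\beta}\le c^{1/\beta}B$ and hence $\kappa/B\le c^{1/\beta}$. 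The label-noise term is therefore at most $M_2:=C_2C_K\sigma^2c^{1/\beta}$.

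\textbf{Multiplicative term.} This term involves the look-ahead risk $\EE\mathcal L_m$ rather than $\EE\cE_m$, which complicates the induction. We will use Lemma~\ref{lem:nes_lookahead_loss_control}, $\EE\mathcal L_m\le C\,\EE\cE_{m+1}$, so the term at step $k$ involves $\EE\cE_{k-\tau}$ for $\tau=0,\dots,k-1$, including $\EE\cE_k$ itself at $\tau=0$. A naive induction hypothesis on $m<k$ therefore does not close directly. There are two options.

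\begin{itemize}
\item Induct on the look-ahead risk instead. The elementary bound $\EE\mathcal L_m\le 2(1+\rho)^2\EE\cE_m+2\rho^2\EE\cE_{m-1}$ involves only past iterates, so we may assume $\EE\cE_m\le M$ for all $m<k$, giving $\EE\mathcal L_m\le 10M$ for $m\le k-1$.
\item Alternatively, work with $\sup_{m\le k}\EE\cE_m$ and absorb the self-referential term.
\end{itemize}

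Taking the first option, the multiplicative term is at most $10C_3C_K c^{1/\beta}M$. Choosing $c$ small makes this at most $M/2$, and setting $M=2(C_1M_1+M_2)$ closes the induction. The base cases $k=0,1$ hold directly, because $\EE\cE_1$ involves only $\btheta_0$ and a single noise injection.

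\textbf{Main obstacle.} The hardest step is making the constant $C_K$ in the saturated Green-energy bound genuinely uniform in $\rho$ and $\eta$ under only $\eta\lesssim1$. The energy identity has denominator $\bigl(1-\rho(1-\eta\lambda_j)\bigr)\,\eta\lambda_j\bigl(1+(1+2\rho)(1-\eta\lambda_j)\bigr)$, so we must check that the last factor stays bounded below throughout the stable range. The local stability condition $\Delta_{\nesterov}(\eta\lambda_1)>0$ from Theorem~\ref{app:thm:nesterov-stability} should provide this margin, giving $\sum_\tau\mathcal K_{\nesterov}(\tau)\le C_K\,\kappa/\eta^2$ with $C_K$ independent of $k,\eta,B,\rho$.
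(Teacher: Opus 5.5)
Your proof is correct. It has the same skeleton as the paper's: induction on $k$ from the upper half of the Nesterov risk recursion, the signal term bounded by the initial risk, and the kernel mass controlled through $\eta_\rho^{1/\beta}/B\le c^{1/\beta}$.

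\textbf{Where you differ: the look-ahead risk.} The paper first replaces $\mathbb{E}\mathcal{L}_{k-1-\tau}$ by $C\,\mathbb{E}\mathcal{E}_{k-\tau}$, using the forward comparison of Lemma~\ref{lem:nes_lookahead_loss_control}. This makes the $\tau=0$ term contain the current risk $\mathbb{E}\mathcal{E}_k$. The paper absorbs it into the left-hand side by imposing the extra smallness condition $C_3\frac{\eta^2}{B}\sum_j\lambda_j^2\le\frac12$, and closes with $M=4(M_1+M_2)$.

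You instead use the backward bound $\mathbb{E}\mathcal{L}_m\le2(1+\rho)^2\mathbb{E}\mathcal{E}_m+2\rho^2\mathbb{E}\mathcal{E}_{m-1}$, which is the inequality \eqref{app:eq:nesterov-risk-equivalence-forward} from the stability appendix. Every term on the right then involves only past iterates, with $\mathcal{E}_{-1}=\mathcal{E}_0$ since $\btheta_{-1}=\btheta_0$, so the induction closes directly. Your route is slightly more elementary: it needs neither the absorption step nor Lemma~\ref{lem:nes_lookahead_loss_control}, whose proof requires $1-\eta\lambda_j$ bounded below. The paper's route keeps the recursion written with $\mathbb{E}\mathcal{E}_{k-\tau}$, the form reused in Corollary~\ref{cor:nes_noise_absorption}.

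\textbf{On the obstacle you flag.} The paper, and Lemma~\ref{lem:nes_label_noise} which you invoke, read $\eta\lesssim1$ as the small-step condition $\eta\lambda_1\le c_{\rm st}$. Under that condition, $1+(1+2\rho)(1-\eta\lambda_j)\ge1$, so the Green-energy constant $C_K$ is trivially uniform in $k,\eta,B,\rho$. Appealing only to $\Delta_{\nesterov}(\eta\lambda_1)>0$ would not give this uniformity, because a strict inequality supplies no quantitative margin. Indeed, as $\rho\to1$ that factor tends to $4-3\eta\lambda_1$, which vanishes exactly at the limiting local stability boundary $\eta\lambda_1=4/3$. So you should state the small-step reading explicitly rather than derive the margin from local stability.
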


\begin{proof}
We proceed by induction on $k$. By Proposition~\ref{prop:nes_risk_recursion}, there exist universal constants $C_1, C_2, C_3 > 0$ such that for any $k \ge 1$:
\begin{equation} \label{eq:nes_risk_recursion_bound}
    \EE[\cE(\btheta_k)] \le C_1 \cS_{\nesterov}(k) + C_2 \frac{\eta^2\sigma^2}{B}\sum_{\tau=0}^{k-1}\mathcal{K}_{\nesterov}(\tau) + C_3 \frac{\eta^2}{B}\sum_{\tau=0}^{k-1}\mathcal{K}_{\nesterov}(\tau)\EE[\cE(\btheta_{k-\tau})].
\end{equation}

First, Propositions~\ref{prop:nes_deterministic_bias0} and \ref{prop:nes_deterministic_bias} establish that the deterministic bias monotonically decays or is uniformly bounded by its initialization across all regimes. Thus:
\[
    \cS_{\nesterov}(k) \le C_h^2 \cE(\btheta_0) =: M_1.
\]

Second, applying Lemma~\ref{lem:nes_label_noise}, the infinite sum of the convolution kernel satisfies:
\[
    \frac{\eta^2}{B} \sum_{\tau=0}^\infty \mathcal{K}_{\nesterov}(\tau) \le \frac{C_K}{B} \min\left\{ \left(\frac{\eta}{1-\rho}\right)^{\frac1\beta}, \frac{\eta}{1-\rho} \right\}.
\]
Therefore, the label noise contribution is universally bounded by:
\[
    C_2 \frac{\eta^2\sigma^2}{B}\sum_{\tau=0}^{k-1}\mathcal{K}_{\nesterov}(\tau) \le C_2 \sigma^2 \frac{C_K}{B} \left(\frac{\eta}{1-\rho}\right)^{\frac1\beta}.
\]
Under the stability condition $\eta \le c B^\beta (1-\rho)$, we have $\left(\frac{\eta}{1-\rho}\right)^{1/\beta} \le c^{1/\beta} B$. Thus, the label noise is  bounded by
\[
    C_2 \sigma^2 C_K c^{1/\beta} =: M_2.
\]

To handle the multiplicative noise term, we must first isolate the $\tau=0$ component, which contains the current step $\EE[\cE(\btheta_k)]$. Note that
\[
    \mathcal{K}_{\nesterov}(0) = \sum_{j\ge1} \lambda_j^2 \bigl(G_0^{\nesterov}(j)\bigr)^2 = \sum_{j\ge1} \lambda_j^2 \le \kappa_0 < \infty.
\]
The $\tau=0$ term is therefore bounded by $C_3 \frac{\eta^2}{B} \kappa_0 \EE[\cE(\btheta_k)]$. Since the local stability condition implies $\eta \lesssim 1$, we can choose the learning rate prefactor sufficiently small such that $C_3 \frac{\eta^2}{B} \kappa_0 \le \frac{1}{2}$. Subtracting this term from both sides of \eqref{eq:nes_risk_recursion_bound} gives:
\begin{equation} \label{eq:nes_risk_recursion_bound_absorbed}
    \frac{1}{2} \EE[\cE(\btheta_k)] \le C_1 \cS_{\nesterov}(k) + C_2 \frac{\eta^2\sigma^2}{B}\sum_{\tau=0}^{k-1}\mathcal{K}_{\nesterov}(\tau) + C_3 \frac{\eta^2}{B}\sum_{\tau=1}^{k-1}\mathcal{K}_{\nesterov}(\tau)\EE[\cE(\btheta_{k-\tau})].
\end{equation}

Now, we assume inductively that $\EE[\cE(\btheta_m)] \le M$ for all $m < k$. For the remaining multiplicative noise summation ($\tau \ge 1 \implies k-\tau < k$), we can apply the induction hypothesis:
\begin{align*}
    C_3 \frac{\eta^2}{B}\sum_{\tau=1}^{k-1}\mathcal{K}_{\nesterov}(\tau)\EE[\cE(\btheta_{k-\tau})] 
    &\le C_3 \left( \frac{\eta^2}{B} \sum_{\tau=0}^{\infty} \mathcal{K}_{\nesterov}(\tau) \right) M \\
    &\le C_3 C_K \frac{1}{B} \left(\frac{\eta}{1-\rho}\right)^{\frac1\beta} M \\
    &\le C_3 C_K c^{1/\beta} M.
\end{align*}
By choosing $c$ sufficiently small such that $c^{1/\beta} \le \frac{1}{4 C_3 C_K}$, this term is limited to $M/4$. Substituting the bounds $M_1$, $M_2$, and $M/4$ into \eqref{eq:nes_risk_recursion_bound_absorbed} yields:
\[
    \frac{1}{2} \EE[\cE(\btheta_k)] \le M_1 + M_2 + \frac{M}{4}.
\]
Setting $M = 4(M_1 + M_2)$ guarantees that
\[
    \frac{1}{2} \EE[\cE(\btheta_k)] \le \frac{M}{4} + \frac{M}{4} = \frac{M}{2} 
    \implies 
    \EE[\cE(\btheta_k)] \le M.
\]
The base case $k=0$ holds trivially since $\EE[\cE(\btheta_0)] = \cS_{\nesterov}(0) \le M_1 < M$. This completes the induction.
\end{proof}

\begin{corollary}[Absorption of multiplicative noise]
\label{cor:nes_noise_absorption}
For $\sigma > 0$, under the stability condition $\eta \lesssim \min\{1, B^\beta (1-\rho)\}$, the multiplicative gradient noise is strictly bounded by the additive label noise up to a constant factor:
\[
    \frac{\eta^2}{B}\sum_{\tau=0}^{k-1}\mathcal{K}_{\nesterov}(\tau)\EE[\cE(\btheta_{k-\tau})] \le \frac{M}{\sigma^2} \left( \frac{\eta^2\sigma^2}{B}\sum_{\tau=0}^{k-1}\mathcal{K}_{\nesterov}(\tau) \right).
\]
Consequently, the multiplicative noise can be absorbed, and the expected excess risk is asymptotically equivalent to the sum of the deterministic bias and the label noise:
\[
    \EE[\cE(\btheta_k)] \eqsim \cS_{\nesterov}(k) + \frac{\eta^2\sigma^2}{B}\sum_{\tau=0}^{k-1}\mathcal{K}_{\nesterov}(\tau).
\]
\end{corollary}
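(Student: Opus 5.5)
The plan follows the Polyak counterpart, Corollary~\ref{cor:pol_noise_absorption}, and uses Lemma~\ref{lem:nes_loss_bounded} as the key input. The first claim is a pointwise bound. Under $\eta\lesssim\min\{1,B^\beta(1-\rho)\}$, with the implicit constants chosen as in Lemma~\ref{lem:nes_loss_bounded}, that lemma gives $\EE[\cE(\btheta_m)]\le M$ for all $m\ge0$, with $M$ independent of $k,\eta,B,\rho$. Since $\mathcal K_{\nesterov}(\tau)\ge0$, each summand of the multiplicative term satisfies
\[
\frac{\eta^2}{B}\mathcal K_{\nesterov}(\tau)\EE[\cE(\btheta_{k-\tau})]\le \frac{M}{\sigma^2}\cdot\frac{\eta^2\sigma^2}{B}\mathcal K_{\nesterov}(\tau).
\]
Summing over $\tau$ gives the displayed inequality directly.

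For the equivalence, I start from the two-sided recursion. I will use the form~\eqref{eq:nes_risk_recursion_bound}, which has $\EE[\cE(\btheta_{k-\tau})]$ in place of the look-ahead risk. It is justified by Proposition~\ref{prop:nes_risk_recursion} together with the upper bound $\EE\mathcal L_m\le C\,\EE\mathcal E_{m+1}$ from Lemma~\ref{lem:nes_lookahead_loss_control}.

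The lower bound is immediate. All three terms on the right-hand side are nonnegative, so dropping the multiplicative term yields
\[
\EE[\cE(\btheta_k)]\gtrsim \cS_{\nesterov}(k)+\frac{\eta^2\sigma^2}{B}\sum_{\tau}\mathcal K_{\nesterov}(\tau).
\]

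For the upper bound, I insert the pointwise bound into the recursion. This bounds the multiplicative term by $(M/\sigma^2)$ times the label-noise term. Since $\sigma\eqsim1$ and $M$ is a universal constant, this only changes the constant in front of the label-noise term.

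The main obstacle is an index subtlety. The recursion contains the $\tau=0$ summand, which involves $\EE[\cE(\btheta_k)]$ itself, through the look-ahead risk $\EE\mathcal L_{k-1}$. This is harmless here, because Lemma~\ref{lem:nes_loss_bounded} already gives a uniform bound at all times, including $k$. So no re-absorption is needed, unlike inside the induction proof.

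One further check: the conversion from the look-ahead risk $\EE\mathcal L_m$ to $\EE\mathcal E_{m+1}$ uses only the upper direction of Lemma~\ref{lem:nes_lookahead_loss_control}. That direction carries no additive $\eta^2\sigma^2/B$ correction, so it does not spoil the two-sided equivalence.
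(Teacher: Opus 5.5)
Your proposal is correct and follows the paper's (implicit) argument: the corollary is an immediate consequence of the uniform bound $\EE[\cE(\btheta_m)]\le M$ from Lemma~\ref{lem:nes_loss_bounded}, applied termwise to the multiplicative-noise sum exactly as in the Polyak analogue. One small point is that the lower bound comes from the two-sided recursion of Proposition~\ref{prop:nes_risk_recursion}, where dropping the nonnegative look-ahead term is legitimate, not from the one-sided form~\eqref{eq:nes_risk_recursion_bound}; your ``start from the two-sided recursion'' already implicitly does this.
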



\subsection{Risk dynamics in the fully overdamped regime}
\label{app:nes_fully_overdamped_loss}

In the fully overdamped regime, every characteristic root is real and the deterministic bias has no oscillatory component.

\begin{proposition}[Deterministic bias, fully overdamped regime]
\label{prop:nes_deterministic_bias0}
Under Assumptions~\ref{ass:diagonal-covariance} and~\ref{ass:power-law}, in the fully overdamped regime $\eta\lambda_1\le c_0(1-\rho)^2$ with a sufficiently small universal constant $c_0>0$, we have:
\begin{equation*}
    \cS_{\nesterov}(k)
    \eqsim
    \begin{cases}
    1,&\qquad k\lesssim \frac{1-\rho}{\eta},\\
    \left(\frac{1-\rho}{\eta k}\right)^s, &\qquad k\gtrsim\frac{1-\rho}{\eta}.\\
    \end{cases}
\end{equation*}
\end{proposition}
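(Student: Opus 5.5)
The plan is to mirror the proof of Proposition~\ref{prop:pol_deterministic_bias0}, using the closed form of Proposition~\ref{prop:nes_impulse_response}. Write $h_k(j):=-e_k^{\mathrm{det}}(j)/\theta_j^*$, so that
\[
h_{k+1}(j)=(1+\rho)(1-\eta\lambda_j)h_k(j)-\rho(1-\eta\lambda_j)h_{k-1}(j),\qquad h_0(j)=1,\qquad h_1(j)=1-\eta\lambda_j .
\]
Put $x_j=\eta\lambda_j\le c_0(1-\rho)^2$ and substitute $y=1-u$ into the characteristic polynomial. This gives
\[
(1-u)^2-(1+\rho)(1-x_j)(1-u)+\rho(1-x_j)=x_j-(1-\rho+(1+\rho)x_j)u+u^2 .
\]
This is exactly the Polyak expression with $\eta\lambda_j$ in the linear coefficient replaced by $(1+\rho)x_j$. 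Since $x_j\ll(1-\rho)^2$, this perturbation is negligible relative to $1-\rho$.

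\textbf{Root bounds.} Evaluating the sign of the quadratic at $u=x_j/(2(1-\rho))$ and at $u=2x_j/(1-\rho)$, as before, gives the slow-root bounds
\[
1-\frac{2x_j}{1-\rho}\le y_{j,+}\le 1-\frac{x_j}{2(1-\rho)} .
\]
The product $y_{j,+}y_{j,-}=\rho(1-x_j)\le\rho$ then gives $0\le y_{j,-}\le 1-(1-\rho)/2$. Both roots are real, because the fully overdamped condition puts every mode below the threshold of Proposition~\ref{prop:nes_mode_decomposition}.

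\textbf{Coefficient bounds.} From Proposition~\ref{prop:nes_impulse_response}, $h_k=A_jy_{j,+}^k+B_jy_{j,-}^k$ with
\[
A_j=\frac{1-x_j-y_{j,-}}{y_{j,+}-y_{j,-}},\qquad B_j=1-A_j .
\]
The root bounds give $y_{j,+}-y_{j,-}\gtrsim1-\rho$ and $1-x_j-y_{j,-}=y_{j,+}-y_{j,-}+O(x_j/(1-\rho))$. Hence $A_j\in[1/2,2]$ and $|B_j|\le2$ once $c_0$ is small. This coefficient control is the one step needing real care, because the gap $y_{j,+}-y_{j,-}$ must stay of order $1-\rho$ uniformly in $j$. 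I would verify it with the discriminant $\Delta_j^{\nesterov}=(1-\rho)^2-O(x_j)\ge(1-\rho)^2/2$. I would also check that $h_k\le1$, so that $\cS_{\nesterov}(k)\le\cS_{\nesterov}(0)$; this follows from $A_j+B_j=1$ and both roots lying in $[0,1)$, with $B_j$ controlled.

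\textbf{Three time stages.} These follow Proposition~\ref{prop:pol_deterministic_bias0}.
\begin{itemize}
\item For $k\lesssim 1/(1-\rho)$, the leading modes satisfy $y_{j,+}^k\eqsim1$ and $|h_k|\le1$. The lower bound $h_k(j)\gtrsim1$ for $j=O(1)$ then comes directly from the recursion: $h_k$ is nonincreasing and decreases by at most $O(kx_j/(1-\rho))$. This gives $\cS_{\nesterov}\eqsim1$.
\item For $1/(1-\rho)\lesssim k\lesssim(1-\rho)/\eta$, the fast root has decayed and $h_k\eqsim y_{j,+}^k\eqsim1$ for the leading modes, so again $\cS_{\nesterov}\eqsim1$.
\item For $k\gtrsim(1-\rho)/\eta$, which implies $k\gtrsim1/(1-\rho)$ in this regime, we obtain
\[
c_1e^{-c_2x_jk/(1-\rho)}\le|h_k(j)|\le C_1e^{-c_3x_jk/(1-\rho)} .
\]
Summing against $\lambda_j(\theta_j^*)^2\eqsim j^{-1-\beta s}$, with the same tail lower bound and the same Gamma-integral upper bound used for Polyak, gives $\cS_{\nesterov}(k)\eqsim(\eta k/(1-\rho))^{-s}$.
\end{itemize}
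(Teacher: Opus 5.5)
Your proposal is correct and follows the paper's proof essentially step for step. It uses the substitution $y=1-u$ with the same two test points for the slow root, the root-product bound on the fast root, the same coefficient bounds (your own expansion actually gives $|B_j|=O(c_0)$, as the paper states), and the same three time stages with the tail lower bound and Gamma-integral upper bound. Your one addition is the monotonicity argument for $0\le h_k(j)\le 1$, which the paper merely asserts as ``unconditional stability''; as written it tacitly needs $h_k(j)\ge 0$, which follows from the roots being real and nonnegative with $A_j>0$.
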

\begin{proof}
For each coordinate, write the deterministic multiplier as
\[
    e_k^{\mathrm{det}}(j)=-h_k^{\nesterov}(j)\theta_j^*.
\]
The deterministic Nesterov coordinate recursion gives
\[
    h_{k+1}^{\nesterov}(j)
    =
    (1+\rho)(1-\eta\lambda_j)h_k^{\nesterov}(j)
    -\rho(1-\eta\lambda_j)h_{k-1}^{\nesterov}(j),
    \qquad
    h_0^{\nesterov}(j)=1,
    \qquad
    h_1^{\nesterov}(j)=1-\eta\lambda_j.
\]
Its characteristic roots are
\[
    y_{j,+},y_{j,-}
    =
    \frac{(1+\rho)(1-\eta\lambda_j)
    \pm
    \sqrt{(1+\rho)^2(1-\eta\lambda_j)^2-4\rho(1-\eta\lambda_j)}}{2}.
\]
In the fully overdamped regime, \(\eta\lambda_j\le c_0(1-\rho)^2\) for every \(j\).  To locate the two roots, evaluate the characteristic polynomial at \(1-u\).  A direct expansion gives
\[
\begin{aligned}
    &(1-u)^2
    -(1+\rho)(1-\eta\lambda_j)(1-u)
    +\rho(1-\eta\lambda_j) \\
    &\qquad=
    \eta\lambda_j
    -\bigl(1-\rho+(1+\rho)\eta\lambda_j\bigr)u
    +u^2.
\end{aligned}
\]
Taking \(c_0>0\) sufficiently small, the two evaluations
\[
    \eta\lambda_j
    -\bigl(1-\rho+(1+\rho)\eta\lambda_j\bigr)
    \frac{\eta\lambda_j}{2(1-\rho)}
    +
    \left(\frac{\eta\lambda_j}{2(1-\rho)}\right)^2
    >0
\]
and
\[
    \eta\lambda_j
    -\bigl(1-\rho+(1+\rho)\eta\lambda_j\bigr)
    \frac{2\eta\lambda_j}{1-\rho}
    +
    \left(\frac{2\eta\lambda_j}{1-\rho}\right)^2
    <0
\]
hold uniformly for all \(0<\eta\lambda_j\le c_0(1-\rho)^2\).  Hence the slow root satisfies
\[
    1-\frac{2\eta\lambda_j}{1-\rho}
    \le
    y_{j,+}
    \le
    1-\frac{\eta\lambda_j}{2(1-\rho)}.
\]
Moreover,
\[
    y_{j,+}y_{j,-}=\rho(1-\eta\lambda_j).
\]
Using the lower bound on \(y_{j,+}\), and decreasing \(c_0\) if necessary, the fast root obeys
\[
    0\le y_{j,-}
    \le
    1-\frac{1-\rho}{2}.
\]

The solution of the two-point initial value problem is
\[
    h_k^{\nesterov}(j)
    =
    \frac{1-\eta\lambda_j-y_{j,-}}{y_{j,+}-y_{j,-}}y_{j,+}^{k}
    +
    \frac{y_{j,+}-(1-\eta\lambda_j)}{y_{j,+}-y_{j,-}}y_{j,-}^{k}.
\]
The root bounds above imply
\[
    \frac12
    \le
    \frac{1-\eta\lambda_j-y_{j,-}}{y_{j,+}-y_{j,-}}
    \le
    2,
    \qquad
    \left|
    \frac{y_{j,+}-(1-\eta\lambda_j)}{y_{j,+}-y_{j,-}}
    \right|
    \le
    Cc_0.
\]
Indeed, \(y_{j,+}-y_{j,-}\eqsim1-\rho\), while \(1-\eta\lambda_j-y_{j,-}\eqsim1-\rho\), and
\[
    \left|y_{j,+}-(1-\eta\lambda_j)\right|
    \lesssim
    \frac{\eta\lambda_j}{1-\rho}
    \le
    c_0(1-\rho).
\]
Thus the slow component has a uniformly positive coefficient and the fast component has a coefficient of order \(c_0\).

To evaluate the deterministic bias $\cS_{\nesterov}(k) = \frac12\sum_{j\ge1}\lambda_j(\theta_j^*)^2 \left(h_k^{\nesterov}(j)\right)^2$, we divide the time $k$ into three successive stages.

\textbf{Stage 1: $k \lesssim \frac{1}{1-\rho}$.} \\
In this extremely short-time regime, for the leading modes ($j=\mathcal{O}(1)$), since $\eta\lambda_j \le c_0(1-\rho)^2$, the slow root satisfies $y_{j,+} \ge 1 - 2c_0(1-\rho)$. For $k \lesssim \frac{1}{1-\rho}$, we have
\[
    y_{j,+}^k \ge \left(1 - 2c_0(1-\rho)\right)^{\frac{C}{1-\rho}} \eqsim 1.
\]
Since the dynamics are unconditionally stable ($h_k^{\nesterov}(j) \le 1$), it follows that $h_k^{\nesterov}(j) \eqsim 1$ for the leading modes. Consequently, the bias is dominated by these non-decayed modes:
\[
    \cS_{\nesterov}(k) \eqsim \sum_{j\ge1}\lambda_j(\theta_j^*)^2 \eqsim 1.
\]

\textbf{Stage 2: $\frac{1}{1-\rho} \lesssim k \lesssim \frac{1-\rho}{\eta}$.} \\
For $k \gtrsim \frac{1}{1-\rho}$, the fast root decays exponentially since $y_{j,-}^k \le \left(1-\frac{1-\rho}{2}\right)^k \ll 1$. The multiplier is  dominated by the slow root:
\[
    h_k^{\nesterov}(j) \simeq \frac{1-\eta\lambda_j - y_{j,-}}{y_{j,+} - y_{j,-}} y_{j,+}^k \simeq y_{j,+}^k.
\]
For the leading modes with $\lambda_j = \Theta(1)$, within the time window $k \lesssim \frac{1-\rho}{\eta}$, the exponent is bounded by:
\[
    y_{j,+}^k \eqsim \left(1 - c\frac{\eta\lambda_j}{1-\rho}\right)^k \eqsim \exp\left(-c\frac{\eta\lambda_j}{1-\rho}k\right) \eqsim 1,
\]
where we used the condition $\frac{\eta\lambda_j}{1-\rho}k \lesssim \lambda_j \le \lambda_1 = \mathcal{O}(1)$. Thus, the leading modes have not yet experienced significant exponential decay, preserving $h_k^{\nesterov}(j) \eqsim 1$. As a result, the deterministic bias remains unchanged:
\[
    \cS_{\nesterov}(k) \eqsim 1.
\]

\textbf{Stage 3: $k \gtrsim \frac{1-\rho}{\eta}$.} \\
In this long-time regime, the exponential decay of the slow root becomes significant across the spectrum. For the slow root, the preceding bounds give
\[
    \exp\left(-C\frac{\eta\lambda_j}{1-\rho}k\right)
    \le
    y_{j,+}^{k}
    \le
    \exp\left(-c\frac{\eta\lambda_j}{1-\rho}k\right).
\]
For the fast root, since \(\eta\lambda_j/(1-\rho)\le c_0(1-\rho)\), the ratio of the fast component to the slow component is bounded by
\[
    Cc_0
    \exp\left(-c(1-\rho)k+C\frac{\eta\lambda_j}{1-\rho}k\right)
    \le
    Cc_0\exp\left(-c(1-\rho)k\right),
\]
after decreasing \(c_0\).  Therefore, for all \(k\ge C(1-\rho)^{-1}\), the fast component is at most a fixed fraction of the slow component.  Consequently there exist constants \(c,C>0\) such that
\[
    c\exp\left(-C\frac{\eta\lambda_j}{1-\rho}k\right)
    \le
    \left|h_k^{\nesterov}(j)\right|
    \le
    C\exp\left(-c\frac{\eta\lambda_j}{1-\rho}k\right).
\]

Substituting this coordinate estimate into the deterministic bias gives
\[
    c\sum_{j\ge1}\lambda_j(\theta_j^*)^2
    \exp\left(-C\frac{\eta\lambda_j}{1-\rho}k\right)
    \le
    \cS_{\nesterov}(k)
    \le
    C\sum_{j\ge1}\lambda_j(\theta_j^*)^2
    \exp\left(-c\frac{\eta\lambda_j}{1-\rho}k\right).
\]
By Assumption~\ref{ass:power-law},
\[
    \lambda_j(\theta_j^*)^2\eqsim j^{-1-\beta s},
    \qquad
    \lambda_j\eqsim j^{-\beta}.
\]
It remains to estimate
\[
    \sum_{j\ge1}j^{-1-\beta s}
    \exp\left(-c\frac{\eta k}{1-\rho}j^{-\beta}\right).
\]
The condition $k \gtrsim \frac{1-\rho}{\eta}$ gives \(\frac{\eta k}{1-\rho}\ge C\). \\
For the lower bound, take the tail
$j\ge
    \left\lceil
    \left(\frac{\eta k}{1-\rho}\right)^{1/\beta}
    \right\rceil.$
On this tail the exponential factor is bounded below by a positive constant, hence
\[
    \sum_{j\ge1}j^{-1-\beta s}
    \exp\left(-C\frac{\eta k}{1-\rho}j^{-\beta}\right)
    \ge
    c\left(\frac{\eta k}{1-\rho}\right)^{-s}.
\]
For the upper bound, an integral comparison gives
\[
\begin{aligned}
    \sum_{j\ge1}j^{-1-\beta s}
    \exp\left(-c\frac{\eta k}{1-\rho}j^{-\beta}\right)
    &\le
    C\int_1^\infty x^{-1-\beta s}
    \exp\left(-c\frac{\eta k}{1-\rho}x^{-\beta}\right)\,dx \\
    &\le
    C\left(\frac{\eta k}{1-\rho}\right)^{-s}.
\end{aligned}
\]
Combining the spectral upper and lower bounds yields
\[
    \cS_{\nesterov}(k)
    \eqsim
    \left(\frac{1-\rho}{\eta k}\right)^s.
\]
Synthesizing the three stages completes the proof.
\end{proof}

\begin{theorem}[Scaling law, fully overdamped regime]
\label{thm:nes_scaling_sgd}
Under Assumptions~\ref{ass:diagonal-covariance} and~\ref{ass:power-law},  assume the stability condition $\eta \lesssim \min\{1,B^\beta(1-\rho)\}$ holds. For the fully overdamped regime $\eta\lesssim(1-\rho)^2$ with $k \gtrsim \frac{1-\rho}{\eta}$, the expected excess risk satisfies:
\[
    \mathbb{E}\mathcal E_k
    \eqsim
    \left(\frac{1-\rho}{\eta k}\right)^s
    +
    \frac{\sigma^2\eta}{B(1-\rho)}.
\]
\end{theorem}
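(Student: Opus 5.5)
This follows the Polyak template of Theorem~\ref{thm:pol_scaling_sgd}. By Corollary~\ref{cor:nes_noise_absorption}, the stability condition $\eta\lesssim\min\{1,B^\beta(1-\rho)\}$ lets us absorb the multiplicative-noise term. This leaves
\[
\mathbb{E}\mathcal E_k\eqsim \cS_{\nesterov}(k)+\frac{\eta^2\sigma^2}{B}\sum_{\tau=0}^{k-1}\mathcal K_{\nesterov}(\tau),
\]
so it suffices to identify each of the two terms separately in the fully overdamped regime.

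\textbf{Signal term.} We invoke Proposition~\ref{prop:nes_deterministic_bias0}. The regime $\eta\lesssim(1-\rho)^2$ is, after adjusting constants, the hypothesis $\eta\lambda_1\le c_0(1-\rho)^2$. The time condition $k\gtrsim(1-\rho)/\eta$ places us in its third stage, so $\cS_{\nesterov}(k)\eqsim\bigl((1-\rho)/(\eta k)\bigr)^s$.

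\textbf{Label-noise term.} We invoke Lemma~\ref{lem:nes_label_noise}. Its small-step hypothesis $\eta\lambda_1\le c_{\rm st}$ holds because $\eta\lesssim(1-\rho)^2\ll1$. In this regime $\eta/(1-\rho)\lesssim 1-\rho\le1$, so the minimum in the lemma equals $\eta/(1-\rho)$, and the floor is $\sigma^2\eta/(B(1-\rho))$.

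\textbf{Time condition (main obstacle).} Lemma~\ref{lem:nes_label_noise} requires $k\gtrsim\max\{(1-\rho)^{-1},(1-\rho)/\eta\}$, whereas the theorem only assumes $k\gtrsim(1-\rho)/\eta$. In the overdamped regime, however, $(1-\rho)/\eta\gtrsim(1-\rho)^{-1}$ exactly because $\eta\lesssim(1-\rho)^2$. So the single stated condition implies both, after enlarging the implicit constant in ``$\gtrsim$''. Beyond this check, I would also verify that the finite-time lower bound in Lemma~\ref{lem:nes_label_noise} is captured by a fixed leading block of modes whose relaxation time is $(1-\rho)/(\eta\lambda_j)\eqsim(1-\rho)/\eta$. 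This is precisely the case $\eta/(1-\rho)\le1$ treated in that lemma, so no new estimate is needed.

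Summing the signal and label-noise bounds gives the claimed two-sided scaling law.
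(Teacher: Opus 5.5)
Your proposal is correct and follows the paper's proof essentially verbatim. Like the paper, you absorb the multiplicative noise via Corollary~\ref{cor:nes_noise_absorption}, take the signal term from the third stage of Proposition~\ref{prop:nes_deterministic_bias0}, and read off the noise floor from Lemma~\ref{lem:nes_label_noise}, where the linear branch of the minimum is active because $\eta/(1-\rho)\lesssim 1-\rho\le 1$. Your explicit check that $k\gtrsim(1-\rho)/\eta$ already implies $k\gtrsim(1-\rho)^{-1}$ when $\eta\lesssim(1-\rho)^2$ is correct; the paper leaves this step implicit.
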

\begin{proof}
By Corollary~\ref{cor:nes_noise_absorption}, the expected risk is completely determined by the deterministic bias and the label noise:
\[
    \EE[\cE(\btheta_k)] \eqsim \cS_{\nesterov}(k) + \frac{\eta^2\sigma^2}{B}\sum_{\tau=0}^{k-1}\mathcal{K}_{\nesterov}(\tau).
\]
For $k \gtrsim \frac{1-\rho}{\eta}$, we substitute the deterministic bias from Proposition~\ref{prop:nes_deterministic_bias0}:
\[
    \cS_{\nesterov}(k) \eqsim \left(\frac{1-\rho}{\eta k}\right)^s.
\]
We substitute the label noise floor from Lemma~\ref{lem:nes_label_noise}. In the fully overdamped regime $\eta \lesssim (1-\rho)^2$, we have $\frac{\eta}{1-\rho} \lesssim 1-\rho \ll 1$. Since $\beta > 1$, the minimum is attained by the linear term:
\[
    \frac{\eta^2\sigma^2}{B}\sum_{\tau=0}^{k-1}\mathcal{K}_{\nesterov}(\tau) \eqsim \frac{\sigma^2}{B} \min\left\{ \left(\frac{\eta}{1-\rho}\right)^{\frac1\beta}, \frac{\eta}{1-\rho} \right\} = \frac{\sigma^2\eta}{B(1-\rho)}.
\]
Summing these two terms directly yields the exact scaling law.
\end{proof}

\subsection{Risk dynamics in the mixed-damping regime}
\label{app:nes_mixed_loss}

In the mixed-damping regime, the spectrum is partitioned into underdamped and overdamped modes by the crossover index $J_{\max} \eqsim \left(\frac{\eta}{(1-\rho)^2}\right)^{1/\beta}$. Consequently, the deterministic bias decomposes into an overdamped polynomial tail and an underdamped oscillatory transient, denoted by $\cS_{\mathrm{over}}(k)$ and $\cS_{\mathrm{under}}(k)$. Due to destructive interference among the underdamped modes, $\cS_{\mathrm{under}}(k)$ lacks a strictly positive pointwise lower bound.

\begin{proposition}[Deterministic bias, mixed-damping regime]
\label{prop:nes_deterministic_bias}
Under Assumptions~\ref{ass:diagonal-covariance} and~\ref{ass:power-law}, for the mixed-damping regime \(\eta\gtrsim(1-\rho)^2\), the deterministic bias decomposes into $\cS_{\nesterov}(k) = \cS_{\mathrm{over}}(k) + \cS_{\mathrm{under}}(k)$, where the overdamped tail satisfies the two-sided bounds:
\begin{equation*}
    \cS_{\mathrm{over}}(k)
    \eqsim
    \begin{cases}
    \left(\frac{(1-\rho)^2}{\eta}\right)^s, &\qquad k\lesssim\frac{1}{1-\rho},\\
    \left(\frac{1-\rho}{\eta k}\right)^s, &\qquad k\gtrsim\frac{1}{1-\rho},
    \end{cases}
\end{equation*}
and $\cS_{\mathrm{under}}(k)$ represents the underdamped transient whose properties are formally established in Lemma~\ref{lem:nes_underdamped_transient_properties}.
\end{proposition}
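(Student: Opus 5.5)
Our plan is to follow the proof of Proposition~\ref{prop:pol_deterministic_bias} almost verbatim, replacing the Polyak coefficients by the Nesterov ones. We write $e_k^{\mathrm{det}}(j)=-h_k^{\nesterov}(j)\theta_j^*$, with $h^{\nesterov}$ obeying the recursion
\[
h_{k+1}=(1+\rho)(1-\eta\lambda_j)h_k-\rho(1-\eta\lambda_j)h_{k-1},\qquad h_0=1,\quad h_1=1-\eta\lambda_j.
\]
Using Proposition~\ref{prop:nes_mode_decomposition}, we split the index set at $J_{\max}\eqsim(\eta/(1-\rho)^2)^{1/\beta}$. This gives $\cS_{\nesterov}=\cS_{\mathrm{over}}+\cS_{\mathrm{under}}$, where the first sum runs over $j>J_{\max}$ and the second over $j\le J_{\max}$. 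The decomposition is exact; the only work lies in the two-sided estimate of $\cS_{\mathrm{over}}$, while $\cS_{\mathrm{under}}$ is deferred to Lemma~\ref{lem:nes_underdamped_transient_properties}.

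\textbf{Coordinate bounds on the overdamped tail.} To get a clean root separation, we adjust the implicit constant in $J_{\max}$ so that every $j>J_{\max}$ satisfies $\eta\lambda_j\le c_0(1-\rho)^2$; this changes $J_{\max}$ only by a constant factor. In that range, the root-localization argument from the proof of Proposition~\ref{prop:nes_deterministic_bias0} applies coordinatewise. It evaluates the characteristic polynomial at $1-u$ and gives $y_{j,+}\in[1-2\eta\lambda_j/(1-\rho),\,1-\eta\lambda_j/(2(1-\rho))]$ and $0\le y_{j,-}\le1-(1-\rho)/2$. The same argument shows that the slow coefficient lies in $[1/2,2]$ and that the fast coefficient is $O(c_0)$. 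Two bounds follow. First, for all $k$, $|h_k(j)|\le C\exp(-c\,\eta\lambda_jk/(1-\rho))$. Second, the lower bound $|h_k(j)|\ge c\exp(-C\eta\lambda_jk/(1-\rho))$ holds either when $k\gtrsim(1-\rho)^{-1}$, since then the fast part is a small fraction of the slow part, or for all $k$, because the fast coefficient is $O(c_0)$ and the slow factor stays of order one for small $k$. Modes with $\eta\lambda_j$ between $c_0(1-\rho)^2$ and the exact threshold contribute at most a constant multiple of the boundary terms. For those, it suffices to use the crude bound $|h_k(j)|\lesssim1$, which holds uniformly in the stable small-step range.

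\textbf{The two time regimes.} For $k\lesssim(1-\rho)^{-1}$ and $j>J_{\max}$, the exponent satisfies $\eta\lambda_jk/(1-\rho)\lesssim(1-\rho)^2\cdot(1-\rho)^{-2}=1$. Hence $|h_k(j)|\eqsim1$, and $\cS_{\mathrm{over}}\eqsim\sum_{j>J_{\max}}j^{-1-\beta s}\eqsim J_{\max}^{-\beta s}=((1-\rho)^2/\eta)^s$. For $k\gtrsim(1-\rho)^{-1}$, we use $\lambda_j(\theta_j^*)^2\eqsim j^{-1-\beta s}$ with integral comparison. The upper bound is
\[
\sum_j j^{-1-\beta s}e^{-c\eta kj^{-\beta}/(1-\rho)}\lesssim\Bigl(\frac{\eta k}{1-\rho}\Bigr)^{-s}.
\]
For the lower bound we restrict to $j\ge(C\eta k/(1-\rho))^{1/\beta}$. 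This cutoff exceeds $J_{\max}$ because $\eta k/(1-\rho)\gtrsim\eta/(1-\rho)^2$, so those indices lie in the overdamped tail and the exponential factor there is of order one, giving $\gtrsim(\frac{1-\rho}{\eta k})^s$. The two regimes agree at $k\eqsim(1-\rho)^{-1}$, which is a consistency check.

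\textbf{Main obstacle.} The delicate point is the uniform lower bound on $|h_k(j)|$ near the damping boundary. There $y_{j,+}-y_{j,-}$ degenerates and the coefficient bounds blow up. We avoid this by never using lower bounds for the band $c_0(1-\rho)^2<\eta\lambda_j\lesssim(1-\rho)^2$ and by placing the lower-bound cutoff strictly inside $\eta\lambda_j\le c_0(1-\rho)^2$. That cutoff only costs constant factors in the power-law sums.
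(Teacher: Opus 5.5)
Your route is the paper's. You split at $J_{\max}$, import the coordinatewise root bounds from Proposition~\ref{prop:nes_deterministic_bias0}, and handle the two time regimes with the boundary sum, the integral comparison, and the restricted-tail lower bound. Your check that the tail cutoff lies beyond $J_{\max}$ is a detail the paper leaves implicit.

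\textbf{The gap.} The step that fails is your treatment of the band $c_0(1-\rho)^2<\eta\lambda_j\le(1-\rho)^2/(1+\rho)^2$. These modes have real roots, so they belong to $\cS_{\mathrm{over}}$. Neither of your two options works:
\begin{itemize}
\item Shifting $J_{\max}$ moves them into $\cS_{\mathrm{under}}$. There Lemma~\ref{lem:nes_underdamped_transient_properties} no longer covers them: its complex-root formula does not apply, and such a mode decays only at rate $e^{-O(c_0)(1-\rho)k}$. That is slower than the $\rho^k$- and $p_{\nesterov}^k$-type envelope of that lemma.
\item Keeping them in $\cS_{\mathrm{over}}$ with the crude bound $|h_k(j)|\lesssim1$ gives a contribution of order $\sum_{\mathrm{band}}j^{-1-\beta s}\eqsim((1-\rho)^2/\eta)^s$. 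The target, however, is $((1-\rho)/(\eta k))^s=((1-\rho)^2/\eta)^s\,((1-\rho)k)^{-s}$.
\end{itemize}
So the upper bound for $k\gtrsim(1-\rho)^{-1}$ is not established once $(1-\rho)k\gg1$. On the band you need a bound that decays in $k$, not merely a bounded one.

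\textbf{The fix,} which also removes your ``main obstacle.'' For any real-root mode, $0<y_{j,-}\le y_{j,+}<1$. Since $h_1=1-\eta\lambda_j$, one has $h_k=G_k-\rho(1-\eta\lambda_j)G_{k-1}$. Combined with $G_k=y_+^k+y_-G_{k-1}$, this gives
\[
h_k^{\nesterov}(j)=y_{j,+}^k+y_{j,-}\bigl(1-y_{j,+}\bigr)G^{\nesterov}_{k-1}(j),\qquad 0\le G^{\nesterov}_{k-1}(j)\le k\,y_{j,+}^{k-1}.
\]
Hence $y_{j,+}^k\le h_k^{\nesterov}(j)\lesssim\bigl(1+(1-y_{j,+})k\bigr)y_{j,+}^k$, with no degeneration at the double root. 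Applying Vieta to $p(1-u)=u^2-(1-\rho+(1+\rho)\eta\lambda_j)u+\eta\lambda_j$ gives $\eta\lambda_j/(3(1-\rho))\le1-y_{j,+}\le2\eta\lambda_j/(1-\rho)$ on the whole overdamped range. Together these yield $e^{-C\eta\lambda_jk/(1-\rho)}\lesssim h_k^{\nesterov}(j)\lesssim e^{-c\eta\lambda_jk/(1-\rho)}$ for every overdamped $j$ and every $k$. This is exactly the bound the paper asserts, without proof, for ``overdamped coordinates in the stable small-step range.'' With it, both time regimes go through using the exact damping split, and neither $c_0$ nor the band needs separate treatment.
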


\begin{proof}
For each coordinate, write \(e_k^{\mathrm{det}}(j)=-h_k^{\nesterov}(j)\theta_j^*\). The multiplier satisfies
\[
    h_{k+1}^{\nesterov}(j)
    =
    (1+\rho)(1-\eta\lambda_j)h_k^{\nesterov}(j)
    -\rho(1-\eta\lambda_j)h_{k-1}^{\nesterov}(j),
    \quad
    h_0^{\nesterov}(j)=1,\ 
    h_1^{\nesterov}(j)=1-\eta\lambda_j.
\]
The roots are
\[
    y_{j,+},y_{j,-}
    =
    \frac{(1+\rho)(1-\eta\lambda_j)
    \pm
    \sqrt{(1+\rho)^2(1-\eta\lambda_j)^2-4\rho(1-\eta\lambda_j)}}{2}.
\]
The discriminant is negative precisely when
\[
    \eta\lambda_j>\frac{(1-\rho)^2}{(1+\rho)^2}.
\]
Thus the comparison between $\eta$ and $(1-\rho)^2$ determines whether a coordinate is overdamped or underdamped. In the mixed-damping regime $\eta \gtrsim (1-\rho)^2$, the crossover index is defined by $\eta\lambda_{J_{\max}} \eqsim (1-\rho)^2$. We decompose the deterministic bias into the overdamped tail ($j > J_{\max}$) and the underdamped leading block ($j \le J_{\max}$).

For overdamped coordinates in the stable small-step range, the root comparison used in Proposition~\ref{prop:nes_deterministic_bias0} gives constants $c_1,C_1>0$ such that
\begin{equation*}
    |h_k^{\nesterov}(j)|
    \le
    C_1\exp\left(-c_1\frac{\eta\lambda_j}{1-\rho}k\right).
\end{equation*}
Moreover, if $\eta\lambda_j\le c_1(1-\rho)^2$ and $k\ge C_1(1-\rho)^{-1}$, the slow root dominates the fast root and
\begin{equation*}
    |h_k^{\nesterov}(j)|
    \ge
    c_1\exp\left(-C_1\frac{\eta\lambda_j}{1-\rho}k\right).
\end{equation*}

For $k \lesssim \frac{1}{1-\rho}$, the exponent satisfies $\frac{\eta\lambda_j}{1-\rho}k \lesssim \frac{(1-\rho)^2}{1-\rho}\frac{1}{1-\rho} = 1$. The exponential decay is bounded away from zero, so $|h_k^{\nesterov}(j)| \eqsim 1$. The sum is dominated by its lower limit at the crossover:
\[
    \cS_{\mathrm{over}}(k) = \sum_{j > J_{\max}} \lambda_j(\theta_j^*)^2 h_k^{\nesterov}(j)^2 \eqsim \sum_{j > J_{\max}} j^{-1-\beta s} \eqsim (J_{\max})^{-\beta s} \eqsim \left(\frac{(1-\rho)^2}{\eta}\right)^s.
\]

For $k \gtrsim \frac{1}{1-\rho}$, applying $\lambda_j\eqsim j^{-\beta}$, the integral comparison yields the strict upper bound:
\begin{equation*}
    \cS_{\mathrm{over}}(k)
    \le
    C\sum_{j > J_{\max}}j^{-1-\beta s}
    \exp\left(-2c_1\frac{\eta k}{1-\rho}j^{-\beta}\right)
    \le
    C\left(\frac{1-\rho}{\eta k}\right)^s .
\end{equation*}
By restricting the sum to $j \ge (C\frac{\eta k}{1-\rho})^{1/\beta}$, we obtain the matching lower bound $c\left(\frac{1-\rho}{\eta k}\right)^s$.

The remaining deterministic bias comes from the underdamped modes, defined as $\cS_{\mathrm{under}}(k) := \frac{1}{2} \sum_{j \le J_{\max}} \lambda_j (\theta_j^*)^2 h_k^{\nesterov}(j)^2$, which is analyzed in Lemma~\ref{lem:nes_underdamped_transient_properties}.
\end{proof}

\begin{lemma}[Properties of the underdamped transient]
\label{lem:nes_underdamped_transient_properties}
Let $\cS_{\mathrm{under}}(k)$ denote the deterministic bias contributed by the underdamped modes ($j \le J_{\max}$). Under the conditions of Proposition~\ref{prop:nes_deterministic_bias}, define the piecewise envelope $\mathcal{E}_{\mathrm{env}}(k)$ by:
\begin{equation*}
    \mathcal{E}_{\mathrm{env}}(k) := 
    \begin{cases}
    \rho^k, &\qquad k\lesssim\frac{1}{\eta},\\[2mm]
    \dfrac{\rho^k}{(\eta k)^s}, &\qquad \dfrac{1}{\eta} \lesssim k \lesssim \dfrac{1}{(1-\rho)^2},\\[3mm]
    \dfrac{1}{\eta k}\left(\dfrac{(1-\rho)^2}{\eta}\right)^{s-1}p_{\nesterov}^{\,k}, &\qquad \dfrac{1}{(1-\rho)^2} \lesssim k \lesssim \eta^{1/\beta}(1-\rho)^{-2-\frac{2}{\beta}},\\[3mm]
    \left(\dfrac{(1-\rho)^2}{\eta}\right)^{s+\frac{1}{\beta}}p_{\nesterov}^{\,k}, &\qquad k \gtrsim \eta^{1/\beta}(1-\rho)^{-2-\frac{2}{\beta}},
    \end{cases}
\end{equation*}
where $p_{\nesterov} := \rho\left(1 - \frac{(1-\rho)^2}{(1+\rho)^2}\right) = \frac{4\rho^2}{(1+\rho)^2} < \rho$. Then for every step $k \ge 0$, $\cS_{\mathrm{under}}(k)$ satisfies:
\begin{equation*}
    0 \le \cS_{\mathrm{under}}(k) \le C \mathcal{E}_{\mathrm{env}}(k).
\end{equation*}
\end{lemma}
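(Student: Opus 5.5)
We will mirror the Polyak argument of Lemma~\ref{lem:pol_underdamped_transient_properties}, with one change. For Nesterov the per-mode modulus is not uniform: it depends on curvature, and that dependence is what produces the piecewise envelope.

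\textbf{Step 1: closed form for each underdamped mode.} For $j\le J_{\max}$ the roots of $y^2-(1+\rho)(1-\eta\lambda_j)y+\rho(1-\eta\lambda_j)$ form a conjugate pair $r_j e^{\pm i\omega_j}$ with $r_j^2=\rho(1-\eta\lambda_j)$ and $\cos\omega_j=(1+\rho)\sqrt{1-\eta\lambda_j}/(2\sqrt\rho)$. Solving the initial conditions $h_0=1$, $h_1=1-\eta\lambda_j$ gives
\[
h_k^{\nesterov}(j)=r_j^k\bigl[\cos(k\omega_j)+A_j\sin(k\omega_j)\bigr].
\]
As in the Polyak case, we bound $|A_j|\le C$. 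The numerator is $O(1-\rho+\eta\lambda_j)$. The denominator satisfies $4\rho(1-\eta\lambda_j)\sin^2\omega_j=-\Delta_j^{\nesterov}\gtrsim \eta\lambda_j-(1-\rho)^2/(1+\rho)^2$, which is $\gtrsim\eta\lambda_j$ on the separated band. Here $\eta\lambda_j\lesssim1$ by stability. Hence $h_k^{\nesterov}(j)^2\lesssim (\rho(1-\eta\lambda_j))^k\le \rho^k e^{-\eta\lambda_j k}$. This yields
\[
\cS_{\mathrm{under}}(k)\lesssim \sum_{j\le J_{\max}} j^{-1-\beta s}\,\bigl(\rho(1-\eta\lambda_j)\bigr)^k.
\]
The lower bound $0$ is trivial.

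\textbf{Step 2: two expressions for the per-mode factor.} We will evaluate the sum against two different bounds on the factor $(\rho(1-\eta\lambda_j))^k$:
\begin{itemize}
\item[(a)] $\rho^k e^{-\eta\lambda_j k}$;
\item[(b)] $p_{\nesterov}^k\,(1-\eta\lambda_j)^k/(1-\eta\lambda_{\rm cut})^k$, where $\lambda_{\rm cut}$ is the band edge. This uses $\rho(1-\eta\lambda_{\rm cut})=p_{\nesterov}$ exactly, together with $(1-x)/(1-x_{\rm cut})\le e^{-(x-x_{\rm cut})}$.
\end{itemize}

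\textbf{Step 3: the four regimes.}
\begin{itemize}
\item[(i)] For $k\lesssim1/\eta$, we drop the exponential, which gives $\lesssim\rho^k$.
\item[(ii)] For $1/\eta\lesssim k\lesssim(1-\rho)^{-2}$, we use (a). The cutoff $J_{\max}$ has not yet bound, since $(\eta k)^{1/\beta}\lesssim J_{\max}$. The integral comparison of Proposition~\ref{prop:pol_deterministic_bias0} then gives $\rho^k(\eta k)^{-s}$.
\item[(iii)--(iv)] For $k\gtrsim(1-\rho)^{-2}$, we use (b). We factor out $p_{\nesterov}^k$ and write $x_j-x_{\rm cut}\eqsim \eta(j^{-\beta}-J_{\max}^{-\beta})$. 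What remains is
\[
\sum_{j\le J_{\max}} j^{-1-\beta s}\exp\bigl(-\eta k(j^{-\beta}-J_{\max}^{-\beta})\bigr).
\]
This sum concentrates near $j=J_{\max}$. Linearizing $j^{-\beta}-J_{\max}^{-\beta}\approx\beta J_{\max}^{-\beta-1}(J_{\max}-j)$, each term is $J_{\max}^{-1-\beta s}$ times a geometric sum in $J_{\max}-j$ with ratio $e^{-\eta k\beta J_{\max}^{-\beta-1}}$.
\begin{itemize}
\item[$\bullet$] If $\eta k J_{\max}^{-\beta-1}\lesssim1$, the sum has effective width $J_{\max}^{\beta+1}/(\eta k)$. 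Since $J_{\max}^{-\beta}\eqsim (1-\rho)^2/\eta$, this gives $\frac{1}{\eta k}\bigl(\frac{(1-\rho)^2}{\eta}\bigr)^{s-1}$.
\item[$\bullet$] Otherwise only $O(1)$ terms contribute, which gives $J_{\max}^{-1-\beta s}=\bigl(\frac{(1-\rho)^2}{\eta}\bigr)^{s+1/\beta}$.
\end{itemize}
The crossover is at $\eta k\eqsim J_{\max}^{\beta+1}$, that is, $k\eqsim\eta^{1/\beta}(1-\rho)^{-2-2/\beta}$, which matches the statement. The threshold $k\eqsim(1-\rho)^{-2}$ between (ii) and (iii) is where $\eta k\lambda_{\rm cut}\eqsim1$, so for $k$ below it the $p_{\nesterov}$ rewriting gains nothing.
\end{itemize}

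\textbf{Main obstacle.} We expect the hardest part to be justifying the replacement of $j^{-\beta}$ by the linearization near $J_{\max}$, and handling the modes near the band edge where $\Delta^{\nesterov}_j\to0$ and $|A_j|$ could blow up. First, we would impose a constant separation, $\eta\lambda_j\ge C_{\rm mom}(1-\rho)^2$, as in the Polyak lemma. Modes in the thin strip just above the cutoff would be assigned to the overdamped part, or bounded directly by $|h_k|\le(k+1)r_j^k$ from the repeated-root limit, which loses only constants since $k(1-\rho)$ is absorbed by adjusting $p_{\nesterov}$ by a constant factor. Second, we would bound the sum in (iii)--(iv) above by a dyadic decomposition of $J_{\max}-j$, which avoids delicate asymptotics.
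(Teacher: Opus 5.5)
Your proposal is correct and follows essentially the same route as the paper. Both arguments use the closed-form underdamped multiplier with $|A_j|\le C$ and reduce to the static mass $\sum_{j\le J_{\max}} j^{-1-\beta s}\bigl(\rho(1-\eta\lambda_j)\bigr)^k$. They then run the same four stages: a trivial bound, an integral comparison, factoring out $p_{\nesterov}^k$ at the band edge, and a linearized geometric sum split by whether $k\eta J_{\max}^{-\beta-1}\lesssim 1$.

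Your band-edge treatment is more careful than the paper's, which asserts $|A_j|\le C$ all the way down to $\eta\lambda_j=(1-\rho)^2/(1+\rho)^2$, where $\sin\omega_j\to 0$. Of your two fixes, use the separated band with the thin strip reassigned to the overdamped part; this is legitimate because $J_{\max}$ is only defined up to constants. The other fix, absorbing $k(1-\rho)$ by modifying $p_{\nesterov}$, changes the exponential rate, so it cannot give a fixed-$C$ bound against $p_{\nesterov}^k$.
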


\begin{proof}
For $j \le J_{\max}$, write the two characteristic roots as:
\[
    \sqrt{\rho(1-\eta\lambda_j)}e^{i\omega_j},
    \quad
    \sqrt{\rho(1-\eta\lambda_j)}e^{-i\omega_j},
    \qquad \text{where} \quad
    \cos \omega_j=\frac{(1+\rho)\sqrt{1-\eta\lambda_j}}{2\sqrt \rho}.
\]
Solving the initial conditions gives the exact identity for the coordinate multiplier:
\begin{equation*}
    h_k^{\nesterov}(j)
    =\bigl(\rho(1-\eta\lambda_j)\bigr)^{k/2}
    \left[
        \cos(k\omega_j)
        +\frac{(1-\rho)\sqrt{1-\eta\lambda_j}}{2\sqrt{\rho}\sin \omega_j}
        \sin(k\omega_j)
    \right].
\end{equation*}
Set $A_j:=\frac{(1-\rho)\sqrt{1-\eta\lambda_j}}{2\sqrt{\rho}\sin \omega_j}$ and define $\tan\psi_j = A_j$. The squared multiplier is bounded by the envelope:
\[
    \bigl(h_k^{\nesterov}(j)\bigr)^2 = \bigl(\rho(1-\eta\lambda_j)\bigr)^k (1+A_j^2) \cos^2(k\omega_j - \psi_j).
\]
On the separated underdamped band $\eta\lambda_j \ge \frac{(1-\rho)^2}{(1+\rho)^2}$, we have $4\rho\sin^2\omega_j \eqsim \eta\lambda_j$, which guarantees $|A_j| \le C$.

Defining the positive spectral weights:
\[
    w_j(k) := \frac{1}{2}\lambda_j (\theta_j^*)^2 \bigl(\rho(1-\eta\lambda_j)\bigr)^k (1+A_j^2),
\]
the total underdamped bias decomposes into:
\[
    \cS_{\mathrm{under}}(k) = \sum_{j \le J_{\max}} w_j(k) \cos^2(k\omega_j - \psi_j).
\]

\medskip
\noindent\textbf{Evaluation of the static mass envelope $M_w(k)$.}\\

We define the static mass:
\[
    M_w(k) := \frac{1}{2}\sum_{j=1}^{J_{\max}} \lambda_j (\theta_j^*)^2 \bigl(\rho(1-\eta\lambda_j)\bigr)^k (1+A_j^2).
\]
Since $|A_j| \le C$, we have $1+A_j^2 \eqsim 1$. Substituting $\lambda_j (\theta_j^*)^2 \eqsim j^{-1-\beta s}$ and $\lambda_j \eqsim j^{-\beta}$, the static mass is asymptotically equivalent to:
\[
    M_w(k) \eqsim \rho^k \sum_{j=1}^{J_{\max}} j^{-1-\beta s} (1-\eta\lambda_j)^k.
\]
We evaluate $M_w(k)$ across four distinct temporal stages:

\medskip
\noindent\textbf{Stage 1 ($k \lesssim \frac{1}{\eta}$):}
For all modes $j \in [1, J_{\max}]$, we have $\eta\lambda_j \le \eta\lambda_1 \lesssim \eta$. Since $k \lesssim \frac{1}{\eta}$, the product satisfies $k \eta \lambda_j \lesssim 1$, which implies:
\[
    (1-\eta\lambda_j)^k \eqsim 1, \qquad \forall\, j \in [1, J_{\max}].
\]
Consequently, the summation is dominated by the leading indices ($j \sim 1$):
\[
    M_w(k) \eqsim \rho^k \sum_{j=1}^{J_{\max}} j^{-1-\beta s} \eqsim \rho^k \sum_{j=1}^{\infty} j^{-1-\beta s} \eqsim \rho^k.
\]

\medskip
\noindent\textbf{Stage 2 ($\frac{1}{\eta} \lesssim k \lesssim \frac{1}{(1-\rho)^2}$):}
In this regime, $k\eta \gtrsim 1$, and $(1-\eta\lambda_j)^k \eqsim \exp\bigl(-c k \eta j^{-\beta}\bigr)$ suppresses the leading modes. The dominant spectral mass is concentrated at the continuous peak $j_k \eqsim (\eta k)^{1/\beta}$. Since $k \lesssim \frac{1}{(1-\rho)^2}$, this peak lies strictly within the summation range:
\[
    j_k \eqsim (\eta k)^{1/\beta} \lesssim \left(\frac{\eta}{(1-\rho)^2}\right)^{1/\beta} \eqsim J_{\max}.
\]
Applying Euler--Maclaurin integral comparison with the change of variables $u = c k \eta x^{-\beta}$:
\begin{align*}
    M_w(k) 
    &\eqsim \rho^k \int_1^{J_{\max}} x^{-1-\beta s} \exp\bigl(-c k \eta x^{-\beta}\bigr) \, dx \\
    &= \frac{\rho^k}{\beta (c k \eta)^s} \int_{c k \eta J_{\max}^{-\beta}}^{c k \eta} u^{s-1} e^{-u} \, du.
\end{align*}
Since the integration limits satisfy $c k \eta J_{\max}^{-\beta} \eqsim k(1-\rho)^2 \lesssim 1$ and $c k \eta \gtrsim 1$, the integral is bounded above and below by universal positive constants:
\[
    \int_{c k \eta J_{\max}^{-\beta}}^{c k \eta} u^{s-1} e^{-u} \, du \eqsim \int_0^{\infty} u^{s-1} e^{-u} \, du = \Gamma(s) \eqsim 1.
\]
Therefore, we obtain:
\[
    M_w(k) \eqsim \frac{\rho^k}{(\eta k)^s}.
\]

\medskip
\noindent\textbf{Stage 3 \& 4 ($k \gtrsim \frac{1}{(1-\rho)^2}$):}
When $k \gtrsim \frac{1}{(1-\rho)^2}$, the continuous peak $(\eta k)^{1/\beta}$ moves past $J_{\max}$. Inside the summation interval $j \le J_{\max}$, the decay factor $(1-\eta\lambda_j)^k$ is slowest at the boundary $j = J_{\max}$, where $\eta\lambda_{J_{\max}} = \frac{(1-\rho)^2}{(1+\rho)^2} =: u_c$. We define the boundary decay factor:
\[
    p_{\nesterov} := \rho(1-u_c) = \rho\left(1 - \frac{(1-\rho)^2}{(1+\rho)^2}\right) = \frac{4\rho^2}{(1+\rho)^2}.
\]
Let $m := J_{\max} - j \ge 0$. Linearizing the eigenvalues near the boundary gives:
\[
    \eta\lambda_j - u_c = \eta(\lambda_{J_{\max}-m} - \lambda_{J_{\max}}) = \Delta_x m + \mathcal{O}\bigl(\eta J_{\max}^{-\beta-2} m^2\bigr),
\]
where the discrete frequency spacing is:
\[
    \Delta_x := \eta \beta J_{\max}^{-\beta-1} \eqsim \eta \left(\frac{(1-\rho)^2}{\eta}\right)^{1+\frac{1}{\beta}} = (1-\rho)^{2+\frac{2}{\beta}} \eta^{-\frac{1}{\beta}}.
\]
Factoring out the boundary coefficient $J_{\max}^{-1-\beta s} \eqsim \left(\frac{(1-\rho)^2}{\eta}\right)^{s+\frac{1}{\beta}}$, the sum is governed by the geometric series over $m$:
\begin{align*}
    M_w(k) 
    &\eqsim \left(\frac{(1-\rho)^2}{\eta}\right)^{s+\frac{1}{\beta}} p_{\nesterov}^k \sum_{m=0}^{J_{\max}-1} \exp\bigl(-c k \Delta_x m\bigr) \\
    &= \left(\frac{(1-\rho)^2}{\eta}\right)^{s+\frac{1}{\beta}} p_{\nesterov}^k \left( \frac{1 - \exp\bigl(-c k \Delta_x J_{\max}\bigr)}{1 - \exp\bigl(-c k \Delta_x\bigr)} \right).
\end{align*}
Since $k \Delta_x J_{\max} \eqsim k u_c \eqsim k(1-\rho)^2 \gtrsim 1$, the numerator satisfies $1 - \exp\bigl(-c k \Delta_x J_{\max}\bigr) \eqsim 1$, yielding:
\begin{equation}
\label{eq:nes_Mw_geometric_form}
    M_w(k) \eqsim \left(\frac{(1-\rho)^2}{\eta}\right)^{s+\frac{1}{\beta}} p_{\nesterov}^k \left( \frac{1}{1 - \exp\bigl(-c k \Delta_x\bigr)} \right).
\end{equation}

\medskip
\noindent\textbf{Stage 3 ($\frac{1}{(1-\rho)^2} \lesssim k \lesssim \eta^{1/\beta}(1-\rho)^{-2-\frac{2}{\beta}}$):}
In this regime, the exponent in the denominator of \eqref{eq:nes_Mw_geometric_form} is small: $k \Delta_x = k (1-\rho)^{2+\frac{2}{\beta}} \eta^{-\frac{1}{\beta}} \lesssim 1$. Using the Taylor expansion $1 - e^{-u} \eqsim u$ for $u \in (0, 1]$:
\begin{align*}
    M_w(k) 
    &\eqsim \left(\frac{(1-\rho)^2}{\eta}\right)^{s+\frac{1}{\beta}} p_{\nesterov}^k \left( \frac{1}{c k \Delta_x} \right) \\
    &\eqsim \left(\frac{(1-\rho)^2}{\eta}\right)^{s+\frac{1}{\beta}} p_{\nesterov}^k \left( \frac{\eta^{\frac{1}{\beta}}}{k (1-\rho)^{2+\frac{2}{\beta}}} \right) \\
    &= \frac{1}{\eta k} \left(\frac{(1-\rho)^2}{\eta}\right)^{s-1} p_{\nesterov}^{\,k}.
\end{align*}

\medskip
\noindent\textbf{Stage 4 ($k \gtrsim \eta^{1/\beta}(1-\rho)^{-2-\frac{2}{\beta}}$):}
Here, the exponent is large: $k \Delta_x = k (1-\rho)^{2+\frac{2}{\beta}} \eta^{-\frac{1}{\beta}} \gtrsim 1$. Thus, the denominator in \eqref{eq:nes_Mw_geometric_form} is bounded away from zero: $1 - \exp\bigl(-c k \Delta_x\bigr) \eqsim 1$. The discrete mode $j = J_{\max}$ ($m=0$) dominates the sum:
\[
    M_w(k) \eqsim \left(\frac{(1-\rho)^2}{\eta}\right)^{s+\frac{1}{\beta}} p_{\nesterov}^{\,k}.
\]

\medskip
\noindent Combining the four stages establishes that $M_w(k) \eqsim \mathcal{E}_{\mathrm{env}}(k)$ uniformly across all $k \ge 0$.

Since $\cos^2(\cdot) \le 1$, we immediately obtain the pointwise upper bound:
\[
    \cS_{\mathrm{under}}(k) \le \sum_{j \le J_{\max}} w_j(k) = M_w(k) \le C \mathcal{E}_{\mathrm{env}}(k).
\]
The lower bound is trivially zero due to the potential for destructive interference of the oscillatory terms at specific iterations.
\end{proof}

\begin{theorem}[Scaling law, mixed-damping regime]
\label{thm:nes_two_sided_scaling}
Under Assumptions~\ref{ass:diagonal-covariance} and~\ref{ass:power-law}, assuming the stability condition \(\eta\lesssim\min\{1, B^\beta(1-\rho)\}\), for the mixed-damping regime $\eta\gtrsim(1-\rho)^2$ with $k \gtrsim \max\left\{\frac{1}{1-\rho}, \frac{1-\rho}{\eta}\right\}$, the expected excess risk satisfies:
\begin{equation}
\label{eq:nes_mixed_scaling_law}
    \EE[\cE(\btheta_k)] \eqsim \cS_{\mathrm{under}}(k) + \left(\frac{1-\rho}{\eta k}\right)^s + \frac{\sigma^2}{B} \min\left\{ \left(\frac{\eta}{1-\rho}\right)^{\frac1\beta}, \frac{\eta}{1-\rho} \right\},
\end{equation}
where the properties of the underdamped transient $\cS_{\mathrm{under}}(k)$ are established in Lemma~\ref{lem:nes_underdamped_transient_properties}.
\end{theorem}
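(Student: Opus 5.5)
The proof will mirror that of Theorem~\ref{thm:pol_two_sided_scaling}, assembling three ingredients established above.

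\textbf{Step 1: reduce to bias plus label noise.} Under the stability condition \(\eta\lesssim\min\{1,B^\beta(1-\rho)\}\), Corollary~\ref{cor:nes_noise_absorption} absorbs the multiplicative-noise term into the label-noise term. This gives
\[
\EE[\cE(\btheta_k)]\eqsim \cS_{\mathrm{under}}(k)+\cS_{\mathrm{over}}(k)+\frac{\eta^2\sigma^2}{B}\sum_{\tau=0}^{k-1}\mathcal K_{\nesterov}(\tau),
\]
where the split \(\cS_{\nesterov}=\cS_{\mathrm{over}}+\cS_{\mathrm{under}}\) is exactly the partition at the crossover index \(J_{\max}\) from Proposition~\ref{prop:nes_mode_decomposition}. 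Both pieces are sums of nonnegative coordinate terms, so the equivalence with the sum is immediate.

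\textbf{Step 2: insert the two-sided estimates.} For \(k\gtrsim (1-\rho)^{-1}\), Proposition~\ref{prop:nes_deterministic_bias} gives \(\cS_{\mathrm{over}}(k)\eqsim\bigl((1-\rho)/(\eta k)\bigr)^s\). For \(k\gtrsim\max\{(1-\rho)^{-1},(1-\rho)/\eta\}\), Lemma~\ref{lem:nes_label_noise} gives the label-noise floor
\[
\frac{\sigma^2}{B}\min\left\{\left(\frac{\eta}{1-\rho}\right)^{1/\beta},\frac{\eta}{1-\rho}\right\}.
\]
The underdamped term \(\cS_{\mathrm{under}}(k)\) is kept explicitly, with its properties (nonnegativity and the envelope bound) supplied by Lemma~\ref{lem:nes_underdamped_transient_properties}. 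Summing the three terms yields~\eqref{eq:nes_mixed_scaling_law}.

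\textbf{Main obstacle: checking hypotheses.} The substance lies in verifying that the hypotheses of the lemmas invoked hold under the present assumptions. Lemmas~\ref{lem:nes_label_noise} and~\ref{lem:nes_lookahead_loss_control} require the small-step condition \(\eta\lambda_1\le c_{\rm st}\), whereas the theorem only assumes \(\eta\lesssim1\). I would handle this by taking the implicit constant in the stability condition small enough that \(\eta\lambda_1\le c_{\rm st}\) holds, which is harmless at the level of \(\eqsim\).

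\textbf{Secondary checks.} Two further points need care.
\begin{itemize}
\item In the mixed regime \(\eta\gtrsim(1-\rho)^2\), the time condition \(k\gtrsim(1-\rho)/\eta\) is implied by \(k\gtrsim(1-\rho)^{-1}\). Hence the single assumption \(k\gtrsim\max\{(1-\rho)^{-1},(1-\rho)/\eta\}\) activates both Proposition~\ref{prop:nes_deterministic_bias} and Lemma~\ref{lem:nes_label_noise} simultaneously.
\item The overdamped lower bound in Proposition~\ref{prop:nes_deterministic_bias} restricts to indices \(j\ge (C\eta k/(1-\rho))^{1/\beta}\). This range lies inside \(j>J_{\max}\) precisely because \(k\gtrsim(1-\rho)^{-1}\), so the lower bound is not lost to the underdamped block.
\end{itemize}
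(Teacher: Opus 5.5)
Your proposal is correct and follows the paper's proof essentially step for step. You absorb the multiplicative noise via Corollary~\ref{cor:nes_noise_absorption}, substitute the overdamped-tail estimate from Proposition~\ref{prop:nes_deterministic_bias} and the label-noise floor from Lemma~\ref{lem:nes_label_noise}, and keep $\cS_{\mathrm{under}}(k)$ explicit. Your added hypothesis checks are accurate and make explicit what the paper leaves implicit: the small-step constant $\eta\lambda_1\le c_{\rm st}$, the bound $(1-\rho)/\eta\lesssim(1-\rho)^{-1}$ in the mixed regime, and the fact that the lower-bound index range lies beyond $J_{\max}$.
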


\begin{proof}
By Corollary~\ref{cor:nes_noise_absorption}, the multiplicative noise is absorbed by the additive label noise floor, yielding:
\begin{equation}
\label{eq:nes_total_risk_mixed}
    \EE[\cE(\btheta_k)] 
    \eqsim 
    \cS_{\mathrm{under}}(k) + \cS_{\mathrm{over}}(k) + \frac{\eta^2\sigma^2}{B}\sum_{\tau=0}^{k-1}\mathcal{K}_{\nesterov}(\tau).
\end{equation}

By Lemma~\ref{lem:nes_label_noise}, for $k \gtrsim \max\left\{\frac{1}{1-\rho}, \frac{1-\rho}{\eta}\right\}$, the label noise summation captures the saturated floor:
\begin{equation*}
    \frac{\eta^2\sigma^2}{B}\sum_{\tau=0}^{k-1}\mathcal{K}_{\nesterov}(\tau) 
    \eqsim 
    \frac{\sigma^2}{B} \min\left\{ \left(\frac{\eta}{1-\rho}\right)^{\frac1\beta}, \frac{\eta}{1-\rho} \right\}.
\end{equation*}

By Proposition~\ref{prop:nes_deterministic_bias}, since $k \gtrsim \frac{1}{1-\rho}$, the overdamped tail reaches its terminal polynomial decay phase:
\begin{equation*}
    \cS_{\mathrm{over}}(k) \eqsim \left(\frac{1-\rho}{\eta k}\right)^s.
\end{equation*}

Substituting these components explicitly into \eqref{eq:nes_total_risk_mixed} establishes the equivalence \eqref{eq:nes_mixed_scaling_law}.
\end{proof}

\section{Optimal data scaling across batch-size regimes}
\label{app:sec:data-scaling}
This appendix proves the batch-size-dependent scaling laws stated in Theorem~\ref{thm:batch-size-scaling}. Throughout this appendix, we work in the noisy setting $\sigma>0$ considered in Section~\ref{sec:data-scaling} and impose the data-budget constraint
\[
B\eqsim D^b,\qquad T=\frac{D}{B}\eqsim D^{1-b},\qquad 0\le b<1.
\]
Thus, the batch exponent $b$ is prescribed, and only the remaining hyperparameters are optimized. We use the transition exponents defined in Section~\ref{sec:data-scaling}:
\[
b_1=\frac{s}{s+1},\qquad b_2=\frac{2s+1/\beta}{2s+1+1/\beta}=\frac{2s\beta+1}{2s\beta+\beta+1},\qquad b_3=\frac{2s+1}{2s+2}.
\]
For the momentum methods, we parameterize the momentum gap and learning rate by $1-\rho\eqsim D^{-m}$ and $\eta\eqsim D^{-c}$, where $m,c\ge0$.
\subsection{SGD}
For SGD, write $\eta\eqsim D^{-c}$, where $c\ge0$. By the SGD risk scaling law in Section~\ref{sec:scaling law},
\[
\EE[\cE(\btheta_T)]\eqsim(\eta T)^{-s}+\frac{\sigma^2\eta}{B}\eqsim D^{-s(1-b-c)}+D^{-(b+c)}.
\]
Therefore, for a fixed batch exponent $b$, the achievable decay exponent is $\min\{s(1-b-c),b+c\}$. If $0\le b<b_1$, the two terms can be balanced by taking $c=b_1-b\ge0$. Since $b_1=s/(s+1)$, this gives
\[
s(1-b-c)=b+c=b_1=\frac{s}{s+1}.
\]
Hence $r_{\sgd}(b)=s/(s+1)$ for $0\le b<b_1$. 

If $b\ge b_1$, the unconstrained balancing choice would require $c=b_1-b\le0$. Because the learning rate is not allowed to grow polynomially with $D$, we must have $c\ge0$. Moreover, $b+c\ge b\ge b_1$, so the signal term is the slower-decaying term. It is optimized by taking $c=0$, which yields $r_{\sgd}(b)=s(1-b)$ for $b_1\le b<1$. 

Consequently,
\[
r_{\sgd}(b)=
\begin{cases}
\dfrac{s}{s+1},&0\le b<b_1,\\[4pt]
s(1-b),&b_1\le b<1.
\end{cases}
\]
Thus, $b_1$ is precisely the largest batch exponent for which SGD preserves the data-optimal rate $s/(s+1)$.

\subsection{Polyak}
We next prove the Polyak part of Theorem~\ref{thm:batch-size-scaling}. We retain $T\eqsim D^{1-b}$, $1-\rho\eqsim D^{-m}$, and $\eta\eqsim D^{-c}$, where $m,c\ge0$. We require $b+m<1$, so that $\rho^T\leq\exp\bigl(-(1-\rho)T\bigr)=\exp\bigl(-\Theta(D^{1-b-m})\bigr)$ is smaller than any polynomial power of $D$. The stability condition $\eta\lesssim B(1-\rho)$ becomes $b+c-m\ge0$. Moreover,
\[
\eta_\rho T=\frac{\eta T}{1-\rho}\eqsim D^{1-b-c+m},\qquad \frac{\eta_\rho}{B}=\frac{\eta}{B(1-\rho)}\eqsim D^{-(b+c-m)}.
\]
The damping threshold $\eta\eqsim(1-\rho)^2$ corresponds to $c=2m$. We consider the fully overdamped and mixed-damping regimes separately.

\paragraph{Fully overdamped regime.} If $c\ge2m$, Theorem~\ref{thm:hbm scaling law} gives
\[
\EE[\cE(\btheta_T)]\eqsim(\eta_\rho T)^{-s}+\frac{\sigma^2\eta_\rho}{B}\eqsim D^{-s(1-b-c+m)}+D^{-(b+c-m)}.
\]
Therefore, the polynomial decay exponent in the fully overdamped regime is $\min\{s(1-b-c+m),b+c-m\}$.

\paragraph{Mixed-damping regime.} If $c\le2m$, Theorem~\ref{thm:hbm scaling law} gives
\[
\EE[\cE(\btheta_T)]\eqsim P(T)+(\eta_\rho T)^{-s}+\frac{\sigma^2\eta_\rho}{B}.
\]
Since $b+m<1$, the transient $P(T)\lesssim\rho^T\leq
\exp\bigl(-(1-\rho)T\bigr)=\exp\bigl(-\Theta(D^{1-b-m})\bigr)$ is smaller than any polynomial power of $D$. Hence
\[
\EE[\cE(\btheta_T)]\eqsim D^{-s(1-b-c+m)}+D^{-(b+c-m)},
\]
and the polynomial decay exponent is again $\min\{s(1-b-c+m),b+c-m\}$. Thus, although the two regimes have different transient dynamics, they lead to the same polynomial optimization problem. The signal and additive-noise powers are balanced when
\[
s(1-b-c+m)=b+c-m
\quad\Longleftrightarrow\quad
b+c-m=\frac{s}{s+1}=b_1.
\]
If $0\le b<b_1$, this balance is attained by taking $m=0$ and $c=b_1-b$. Since $c\ge2m$, this choice lies in the fully overdamped regime and yields $r_{\polyak}(b)=s/(s+1)$. 

If $b_1\le b<b_3$, the same balance is attained by taking $c=0$ and $m=b-b_1$. For $b>b_1$, this choice satisfies $c\le2m$ and therefore lies in the mixed-damping regime. Moreover, $b+c-m=b_1$, while the transient condition becomes $b+m=2b-b_1<1$. Since $(1+b_1)/2=(2s+1)/(2s+2)=b_3$, this condition holds precisely when $b<b_3$. Hence $r_{\polyak}(b)=s/(s+1)$ for $0\le b<b_3$.

Now suppose that $b\ge b_3$. Since $c\ge0$ and $m<1-b$, every admissible choice satisfies $b+c-m>2b-1\ge b_1$, so the signal term is the slower-decaying term and $s(1-b-c+m)<2s(1-b)$. This upper bound is approached by taking $c=0$ and $m\uparrow1-b$, which lies in the mixed-damping regime and gives
\[
s(1-b-c+m)\uparrow2s(1-b),\qquad b+c-m\downarrow2b-1.
\]
Moreover, $2b-1\ge2s(1-b)$ holds exactly when $b\ge b_3$, so the signal term is indeed dominant. Consequently, $r_{\polyak}(b)=2s(1-b)$ for $b_3\le b<1$, where the endpoint value is understood in the supremal sense represented by the $o(1)$ term in Definition~\ref{def:batch-scaling-exponent}. Combining the two regimes gives
\[
r_{\polyak}(b)=
\begin{cases}
\dfrac{s}{s+1},&0\le b<b_3,\\[4pt]
2s(1-b),&b_3\le b<1.
\end{cases}
\]
Thus, Polyak preserves the optimal data-scaling exponent $s/(s+1)$ up to the critical batch exponent $b_3$. Compared with SGD, it extends the critical batch size from $D^{b_1}$ to $D^{b_3}$, while leaving the optimal small-batch exponent unchanged.

\subsection{Nesterov}
We finally prove the Nesterov part of Theorem~\ref{thm:batch-size-scaling}. We retain $B\eqsim D^b$, $T\eqsim D^{1-b}$, $1-\rho\eqsim D^{-m}$, and $\eta\eqsim D^{-c}$, where $m,c\ge0$. We require $b+m<1$, so that the Nesterov transient $N(T)$ is smaller than any polynomial power of $D$. The stability condition $\eta\lesssim B^\beta(1-\rho)$ becomes $\beta b+c-m\ge0$, while the damping threshold $\eta\eqsim(1-\rho)^2$ corresponds to $c=2m$. We optimize the fully overdamped and mixed-damping regimes separately and then compare their optimal decay exponents.

\paragraph{Fully overdamped regime.} If $c\ge2m$, Theorem~\ref{thm:nesterov scaling law} gives
\[
\EE[\cE(\btheta_T)]\eqsim(\eta_\rho T)^{-s}+\frac{\sigma^2\eta_\rho}{B}\eqsim D^{-s(1-b-c+m)}+D^{-(b+c-m)}.
\]
Thus, the decay exponent is $\min\{s(1-b-c+m),b+c-m\}$. Since $c\ge2m$ implies $c-m\ge0$, we have $b+c-m\ge b$. If $0\le b<b_1$, the signal and additive-noise powers can be balanced by taking $m=0$ and $c=b_1-b$, which gives $b+c-m=b_1$ and decay exponent $s/(s+1)$. If $b\ge b_1$, then $b+c-m\ge b\ge b_1$, so the signal term is limiting. Its exponent is maximized by taking $m=c=0$, which gives $s(1-b)$. Therefore, the optimal exponent in the fully overdamped regime is
\[
r_{\nesterov}^{\rm full}(b)=
\begin{cases}
\dfrac{s}{s+1},&0\le b<b_1,\\[4pt]
s(1-b),&b_1\le b<1.
\end{cases}
\]
In particular, the fully overdamped Nesterov dynamics have the same batch-size-dependent scaling exponent as SGD.
\paragraph{Mixed-damping regime.} If $c\le2m$, Theorem~\ref{thm:nesterov scaling law} gives
\[
\EE[\cE(\btheta_T)]
\eqsim
N(T)
+(\eta_\rho T)^{-s}
+\frac{\sigma^2}{B}\min\left\{\eta_\rho,\eta_\rho^{1/\beta}\right\}.
\]
Since $b+m<1$, the first term \(N(T)\lesssim \min\{1,(\eta T)^{-s}\}\rho^T\leq\exp\bigl(-(1-\rho)T\bigr)=\exp\bigl(-\Theta(D^{1-b-m})\bigr)\) is smaller than any polynomial power of $D$. Hence
\[
\EE[\cE(\btheta_T)]\eqsim D^{-s(1-b-c+m)}+\frac{\sigma^2}{B}\min\left\{\frac{\eta}{1-\rho},\left(\frac{\eta}{1-\rho}\right)^{1/\beta}\right\}.
\]
The signal term has decay power $s(1-b-c+m)$. Since $\eta/(1-\rho)\eqsim D^{m-c}$, the additive-noise decay power is
\[
\begin{cases}
b+c-m,&c\ge m,\\[4pt]
b+\dfrac{c-m}{\beta},&c\le m.
\end{cases}
\]
We optimize these two branches separately.
\paragraph{The branch $c\ge m$.} In this branch, the decay exponent is
\[
\min\{s(1-b-c+m),b+c-m\}.
\]
Because $c-m\ge0$, the same argument as in the fully overdamped regime applies. For $0\le b<b_1$, the balance $b+c-m=b_1$ is feasible in the mixed-damping regime. For example, one may take $m=b_1-b$ and $c=2(b_1-b)$, for which $b+m=b_1<1$. For $b\ge b_1$, the optimal choice satisfies $c-m=0$ and gives exponent $s(1-b)$. Therefore,
\[
r_{\nesterov}^{\rm mixed,\,c\ge m}(b)=
\begin{cases}
\dfrac{s}{s+1},&0\le b<b_1,\\[4pt]
s(1-b),&b_1\le b<1.
\end{cases}
\]
\paragraph{The branch $c\le m$.} In this branch, the decay exponent is
\[
\min\left\{s(1-b-c+m),\,b+\frac{c-m}{\beta}\right\}.
\]
If $0\le b<b_1$, then at $m-c=0$ we have $b<s(1-b)$. Increasing $m-c$ increases the signal power but decreases the additive-noise power, so the optimal choice is $m-c=0$, which gives decay exponent $b$.
If $b\ge b_1$, balancing the signal and additive-noise powers gives
\[
s(1-b-c+m)=b+\frac{c-m}{\beta},
\qquad
m-c=\frac{\beta\bigl((s+1)b-s\bigr)}{s\beta+1}.
\]
For a fixed value of $m-c$, the least restrictive choice for the transient condition is $c=0$, so we take
\[
c=0,\qquad m=\frac{\beta\bigl((s+1)b-s\bigr)}{s\beta+1}.
\]
This choice satisfies the stability condition because
\[
\beta b+c-m=\frac{\beta s\bigl(1+(\beta-1)b\bigr)}{s\beta+1}>0.
\]
The signal and additive-noise powers are then equal to
\[
s(1-b+m)=b-\frac{m}{\beta}=\frac{s\bigl(1+(\beta-1)b\bigr)}{s\beta+1}.
\]
The transient condition holds precisely when
\[
b+m<1
\quad\Longleftrightarrow\quad
b<\frac{2s\beta+1}{2s\beta+\beta+1}
=\frac{2s+1/\beta}{2s+1+1/\beta}
=b_2.
\]
Therefore, for $b_1\le b<b_2$, the optimal exponent on the branch $c\le m$ is
\[
\frac{s\bigl(1+(\beta-1)b\bigr)}{s\beta+1}.
\]
If $b\ge b_2$, the preceding balance is no longer compatible with $b+m<1$. Since $c\ge0$ and $m<1-b$, every admissible choice satisfies
\[
s(1-b-c+m)<2s(1-b).
\]
This upper bound is approached by taking $c=0$ and $m\uparrow1-b$. Under this choice,
\[
s(1-b+m)\uparrow2s(1-b),\qquad b-\frac{m}{\beta}\to\frac{(\beta+1)b-1}{\beta}.
\]
At $b=b_2$, the two limiting powers agree, while for $b>b_2$,
\[
\frac{(\beta+1)b-1}{\beta}>2s(1-b).
\]
Thus, the signal term is limiting and the supremal decay exponent is $2s(1-b)$. Consequently,
\[
r_{\nesterov}^{\rm mixed,\,c\le m}(b)=
\begin{cases}
b,&0\le b<b_1,\\[4pt]
\dfrac{s\bigl(1+(\beta-1)b\bigr)}{s\beta+1},&b_1\le b<b_2,\\[8pt]
2s(1-b),&b_2\le b<1.
\end{cases}
\]
Taking the larger exponent between the two additive-noise branches gives the optimal exponent in the mixed-damping regime:
\[
r_{\nesterov}^{\rm mixed}(b)=
\begin{cases}
\dfrac{s}{s+1},&0\le b<b_1,\\[6pt]
\dfrac{s\bigl(1+(\beta-1)b\bigr)}{s\beta+1},&b_1\le b<b_2,\\[8pt]
2s(1-b),&b_2\le b<1.
\end{cases}
\]
Indeed, the branch $c\ge m$ is optimal for $b<b_1$, whereas the branch $c\le m$ is optimal for $b>b_1$. Finally, optimizing over the fully overdamped and mixed-damping regimes gives
\[
r_{\nesterov}(b)=\max\left\{r_{\nesterov}^{\rm full}(b),r_{\nesterov}^{\rm mixed}(b)\right\}.
\]
For $0\le b<b_1$, the two regimes attain the same exponent $s/(s+1)$. For $b_1<b<b_2$, the mixed-damping regime exponent $s(1+(\beta-1)b)/(s\beta+1)$ is strictly larger than the fully overdamped exponent $s(1-b)$. For $b\ge b_2$, the mixed-damping regime exponent $2s(1-b)$ is also larger than the fully overdamped exponent $s(1-b)$. We therefore obtain
\[
r_{\nesterov}(b)=
\begin{cases}
\dfrac{s}{s+1},&0\le b<b_1,\\[6pt]
\dfrac{s\bigl(1+(\beta-1)b\bigr)}{s\beta+1},&b_1\le b<b_2,\\[8pt]
2s(1-b),&b_2\le b<1.
\end{cases}
\]
Thus, Nesterov agrees with SGD and Polyak in the small-batch regime, improves its data-scaling exponent once $b$ exceeds $b_1$, and reaches its maximal exponent at $b=b_2$. For $b\ge b_2$, its exponent becomes $2s(1-b)$, which coincides with the Polyak exponent in the ultra-large-batch regime.

\section{Experimental details and additional results}
\label{app:experiment}

Our theoretical analysis is formulated in the infinite-dimensional PLK setting. For numerical experiments, we truncate the system to the first \(d=4096\) eigendirections, so that, for example, \(\bH=\diag(\lambda_1,\ldots,\lambda_d)\in\RR^{d\times d}\). Unless otherwise stated, all experiments use \(d=4096\).

\subsection{Risk stability boundaries}
\label{app:stability-experiment}

\paragraph{Simulation setup.} We run with \(\btheta_{-1}=\btheta_0=0\) for \(T=2000\) iterations. A run is classified as stable if its average risk over the final \(20\) iterations is below \(1\); it is  classified as unstable if the risk blows up or exceeds \(10^{12}\). For each setting and random seed, the empirical stability boundary is the largest stable learning rate on the search grid. We repeat each scan over five independent seeds and report the mean boundary.

\begin{itemize}
\item \textbf{Stability versus batch size.} For Figure~\ref{fig:numerical-summary} (left), we use \((s,\beta)=(3,2)\), fix \(1-\rho=2^{-24}\) for the momentum methods, and scan \(B\in\{2^0,2^1,\ldots,2^{12}\}\) and \(50\) logarithmically spaced learning rates between \(2^{-10}\) and \(8\). The dashed curves are post-hoc fits to the predicted scalings: a constant for SGD, \(B(1-\rho)\) for Polyak, and \(B^\beta(1-\rho)\) followed by a constant plateau for Nesterov. The dotted line marks the split used for the Nesterov fit.

\item \textbf{Stability versus momentum damping.} For Figure~\ref{fig:numerical-summary} (middle), we use \((s,\beta)=(3,2)\), \(\sigma=0\), and \(B=32\), vary \(1-\rho\in\{2^{-24},2^{-22},\ldots,2^{-2}\}\), and scan \(50\) logarithmically spaced learning rates between \(2^{-22}\) and \(8\). Since SGD is independent of \(\rho\), its boundary is constant across the damping grid. The dashed curves are post-hoc fits to the predicted form \(\eta^\crit=\min\{L,c(1-\rho)\}\) for Polyak and Nesterov, and to a constant for SGD.
\end{itemize}

\paragraph{Additional results.} Figure~\ref{fig:additional-stability-boundaries} shows additional results for different choices of \((s,\beta)\).

\begin{figure}[H]
    \centering
    \captionsetup[subfigure]{skip=0pt, font=small}
    \begin{subfigure}[t]{0.325\textwidth}
        \centering
        \includegraphics[width=\linewidth]{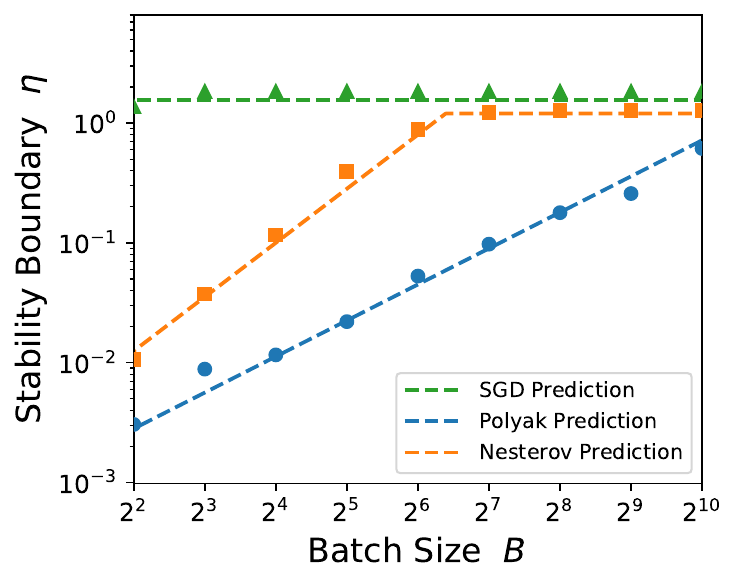}
        \caption{\(s=0.5,\beta=1.5\)}
    \end{subfigure}
    \begin{subfigure}[t]{0.325\textwidth}
        \centering
        \includegraphics[width=\linewidth]{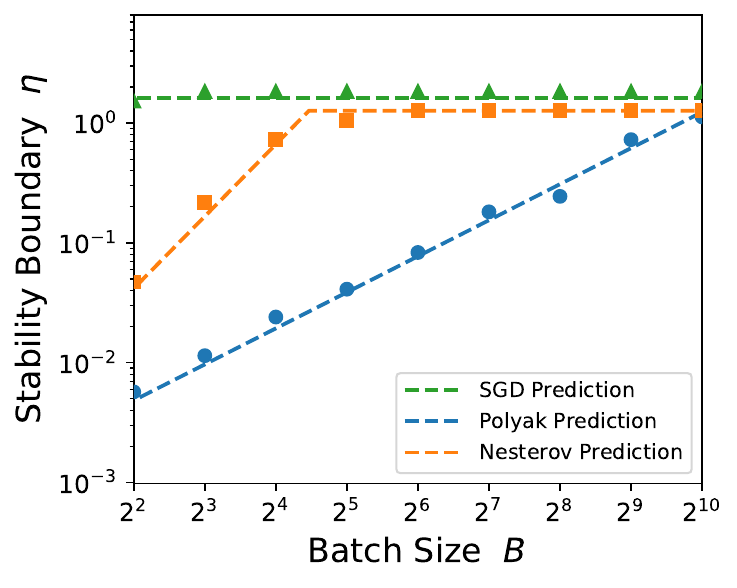}
        \caption{\(s=0.2,\beta=2\)}
    \end{subfigure}
    \begin{subfigure}[t]{0.32\textwidth}
        \centering
        \includegraphics[width=\linewidth]{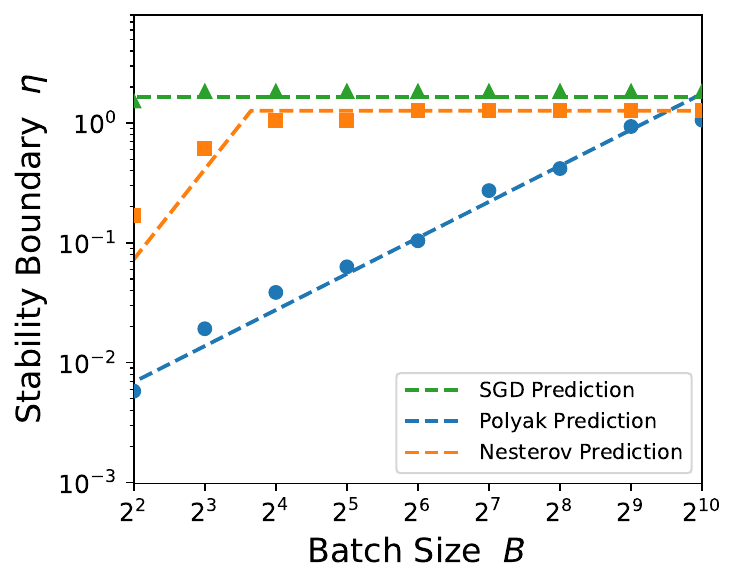}
        \caption{\(s=0.5,\beta=2.5\)}
    \end{subfigure}

\caption{Additional stability boundaries across \((s,\beta)\).}
    \label{fig:additional-stability-boundaries}
\end{figure}

\subsection{Risk-dynamics scaling laws}
\label{app:scaling-law-experiments}

\paragraph{Simulation setup.} We simulate PLK regression initialized with \(\btheta_{-1}=\btheta_0=0\). For Figure~\ref{fig:numerical-summary} (right), we set \((s,\beta)=(0.3,4)\), \(\sigma=1\), \(B=32\), \(\eta=0.03\), and \(\rho=0.98\). Each method is run for \(2\times10^7\) iterations. Let \(\overline{\cE}(k)\) denote the population excess risk averaged over the \(100\) runs.

\paragraph{Scaling-law fits.} Theorems~\ref{thm:hbm scaling law} and~\ref{thm:nesterov scaling law} determine the functional forms of the risk dynamics up to multiplicative constants. We therefore introduce \(\Theta=(C_1,C_2,C_3)\in\RR_+^3\) and fit the following ansatz separately for each method. For Polyak,
\[
F_{\polyak}(k;\Theta)=C_1\rho^k+C_2(\eta_\rho k)^{-s}+C_3\frac{\sigma^2}{B}\eta_\rho,\qquad \eta_\rho:=\frac{\eta}{1-\rho},
\]
while for Nesterov,
\[
F_{\nesterov}(k;\Theta)=C_1\min\{1,(\eta k)^{-s}\}\rho^k+C_2(\eta_\rho k)^{-s}+C_3\frac{\sigma^2}{B}\min\{\eta_\rho,\eta_\rho^{1/\beta}\}.
\]
The first terms use the transient envelopes from Theorems~\ref{thm:hbm scaling law} and~\ref{thm:nesterov scaling law}, so the fits test the predicted scaling forms rather than their multiplicative constants. For each method, the coefficients are fitted by minimizing the mean squared relative error:
\[
\min_{\Theta\in\RR_+^3}\frac{1}{|\mathcal I_{\rm fit}|}\sum_{k\in \mathcal I_{\rm fit}}\left(\frac{F(k;\Theta)-\overline{\cE}(k)}{\overline{\cE}(k)}\right)^2,\qquad \mathcal I_{\rm fit}:=\{k\geq1:\eta_\rho k\geq1\},
\]
where \(F=F_{\polyak}\) or \(F_{\nesterov}\) for the corresponding method.

\paragraph{Additional results.} We repeat the same experiment for \((s,\beta)\in\{(0.3,6),(0.5,4),(0.5,6)\}\), keeping \(\sigma=1\), \(B=32\), \(\eta=0.03\), and \(\rho=0.98\). Each run uses \(5\times10^5\) iterations, and the reported risk is averaged over \(100\) independent runs. Figure~\ref{fig:additional-scaling-beta2} shows that the predicted scaling forms consistently track the empirical risk dynamics across these choices of \((s,\beta)\).

\begin{figure}[H]
    \centering
    \captionsetup[subfigure]{skip=0pt, font=small}
    \begin{subfigure}[t]{0.32\textwidth}
        \centering
        \includegraphics[width=\linewidth]{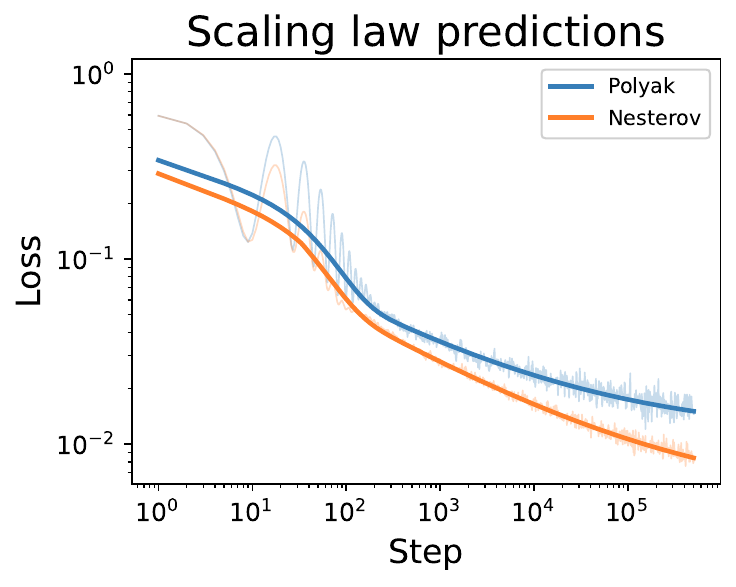}
        \caption{\((s,\beta)=(0.3,6)\)}
    \end{subfigure}
    \hfill
    \begin{subfigure}[t]{0.32\textwidth}
        \centering
        \includegraphics[width=\linewidth]{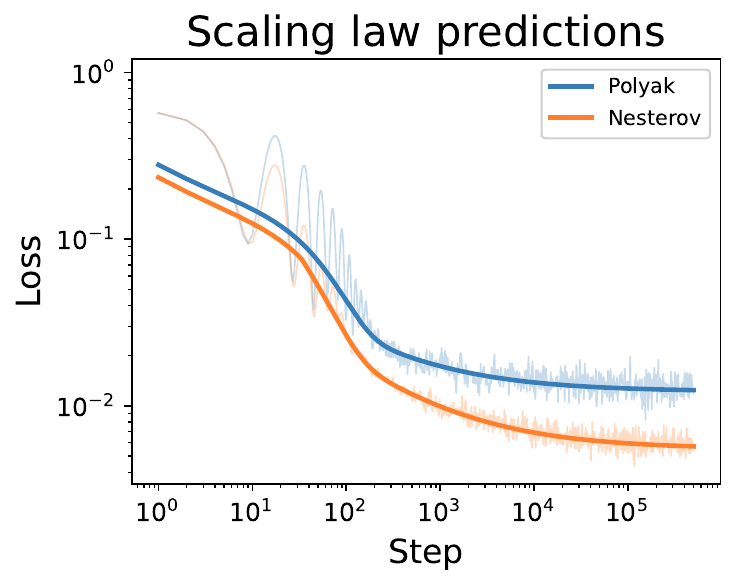}
        \caption{\((s,\beta)=(0.5,4)\)}
    \end{subfigure}
    \hfill
    \begin{subfigure}[t]{0.32\textwidth}
        \centering
        \includegraphics[width=\linewidth]{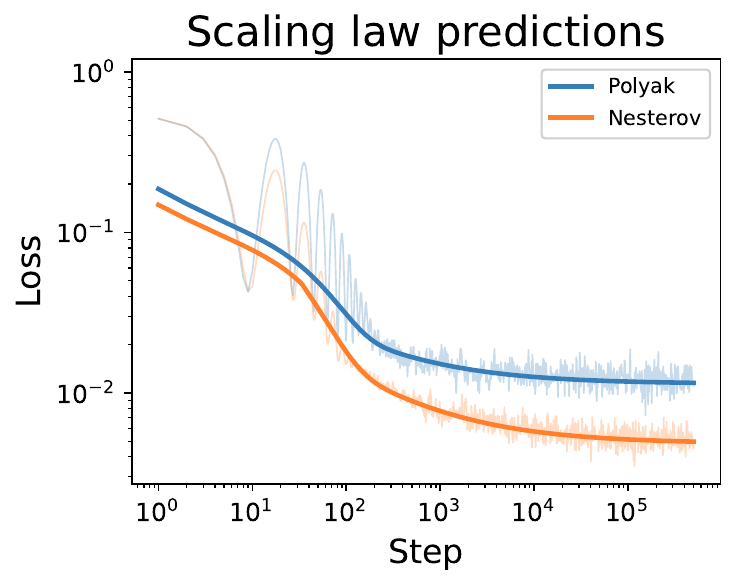}
        \caption{\((s,\beta)=(0.5,6)\)}
    \end{subfigure}
    \caption{\textbf{Additional validation of the risk-dynamics scaling laws for Polyak and Nesterov.} Thin translucent lines show empirical  excess risks, while thick solid curves show the fitted scaling-law predictions. All experiments use \(\sigma=1\), \(B=32\), \(\eta=0.03\), and \(\rho=0.98\), run for \(5\times10^5\) iterations, and report averages over \(100\) independent runs.}
    \label{fig:additional-scaling-beta2}
\end{figure}

\subsection{Batch-size phase diagram at a fixed data budget}
\label{app:critical-batch-size}

We simulate PLK regression with \((s,\beta)=(0.3,4)\) and \(\sigma=0.1\), with all methods initialized at \(\btheta_{-1}=\btheta_0=0\). We fix the total data budget at \(D=2^{14}\). 
For each batch size, we search locally around the hyperparameter scalings predicted by Theorem~\ref{thm:batch-size-scaling}. Each candidate is evaluated over 50 independent runs. Let \(W=\max\{1,\lceil0.01 T\rceil\}\). We score each candidate by the mean excess risk over the final \(W\) iterations and across runs:
\[
\overline{\mathcal E}=\frac1{50}\sum_{r=1}^{50}\frac1 W\sum_{k=T-W+1}^{T}\mathcal E(\btheta_k^{(r)}).
\]
For each method and batch size, Figure~\ref{fig:intro-result} (right) reports the minimum \(\overline{\mathcal E}\) over the corresponding search grid.

Figure~\ref{fig:additional-cbs} shows additional  results on the batch-size phase diagram across \((s,\beta)\).

\begin{figure}[H]
    \centering
    \captionsetup[subfigure]{skip=0pt, font=small}
    \begin{subfigure}[t]{0.325\textwidth}
        \centering
        \includegraphics[width=\linewidth]{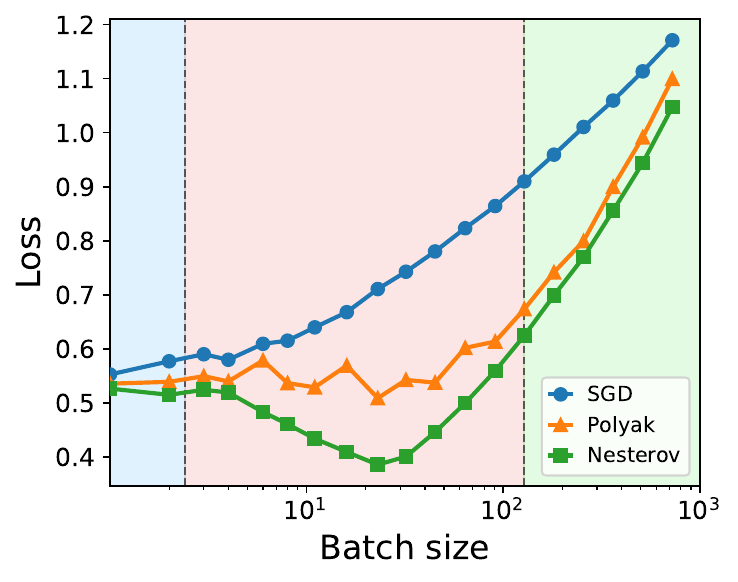}
        \caption{\(s=0.1,\beta=2\)}
    \end{subfigure}
    \hfill
    \begin{subfigure}[t]{0.325\textwidth}
        \centering
        \includegraphics[width=\linewidth]{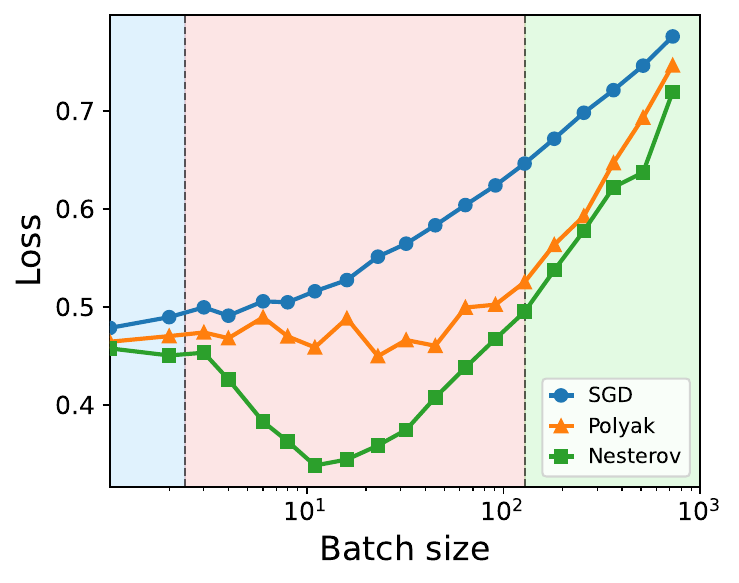}
        \caption{\(s=0.1,\beta=4\)}
    \end{subfigure}
    \begin{subfigure}[t]{0.325\textwidth}
        \centering
        \includegraphics[width=\linewidth]{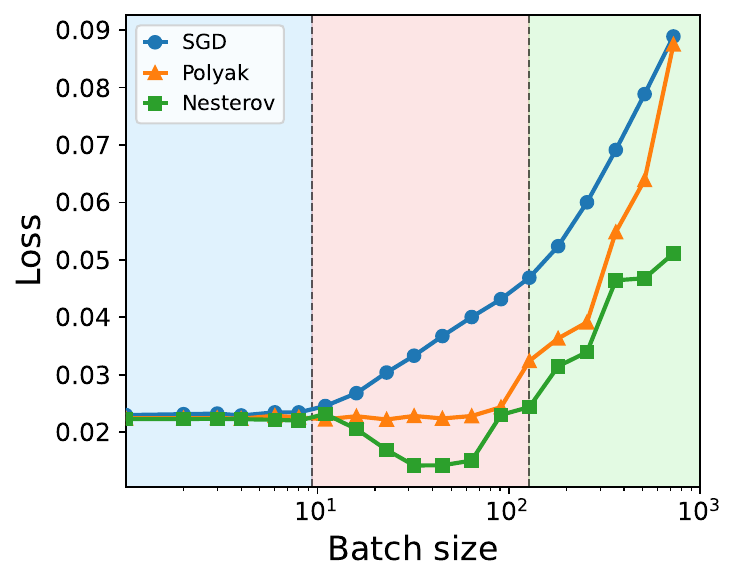}
        \caption{\(s=0.3,\beta=6\)}
    \end{subfigure}
    \caption{Additional  experiments on the batch-size phase diagram under  different \((s,\beta)\).}
    \label{fig:additional-cbs}
\end{figure}

\subsection{Data-scaling exponents across batch-size regimes}
\label{app:data-scaling-experiments}

In Figure~\ref{fig:three-regime-plk}, we simulate PLK regression with \((s,\beta)=(0.3,4)\), \(\sigma=0.1\), and \(\btheta_{-1}=\btheta_0=0\), using data budgets \(D\in\{2000,4000,8000,16000,32000,64000,128000\}\). For a prescribed batch-size exponent \(b\), we set \(B=\operatorname{round}(C_BD^b)\) and \(T=\lfloor D/B\rfloor\), where \(C_B\) is an order-one constant. For each method and \(D\), we tune the multiplicative constants around the learning-rate and momentum scalings prescribed by Theorem~\ref{thm:batch-size-scaling}, and report the best stable configuration. Each run is evaluated by averaging the risk over the final \(1\%\) of optimization steps, and each point is averaged over 10 independent runs.

We evaluate one representative batch-size exponent from each of the three regimes.

\begin{itemize}
\item \textbf{Small batch.} We take \(b_{\rm small}=0\). For all three methods, we search around \(\eta\eqsim D^{-s/(s+1)}\); for Polyak and Nesterov, we additionally use \(1-\rho\eqsim1\).

\item \textbf{Large batch.} We take \(b_{\rm large}=b_2-\delta\), where \(b_2=\frac{2s\beta+1}{2s\beta+\beta+1}\) and \(\delta>0\) is small. SGD uses \(\eta\eqsim1\). For Polyak, we search around \(\eta\eqsim1\) and \(1-\rho\eqsim D^{-(b_{\rm large}-s/(s+1))}\). For Nesterov, we search around \(\eta\eqsim1\) and
\[
1-\rho\eqsim D^{-c_{\nesterov}},\qquad c_{\nesterov}=b_{\rm large}-\frac{s\beta-(\beta-1)b_{\rm large}}{s\beta+1}.
\]

\item \textbf{Ultra-large batch.} We choose \(b_3<b_{\rm ultra}<1\). SGD uses \(\eta\eqsim1\), while for both momentum methods we search around \(\eta\eqsim1\) and \(1-\rho\eqsim D^{-(1-b_{\rm ultra}-\delta)}\), with \(\delta>0\) chosen so that \(T(1-\rho)\to\infty\).
\end{itemize}

\paragraph{Additional results.} Figures~\ref{fig:app data s0.1beta4} and \ref{fig:app data s0.3beta2} show additional data-scaling results across \((s,\beta)\).

\begin{figure}[H]
    \centering
    \begin{subfigure}[t]{0.325\textwidth}
        \centering
        \includegraphics[width=\linewidth]{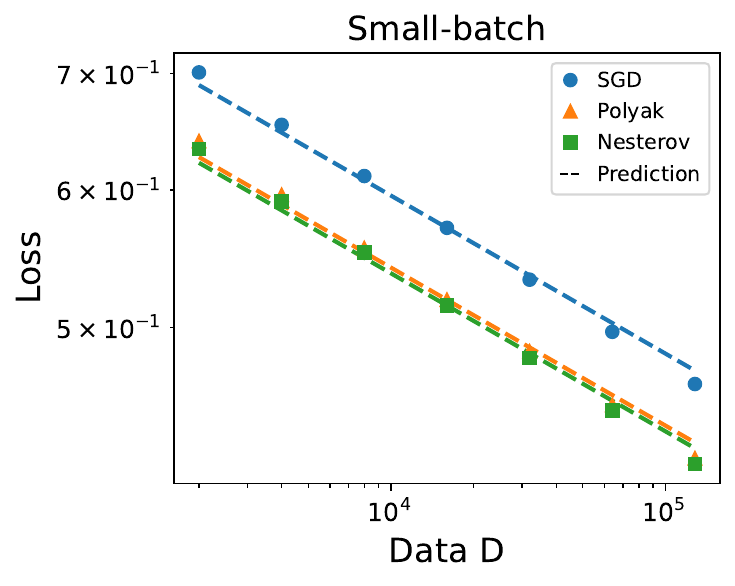}
    \end{subfigure}
    \hfill
    \begin{subfigure}[t]{0.325\textwidth}
        \centering
        \includegraphics[width=\linewidth]{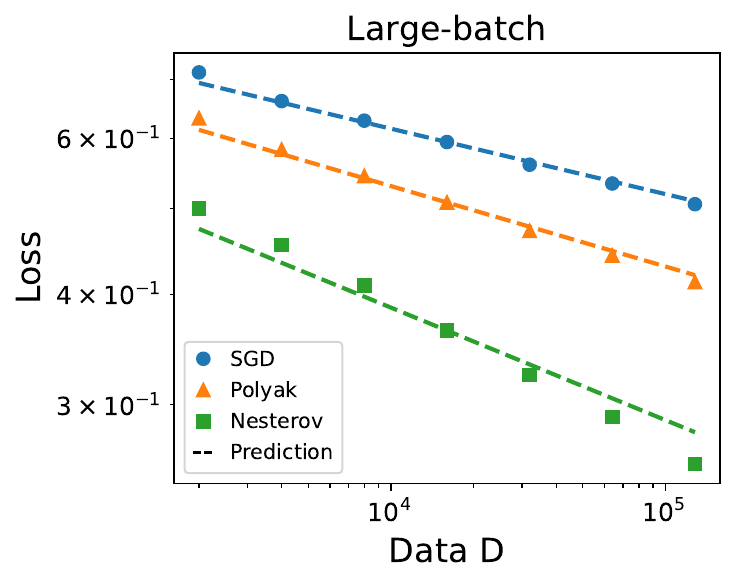}
    \end{subfigure}
    \hfill
    \begin{subfigure}[t]{0.325\textwidth}
        \centering
        \includegraphics[width=\linewidth]{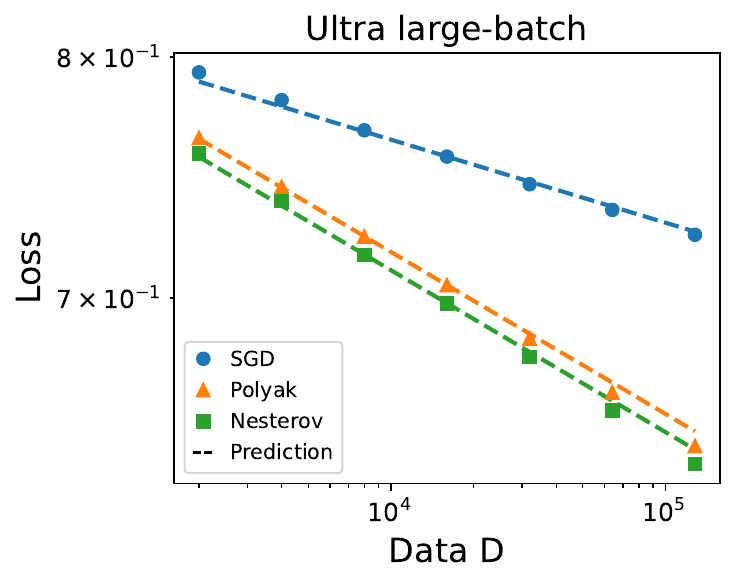}
    \end{subfigure}
    \vspace{-1em}
    \caption{Data scaling with \((s,\beta)=(0.1,4)\).}
    \label{fig:app data s0.1beta4}
    \vspace{1em}
    \begin{subfigure}[t]{0.325\textwidth}
        \centering
        \includegraphics[width=\linewidth]{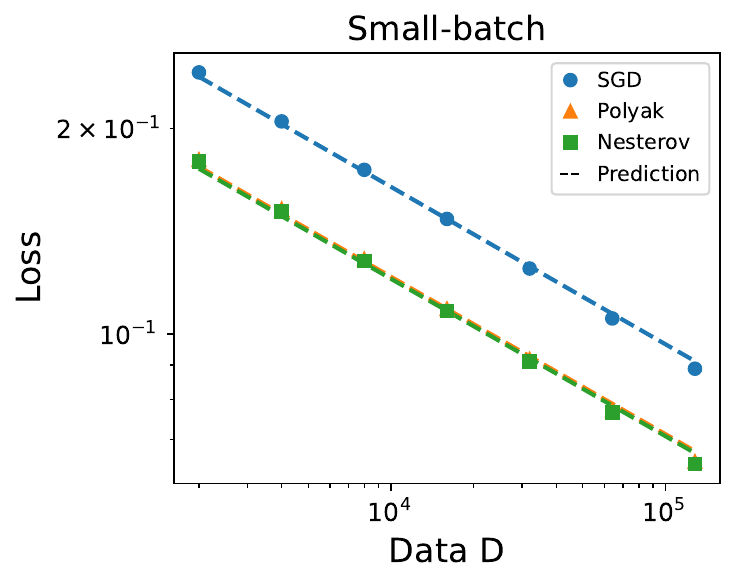}
    \end{subfigure}
    \hfill
    \begin{subfigure}[t]{0.325\textwidth}
        \centering
        \includegraphics[width=\linewidth]{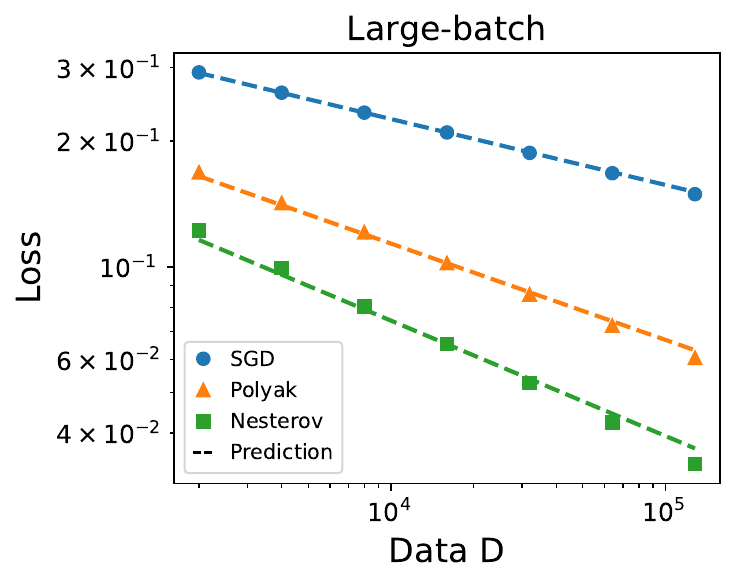}
    \end{subfigure}
    \hfill
    \begin{subfigure}[t]{0.325\textwidth}
        \centering
        \includegraphics[width=\linewidth]{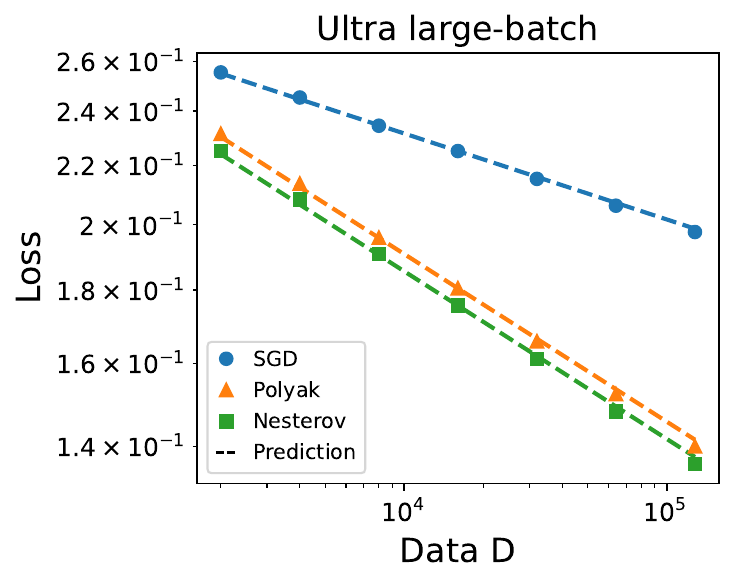}
    \end{subfigure}
     \vspace{-1em}
    \caption{Data scaling with \((s,\beta)=(0.3,2)\).}
    \label{fig:app data s0.3beta2}
\end{figure}




\subsection{Delayed instability under increasing momentum}
\label{app:subsec:blow-up}

We simulate PLK regression  with \((s=0.3,\beta=4)\). We set the learning rate to \(\eta=0.05\) and the batch size to \(B=4\). To isolate the effect of multiplicative noise, we set the additive noise to \(\sigma=0\). For Polyak and Nesterov, we use the time-varying momentum schedule
\[
\rho_k=1-\frac{1}{k+1}.
\]
Each method is run for at most \(5\times10^5\) iterations over 50 independent runs, and the figure reports the median risk.

\end{document}